\newcolumntype{M}{>{\centering\arraybackslash}m{0.17\textwidth}}
\newcommand{\tsinv}{\widetilde \bSigma^{-1}}
\newcommand{\tSig}{\widetilde \bSigma}
\def\##1\#{\begin{align}#1\end{align}}
\def\$#1\${\begin{align*}#1\end{align*}}
\theoremstyle{mytheoremstyle}
\newtheorem{innercustomass}{Assumption}
\def\spacingset#1{\renewcommand{\baselinestretch}%
{#1}\small\normalsize} \spacingset{1}
\newcommand{\blue}[1]{{\leavevmode\color{black}{#1}}}
\newcommand{\cyan}[1]{{\leavevmode\color{black}{#1}}}
\begin{document}
\title{\bf Neural networks for
geospatial data}
\author{Wentao Zhan, Abhirup Datta\footnote{Department of Biostatistics, Johns Hopkins University, Email: abhidatta@jhu.edu}.
\hspace{.2cm}\\
Department of Biostatistics, Johns Hopkins University
}
\date{}

\maketitle
\begin{abstract}
Analysis of geospatial data has traditionally been model-based, with a mean model, customarily specified as a linear regression on the covariates, and a Gaussian process covariance model, encoding the spatial dependence. While non-linear machine learning algorithms like neural networks are increasingly being used for spatial analysis, current approaches depart from the model-based setup and cannot explicitly incorporate spatial covariance. 
\blue{We propose {\em NN-GLS}, 
 embedding neural networks 
 directly within the traditional Gaussian process (GP) geostatistical model} to accommodate non-linear mean functions while retaining all other advantages of GP, like explicit modeling of the spatial covariance 
 and predicting at new locations via kriging. \blue{In NN-GLS, estimation of the neural network parameters for the non-linear mean of the Gaussian Process explicitly accounts for the spatial covariance through use of the generalized least squares (GLS) loss, thus extending the linear case.} We show that NN-GLS admits a representation as a special type of graph neural network (GNN). This connection facilitates the use of standard neural network computational techniques for irregular geospatial data, enabling novel and scalable mini-batching, backpropagation, and kriging schemes. \blue{We provide methodology to obtain uncertainty bounds for estimation and predictions from NN-GLS.} Theoretically, we show that NN-GLS will be consistent for irregularly observed spatially correlated data processes. \blue{We also provide a finite sample concentration rates which quantifies the need to accurately model the spatial covariance in neural networks for dependent data.} To our knowledge, these are the first large-sample  results for any neural network algorithm for irregular spatial data. We demonstrate the methodology through numerous simulations and an application to air pollution modeling. We develop a software implementation of NN-GLS in the Python package \href{https://pypi.org/project/geospaNN/}{geospaNN}.
\end{abstract}
\noindent%
{\it Keywords:}  geostatistics, Gaussian process, neural networks, graph neural networks, machine learning, kriging, consistency. 
\vfill

\spacingset{1.4} 
\section{Introduction}\label{sec-intro}

{\em Geostatistics}, the analysis of geocoded data, is traditionally based on stochastic process models, which offer a coherent way to model data at any finite collection of locations while ensuring the generalizability of inference to the entire region. 
{\em Gaussian processes (GP),} with a mean function capturing effects of covariates and a covariance function encoding the spatial dependence, is a staple for geostatistical analysis \citep{stein1999interpolation,banerjee2014hierarchical,cressie2015statistics}. 
The mean function of a Gaussian process is often modeled as a linear regression on the covariates. 
The growing popularity and accessibility of machine learning algorithms such as random forest \citep{breiman2001random}, gradient boosting \citep{freund1999short}, and neural networks \citep{goodfellow2016deep}, capable of modeling complex non-linear relationships has heralded a paradigm shift. Practitioners are increasingly shunning models with parametric assumptions like linearity in favor of machine-learning approaches that can capture non-linearity and high-order interactions. 
In particular, deep neural networks 
have seen considerable recent adoption and adaptation for geospatial data \citep[see][for a comprehensive review]{wikle2023statistical}. 

Many of the machine-learning based regression approaches assume independent observations, implicit in the choice of an additive loss (e.g., {\em ordinary least squares} or {\em OLS} loss) as the objective function used in estimating the algorithm parameters. Explicit encoding of spatial dependency via a covariance function, as is common in process-based geospatial models, is challenging within these algorithms. Current renditions of neural networks for spatial data circumvent this by using spatial co-ordinates or some transformations (like distances or basis functions) as additional covariates 
\citep{gray2022use,chen2020deepkriging,wang2019nearest}. 
These \blue{{\em added-spatial-features} neural networks incorporate all spatial information  into the mean, thus assuming that the error terms are independent. Consequently, they leave the GP model framework, abandoning its many advantages. They can suffer from curse of dimensionality on account of adding many basis functions and cannot provide a separate estimate of the covariate effect $E(Y\given \bX)$ (see Section \ref{sec-mtd-NN}).} 

We propose a novel algorithm to estimate non-linear means \blue{with neural networks} within the traditional Gaussian process models while   explicitly accounting for the spatial dependence encoded in the GP covariance matrix. The core motivation comes from the extension of the ordinary least squares (OLS) to generalized least squares (GLS) for linear models with dependent errors. For correlated data, GLS is more efficient than OLS according to the Gauss-Markov theorem. 
\blue{For a neural network embedded as the mean of a GP, the GLS loss naturally arises as the negative log-likelihood.} 
We thus refer to our algorithm as {\em NN-GLS}. We retain all advantages of the model-based framework, including separation of the covariate and spatial effects thereby allowing inference on both, parsimonious modeling of the spatial effect through the GP covariance function circumventing the need to create and curate spatial features, and seamlessly offering predictions at new locations via kriging. NN-GLS is compatible with any neural network architecture for the mean function 
and with any family of GP covariance functions. 

We note that the philosophy of GLS has been adopted for non-linear spatial analysis before. \blue{\cite{nandy2017additive} propose {\em GAM-GLS}, using GLS loss based on Gaussian Process covariance for estimating parameters of generalized additive models (GAM). GAM-GLS improves over GAM for non-linear additive function estimation by explicitly accounting for the spatial covariance. However, like GAM, GAM-GLS} \cyan{does} {not account for interactions among covariates. 
Recently, tree-based machine learning algorithms like  
Random Forests \citep[{\em RF-GLS},][]{saha2023random} 
and boosted trees \citep[{\em GP-boost},][]{sigrist2022gaussian,iranzad2022gradient} has been extended to use GLS loss. Forest and tree estimators use a brute force search to iteratively grow the regression trees, requiring multiple evaluations of the GLS loss within each step. This severely reduces the scalability of these approaches. RF-GLS also requires pre-estimation of the spatial parameters, which are then kept fixed during the random forest estimation. 

NN-GLS avoids both the issues \blue{of lack of scalability and prefixed spatial covariance parameters of existing GLS-based non-linear methods} by offering a representation of the algorithm as a special type of {\em graph neural network (GNN)}. We show that NN-GLS using any neural network architecture for the mean and a {\em Nearest Neighbor Gaussian Process} \citep[NNGP,][]{datta2016hierarchical} for the covariance is a GNN with two graph-convolution layers based on the nearest-neighbor graph \blue{where the graph-convolution weights are naturally derived as kriging weights.} 
Leveraging this representation of the model as a GNN, we can exploit the various computing techniques used to expedite neural networks. This includes using {\em mini-batching} or {\em stochastic gradients} to run each iteration of the estimation on only a subset of the data 
\blue{and representing kriging predictions using GNN convolution and deconvolution. The spatial covariance parameters are now simply weights parameters in the network and are updated throughout the training.}  \blue{Finally, we provide a spatial bootstrap-based approach to construct interval estimates for the non-linear mean function.}

We provide a comprehensive theoretical study of neural networks when the observations have spatially correlated errors arising from a stochastic process. Our theoretical contribution distinguishes itself from the current body of theoretical work on neural networks in two main ways. The existing asymptotic theory of neural networks (see Section \ref{sec:threv} for a brief review) does not consider spatial dependence in either the error process generating the data (often being restricted to i.i.d. errors) or introduce any procedural modifications to the neural network algorithm to explicitly encode spatial covariance. 

Our theory of NN-GLS accounts for spatial dependence in both the data generation and the estimation algorithm. We prove general results on the existence and the consistency 
of NN-GLS which subsumes special cases of interest including \blue {irregular spatial data designs}, popular GP models like the Mat\'{e}rn Gaussian process, and using NNGP for the GLS loss. \blue{We also provide finite-sample error rates for the mean function estimation from NN-GLS. The error bound 
provides novel insights on the necessity of modeling the spatial covariance in neural networks, showing that use of 
OLS loss in NN, ignoring spatial dependence, 
will lead to 
larger error rates. 
To our knowledge, these are the first theoretical results for neural networks for irregularly sampled, spatially dependent data.}



The rest of the manuscript is organized as follows. In Section \ref{sec-mtd}, we review process-based geostatistical models and existing spatial neural networks. Section \ref{sec-mtd-NNGP} describes the idea of combining the two perspectives. 
Section \ref{sec:gnn} formally proposes the algorithm NN-GLS, depicts its connection with graph neural networks (GNN), and provides scalable estimation, prediction \blue{and inference} algorithms. 
Section \ref{sec:th} presents the theoretical results. 
Section \ref{sec-sim} and \ref{sec-real}  respectively demonstrate the performance of NN-GLS in simulated and real data. 
\section{Preliminaries}\label{sec-mtd}

\subsection{Process-based geostatistical modeling}\label{sec-splmm}
Consider spatial data collected at locations $s_i$, $i=1,\ldots,n$, 
comprising a 
covariate vector  $\bX_i := \bX(s_i) \in \mathbb R^d$ and a response $Y_i:=Y(s_i) \in \mathbb R$. 
Defining $\bY=(Y_1,\ldots,Y_n)'$ and the $n \times d$ covariate matrix $\Xb$ similarly, the spatial linear model is given by
\begin{equation}\label{eq:splm}
    \bY \sim \mathcal{N}(\Xb\bm{\beta},\bSigma(\btheta))
\end{equation}

\noindent where $\bSigma:=\bSigma(\btheta)$ is a $n \times n$ covariance matrix. 
A central objective in spatial analysis is to extrapolate inference beyond just the data locations 
to the entire continuous spatial domain. Stochastic processes are natural candidates for such domain-wide models. 
In particular, (\ref{eq:splm}) can be viewed as a finite sample realization of a Gaussian process (GP) 
\begin{equation}\label{eq:stoc}
Y(s) = \bX(s)^\top\bm{\beta} + \epsilon(s), \  \epsilon(\cdot) \sim GP(0,\Sigma(\cdot,\cdot)).
\end{equation}
where $\epsilon(s)$ is a zero-mean Gaussian process modeling the spatial dependence via the covariance function $\Sigma(\cdot,\cdot)$ such that $\Sigma(s_i,s_j)=Cov(Y(s_i), Y(s_j))$. 
Often, $\epsilon(\cdot)$ can be decomposed into a latent spatial GP and a non-spatial (random noise) process. 
This 
results in the variance decomposition $\bSigma=\Cb + \tau^2 \Ib$ where $\Cb$ is the covariance matrix corresponding to the latent spatial GP and $\tau^2$ is the variance for the noise process. One can impose plausible modeling assumptions on the nature of spatial dependence like stationarity ($C(s_i,s_j)=C(s_i-s_j)$) or isotropy ($C(s_i,s_j)=C(\|s_i-s_j\|)$) to induce parsimony, thereby requiring a very-low dimensional parameter $\btheta$ to specify the covariance function. 

\blue{{\em Nearest Neighbor Gaussian Processes} \citep[NNGP,][]{datta2016hierarchical} are used to model the spatial error process when the number of spatial locations $n$ is large as evaluation of the traditional GP likelihood (\ref{eq:splm}), requiring $O(n^3)$ computations, becomes infeasible. 
NNGP provides a sparse approximation $\tilde{\mathbf{\Sigma}}^{-1}$ to the dense full GP precision matrix $\bSigma^{-1}$. 
This is constructed using a {\em directed acyclic graph (DAG)} based on pairwise distances among the $n$ data locations, such that each node (location) has almost $m \ll n$ directed (nearest) neighbors. 
Letting $N(i)$ be the set of neighbors of location $s_i$ in the DAG, we have 
\begin{equation}\label{eq:nngp_bf}
\begin{split}
\tilde{\mathbf{\Sigma}}^{-1} &= (\Ib - \Bb)^\top\mathbf{F}^{-1}(\Ib - \Bb), \mbox{ where }\\
\Bb_{i,N(i)} &= \mathbf{\Sigma}\big(i,N(i)\big)\mathbf{\Sigma}\big(N(i), N(i)\big)^{-1},\, B_{ij}=0 \mbox{ elsewhere}, \mbox{ and }\\ 
\mathbf{F}_{ii} &= \mathbf{\Sigma}_{ii} - \mathbf{\Sigma}\big(i, N(i)\big)\mathbf{\Sigma}\big(N(i), N(i)\big)^{-1}\mathbf{\Sigma}\big(N(i), i\big).
\end{split}
\end{equation}
Here $\Bb$ is a strictly lower triangular matrix and $\Fb$ is a diagonal matrix.
NNGP precision matrices only require 
 inversion of $n$ small matrices of size $m\times m$. As $m \ll n$, evaluation of NNGP likelihood requires total $O(n)$ time and storage.}

\subsection{Neural networks}\label{sec-mtd-NN}
Artificial neural networks (ANN) or, simply, neural networks (NN) are widely used to model non-parametric regression $f$ where $E(Y_i) = f(\bX_i)$. 
Mathematically, an $L$-layer {\em feed-forward neural networks} or {\em multi-layer perceptron (MLP)} can be described as,
\begin{equation}\label{def-NN}
\begin{split}
\bA^{(0)}_i = \bX_i, &\quad 
\bZ^{(l)}_i = \Wb_{(l)}^{\top} \bA^{(l-1)}_i,\, \bA^{(l)}_i = \bg_l(\bZ_i^{(l)}),\,   l=1,\ldots,L\\
O_i &= \Wb_{(L+1)}^{\top}\bA_i^{(L)},\, f(\bX_i) = O_i,\, i=1, \ldots,n
\end{split}    
\end{equation}
where for layer $l$, $\bA^{(l)}$ represents the $d_l$ nodes, $\Wb_{(l)}$'s are the weight matrix, $\bZ^{(l)}_i$'s are a linear combination of nodes with unknown weights $\Wb_{(l)}$'s, and $\bg_{(l)}(\bZ_i^{(l)})$ denotes the known non-linear {\em activation} (link) functions $g_l(\cdot)$ (e.g., sigmoid function, ReLU function) applied to each component of $\bZ_i^{(l)}$. The final layer $O_i$ is called the {\em output layer} and gives the modeled mean of the response, i.e., $O_i=f(\bX_i)=E(Y_i)$. 
For regression, the unknown weights are estimated using {\em backpropagation} based on the 
ordinary least squares (OLS) loss
\begin{equation}\label{eq:mse}
    \sum_{i=1}^n (Y_i - f(\bX_i))^2.
\end{equation}

Estimation is expedited by mini-batching where the data are split into smaller and disjoint mini-batches and at each iteration the loss (\ref{eq:mse}) is approximated by restricting to one of the mini-batches, and cycling among the mini-batches over iterations. 

\blue{Current extensions of neural networks for spatial data have mostly adopted the {\em added-spatial-features} strategy, adding spatial features like the (geographical coordinates, spatial distance or basis functions) as extra covariates in the neural network \citep{wang2019nearest,chen2020deepkriging}. 
Formally, they model $Y_i = g(\bX,\bB(s_i))$ where $\bB(s)$ denotes a set of spatial basis functions, and $g$ denotes a neural network on the joint space of the covariates $\bX$ and the basis functions $\bB(s)$. These methods thus depart the GP model framework, and do not explicitly model spatial covariance as all the spatial structure is put into the mean, implicitly assuming that there is no residual spatial dependence. They 
cannot provide a separate estimate of the {\em covariate effect} $E(Y_i \given \bX_i)$ as it models $Y_i$ jointly as a function of $\bX_i$ and $\bB(s_i)$. 
Also, 
the number of added basis functions carries a tradeoff with more basis functions improving accuracy at the cost of increasing parameter dimensionality.}


\section{Neural networks for Gaussian process models}\label{sec-mtd-NNGP}
The two paradigms reviewed in Section \ref{sec-mtd} are complimentary in their scope. The popularity of the geospatial linear models (Section \ref{sec-splmm}) is owed to their simplicity, interpretability, and parsimony -- separating the covariate effect and the spatial effects, modeling the former through a linear mean, and the latter \blue{parsimoniously} through the GP covariance, \blue{ specified typically using only 2-3 interpretable parameters by encoding stationarity or isotropy}.
However, it relies on the strong assumption of a linear covariate effect. Neural networks can estimate arbitrary non-linear covariate effects. However, implicit in the usage of the OLS loss (\ref{eq:mse}) for NN is the assumption that the data units are independent. This is violated for spatial data, where the error process is a dependent stochastic process as in (\ref{eq:stoc}). \blue{To our knowledge, there has been no previous work in directly incorporating spatial covariances in the neural network estimation process, with existing spatial neural networks mostly adopting the added-spatial-features strategy reviewed in Section \ref{sec-mtd-NN}}.


We bridge the paradigms \blue{of traditional geosptatial modeling and neural networks by embedding neural networks directly into the GP model}, allowing the mean of the GP to be non-linear and \blue{modeling it with NN. We propose estimating the NN parameters using a novel GLS loss with the GP covariance matrix, which arises naturally from the log-likelihood of the GP model. We retain all advantages of GP based modeling framework, including separating the covariate and spatial effects into the mean and covariance structures, and obtaining predictions at new locations simply through kriging. In Section \ref{sec:gnn}, we show that when using Nearest neighbor Gaussian processes for the spatial covariance, one can make a novel and principled connection between graph neural networks and geospatial models for irregular data, facilitating a scalable algorithm for estimation and prediction.}

\subsection{NN-GLS: Neural networks using GLS loss}\label{sec:gls}
We extend (\ref{eq:stoc}) to 
\begin{equation}\label{model-sp-nonlinear}
    Y(s) = f(\bX(s)) + \epsilon(s); \ \epsilon(\cdot) \sim GP(0,\Sigma(\cdot,\cdot)).
\end{equation}
Estimating $f$ in (\ref{model-sp-nonlinear}) using neural networks (\ref{def-NN}) needs to account for the spatial dependence modeled via the GP covariance. Using the OLS loss (\ref{eq:mse}) ignores this covariance. We now extend NN to explicitly accommodate the spatial covariance in its estimation. 
Let $\bm{f}(\bX)=\big(f(\bX_1), f(\bX_2), \cdots, f(\bX_n)\big)^\top$. Then the data likelihood for the model (\ref{model-sp-nonlinear}) is
\begin{equation}\label{eq:spnlm}
    \bY \sim \mathcal{N}(\cyan{\bm{f}}(\bX), \bSigma).
\end{equation}
This is a non-linear generalization of the spatial linear model (\ref{eq:splm}). \blue{For parameter estimation in linear models like (\ref{eq:splm}) for dependent data, the OLS loss is replaced by a {\em generalized least squares (GLS)} loss using the covariance matrix $\bSigma$, as it is more efficient according to the Gauss-Markov theorem. }
Similarly, to estimate $f$ using NN, we propose 
using the GLS loss 
\begin{equation}\label{def-NN-loss-3}
\mathcal{L}_n(f) = \frac{1}{n}(\bY - \bm{f(\bX)})^{\top}\Qb(\bY - \bm{f(\bX)}), 
\end{equation}
which accounts for the spatial dependency via the {\em working precision matrix} $\Qb$, which equals $\bSigma^{-1}$ or, more practically, an estimate of it. 

We refer to the neural network estimation using the GLS loss (\ref{def-NN-loss-3}) as {\em NN-GLS}. Conceptually, generalizing NN to NN-GLS is well-principled as minimizing the GLS loss (\ref{def-NN-loss-3}) with $\Qb = \bSigma^{-1}$ is equivalent to obtaining a maximum likelihood estimate of $f$ in (\ref{eq:spnlm}). 
In practice, however, for spatial dependence modeled using GP, the GLS loss ushers in multiple computational issues for both mini-batching and backpropagation, techniques fundamental to the success of NN. \blue{As the GLS loss (\ref{def-NN-loss-3}) is not additive over the data units, minibatching cannot be deployed as for the OLS loss,} and back-propagation will involve computing an inverse of the dense $n \times n$ matrix $\bSigma$ 
which requires $O(n^2)$ storage and $O(n^3)$ time \blue{for each iteration}. 
These computing needs are infeasible for even moderate $n$. 

Next, we develop an algorithm {\em NN-GLS} with a specific class of GLS loss that mitigates these issues and offers a pragmatic approach to using NN for GP models.   

\section{NN-GLS as Graph Neural Network}\label{sec:gnn}
We offer a representation of NN-GLS as a special graph neural network (GNN). This connection allows \blue{use of OLS loss with transformed (graph-convoluted) data leading to} scalable mini-batching and backpropagation algorithms for NN-GLS. 
We propose choosing $\Qb$ as the precision matrix from a Nearest Neighbor Gaussian Process (see Section \ref{sec-splmm}) \cyan{, i.e., we optimize the objective function (\ref{def-NN-loss-3}) with $\Qb = \tSig^{-1}$ as provided in (\ref{eq:nngp_bf}).} 
Basis functions derived from NNGP have been used as added-spatial-features in neural networks \citep{wang2019nearest}. This differs from our approach of using NNGP to directly model the spatial covariance which is akin to the practice in spatial linear models. 


A GLS loss can be viewed as an OLS loss for the decorrelated response $\bY^*=\Qb^{\frac{1}{2}}\bY$, where $\Qb^{\frac{1}{2}}$ is the Cholesky factor of $\Qb = \Qb^{\frac{\top}{2}}\Qb^{\frac{1}{2}}$. Hence, decorrelation is simply a linear operation. A convenience of choosing $\Qb=\tsinv$, the NNGP precision matrix, is that decorrelation becomes a convolution in the nearest neighbor DAG. 
To elucidate, note that $\Bb_{i,N(i)}$ defined in (\ref{eq:nngp_bf}) 
denotes the kriging weights for predicting $Y_i$ based on its directed nearest neighbors $\bY_{N(i)}$ using a GP with covariance $\Sigma(\cdot,\cdot)$. Similarly, $\mathbf{F}_{ii}$ in (\ref{eq:nngp_bf}) is the corresponding nearest neighbor kriging variance. Letting $N^*[i]=N(i) \cup \{i\}$ denote the graph neighborhood for the $i^{th}$ node and defining weights 
\begin{equation}\label{eq:graph_weights}
    \bv_i^\top = \frac 1{\sqrt{\cyan{\Fb_{ii}}}} (1, - \Bb_{i,N(i)}),
\end{equation}
we can write $Y^*_i = \bv_i^\top \bY_{N^*[i]}$. \cyan{In this paper, index sets $N(i)$ or $N^*[i]$ are used in the subscript to subset a matrix of a vector.} Thus, the decorrelated responses $Y^*_i$ are simply convolution over the DAG used in NNGP with the graph convolution weights $\bv_i$ defined using kriging. Similarly, one can define the decorrelated output layer $O^*_i = \bv_i^\top \bO_{N^*[i]}$ using the same graph convolution, where $O_i$ is the output layer of the neural network $f$ (see (\ref{def-NN})). 

\begin{figure}[!t]
 \centering
\includegraphics[scale=0.5]{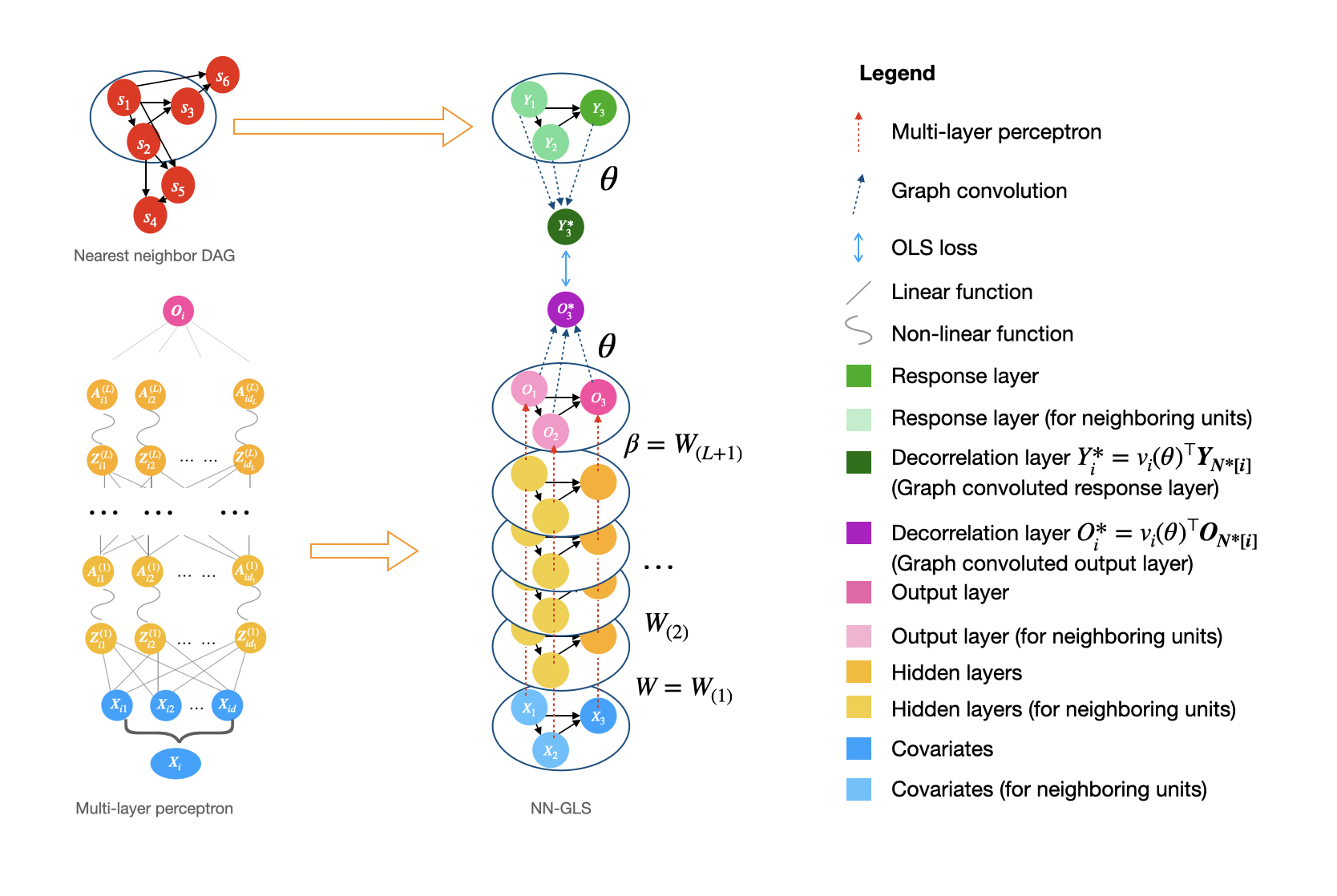}
\caption{NN-GLS as a graph neural network with two graph convolution layers}
\centering
\label{fig-dec-GNN}
\end{figure}

 The decorrelation step makes NN-GLS a special type of graph neural network (GNN) as depicted in Figure \ref{fig-dec-GNN}. In GNN, typically, the input observations are represented on a graph and the locality information is aggregated using convolution layers based on the graph structure (graph convolution). 
For NN-GLS, both the inputs $\bX_i$ and the responses $Y_i$ are graph-valued objects as they both correspond to the locations $s_i$, which are the nodes of the nearest neighbor DAG. 
First, the input $\bX_i$ is passed through the feed-forward NN (or multi-layer perceptron) to produce the respective output layer of $O_i=f(\bX_i)$. This is a within-node operation, and any architecture can be used (number of layers, number of nodes within each layer, sparsity of connections, choice of activation functions). Subsequently, the output layer $O_i$'s from the MLP is passed through an additional graph-convolution to create the decorrelated output layer of $O^*_i$'s using the weights $\bv_i$ (\ref{eq:graph_weights}). This layer is matched,  using the OLS loss, to the decorrelated response layer of $Y_i^*$'s, created from the $Y_i$'s using the same graph convolution. \cyan{So the objective function can be expressed as 
\begin{equation}\label{eq:olsgnn}
 \sum_i  (Y^*_i - O^*_i)^2 \mbox{ where } Y^*_i = \bv_i^\top \bY_{N^*[i]} \mbox{ and } O^*_i = \bv_i^\top \bO_{N^*[i]} .
\end{equation}} 
Thus fitting a GLS loss to an NN is simply fitting an OLS loss to a new NN with two additional decorrelation layers at the end of the MLP. \cyan{From the form of $\bv_i$ in (\ref{eq:graph_weights}) and of $\bB_{i,N(i)}$ and $\mathbf{F}_{ii}$ in (\ref{eq:nngp_bf}), it is clear that $\bv_i$'s can be calculated using matrices of dimension at most $m$, The NNGP precision matrix $\tSig^{-1}$ in (\ref{eq:nngp_bf}) or the covariance matrix $\tSig$ is never actually computed. Thus the loss function (\ref{eq:olsgnn}) is evaluated without any large matrix computation.}

To summarize, there are two ingredients of NN-GLS: a feed-forward NN or MLP (using any architecture) for intra-node operations; and a sparse DAG among the locations for incorporation of spatial correlation via inter-node graph convolutions. Information about the mutual distances among the irregular set of locations is naturally incorporated in 
the kriging-based convolution weights. We now discuss how this formulation of NN-GLS as a GNN with OLS loss helps leverage the traditional strategies to scale computing for NN.

\subsection{Mini-batching} 
Leveraging the additivity of the OLS loss \eqref{eq:olsgnn}, we can write $\mathcal L_n = \sum_{b=1}^B \mathcal L_{b,n}$ where $\mathcal{L}_{b,n} = \sum_{i \in S_b} (Y^*_i - O^*_i)^2$, $S_1, \ldots, S_B$ being a partition of the data-locations each of size $K$. The $Y_i^*$'s are uncorrelated and identically distributed (exactly under NNGP distribution, and approximately if the true distribution is a full GP), so the loss $L_{b,n}$ corresponding to the mini-batch $S_b$ are approximately i.i.d. for $b=1,\ldots, B$. Hence, parameter estimation via mini-batching can proceed like the i.i.d. case. 
The only additional computation we incur is during the graph convolution as obtaining all $O_i^*$ for the mini-batch $S_b$ involves calculating $O_i$'s for the neighbors of all units $i$ included in $S_b$.




\subsection{Back-propagation}\label{sec:backprop} Gradient descent or back-propagation steps for NN-GLS can be obtained without any large matrix inversions. We provide the back-propagation equations for a single-layer network with 
 \blue{$O_i = \bA_{i}^\top \bbeta$}, \cyan{$\delta_i=-2(Y_i - O_i)$, where $\bbeta$ and $\bA_{i} = \bg(\bZ_i)$ are $d_1$ dimensional, $d_1$ being the number of hidden nodes, $g$ is the known activation (link) function. Here $\bZ_i=\Wb^\top\bX_i$ is the hidden layer created using $d \times d_1$ weight-matrix $\Wb$ and $d \times 1$ input vector $\bX_i$}. 
\cyan{$O_i$, $\delta_i$ are concatenated into vectors $\bO$, $\bdelta$, and $\bX_{i}$, $\bZ_{i}$, $\bA_{i}$ are concatenated into matrices $\Xb$, $\Zb$, $\Ab$. 
With $\bv_i$ defined in (\ref{eq:graph_weights}), we introduce scalars 
\blue{$O^*_i = \bv_i^\top \bO_{N^*[i]} = \bv_i^\top \Ab_{\cdot,N^*[i]}^{\top}\bbeta$, $a^*_{ir}=\bv^\top_i \Ab^\top_{r,N^*[i]}$}
}, and $\delta^*_i = \bv_i^\top\bdelta_{N^*[i]}$. 
 \cyan{Using the loss function \eqref{eq:olsgnn} we derive} the following customized back-propagation updates: 
\begin{equation}\label{eq:backprop}
\begin{split}
\beta_r^{(t+1)} =\;& \beta_r^{(t)} - \gamma_t \sum_{i \in S_{b(t)}} \delta^*_i a^*_{ir} \\
w_{rj}^{(t+1)} =\;& w_{rj}^{(t)} - \gamma_t \beta_{r} \sum_{i \in S_{b(t)}} \delta^*_i \left(\bv_i^\top (\bg'(\Zb_{N^*[i],r}) \odot \Xb_{N^*[i],j}) \right) ,\\
\btheta^{(t+1)} =\;& \btheta^{(t)} + \frac{\gamma_t}{2} \sum_{i \in S_{b(t)}} \delta^*_i \left( \left(\frac{\partial \bv_i}{\partial \btheta}\right)^\top\bdelta_{N^*[i]}\right) 
\end{split}
\end{equation}
where $\btheta$ are parameters for the graph convolution weights $\bv_i$. Here $\gamma_t$ is the learning rate, and $S_{b(t)}$ is the mini-batch for the $t^{th}$ iteration and $g'$ is the derivative of $g$. \cyan{Similar equations can be established for networks with more than one-layer, although the derivations will be tedious. Instead, for multi-layer networks,} scalable gradient descent steps for NN-GLS can be conducted using off-the-shelf software. One can evaluate the mini-batch loss \cyan{$\mathcal L_{b,n}$ scalably, as the $\bv_i$'s can be computed without any large matrix computation,} and obtain scalable gradient descent updates using numerical, automatic, or symbolic differentiation. 

Note that in this GNN representation, \blue{the convolution weights $\bv_i$ are parametrized by spatial covariance parameter $\btheta$ (an example of $\btheta$ is the one for Mat\'{e}rn covariance family (Definition \ref{def-Matern}) where $\btheta = (\sigma^2, \phi, \nu)$)}. Hence, the updates for $\btheta$ can be just absorbed as a back-propagation step as shown in (\ref{eq:backprop}). Alternatively, the spatial parameters $\btheta$ can also be updated, given the current estimate of $f$, by writing \cyan{$Y_i^*(\btheta) := Y_i^*$ and $O_i^*(\btheta) := O_i^*$ to note the dependence on $\btheta$} and maximizing the NNGP log-likelihood  \begin{equation}\label{eq:nngploglik}
\sum_i  \Big( (Y^*_i(\btheta) - O^*_i(\btheta))^2 + \log \mathbf{F}_{ii}(\btheta) \Big).
\end{equation} 
\cyan{We only use full optimization of \eqref{eq:nngploglik} to update the spatial parameters $\btheta$, replacing its gradient descent update. 
The network parameters are always updated using gradient descent.}  

\subsection{Kriging}\label{sec-mtd-krig}
\blue{Subsequent to estimating $\hat f(\cdot)$, 
NN-GLS leverages the GP model-based formulation to provide both point and interval predictions at a new location simply via kriging. Exploiting the GNN representation of NN-GLS with a NNGP working covariance matrix, kriging can be conducted entirely with the neural architecture, as we show below.

Given a new location $s_{0}$ and covariates $\bX_{0} := \bX(s_{0})$, first $\bX_{0}$ is passed through the trained feed-forward part (MLP) of the GNN  to obtain the output $O_{0}=\hat f (\bX_{0})$ (see Figure \ref{fig-dec-GNN}). Next, the new location $s_{0}$ is added as a new node to the nearest-neighbor DAG. Let $N(0)$ be its set of $m$ neighbors of $s_0$ on the DAG, $N^*[0]=\{s_{0}\} \cup N(0)$, and define the graph weights $\bv_{0}$ similar to (\ref{eq:graph_weights}). Then using the graph convolution step, we obtain the decorrelated output 
$O^*_{0}=\bv_{0}^\top\bO_{N^*[0]}$. As the decorrelated output layer $O^*$ is the model for the decorrelated response layer $Y^*$ in the GNN, we have $\hat Y^*_{0} = O^*_{0}$. Finally, as $Y^*_{0}=\bv_{0}^\top\bY_{N^*[0]}$ is the graph-convoluted version of $Y_{0}$, we need to deconvolve $\hat Y^*_{0}$ over the DAG to obtain $\hat Y_{0}$. This leads to the final prediction equation 
\begin{equation}\label{eq:gnnkrig}
    \hat Y_{0} =  \sqrt{\cyan{\mathbf{F}_{00}}}\hat Y^*_{0} +  \Bb_{0,N(0)}^\top \bY_{N(0)}.
\end{equation}}
It is easy to verify that the prediction (\ref{eq:gnnkrig}) is exactly same as the $m$-nearest neighbor kriging predictor for the spatial non-linear model (\ref{eq:spnlm}), i.e.,

\begin{equation}\label{eq:krig}
   \hat Y_{0} = \hat{f}(\bX_{0}) + \mathbf{\Sigma}(s_{0}, N(0))\bSigma(N(0), N(0))^{-1}(\bY_{N(0)} - \hat {\bm{f}}_{N(0)}).
\end{equation}

\blue{Additionally, prediction variance is simply 
$\sigma^2_0 =  \mathbf{F}_{00}$. 
As NN-GLS is embedded within the Gaussian Process framework, predictive distributions conditional on the data and parameters are Gaussian. So the prediction interval (PI) can be obtained as $[\hat{f}(\bX_0) + Z_{2.5}\sigma_0, \hat{f}(\bX_0) + Z_{97.5}\sigma_0]$, where $Z_{q}$ is the $q$-quantile of a standard normal distribution.} 

Thus, the GNN connection for NN-GLS offers a simple and coherent way to obtain kriging predictions \blue{entirely within the neural architecture.} 
 We summarize the \blue{estimation and prediction} 
from {\em NN-GLS} in Algorithm \ref{alg-NN-GLS-main} in Section \ref{sec:alg} of the Supplement. \blue{Section \ref{sec-mtd-CI} presents a spatial bootstrap for obtaining pointwise confidence intervals of the mean.}

\section{Theory}\label{sec:th}

\blue{
We review some relevant theory of neural network regression in Section \ref{sec:threv}. Our theoretical contributions have two main differences from the existing theory. 
First, the overwhelming majority of the theoretical developments focus on the setting of i.i.d. data units. To our knowledge, asymptotic theory for neural networks under spatial dependence among observations from an irregular set of locations has not been developed. 

Second, the existing theory has almost exclusively considered neural network methods like the vanilla NN (using the OLS loss) that do not explicitly accommodate spatial covariance among the data units. We provide general results on existence (Theorem \ref{Thm-exist-2}) and asymptotic consistency (Theorem \ref{Thm-main-1}) of NN-GLS that explicitly encodes the spatial covariance via the GLS loss. In Propositions \ref{prop-main-2} and \ref{prop-main-1}, we show that consistency of NN-GLS holds for spatial data generated from the popular Mat\'ern GP under arbitrarily irregular designs. We also derive the finite-sample error rate of NN-GLS (Theorem \ref{thm-rate}) which shows how a poor choice of the working covariance matrix in the GLS loss  will lead to large error rates, highlighting the necessity to accurately model spatial covariance in neural networks.}

\subsection{Notations and Assumptions}\label{sec:assume} 
Let $\mathbb{R}$ and $\mathbb{N}$ denote the set of real numbers and natural numbers respectively. $\|\cdot\|_{p}$ denotes the $\ell_p$ norm for vectors or matrices,  
 $0 < p \leq \infty$.  
Given the covariates $\bX_1, \bX_2, \cdots, \bX_n$, for any function $f$, we define the norm 
$\|f\|^2_n = \frac{1}{n}\sum_{i=1}^nf^2(X_i)$. 
Given a $n\times n$ matrix $\Ab$, $\lambda(\Ab)$ denotes its eigenspace, $\lambda_{\max} = \sup\{\lambda(\Ab)\}$, and $\lambda_{\min} = \inf\{\lambda(\Ab)\}$. 
A sequence of numbers $\{a_n\}_{n \in \mathbb{N}}$ is $O(b_n)$ ($o(b_n)$) if the sequence $\{|a_n/b_n|\}_{n \in \mathbb{N}}$ is bounded from above (goes to zero) as $n \to \infty$. 
Random variables (distributions) $X \sim Y$ means $X$ and $Y$ have the same distribution. 
We first specify the assumption of the data generation process. 
\begin{innercustomass}[Data generation process]\label{Asmp-1}
The data $Y_i:=Y(s_i), i=1,\dots,n$ is generated from a GP with a non-linear mean, i.e., $Y_i = f_0(\bX_i) +\epsilon_i$, where $f_0(\cdot)$ is a continuous function, \blue{$\{\bX_i\}_{i = 1, \cdots, n}$ are fixed covariate vectors in a compact subset in $\mathbb{R}^d$,} and the error process $\{\epsilon_i\}$ is a 
GP such that \blue{the maximum (minimum) eigenvalue of the covariance matrix $\mathbf{\Sigma}=Cov(\bY)$ is uniformly (in $n$) upper-bounded (lower-bounded) by $\Psi_{high}(\Psi_{low})$.}
\end{innercustomass}

Assumption \ref{Asmp-1} imposes minimal restrictions on the data generation process. The mean $f_0$ is allowed to be any continuous function. 
The restriction on the eigenvalues of the GP covariance matrix is tied to the spatial design. We show in Propositions \ref{prop-main-2} and \ref{prop-main-1} that this is satisfied for common GP covariance choices for any irregular set of locations in $\mathbb R^2$ \blue{under the {\em increasing domain} asymptotics where locations} are separated by a minimum distance. 

We next state assumptions on the analysis model, i.e., the neural network family and the GLS working precision matrix. We consider a one-layer neural network class:
\begin{equation}\label{def-NN-class}
\mathcal{F}_0:=\left\{\alpha_0+\sum_{i=1}^n\alpha_i\sigma(\Wb_i^\top\bX + \bw_{0i}), \ \Wb_i\in \mathbb{R}^{d\times p}, \ \bw_{0i}\in\mathbb{R}^d, \  \alpha_i \in \mathbb{R}\right\}.
\end{equation}
This formulation is equivalent to setting $L = 1$, $g_1(\cdot)$ as the sigmoid function $\sigma(\cdot)$ and $g_2(\cdot)$ as the identity function in (\ref{def-NN}). 
It is easy to see that $\mathcal{F}_0$ can control the approximation error, as one-layer NN are universal approximators. However, this class of functions $\mathcal{F}_0$ can be too rich to control the estimation error. 
A common strategy to circumvent this is to construct a sequence of increasing function classes, also known as seive, to approximate 
$\mathcal{F}_0$, i.e.,
\[
\mathcal{F}_1 \subseteq \mathcal{F}_2 \subseteq \cdots\subseteq \mathcal{F}_n\subseteq \mathcal{F}_{n+1} \subseteq  \cdots \subseteq \mathcal{F}_0.
\]
With careful tradeoff of the complexity of the function classes, it's possible to control the estimation error (in terms of the covering number of $\mathcal F_n$) using some 
suitable {\em uniform-law-of-large-number (ULLN)} while still being able to keep the approximation error in check. 
Following \cite{shen2023asymptotic} we consider the seive given below. 
\begin{innercustomass}[Function class]\label{Asmp-2} The mean function $f$ is modeled to be in the NN class
\begin{equation}\label{def-F_n} 
\begin{split}
  \mathcal{F}_n = &\Bigg\{ \alpha_0+\sum_{j=1}^{r_n}\alpha_j\sigma(\bm{\gamma}_j^\top\bX + \gamma_{0,j}): \bm{\gamma}_j \in \mathbb{R}^d, \alpha_j, \gamma_{0,j} \in \mathbb{R},\\
  &\sum_{j=0}^{r_n}|\alpha_j| \leq V_n \text{ for some } V_n > \frac{1}{L_{lip}},\, \max_{1\leq j\leq r_n}\sum_{i,j = 0}^d|\gamma_{i,j}|\leq M_n \text{ for some } M_n > 0\Bigg\},
\end{split}
\end{equation}
where $r_n, V_n, M_n \to \infty$ as $n \to \infty$ and satisfies the scaling
\begin{equation}\label{eq:scaling}
    r_nV_n^2\log V_nr_n = o(n).
\end{equation}
Here the activation function $\sigma(\cdot)$ is a Lipschitz function on $\mathbb{R}$ with range $[-r_b, r_b]$ and Lipschitz constant \blue{$L_{lip}$}. (For the sigmoid function, $r_b = 1$ and \blue{$L_{lip} = 1/4$}).
\end{innercustomass}

\cite{hornik1989multilayer} has shown that $\cup_n\mathcal{F}_n$ is dense in the continuous function class $\mathcal{F}_0$ 
which will 
control the approximation error. The estimation error will depend on the covering number for this class which can be controlled under the scaling rate (\ref{eq:scaling}). 

Finally, to guarantee the regularity of the GLS loss (\ref{def-NN-loss-3}) used for estimating the NN function $f$, 
we require 
conditions on the working precision matrix $\Qb$. \blue{Instead of directly imposing these conditions on $\Qb$, we look at the {\em discrepancy matrix} 
\begin{equation}\label{eq:disc_mat}
    \Eb = \mathbf{\Sigma}^{\frac{\top}{2}}\Qb\mathbf{\Sigma}^{\frac{1}{2}}
\end{equation} 
which measures the discrepancy between the true covariance matrix $\mathbf{\Sigma}$ and the working covariance matrix $\Qb^{-1}$. This is because we will see later in Theorem \ref{thm-rate} that the finite sample error rates of NN-GLS will depend on $\Qb$ through the spectral interval of $\Eb$. First, without loss of generality, we can assume $\lambda_{\min}(\Eb) \leq 1 \leq \lambda_{\max}(\Eb)$. This is because the optimization in NN-GLS only depends on $\Qb$ up to a scalar multiplier. The following assumption states that additionally the spectral interval of $\Eb$ needs to be uniformly bounded in $n$.}

\begin{innercustomass}[\blue{Spectral interval of the discrepancy matrix.}]\label{Asmp-3}
For all $n$, all eigenvalues $\lambda$ of \blue{$\Eb = \mathbf{\Sigma}^{\frac{\top}{2}}\Qb\mathbf{\Sigma}^{\frac{1}{2}}$} lie in $(\Lambda_{low}, \Lambda_{high})$ for universal constants \blue{$0 <\; \Lambda_{low} \leq 1 \leq \;\Lambda_{high} < \infty$.}
\end{innercustomass}

\blue{
Uniform upper and lower bounds of $\Eb$ are used for ensuring the continuity of the GLS loss function and the consistency of the loss function (Lemma \ref{lemma-main-1}) using empirical process results.} We show in Propositions \ref{prop-main-2} and \ref{prop-main-1} how this Assumption is satisfied when $\Qb$ is either the true GP precision matrix $\bSigma^{-1}$, its NNGP approximation. \cyan{Of course, $\Qb=\Ib$, which can be viewed as NNGP precision matrix with $m=0$, trivially satisfies Assumption \ref{Asmp-3} if Assumption \ref{Asmp-1} holds as $\Eb$ is simply $\bSigma$.}  




\subsection{Main results}\label{sec:thgen}
We provide general results on the existence and consistency of neural network estimators minimizing a GLS loss for dependent data. 
The expected value of the GLS loss (\ref{def-NN-loss-3}) is:
\begin{equation}\label{def-L_n}
\begin{split}
    L_n(f) = \EE\big[\mathcal{L}_n(f)\big] &= \EE\left[\frac{1}{n}(\by - \bm{f}(x))^\top\Qb(\by - \bm{f}(x))\right] \nonumber \\
     & = \frac{1}{n}\EE\big[\bm{\epsilon}^\top\Qb\bm{\epsilon}\big] + \frac{1}{n}(\bm{f}_0(x) - \bm{f}(x))^\top\Qb(\bm{f}_0(x) - \bm{f}(x)) \nonumber
\end{split}
\end{equation}

It is evident from above that 
$f_0$ naturally minimizes $L_n(f)$, while NN-GLS tries to minimize $\mathcal{L}_n(f)$. 
We first show that such a minimizer exists in the seive class $\mathcal F_n$.
\begin{theorem}[Existence of seive estimator]\label{Thm-exist-2}
Given data $(Y_i, \bX_i, s_i), i = 1, \cdots, n$ generated from (\ref{model-sp-nonlinear}) \blue{under Assumption \ref{Asmp-1}}, and a working precision matrix $\Qb$ \blue{satisfying Assumption \ref{Asmp-3},} 
with the function classes $\mathcal F_n$ defined in (\ref{def-F_n}), there exists a seive estimator $\hat{f}_n$ such that 
\begin{equation}\label{eq:opt}
\hat{f}_n = \argmin\{\mathcal{L}_n(f): f\in \mathcal{F}_n \}.  
\end{equation}
\end{theorem}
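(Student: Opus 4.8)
The plan is to recast the minimization over the function class $\mathcal{F}_n$ as a minimization over a finite-dimensional, compact Euclidean parameter set, and then invoke the Weierstrass extreme value theorem: a continuous real-valued function on a compact set attains its minimum. Concretely, for each fixed $n$ the class $\mathcal{F}_n$ in (\ref{def-F_n}) is parametrized by the finite vector $\bm{\vartheta} = (\alpha_0, \ldots, \alpha_{r_n}, \bm{\gamma}_1, \ldots, \bm{\gamma}_{r_n}, \gamma_{0,1}, \ldots, \gamma_{0,r_n})$, so writing $f_{\bm{\vartheta}}$ for the corresponding network and $\Theta_n$ for the admissible set of $\bm{\vartheta}$, it suffices to establish (i) that $\Theta_n$ is compact and (ii) that the map $\bm{\vartheta} \mapsto \mathcal{L}_n(f_{\bm{\vartheta}})$ is continuous. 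A minimizer $\hat{\bm{\vartheta}}$ then exists, and $\hat f_n := f_{\hat{\bm{\vartheta}}} \in \mathcal{F}_n$ solves (\ref{eq:opt}).

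For (i), I would read off compactness directly from the norm constraints defining the sieve. The condition $\sum_{j=0}^{r_n} |\alpha_j| \leq V_n$ confines the output-layer weights to a closed $\ell_1$ ball of radius $V_n$, while $\max_{1 \leq j \leq r_n} \sum_{i} |\gamma_{i,j}| \leq M_n$ confines the hidden-layer weights to a closed and bounded set. Since $r_n$, $V_n$, $M_n$ are all finite for fixed $n$, $\Theta_n$ is a closed, bounded subset of a finite-dimensional Euclidean space, and hence compact by Heine--Borel.

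For (ii), I would note that $\mathcal{L}_n(f)$ in (\ref{def-NN-loss-3}) depends on $f$ only through the finite-dimensional vector $\bm{f}(\bX) = (f(\bX_1), \ldots, f(\bX_n))^\top$, via the quadratic form $\frac{1}{n}(\bY - \bm{f}(\bX))^\top \Qb (\bY - \bm{f}(\bX))$ with $\Qb$ a fixed, finite matrix (well-defined and in fact positive definite under Assumption \ref{Asmp-3}). Each coordinate $f_{\bm{\vartheta}}(\bX_i) = \alpha_0 + \sum_{j=1}^{r_n} \alpha_j \sigma(\bm{\gamma}_j^\top \bX_i + \gamma_{0,j})$ is continuous in $\bm{\vartheta}$ because $\sigma$ is continuous (indeed Lipschitz, by Assumption \ref{Asmp-2}), and finite sums and products of continuous maps are continuous. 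Composing the continuous map $\bm{\vartheta} \mapsto \bm{f}_{\bm{\vartheta}}(\bX)$ with the continuous quadratic form yields continuity of $\bm{\vartheta} \mapsto \mathcal{L}_n(f_{\bm{\vartheta}})$ on $\Theta_n$, completing the argument.

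I do not anticipate a substantive obstacle here, since this is a standard compactness-plus-continuity existence proof; the essential point is simply that the sieve constraints in (\ref{def-F_n}) are designed precisely so that the parameter set is compact for each fixed $n$, in contrast to the richer, noncompact class $\mathcal{F}_0$. The only care needed is bookkeeping: distinct $\bm{\vartheta}$ may yield the same function $f$, but this non-identifiability is harmless, as the theorem asks only for the existence of some minimizing $\hat f_n \in \mathcal{F}_n$ and not for its uniqueness. Assumptions \ref{Asmp-1} and \ref{Asmp-3} are not strictly needed for mere existence, but they guarantee that $\mathcal{L}_n$ is a well-defined, finite continuous functional, so that the Weierstrass argument applies cleanly.
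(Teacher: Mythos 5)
Your proof is correct as an existence argument and rests on the same two pillars as the paper's --- compactness induced by the sieve constraints and continuity of the quadratic GLS loss --- but it is implemented at a different level and via a different theorem, and the difference is not purely cosmetic. The paper treats $\mathcal{F}_n$ itself as a compact pseudo-metric space (compactness is cited from Lemma 2.1 of \cite{shen2023asymptotic}, which ultimately rests on the same Heine--Borel argument you give on the parameter set), establishes essentially the Lipschitz bound you sketch, namely $\big|\mathcal{L}_n(f) - \mathcal{L}_n(g)\big| \leq \frac{2\Lambda_{high}}{\Psi_{low}}\big(\|\by\|_n + V_n\big)\|f-g\|_n$ using Assumptions \ref{Asmp-1} and \ref{Asmp-3} to control $\|\Eb\|_2\|\bSigma^{-1}\|_2$, and then invokes Theorem 2.2 of \cite{white1991some} rather than Weierstrass. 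The point of that choice is that White's theorem is a \emph{measurable selection} result: it delivers $\hat f_n$ as a measurable map of the data $\omega$, which is what licenses the probability statements about $\hat f_n$ in the consistency and rate theorems that follow. Your parameter-space Weierstrass argument proves that a minimizer exists for each fixed realization of the data, but is silent on whether a minimizing selection can be chosen measurably across realizations --- and non-uniqueness (which you correctly note) is exactly where this bites, since one must select among possibly many minimizers in a measurable way. This is not a fatal gap: with your compactness of $\Theta_n$ and joint continuity of the loss in the parameters and the data, a standard measurable maximum theorem closes it, which is precisely the role White's theorem plays in the paper. Your closing remark is also accurate as far as bare existence goes --- continuity of the quadratic form only needs $\Qb$ to be a fixed finite matrix --- and indeed the paper uses Assumptions \ref{Asmp-1} and \ref{Asmp-3} in this proof only to produce the uniform Lipschitz constant $\frac{2\Lambda_{high}}{\Psi_{low}}$.
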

All proofs are \blue{in Section \ref{Append-prof}} of Supplementary materials. 
The existence result ensures that a seive estimator in the class of neural networks that minimizes the GLS loss is well-defined. It is then natural to study its asymptotic consistency, as we do in the next result. 

\begin{theorem}[Consistency]\label{Thm-main-1}
Under Assumptions \ref{Asmp-1}, \ref{Asmp-2}, and \ref{Asmp-3}, the NN-GLS estimate $\hat{f}_n$ (\ref{eq:opt}) minimizing the GLS loss $\mathcal{L}_n(f)$ in (\ref{def-NN-loss-3}) 
is consistent in the sense
$\|\hat{f}_n - f_0\|_n \xrightarrow{p} 0$.
\end{theorem}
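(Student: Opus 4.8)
The plan is to follow the standard sieve M-estimation route: convert the problem into controlling the \emph{expected} GLS loss excess, bound that excess via a basic inequality that splits into an approximation term and a uniform-fluctuation term, and invoke a uniform-law-of-large-numbers (ULLN) to kill the fluctuation. The one ingredient that makes the GLS (rather than OLS) setting go through is a spectral equivalence converting the $\Qb$-weighted quadratic form into the norm $\|\cdot\|_n$. Writing $\Qb = \bSigma^{-\top/2}\Eb\bSigma^{-1/2}$ from (\ref{eq:disc_mat}) and setting $\bm u = \bm f_0(\bX)-\bm f(\bX)$, $\bm w = \bSigma^{-1/2}\bm u$, Assumption \ref{Asmp-3} ($\lambda(\Eb)\in(\Lambda_{low},\Lambda_{high})$) and Assumption \ref{Asmp-1} (eigenvalues of $\bSigma$ in $[\Psi_{low},\Psi_{high}]$) give
\[
\frac{\Lambda_{low}}{\Psi_{high}}\,\|f_0-f\|_n^2 \;\le\; \frac1n\,\bm u^\top\Qb\,\bm u \;=\; \frac1n\,\bm w^\top\Eb\,\bm w \;\le\; \frac{\Lambda_{high}}{\Psi_{low}}\,\|f_0-f\|_n^2 .
\]
Since the displayed form of $L_n$ yields $L_n(f)-L_n(f_0)=\tfrac1n\bm u^\top\Qb\bm u$, this sandwich shows the expected-loss excess is equivalent, up to universal constants, to $\|f_0-f\|_n^2$. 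Hence it suffices to prove $L_n(\hat f_n)-L_n(f_0)\xrightarrow{p}0$.

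Second, I would deploy the basic inequality. By \cite{hornik1989multilayer}, $\cup_n\mathcal F_n$ is dense in $\mathcal F_0$, so there is a sequence $f_n^\ast\in\mathcal F_n$ with $\|f_n^\ast-f_0\|_n\to0$, which by the sandwich forces $L_n(f_n^\ast)-L_n(f_0)\to0$. Then decompose
\[
L_n(\hat f_n)-L_n(f_0)=\underbrace{\big[L_n(\hat f_n)-\mathcal L_n(\hat f_n)\big]}_{\mathrm{(I)}}+\underbrace{\big[\mathcal L_n(\hat f_n)-\mathcal L_n(f_n^\ast)\big]}_{\mathrm{(II)}}+\underbrace{\big[\mathcal L_n(f_n^\ast)-L_n(f_n^\ast)\big]}_{\mathrm{(III)}}+\underbrace{\big[L_n(f_n^\ast)-L_n(f_0)\big]}_{\mathrm{(IV)}}.
\]
Term $\mathrm{(II)}\le0$ because $\hat f_n$ minimizes $\mathcal L_n$ over $\mathcal F_n$ (the minimizer exists by Theorem \ref{Thm-exist-2}); term $\mathrm{(IV)}\to0$ by the preceding sentence; and $\mathrm{(I)},\mathrm{(III)}$ are each bounded by $\sup_{f\in\mathcal F_n}|\mathcal L_n(f)-L_n(f)|$ since $\hat f_n,f_n^\ast\in\mathcal F_n$. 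Thus everything reduces to the ULLN.

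Third, and this is the main obstacle, I would establish $\sup_{f\in\mathcal F_n}|\mathcal L_n(f)-L_n(f)|\xrightarrow{p}0$. Expanding with $Y=f_0(\bX)+\epsilon$ gives
\[
\mathcal L_n(f)-L_n(f)=\frac1n\big(\bm\epsilon^\top\Qb\bm\epsilon-\EE[\bm\epsilon^\top\Qb\bm\epsilon]\big)+\frac2n\,\bm\epsilon^\top\Qb\big(\bm f_0(\bX)-\bm f(\bX)\big).
\]
The first summand is $f$-free; since the eigenvalues of $\Qb\bSigma$ coincide with those of $\Eb$ and lie in $(\Lambda_{low},\Lambda_{high})$, we have $\mathrm{Var}(\bm\epsilon^\top\Qb\bm\epsilon)=2\,\mathrm{tr}((\Qb\bSigma)^2)=O(n)$, so this summand has variance $O(1/n)$ and vanishes by Chebyshev. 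The genuinely hard part is the second summand, a mean-zero Gaussian process indexed by $\mathcal F_n$: the bounded spectra of $\bSigma$ and $\Eb$ let me bound its increments by a constant multiple of the $\|\cdot\|_n$ metric, the covering number of the sieve is controlled through the uniform bounds $r_n,V_n,M_n$ of Assumption \ref{Asmp-2}, and a chaining argument combined with the scaling (\ref{eq:scaling}), $r_nV_n^2\log(V_nr_n)=o(n)$, exactly balances the metric-entropy growth against the $n^{-1/2}$ noise scale so the supremum tends to $0$ in probability. Crucially, the dependence in $\bm\epsilon$ never requires independence; it enters only through the spectral bounds, which is precisely why Assumptions \ref{Asmp-1} and \ref{Asmp-3} do the work here. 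Assembling $\mathrm{(I)}$--$\mathrm{(IV)}$ yields $L_n(\hat f_n)-L_n(f_0)\xrightarrow{p}0$, and the lower half of the sandwich inequality converts this into $\|\hat f_n-f_0\|_n\xrightarrow{p}0$, completing the argument.
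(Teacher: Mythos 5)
Your proposal is correct and follows essentially the same route as the paper's own proof: the spectral sandwich between the expected-loss excess and $\|f-f_0\|_n^2$ (via Assumptions \ref{Asmp-1} and \ref{Asmp-3}), the four-term basic inequality anchored at the sieve projection $\pi_{r_n}f_0$ with the minimizer property killing the middle term, and a ULLN in which the $f$-free quadratic term is handled by Chebyshev using $\mathrm{tr}(\Eb^2)=O(n)$ while the Gaussian linear term is controlled by sub-Gaussian increments in $\|\cdot\|_n$, covering numbers of $\mathcal F_n$, and chaining under the scaling (\ref{eq:scaling}). The only detail left implicit is that the chaining bound must be anchored at $\pi_{r_n}f_0$ (whose own fluctuation is controlled by universal approximation, since $f_0\notin\mathcal F_n$), exactly as the paper does via its Orlicz-norm argument.
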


To our knowledge, this is the first result on the consistency of neural networks \blue{for estimating the mean function of a spatially dependent process observed at an irregular set of locations}. We \blue{do not impose any assumption on the true mean function $f_0$ beyond continuity} and rely on very mild assumptions on the function class, and the covariance matrices of the data generation and analysis models. In Section \ref{sec:thspatial} we show that these assumptions are satisfied for typical GP covariances and irregular spatial data designs. Also, note that this general result does not restrict the nature of the dependence to be spatial. Hence, while spatial applications are the focus of this manuscript, Theorem \ref{Thm-main-1} can be used to establish consistency of neural networks for time-series, spatio-temporal, network, or other types of structured dependence. \blue{Section \ref{sec-sim-large} presents a simulation study empirically demonstrating this consistency.}

Theorem \ref{Thm-main-1} generalizes the analogous consistency result of \cite{shen2023asymptotic} from i.i.d. data and OLS loss to dependent error processes and the use of a GLS loss. Consequently, the proof addresses challenges that do not arise in the i.i.d. case. 
The spatial dependency makes the standard Rademacher randomization fail and prevents using the standard Uniform law of large number (ULLN) result. We overcome this via construction of a different normed functional space equipped with a new Orlicz norm to adjust for data dependence and use of the GLS loss. This enables applying a ULLN for our dependent setting by showing that the empirical process is well-behaved with respect to this Orlicz norm. 

\subsection{Mat\'ern Gaussian processes}\label{sec:thspatial}
We now establish consistency of NN-GLS for common GP covariance families, \blue{spatial data designs,} and choice of working precision matrices. The main task in applying the general consistency result (Theorem \ref{Thm-main-1}) for these specific settings is verifying compliance to the regularity assumptions -- i.e., the spectral bounds on the true Gaussian process covariance (Assumption \ref{Asmp-1}) and on the working precision matrix (Assumptions \ref{Asmp-3}). 

We provide consistency results of NN-GLS for spatial data generated from the Mat\'ern Gaussian process (\blue{see \ref{def-Matern}}). 
This is the predominant choice of covariance family in geostatistical models due to the interpretability of its parameters with a  marginal spatial variance $\sigma^2$, decay of spatial correlation $\phi$, smoothness $\nu$ of the underlying process \citep{stein1999interpolation}. 
Our first result considers data generated from a class of GP that contains the Mat\'ern family and where the working precision matrix is the true GP precision matrix. 


\begin{proposition}\label{prop-main-2}
Consider data generated from a spatial process $Y_i = f_0(\bX_i) + 
\epsilon(s_i)$ at locations $s_1,\ldots,s_n$ in $\mathbb R^2$, where $f_0$ is continuous, $\epsilon(s_i)$ is a Gaussian process with covariance function $\Sigma(s_i, s_j)=C(s_i, s_j)+\tau^2 I(s_i = s_j)$, 
 and $C(s_i, s_j) = C(\| s_i - s_j\|)$ is a covariance of a stationary spatial GP. 
Suppose the data locations are separated by a minimum distance $h > 0$, i.e.,  $\|s_i - s_j\| \geq h, \ \forall i\neq j$. 
Let $\mathbf{\Sigma}=\Cb + \tau^2\Ib$ denote the  covariance matrix of $\bY=(Y(s_1),\ldots,Y(s_n))^\top$, where $\Cb=(C(s_i - s_j))$. 
\blue{Then NN-GLS using $\Qb = \bSigma_n^{-1}$ is consistent , i.e., $\|\hat{f}_n - f_0\|_n \xrightarrow{p} 0$, 
if either (a) 
    $\tau^2 > 0$ and $C(u) = o\big(u^{-(2+\kappa)}\big)$ for some $\kappa >0$, or (b) 
    $\tau^2 \geq 0$ and $C(u)$ is a Mat\'{e}rn covariance function with parameters $\btheta = (\sigma^2, \phi, \nu)$.
}
\end{proposition}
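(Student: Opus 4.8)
The plan is to deduce the result directly from the general consistency theorem (Theorem \ref{Thm-main-1}) by verifying its three hypotheses. Assumption \ref{Asmp-2} is a modeling choice on the neural-network sieve and holds by construction. The key simplification here is that we take $\Qb = \bSigma^{-1}$, so the discrepancy matrix becomes $\Eb = \bSigma^{\top/2}\bSigma^{-1}\bSigma^{1/2} = \Ib$, whose every eigenvalue equals $1$; hence Assumption \ref{Asmp-3} holds trivially with $\Lambda_{low} = \Lambda_{high} = 1$. Thus everything reduces to checking Assumption \ref{Asmp-1}: that the eigenvalues of $\bSigma = \Cb + \tau^2\Ib$ are bounded above and below by constants independent of $n$.

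For the upper bound (needed in both cases (a) and (b)), I would control $\lambda_{\max}(\bSigma)$ by the maximum absolute row sum (Gershgorin), i.e. $\lambda_{\max}(\bSigma) \le \max_i \big(\tau^2 + \sum_j |C(s_i - s_j)|\big)$. The minimum-separation condition $\|s_i - s_j\| \ge h$ gives a packing bound: in $\mathbb{R}^2$ the number of locations $s_j$ with $kh \le \|s_i - s_j\| < (k+1)h$ is $O(k)$. Summing the covariance over these annuli, the row sum is dominated by $\sum_{k\ge 1} k \max_{u \in [kh,(k+1)h)} |C(u)|$, which converges under the tail condition $C(u) = o(u^{-(2+\kappa)})$ of case (a). Since the Mat\'ern covariance decays exponentially for large $u$, it satisfies this tail bound for every $\kappa$, so the same argument delivers the upper bound in case (b).

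The lower bound splits by case. In case (a), since $\Cb \succeq 0$ we get $\lambda_{\min}(\bSigma) \ge \tau^2 > 0$ immediately, so nothing further is needed. The substantive difficulty is case (b) when $\tau^2 = 0$, where I must bound $\lambda_{\min}(\Cb)$ away from $0$ uniformly in $n$ for the pure Mat\'ern kernel. Here I would pass to the spectral density $f$ of the Mat\'ern process ($f(\omega) \propto (\phi^2 + \|\omega\|^2)^{-(\nu+1)}$ in $\mathbb{R}^2$) and write, for any $\mathbf{a} \in \mathbb{R}^n$, $\mathbf{a}^\top \Cb \mathbf{a} = \int f(\omega)\, |\sum_i a_i e^{\mathrm{i}\omega^\top s_i}|^2\, d\omega$. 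Restricting the integral to a ball $\{\|\omega\| \le R\}$ on which $f$ is bounded below by a positive constant, the problem reduces to a lower bound of the form $\int_{\|\omega\|\le R} |\sum_i a_i e^{\mathrm{i}\omega^\top s_i}|^2\, d\omega \ge c\|\mathbf{a}\|^2$. This is exactly a multidimensional Ingham / large-sieve inequality for well-separated nodes: for $R$ chosen large enough relative to $1/h$, the exponentials $e^{\mathrm{i}\omega^\top s_i}$ are nearly orthogonal on $L^2(\{\|\omega\|\le R\})$, yielding such a constant $c$ depending only on $h$, $R$, and the dimension, but not on $n$.

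The main obstacle is this last step. Establishing the separation-based lower bound on the exponential-sum integral --- equivalently, a Narcowich--Ward type bound $\lambda_{\min}(\Cb) \ge c(h,\btheta) > 0$ for the Mat\'ern kernel matrix that is uniform in $n$ --- is the technical crux, since the Mat\'ern spectral density vanishes at infinity and so no global lower bound on $f$ is available; one genuinely needs to exploit both the minimum separation $h$ and the polynomial decay rate governed by $\nu$. All other steps are routine once the spectral bounds on $\bSigma$ are in hand, whereupon Theorem \ref{Thm-main-1} yields $\|\hat f_n - f_0\|_n \xrightarrow{p} 0$.
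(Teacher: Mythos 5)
Your proposal follows essentially the same route as the paper: taking $\Qb=\bSigma^{-1}$ makes $\Eb=\Ib$ so Assumption \ref{Asmp-3} holds trivially, the uniform upper eigenvalue bound comes from a row-sum plus annulus-packing argument under the minimum separation $h$, the nugget $\tau^2>0$ gives the lower bound in case (a), and the pure-Mat\'ern lower bound in case (b) comes from the spectral density combined with the separation condition. The step you flag as the technical crux is precisely what the paper closes by citing Wendland (2004), Theorem 12.3 --- the Narcowich--Ward-type bound you name --- which yields the explicit uniform constant $\Psi_{low}$ in Lemma \ref{lemma-min-eigen}, so your outline is correct and complete up to invoking that known result rather than reproving it.
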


The decay rate $C(u) = o\big(u^{-(2+\kappa)}\big)$ is satisfied by the Mat\'ern family \citep{abramowitz1948handbook}, so
 Proposition \ref{prop-main-2} proves the consistency of NN-GLS for Mat\'ern GP \blue{both with and without a nugget.} 
\blue{The result holds for any irregular spatial design in $\mathbb R^2$ being separated by a minimum distance. As the sample size $n$ grows this is equivalent to considering the increasing domain paradigm that is commonly adopted as Mat\'ern GP parameters are not identifiable if data are collected densely in a fixed spatial domain \citep{zhang2004inconsistent}. }

Propositions \ref{prop-main-2} describes the case where true covariance structure is known. In that case, it's possible to directly use the inverse of the covariance matrix as the working precision matrix in the GLS loss. However, this is often infeasible for multiple reasons. First, the true covariance parameters are usually unknown, and the working covariance matrix will typically use different (estimated) parameter values. Computationally, GLS loss using the full Mat\'ern GP covariance matrix will require $O(n^3)$ time and $O(n^2)$ storage, which is not available even for moderate $n$. The next proposition introduces a more pragmatic result proving the consistency of NN-GLS for data generated from Mat\'ern GP but when using a working precision matrix derived from NNGP (as described in Section \ref{sec:gnn}) and with parameter values different 
 from the truth. 
 

\begin{proposition}\label{prop-main-1}
Consider data generated as in Proposition \ref{prop-main-2} from a Mat\'ern GP with parameters $\btheta_0 = (\sigma_0^2, \phi_0, \nu_0,\tau_0^2)$ at locations separated by a \blue{minimum} distance $h > 0$. 
\blue{Let $\Qb$ be the NNGP precision matrix 
based on a Mat\'{e}rn covariance with parameters $\btheta = (\sigma^2, \phi, \nu, \tau^2)$, i.e. $C(h | \sigma^2, \phi, \nu) + \tau^2 I(h=0)$}, and using neighbor sets of maximum size $m$ with each location appearing in at most $M$ many neighbor sets. 
Then NN-GLS  
using $\Qb$ is consistent, i.e., $\|\hat{f}_n - f_0\|_n \xrightarrow{p} 0$ 
\blue{for any choice of $\phi > \frac 1h C^{*-1}\left(\min(1,\frac{\sigma^2 + \tau^2}{(2m+M\sqrt m)\sigma^2})\right)$, 
 where $C^*$ is the Mat\'ern covariance function with unit variance, unit spatial decay, and smoothness $\nu$, and $C^{*-1}$ is its inverse function.} 
\end{proposition}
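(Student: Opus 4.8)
The plan is to deduce the statement directly from the general consistency result, Theorem \ref{Thm-main-1}, by verifying its three hypotheses in this Mat\'ern/NNGP setting. Assumption \ref{Asmp-2} holds by the modeling choice of the sieve class, so the real work is the two spectral conditions. Assumption \ref{Asmp-1} asks for uniform (in $n$) two-sided eigenvalue bounds on the true covariance $\bSigma$ generated from the Mat\'ern $\btheta_0$; but the data-generating mechanism here is identical to that of Proposition \ref{prop-main-2}(b), so I would simply reuse the eigenvalue bounds established there. The genuinely new task is Assumption \ref{Asmp-3}: controlling the spectrum of the discrepancy matrix $\Eb = \bSigma^{\frac{\top}{2}}\Qb\bSigma^{\frac{1}{2}}$ when $\Qb = \tsinv$ is the NNGP precision built from the (possibly misspecified) Mat\'ern parameters $\btheta$.

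First I would reduce the spectral control of $\Eb$ to that of the NNGP covariance $\tSig = \Qb^{-1}$. Since $\Eb$ is similar to $\Qb\bSigma = \tsinv\bSigma$, a product of positive-definite matrices, its eigenvalues satisfy $\lambda_{\max}(\Eb) \le \lambda_{\max}(\bSigma)/\lambda_{\min}(\tSig)$ and $\lambda_{\min}(\Eb) \ge \lambda_{\min}(\bSigma)/\lambda_{\max}(\tSig)$. Combining this with the $\bSigma$-bounds already available from Assumption \ref{Asmp-1}, it suffices to bound $\lambda_{\max}(\tSig)$ above and $\lambda_{\min}(\tSig)$ below, uniformly in $n$.

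Next I would exploit the factorization $\tsinv = (\Ib - \Bb)^\top\Fb^{-1}(\Ib - \Bb)$ from (\ref{eq:nngp_bf}). Sandwiching the quadratic form $\bx^\top\tsinv\bx$ gives $\lambda_{\max}(\tSig) \le \max_i \Fb_{ii}/\sigma_{\min}(\Ib-\Bb)^2$ and $\lambda_{\min}(\tSig) \ge \min_i \Fb_{ii}/\|\Ib-\Bb\|_2^2$. The conditional (kriging) variances are controlled above by the marginal variance, $\Fb_{ii} \le \sigma^2 + \tau^2$, and below by the minimum-distance geometry (via a Gershgorin bound on the principal blocks $\bSigma(N^*[i],N^*[i])$, or directly by the nugget when $\tau^2 > 0$). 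The decisive quantity is therefore $\|\Bb\|_2$: once I establish $\|\Bb\|_2 < 1$ uniformly, the elementary bounds $\|\Ib-\Bb\|_2 \le 1 + \|\Bb\|_2$ and $\sigma_{\min}(\Ib-\Bb) \ge 1 - \|\Bb\|_2$ close out both eigenvalue estimates for $\tSig$, hence Assumption \ref{Asmp-3}.

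The hard part, and the step that pins down the explicit threshold on $\phi$, is showing $\|\Bb\|_2 < 1$. I would bound $\|\Bb\|_2$ through the sparsity of $\Bb$ — at most $m$ nonzero kriging weights per row (neighbor-set size) and at most $M$ per column (appearance count) — together with size bounds on the individual weights $\Bb_{i,N(i)} = \bSigma(i,N(i))\bSigma(N(i),N(i))^{-1}$. These weights are controlled by upper-bounding the neighbor covariances, $|\bSigma_{ij}| \le \sigma^2 C^*(\phi\|s_i-s_j\|) \le \sigma^2 C^*(\phi h)$ (here the minimum separation $h$ and the monotonicity of $C^*$ are essential), and lower-bounding $\lambda_{\min}(\bSigma(N(i),N(i)))$ by Gershgorin using the same small off-diagonal correlations. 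Propagating these estimates through the row/column sparsity counts yields a bound of the form $\|\Bb\|_2 \le (2m + M\sqrt m)\,\sigma^2 C^*(\phi h)/(\sigma^2+\tau^2)$, and the stated condition $\phi > \tfrac1h C^{*-1}\big(\min(1,\tfrac{\sigma^2+\tau^2}{(2m+M\sqrt m)\sigma^2})\big)$ is exactly what forces $C^*(\phi h) < \tfrac{\sigma^2+\tau^2}{(2m+M\sqrt m)\sigma^2}$, i.e. $\|\Bb\|_2 < 1$. With Assumption \ref{Asmp-3} thereby verified, Theorem \ref{Thm-main-1} delivers $\|\hat f_n - f_0\|_n \xrightarrow{p} 0$. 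I expect the main obstacle to be precisely this geometry-aware bounding of the kriging weights and the careful bookkeeping of the constants $m$ and $M$, since crude bounds would not reproduce the clean, parameter-explicit threshold in the statement.
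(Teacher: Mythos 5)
Your proposal is correct and follows the same overall skeleton as the paper's proof: Assumption \ref{Asmp-1} is inherited from Proposition \ref{prop-main-2} (Lemmas \ref{lemma-min-eigen} and \ref{lem:upper}), Assumption \ref{Asmp-3} is reduced to uniform two-sided eigenvalue bounds on $\Qb$ via $\lambda_{\min}(\Eb) \geq \lambda_{\min}(\bSigma)\lambda_{\min}(\Qb)$ and $\lambda_{\max}(\Eb) \leq \lambda_{\max}(\bSigma)\lambda_{\max}(\Qb)$, and your treatment of $\lambda_{\min}(\Qb)$ is essentially the paper's Lemma \ref{lem:nngp_prec_lower}: the same Gershgorin bound on the neighbor covariance blocks, the same row/column-sum accounting of the kriging weights through the $m$ and $M$ sparsity counts, the same bound $\Fb_{ii} \leq \sigma^2+\tau^2$, and the same threshold on $\phi$. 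Where you genuinely deviate is the upper bound on $\lambda_{\max}(\Qb)$. The paper (Lemma \ref{lem:nngp_prec_upper}) uses a telescoping identity: each NNGP quadratic-form term $(x_i - \bb_i^\top\bx_{N(i)})^2/f_i$ is a Mat\'ern conditional negative log-likelihood, so adding the neighbor-block quadratic form turns it into a joint block quadratic form, yielding $\bx^\top\Qb\bx \leq (M+1)\Psi_{low}^{-1}\|\bx\|_2^2$ with no constraint on $\phi$ and no control of $\|\Bb\|_2$ needed. You instead push the Cholesky factorization through, using $\|\Ib-\Bb\|_2 \leq 1+\|\Bb\|_2$ together with a lower bound on $\Fb_{ii}$ (via Gershgorin or the nugget). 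Your route is more unified --- the single mechanism $\|\Bb\|_2 < 1$ drives both eigenvalue bounds --- but it makes the upper bound contingent on the $\phi$-threshold, whereas the paper's upper bound is unconditional; both close the argument. One bookkeeping caveat: your quoted intermediate bound $\|\Bb\|_2 \leq (2m+M\sqrt m)\,\sigma^2 C^*(\phi h)/(\sigma^2+\tau^2)$ is not what the Gershgorin step actually yields --- the denominator should be $\sigma^2+\tau^2-m\sigma^2 C^*(\phi h)$, exactly as in (\ref{prop-mat-eq-1})--(\ref{prop-mat-eq-2}). Using $\|\Bb\|_2 \leq \sqrt{\|\Bb\|_1\|\Bb\|_\infty}$ with the correct denominators, the stated threshold on $\phi$ still forces $\|\Bb\|_2 < 1$ (indeed $<1/2$, by AM--GM comparing $\sqrt{m\cdot M\sqrt m}$ with $m+M\sqrt m$), so your conclusion stands, but the constant as written is not derivable and should be repaired before the argument is considered complete.
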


Proposition \ref{prop-main-1} provides consistency of NN-GLS for Mat\'ern GP when using NNGP working precision matrices. This is the actual choice of $\Qb$ used in our algorithm as it can be represented as a GNN, thereby facilitating a scalable implementation, as described in Section \ref{sec:gnn}. 
The result allows the spatial parameter used in the working covariance to be different from the truth. \blue{The spatial decay parameter $\phi$ for the working precision matrix $\Qb$ needs to be sufficiently large to ensure $\Qb$ is not close to being singular which will lead to numerical instability. We note that this is not a restriction on the data generation process but just on the working precision matrix and can be chosen by the user. This is not enforced in practice as these parameters are estimated as outlined in Section \ref{sec:backprop}. In Section \ref{Append-parameter}, we provide the theortical intuition and empirical evidence that NN-GLS estimates the spatial parameters consistently.} The restriction on each location appearing in at most a fixed number of neighbor sets of NNGP is usually satisfied in all but very pathological designs.

To our knowledge, Propositions \ref{prop-main-2} and \ref{prop-main-1} on the consistency of NN-GLS are the first examples of consistency of any machine-learning-based approach to estimating a non-linear mean of a Mat\'ern GP for irregular spatial designs. The only similar result in the literature is the consistency of RF-GLS, the GLS-based random forest approach of \cite{saha2023random}. However, their result relies on a one-dimensional regular lattice design and restricts the true Mat\'ern process smoothness to be half-integers. 
Our result is valid for any irregular spatial designs in the two-dimensional space -- the most typical setting of spatial data collection. The result also holds for any true parameter values of the Mat\'ern process. 
\vskip-1cm\blue{\subsection{Finite-sample error rates}}\label{sec:rate}
\blue{We also obtain the finite-sample error rate for NN-GLS that informs about the importance of using the GLS loss in neural networks for spatial data.}
\blue{\begin{theorem}[Convergence rate]\label{thm-rate}
Let $\pi_{r_n}$ be the projection of $f_0$ in $\mathcal F_n$. 
Under Assumptions \ref{Asmp-1},  \ref{Asmp-2}, and \ref{Asmp-3}, the NN-GLS estimate $\hat{f}_n$ (\ref{eq:opt}) satisfies 
\[
\begin{split}
\|\hat{f}_n - f_0\|_n = O_p\left(\frac{\Lambda_{high}}{\Lambda_{low}}\|\pi_{r_n}f_0 - f_0\|_n \vee \frac{\Lambda_{high}}{\Lambda_{low}}\sqrt{\frac{r_n\log n}{n}}\right).
\end{split}
\]
\end{theorem}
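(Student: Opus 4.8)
The plan is to treat this as a rate bound for a sieve M-estimator and run the standard ``basic inequality plus localized empirical process'' argument, but carried out in the empirical seminorm induced by the GLS loss rather than in the plain seminorm $\|\cdot\|_n$. For a function $g$ write $\bm{g}=(g(\bX_1),\dots,g(\bX_n))^\top$ and define the GLS seminorm $\|g\|_{n,\Qb}^2=\frac1n\bm{g}^\top\Qb\bm{g}$. The two ingredients that manufacture the factor $\Lambda_{high}/\Lambda_{low}$ are (i) Assumption \ref{Asmp-3}, which sandwiches the spectrum of the discrepancy matrix $\Eb=\bSigma^{\frac{\top}{2}}\Qb\bSigma^{\frac12}$ and thereby controls both the passage between $\|\cdot\|_{n,\Qb}$ and $\|\cdot\|_n$ and the effective noise scale, and (ii) Assumption \ref{Asmp-1}, whose eigenvalue bounds $\Psi_{low},\Psi_{high}$ on $\bSigma$ are treated as fixed universal constants and absorbed into the $O_p(\cdot)$.

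First I would write the basic inequality. Since $\hat{f}_n$ minimizes $\mathcal{L}_n$ over $\mathcal{F}_n$ and $\pi_{r_n}f_0\in\mathcal{F}_n$, expanding $\mathcal{L}_n(\hat f_n)\le \mathcal{L}_n(\pi_{r_n}f_0)$ with $\bY=\bm{f}_0(\bX)+\bm{\epsilon}$ and cancelling the common $\frac1n\bm{\epsilon}^\top\Qb\bm{\epsilon}$ term gives
\[
\|\hat f_n-f_0\|_{n,\Qb}^2 \le \|\pi_{r_n}f_0-f_0\|_{n,\Qb}^2 + \frac2n\,\bm{\epsilon}^\top\Qb(\hat{\bm f}-\pi_{r_n}\bm{f}_0).
\]
The cross term is the empirical process I must control. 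For a fixed direction $\bm{d}=\hat{\bm f}-\pi_{r_n}\bm f_0$, the scalar $\frac2n\bm{\epsilon}^\top\Qb\bm{d}$ is mean-zero Gaussian with variance $\frac{4}{n^2}\bm{d}^\top\Qb\bSigma\Qb\bm{d}$; the key algebraic observation is that $\Qb^{\frac12}\bSigma\Qb^{\frac12}$ and $\Eb$ share the same spectrum, so Assumption \ref{Asmp-3} yields $\bm{d}^\top\Qb\bSigma\Qb\bm{d}\le \Lambda_{high}\,\bm{d}^\top\Qb\bm{d}$. Hence the process has sub-Gaussian increments of scale $\sqrt{\Lambda_{high}/n}\,\|\cdot\|_{n,\Qb}$: the noise level is governed by $\Lambda_{high}$ alone.

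Next I would upgrade this pointwise bound to a localized maximal inequality over $\mathcal{F}_n$. Combining the sub-Gaussian increments with a Dudley entropy integral, using the covering number $\log N(\varepsilon,\mathcal{F}_n,\|\cdot\|_{n,\Qb})\lesssim r_n\log(\,\cdot/\varepsilon)$ obtained from the sieve structure of Assumption \ref{Asmp-2} together with the scaling \eqref{eq:scaling} (which collapses the logarithmic factor to $\log n$), gives, with probability tending to one,
\[
\Big|\tfrac2n\bm{\epsilon}^\top\Qb(\hat{\bm f}-\pi_{r_n}\bm f_0)\Big|
\;\lesssim\; \sqrt{\Lambda_{high}}\,\sqrt{\tfrac{r_n\log n}{n}}\;\big(\|\hat f_n-\pi_{r_n}f_0\|_{n,\Qb}\vee \delta_n\big),
\]
with $\delta_n=\sqrt{\Lambda_{high}}\sqrt{r_n\log n/n}$. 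Substituting into the basic inequality and solving the resulting quadratic in $\|\hat f_n-\pi_{r_n}f_0\|_{n,\Qb}$, then applying the triangle inequality, yields the GLS-norm rate $\|\hat f_n-f_0\|_{n,\Qb}=O_p\big(\|\pi_{r_n}f_0-f_0\|_{n,\Qb}\vee \sqrt{\Lambda_{high}}\sqrt{r_n\log n/n}\big)$. Finally I would convert back to $\|\cdot\|_n$: the sandwiches from Assumptions \ref{Asmp-1} and \ref{Asmp-3} bound the spectrum of $\Qb$ between $\Lambda_{low}/\Psi_{high}$ and $\Lambda_{high}/\Psi_{low}$, so $\|\cdot\|_n\le\sqrt{\Psi_{high}/\Lambda_{low}}\,\|\cdot\|_{n,\Qb}$ and $\|\cdot\|_{n,\Qb}\le\sqrt{\Lambda_{high}/\Psi_{low}}\,\|\cdot\|_n$; collecting these constants, absorbing $\Psi_{low},\Psi_{high}$, and using $\sqrt{\Lambda_{high}/\Lambda_{low}}\le\Lambda_{high}/\Lambda_{low}$ (valid since $\Lambda_{low}\le1\le\Lambda_{high}$) produces the stated bound.

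The main obstacle is the localized empirical process step under spatial dependence. The symmetrization/Rademacher route that underlies the i.i.d.\ rate of \cite{shen2023asymptotic} is unavailable because the $\epsilon_i$ are correlated, so I expect to reuse and localize the custom Orlicz-norm ULLN developed for the consistency proof (Theorem \ref{Thm-main-1}), now attached to the GLS geometry: the increments must be shown sub-Gaussian in $\|\cdot\|_{n,\Qb}$ with scale set by $\Lambda_{high}$, and the peeling argument must interact cleanly with that Orlicz norm while the entropy computation stays valid in the transformed metric. Verifying that the covering-number bound for $\mathcal{F}_n$ is unaffected, up to absorbed constants, by replacing $\|\cdot\|_n$ with $\|\cdot\|_{n,\Qb}$, and that the cutoff $\delta_n$ is negligible under \eqref{eq:scaling}, are the remaining technical points.
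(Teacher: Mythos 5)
Your proposal is correct in its essentials, but it is structured differently from the paper's proof, and the comparison is worth spelling out. The paper does not argue via a basic inequality in the GLS geometry; instead it first proves a general M-estimator rate lemma (Lemma \ref{lemma-rate}, an extension of Theorem 3.2.5 of \cite{van1996weak} allowing the criterion, the sieve target $\pi_{r_n}f_0$, and the scaling constants to vary with $n$), and then verifies its two hypotheses in the plain seminorm $\|\cdot\|_n$: Lemma \ref{lemma-rate-2} establishes quadratic growth of the \emph{expected} loss, $\inf L_n(f)-L_n(\pi_{r_n}f_0)\gtrsim(\Lambda_{low}/\Psi_{high})\delta^2$ on shells with $\delta\gtrsim(\Psi_{high}\Lambda_{high}/\Psi_{low}\Lambda_{low})\|f_0-\pi_{r_n}f_0\|_n$, and Lemma \ref{lemma-rate-3} bounds the modulus of continuity of the centered process $\mathcal{L}_n-L_n$ by a Dudley entropy integral. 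Your route replaces both of these with the exact algebra of the quadratic loss: in $\|\cdot\|_{n,\Qb}$ the growth condition is an identity (no analogue of Lemma \ref{lemma-rate-2} is needed), and the cross term has genuinely Gaussian increments of scale $\sqrt{\Lambda_{high}}\,\|\cdot\|_{n,\Qb}$ via the correct observation that $\bm d^\top\Qb\bSigma\Qb\bm d\le\Lambda_{high}\,\bm d^\top\Qb\bm d$ (the matrices $\Qb^{\frac12}\bSigma\Qb^{\frac{\top}{2}}$ and $\Eb$ are similar). The remaining machinery is shared: since $\bm\epsilon$ is Gaussian, the $\psi_2$-increment bound needs no symmetrization (this is exactly the paper's Proposition \ref{prop-thm-main-1}, stated there in the $\|\cdot\|_n$ metric with constant $C_\Lambda\propto\Lambda_{high}/\sqrt{\Psi_{low}}$), the covering-number bound for $\mathcal{F}_n$ transfers to $\|\cdot\|_{n,\Qb}$ because $\|\cdot\|_{n,\Qb}\lesssim\|\cdot\|_\infty$ up to absorbed constants, and your shell-peeling plays the role of Lemma \ref{lemma-rate}. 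What each approach buys: yours is more elementary, keeps the roles of $\Lambda_{high}$ (noise scale) and $\Lambda_{low}$ (norm conversion) transparent, and in fact delivers the sharper factor $\sqrt{\Lambda_{high}/\Lambda_{low}}$, which you then coarsen to $\Lambda_{high}/\Lambda_{low}$ via $\sqrt{\lambda}\le\lambda$; the paper's route pays mixed $\Psi,\Lambda$ constants in Lemma \ref{lemma-rate-2} but isolates a reusable rate lemma that applies to criteria that are not exactly quadratic, and it never leaves the metric $\|\cdot\|_n$ in which the theorem is stated. The only places where your sketch is informal --- the localization of the maximal inequality to shells before plugging in the random $\hat f_n$, and the check that $\log(C(d,r_n,V_n)/\delta)\lesssim\log n$ under the scaling (\ref{eq:scaling}) --- are points you explicitly flag, and they go through by the same peeling and bookkeeping the paper performs in the proofs of Lemma \ref{lemma-rate} and Theorem \ref{thm-rate}.
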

}
\blue{There are two main takeaways from this rate result. First, the rate for NN-GLS is, up to a scaling factor, same as that for OLS neural networks in the i.i.d. case \citep{shen2023asymptotic}. We do not make any assumption on the class of the regression function $f_0$ beyond continuity and refer to the readers to \citep{shen2023asymptotic} for discussion on specific choices of classes of the true function $f_0$, $r_n$ and $d$ where this rate is sharp up to logarithmic terms. 

The second and novel insight is the scaling of the error rate by a factor $\lambda=\Lambda_{high}/\Lambda_{low} \geq 1$ --- the ratio of the largest and smallest eigenvalues of the discrepancy matrix $\Eb$ (\ref{eq:disc_mat}). In general, $\lambda$ is close to $1$ if the discrepancy matrix is closer to $\Ib$. Proposition \ref{prop-E} in Appendix \ref{Append-E} shows that the Kullback-Leibler distance $\mbox{KLD}(\Ib,\Eb(m))$ between the identity matrix $\Ib$ and the discrepancy matrix $\Eb(m)$ when using an NNGP working covariance matrix with $m$-nearest neighbors is a decreasing function of $m$.
The best case scenario is when using the true covariance as the working covariance, i.e., $\Qb=\bSigma^{-1}$ yielding $\Eb=\Ib$ and $\lambda=1$. This corresponds to using NNGP with $m=n$. At the other extreme, is the vanilla neural network using the OLS loss, corresponds to using a working covariance matrix of $\Ib$ in NN-GLS. This corresponds to NNGP with $m=0$ and is thus the worst approximation according to Proposition \ref{prop-E}. We verify this empirically plotting in Figure \ref{fig-E} both the KLD and the spectral width of $\Eb$ as a function of $m$. We see a very large spectral width for $m=0$, i.e., using OLS loss, implying a large error rate and inefficient estimation when using OLS loss in neural networks for spatial data. This is manifested in our results detailed later. The spectral width for NNGP approximation with $m \geq 20$ is really narrow yielding a near-optimal scaling factor $\lambda$ close to $1$. This demonstrates how modeling the spatial covariance in the neural network estiamtion via the GLS loss improves the error rates \cyan{and guides the choice of the number of nearest neighbor $m$ in NN-GLS (see Section \ref{sec:choosem}).}}


\section{Simulation study}\label{sec-sim}

We conduct extensive simulation experiments to study the advantages of NN-GLS over existing methods in terms of both prediction and estimation. The data are simulated from the spatial GP model (\ref{model-sp-nonlinear}) 
with two choices for the non-linear mean function 
$f_1(x) = 10 \sin(\pi x)$
and 
$f_2(x) = \frac 16 (10\sin(\pi x_1x_2)+20(x_3 - 0.5)^2+10x_4+5x_5)$
  \citep[Friedman function,][]{friedman1991multivariate}. \blue{Section \ref{sec:sim_methods} provides all parameter values and implementation details.} 

\blue{We consider a total of $10$ methods that represent both linear and nonlinear, spatial and non-spatial, statistical and machine learning paradigms. We include $3$ candidate neural network approaches for comparison to NN-GLS: NN without any spatial information ({\em NN-nonspatial}), NN using the spatial coordinates as two additional inputs ({\em NN-latlon}), and NN using spline basis as additional inputs ({\em NN-splines}).} \cyan{The NN-splines method is a rendition of {\em Deepkriging} \citep{chen2020deepkriging}. It uses the same basis functions as deepKriging but to study the effect of additional covariates terms, we keep the numbers of layers and nodes same for all the neural network based methods.} Among these, \blue{only} NN-GLS and NN-nonspatial offer both estimation and prediction, NN-latlon, and NN-splines only offer prediction \blue{and do not estimate of the covariate effect (see Section \ref{sec-mtd-NN})}. 

\blue{Other than neural networks, we also include 5 other popular non-linear methods, including {\em Generalized Additive Models (GAM)}, {\em Random Forests (RF)}, and their counterparts under spatial settings, GAM using the spatial coordinates as two additional inputs ({\em GAM-latlon}), {\em GAM-GLS} \citep{nandy2017additive}, and {\em RF-GLS} \citep{saha2023random}. RF-GLS offers both estimation and spatial predictions. GAM-latlon is used for predictions and the other three methods are only considered for estimation. We also include the {\em spatial linear GP model}, implemented using NNGP through the BRISC R-package \citep{saha2018brisc}. 
 Section \ref{sec:sim_methods} provides details of all the methods and Table \ref{tab:methods} summarizes their estimation and prediction capabilites and scalability for large datasets.} 

\begin{figure}[htbp]
\centering
\begin{subfigure}{.5\textwidth}
  \centering
  \includegraphics[width=0.85\linewidth]{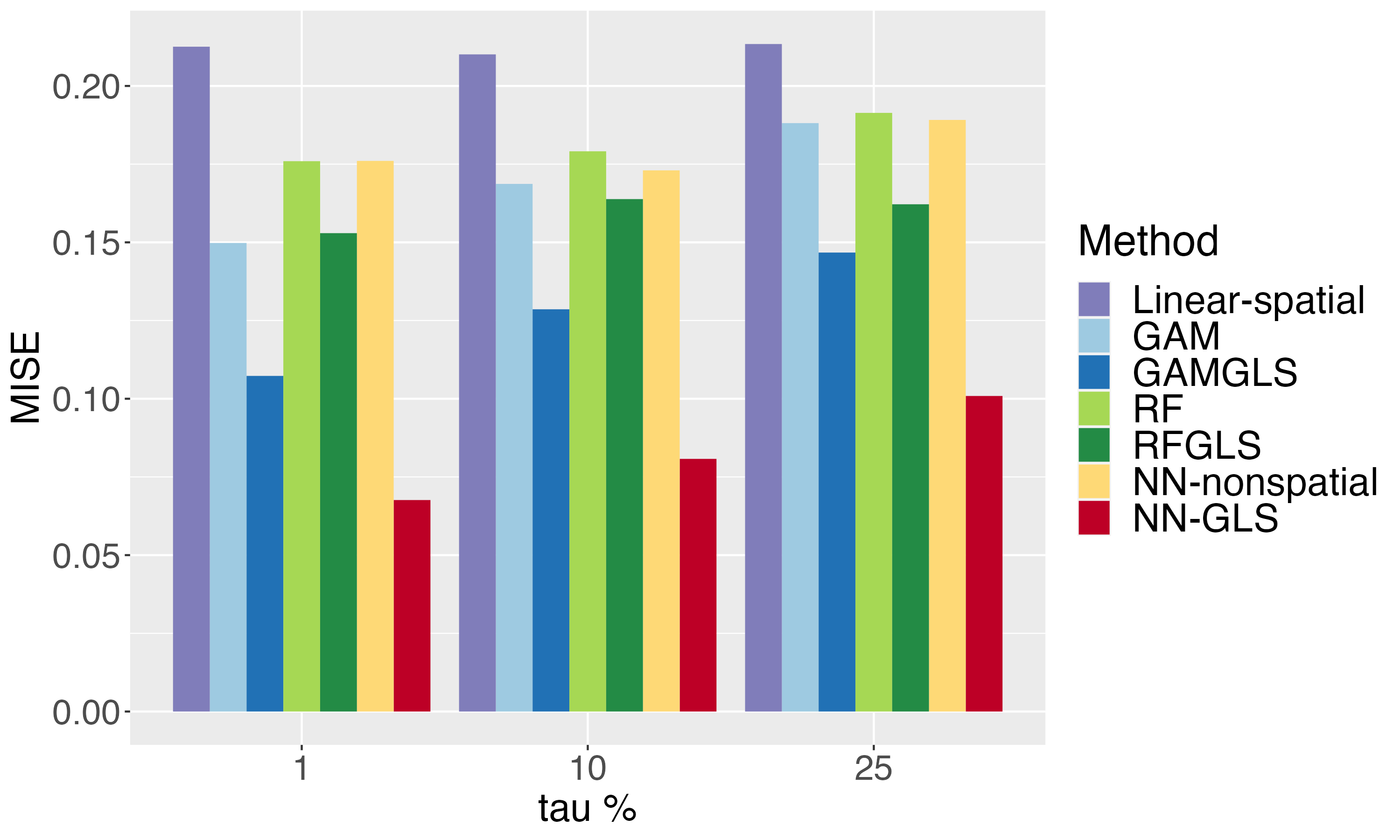}
  \caption{Estimation performance of all methods.}
\end{subfigure}%
\begin{subfigure}{.5\textwidth}
  \centering
  \includegraphics[width=0.85\linewidth]
  {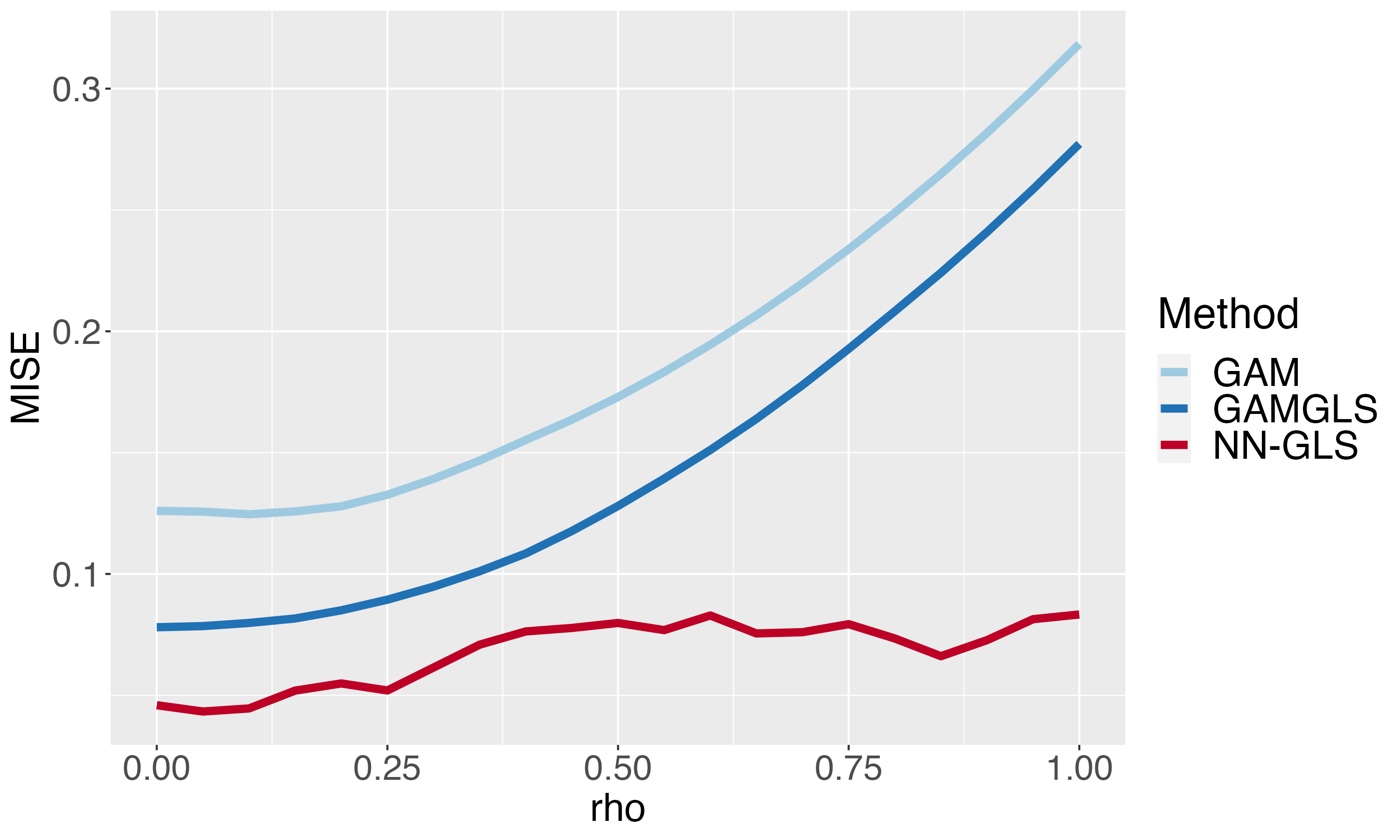}
  \caption{GAM-GLS vs. NN-GLS on estimation.}
\end{subfigure}
\\
\begin{subfigure}{.5\textwidth}
  \centering
  \includegraphics[width=0.85\linewidth]{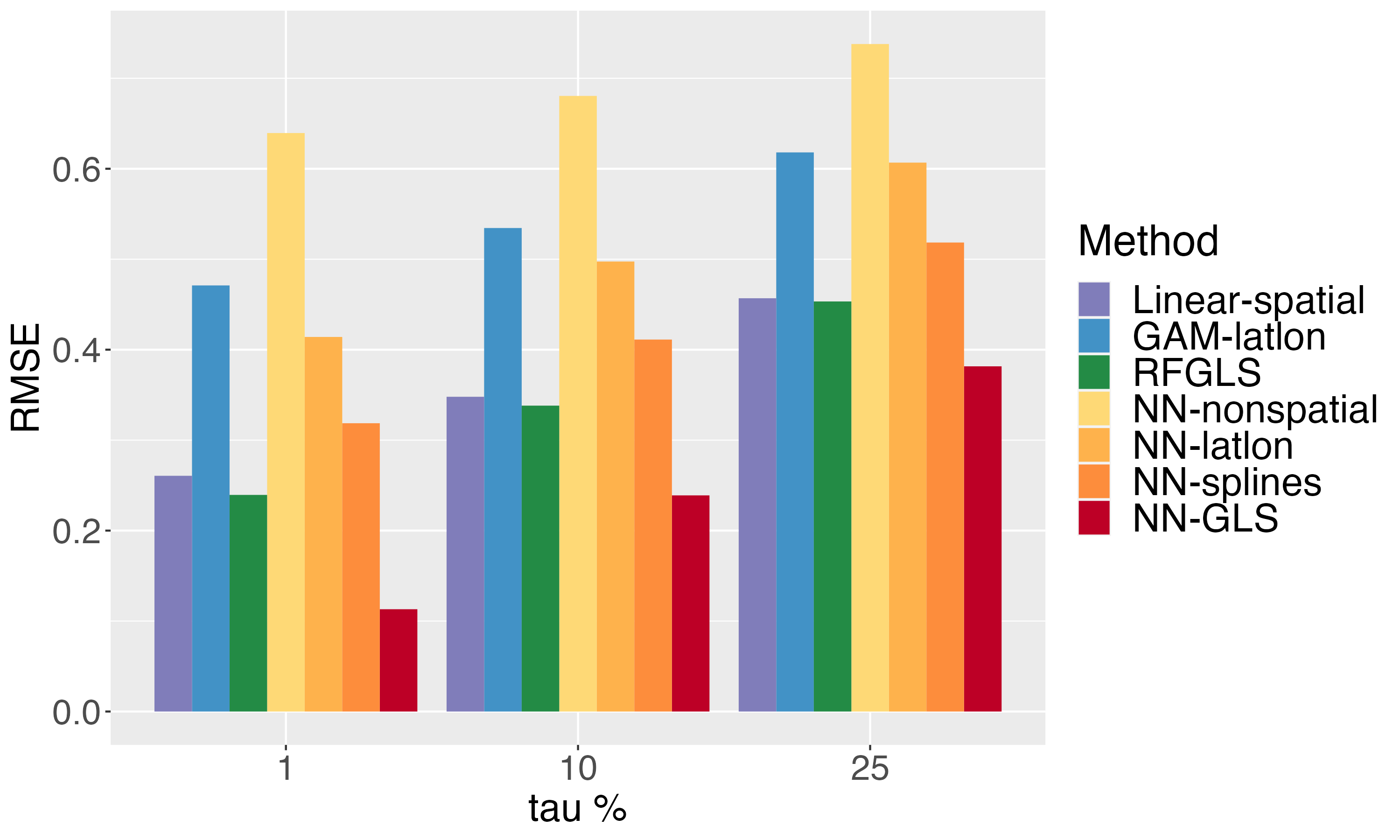}
  \caption{\blue{Prediction performance for all methods.}}
\end{subfigure}%
\begin{subfigure}{.5\textwidth}
  \centering
  \includegraphics[width=0.85\linewidth]
  {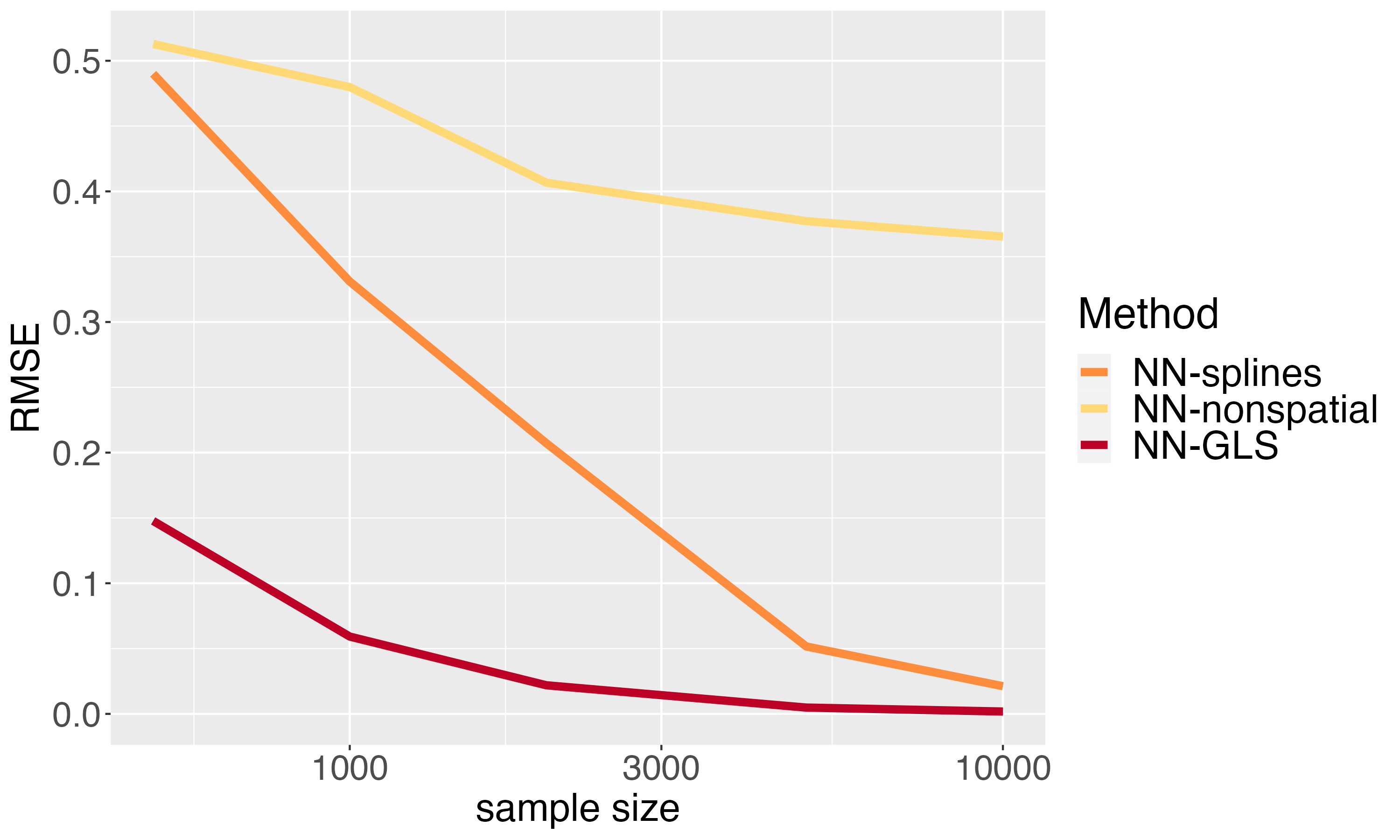}
  \caption{Added-spatial-features NN vs NN-GLS.}
\end{subfigure}
\\
\begin{subfigure}{.5\textwidth}
  \centering
  \includegraphics[width=0.85\linewidth]{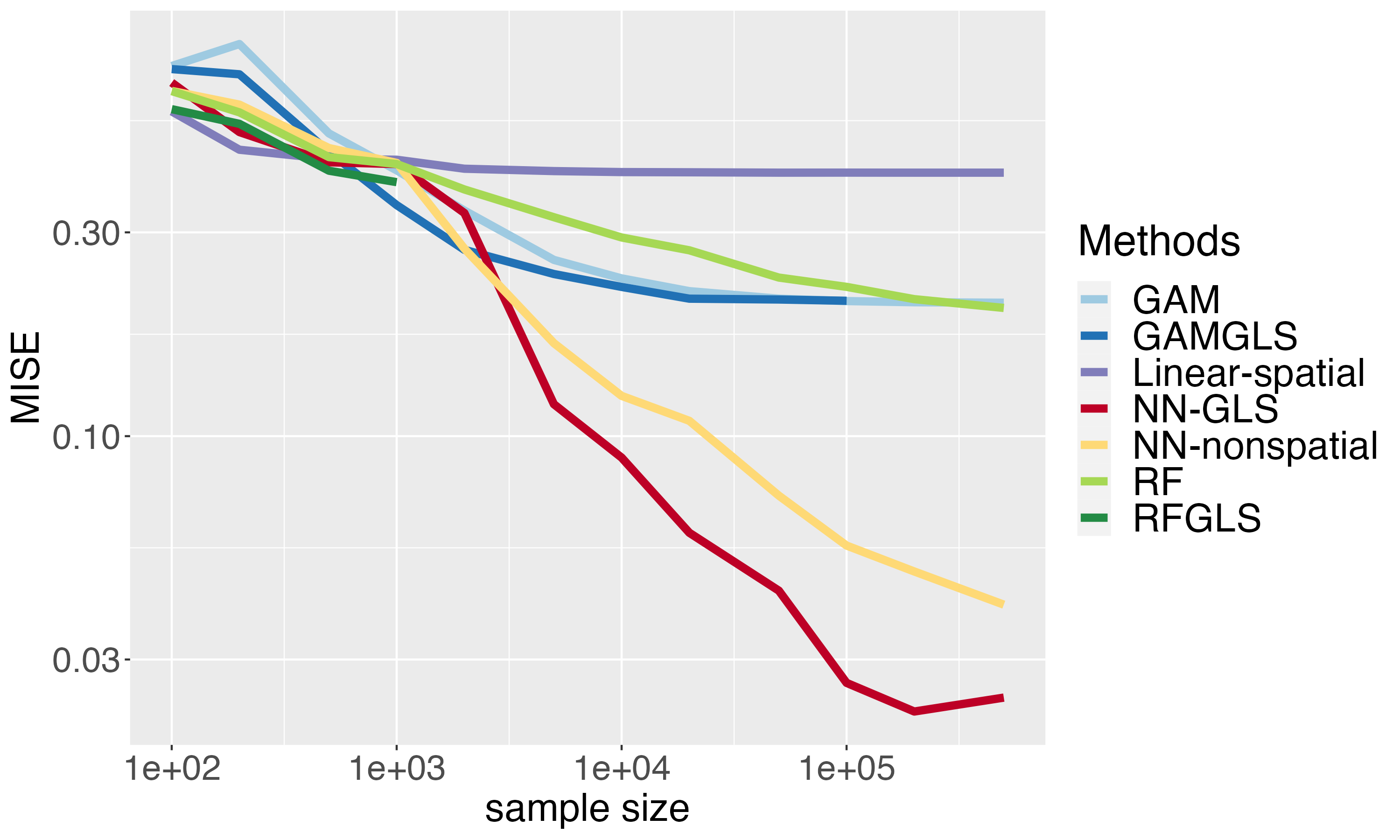}
  \caption{\blue{Consistency of estimation.}}
\end{subfigure}%
\begin{subfigure}{.5\textwidth}
  \centering
  \includegraphics[width=0.85\linewidth]
  {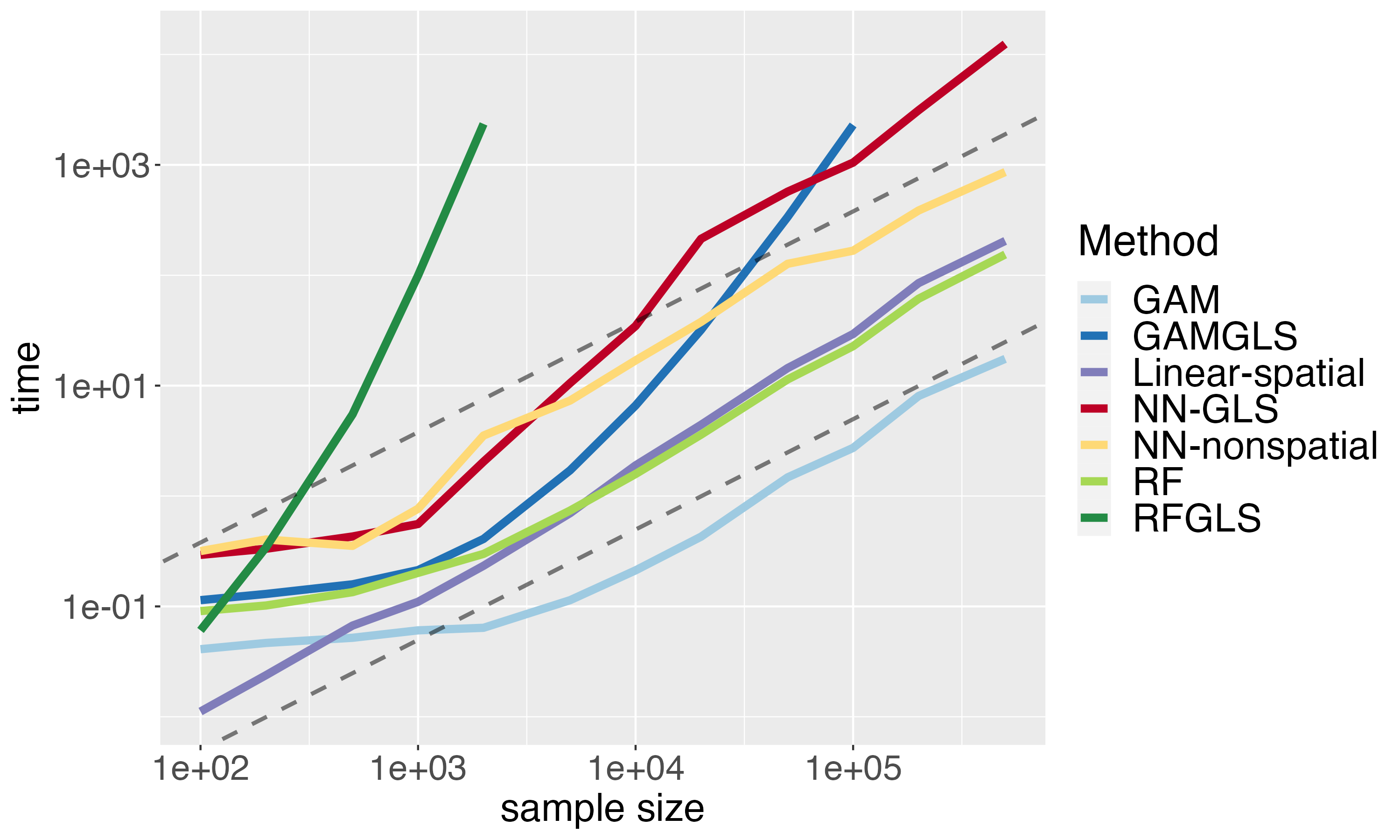}
  \caption{Running time.}
\end{subfigure}
\caption{\blue{(a): The estimation performance comparison with $f_0 = f_2$; (b): (Section \ref{sec-sim-friedman}) Estimation performance comparison among GAM, GAMGLS and NN-GLS against $rho$, i.e. the interaction strength; (c): The prediction performance comparison with $f_0 = f_2$; (d): (Section \ref{sec-sim-NN}) Prediction performance comparison among non-spatial NN, NN-GLS and NN-splines against sample size; (e): (Section \ref{sec-sim-large}) The consistency of estimation. (f): (Section \ref{sec-sim-large}) The running time for estimation.}}
\label{fig-sim-main}
\end{figure}

\blue{A snapshot of the results from the different experiments are provided in Figure \ref{fig-sim-main}. We first evaluate the estimation performance of the methods that offer an estimate of the mean function --- linear-spatial model, GAM, GAM-GLS, RF, RF-GLS, NN-nonspatial, and NN-GLS. 
We use the Mean Integrated Square Error (MISE) for the estimate $\hat f$.  
Figures \ref{fig-sim-2}(a) and \ref{fig-sim-1}(a) in Sections \ref{sec-sim-5} and \ref{sec-sim-noise} respectively present the estimation results for all scenarios. A representative result is provided in Figure \ref{fig-sim-main} (a) which presents the MISE for all methods when $f_0=f_2$ and for $3$ choices of the nugget variance $\tau^2$. NN-GLS consistently outperforms the other methods. The non-spatial neural network, which uses OLS loss, performs poorly for estimation even though it uses the same function class as NN-GLS for the mean. This shows the importance of explicitly accounting for the spatial covariance in the neural network estimation process. We further compare the two in Figure \ref{fig-sim-5-phi} which shows that} NN-GLS has a more significant advantage over non-spatial neural networks when 
the spatial decay $\phi$ is small. This is expected as for small $\phi$, there is strong spatial correlation in the data, so the performance of NN-nonspatial suffers on account of ignoring this spatial information. The deterioration in the performance of NN-nonspatial over NN-GLS is lesser for large $\phi$, as there is only a weak spatial correlation in the data.

\blue{In Figure \ref{fig-sim-main}(b) we present focussed comparisons between NN-GLS and methods from the GAM family. We consider variants of the Fridman function $f_2$ for the mean controlling the weight of the interaction term with a parameter $\rho \in [0,1]$. Neither GAM or GAM-GLS can model interaction, and this is reflected in their performance. When $\rho$ is small (weak interaction effect), their MISE are close to that from NN-GLS. However, when $\rho$ is large (strong interaction effect), the MISE of the GAM methods are considerably worse. This shows the advantage of the neural network family over GAM for non-linear regression in the presence of interaction terms (see Section \ref{sec-sim-friedman} for details).}

We compare prediction performances using 
 the Relative
Mean Squared Error (RMSE) on the test set, 
obtained by standardizing the MSE by the variance of the response so that the quantity can be compared across different experiments. \blue{Figures \ref{fig-sim-2}(b) and \ref{fig-sim-1}(b) in Sections \ref{sec-sim-5} and \ref{sec-sim-noise} present the prediction results from all scenarios. 
When $f_0=f_1$, i.e., the sine function (Figure \ref{fig-sim-2}(b)), the spatial linear model, unsurprisingly, offers the worst prediction performance by missing the strong non-linearity. NN-GLS is consistently the best, on account of using spatial information both in estimation and prediction, and the performances of other methods lie between NN-GLS and the linear model. 
In Figure \ref{fig-sim-main}(c), we present the more interesting scenario when $f_0 = f_2$. 
It's surprising to notice that the linear-spatial model outperforms all methods except NN-GLS in terms of prediction performance. The reason is the partial-linearity of Friedman function and the power of the GP in the linear model, NN-GLS offers the lowest RMSE. In fact, all the GP-based approaches (spatial linear model, RF-GLS, NN-GLS) outperform the added-spatial-features approaches (GAM-latlon,NN-latlon,NN-splines).} 
\blue{
This demonstrates the limitation of the added-spatial-features approaches compared to NN-GLS which embeds NN in the GP model.} The choice and the number of the added basis functions may need to be tuned carefully to optimize performance for specific applications. \blue{Also the sample sizes may need to be much larger to fully unlock the non-parametric ability of the added-spatial-featrures approaches. We see this in Figure \ref{fig-sim-main}(d) which shows the prediction performance against sample size $n$ under fixed-domain sampling. For large $n$, the performance of NN-splines becomes similar to NN-GLS, although NN-GLS still has lower RMSE. NN-GLS circumvents  basis functions by parsimoniously modeling the spatial dependence directly through the GP covariance and performs well for both small and large sample sizes and fixed- and increasing-domain sampling paradigms (see Section \ref{sec-sim-NN} for all the scenarios).}

\blue{
Figures \ref{fig-sim-main}(e) and (f) present the estimation performance and running times for different methods up to sample sizes of $500,000$. The large sample runs demonstrate how the approximation error for NN-GLS goes to zero while for others like the GAM family and linear models stay away from zero as they cannot approximate the Friedman function. This empirically verifies the consistency result. NN-GLS also scales linearly in $n$ due to the innovations of Section \ref{sec:gnn} and is much faster than the other non-linear GLS approaches like RF-GLS and GAM-GLS. Details are in 
Section \ref{sec-sim-large}.}

Section \ref{Append-sim} of the Supplementary materials presents all the other simulation results. 
\blue{
The confidence interval for $\hat{f}$ and the prediction interval for $\hat{y}$ are evaluated in Sections \ref{sec-sim-CI} and \ref{sec-sim-PI} respectively. Both types of intervals provided by NN-GLS provide near-nominal coverage as well as superior interval scores against the competing methods. To study impact of sampling design, a denser sampling is considered in Section \ref{sec-sim-5000} keeping the domain fixed. This is to supplement the theoretical results which are only for the increasing domain.} 
In section \ref{sec-sim-orc} we investigate if the estimation of the spatial parameters has an impact on the performance of NN-GLS by comparing it to {\em NN-GLS (oracle)} which uses the true spatial parameters. We observe that NN-GLS's performance is quite similar to that of NN-GLS (oracle) since it provides an accurate estimation of spatial parameters. NN-GLS also performs well for a higher dimensional mean function (of $15$ covariates) (Section \ref{sec-sim-hd}). We finally assess the robustness of NN-GLS to model misspecification, including misspecification of the GP covariance (Section \ref{sec-sim-mis1}), and complete misspecification of the spatial dependence  (Section  \ref{sec-sim-mis2}). For both estimation and prediction, NN-GLS performs the best or comparably with the best method for both cases of misspecification.


\section{Real data example}\label{sec-real}
The real data example considered here is the spatial analysis of PM$_{2.5}$ (fine particulate matter) data in the continental United States. Following a similar analysis in  \cite{chen2020deepkriging}, we consider the daily PM$_{2.5}$ data from $719$ locations on \blue{June 18th, 2022} and regress PM$_{2.5}$ on six meteorological variables --- precipitation accumulation, air temperature, pressure, relative humidity, west wind (U-wind) speed, and north wind (V-wind) speed. \blue{Section \ref{sec:data} has more details about the data.} 

\begin{figure}[!h]
\centering
\begin{subfigure}{.54\textwidth}
  \centering
  \includegraphics[width=0.95\linewidth]{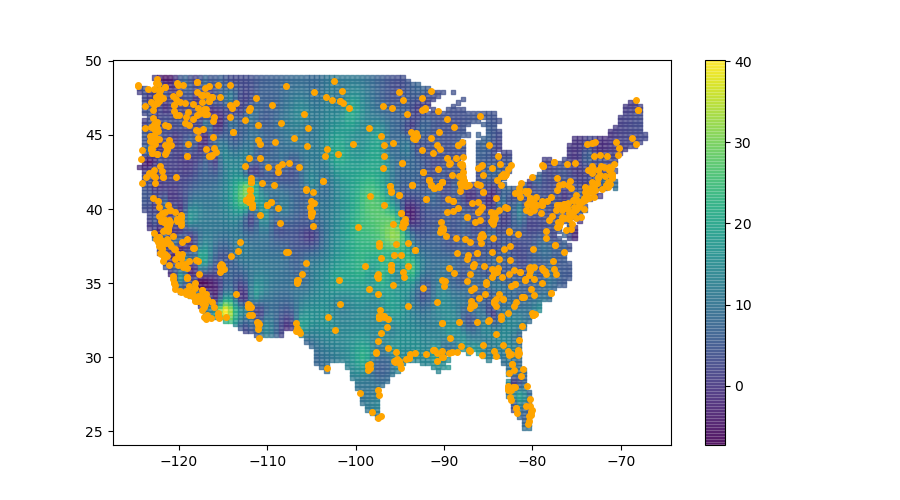}
  \caption{Interpolated PM$_{2.5}$ data on \blue{June 18th, 2022} along with the data locations}
\end{subfigure}
\begin{subfigure}{.44\textwidth}
  \centering
  \includegraphics[width=0.78\linewidth]{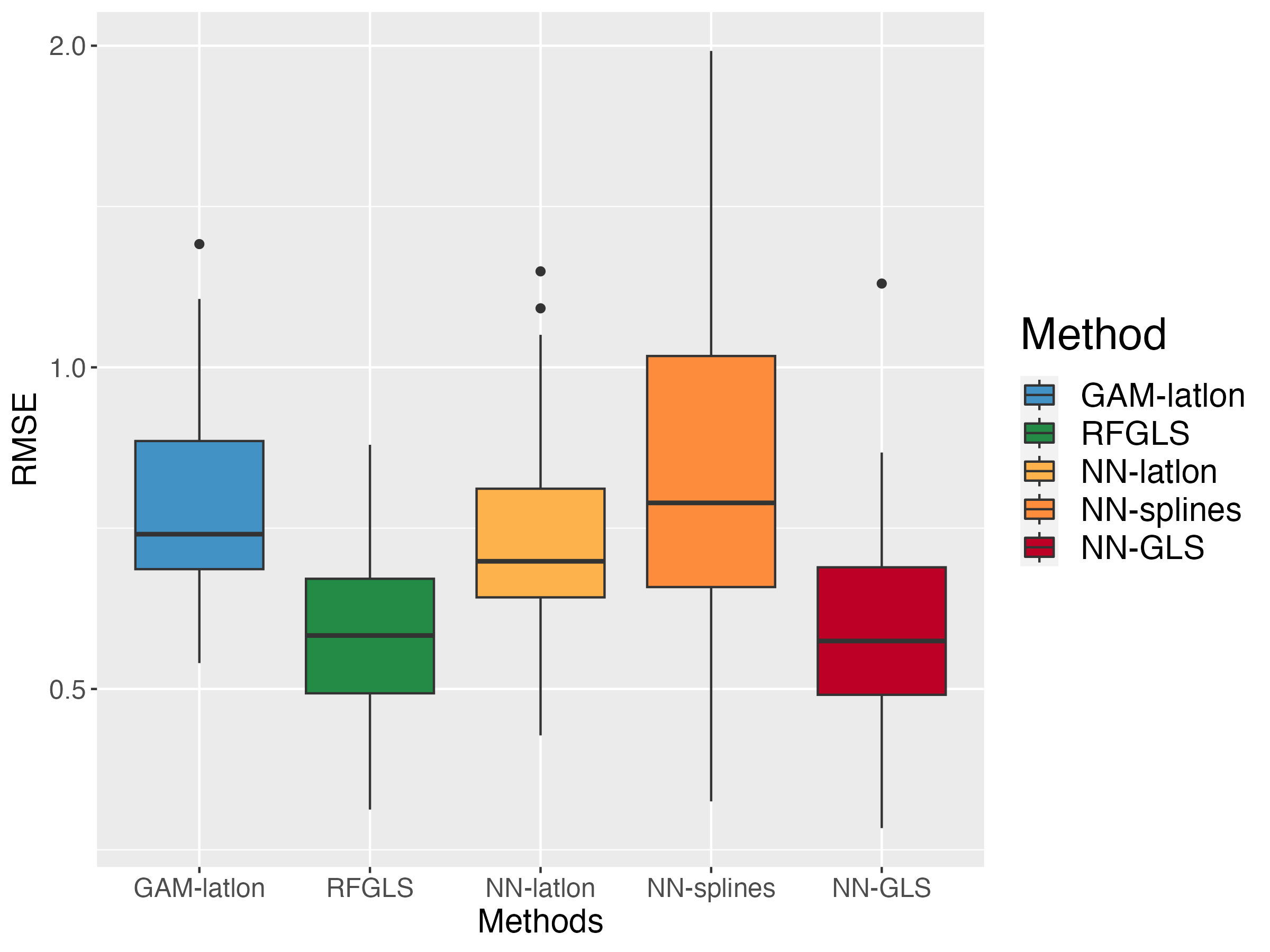}
  \caption{\blue{Prediction performance}}
\end{subfigure}%
\caption{PM$_{2.5}$ data analysis.}
\label{fig-real-1}
\end{figure}

Figure \ref{fig-real-1} (a) demonstrates the PM$_{2.5}$ distribution on the date (smoothed by inverse distance weighting interpolation), as well as the nationwide EPA monitor's distribution (the orange spots). The spatial nature of PM$_{2.5}$ is evident from the map. \blue{We consider the following 5 methods previously introduced in Section \ref{sec-sim}: GAM-latlon, RF-GLS, NN-latlon, NN-splines, and NN-GLS. To evaluate the prediction performance, we randomly take $83\%$ ($5/6^{th}$) of the data as a training set and the rest part as a testing set using block-random data splitting strategy, which is closer to a real-world scenario (see Appendix \ref{sec-block} for details)}, train the model and calculate the RMSE of the prediction on the testing set. The procedure is repeated for $100$ times. 
The performance of each method is shown in Figure \ref{fig-real-1}(b). 
\blue{We find that NN-GLS and RF-GLS have comparably the lowest average RMSE, while GAM-latlon, NN-latlon, and NN-splines are outperformed.
This trend is consistent for other choices of dates and for different ways of data splitting, except for some cases where the spatial pattern is not clear and NN-spline may do similarly (see Section \ref{sec:real_days} of the Supplementary materials). Sensitivity to the choice of hold-out data is presented in Section \ref{sec-block}. Section \ref{sec-real-PI} evaluates the performance of prediction intervals on hold-out data. We see that prediction intervals from NN-GLS offers near nominal coverage. Model adequacy checks are performed in Section \ref{sec:asmp-check} and reveal that data from different days adhere to the modeling assumptions of Gaussianity and exponentially decaying spatial correlation to varying degrees. As NN-GLS performs well on all days, it gives confidence in the applicability of NN-GLS in a wide variety of scenarios.} 

NN-GLS also provides a direct estimate of the effect of the meteorological covariates on PM$_{2.5}$, specified through the mean function $f(\bX)$. 
However, $\bX$ is six-dimensional in this application, precluding any convenient visualization of the function $f(\bX)$. 
Hence, we use partial dependency plots (PDP) -- a common tool in the machine learning world for visualizing the marginal effect of one or two features on the response \blue{(see Section \ref{sec:PDP} for a detailed introduction)}. In Figure \ref{fig-PDP-main}, we present the PDP of PM$_{2.5}$ on temperature and wind. While the PM$_{2.5}$ level is linearly affected by temperature, we see clear non-linear effects in the wind, thereby demonstrating the need to move beyond the traditional linear spatial models and consider geostatistical models with non-linear means, like NN-GLS.

\begin{figure}[!h]
\centering
\begin{subfigure}{.49\textwidth}
  \centering
  \includegraphics[width=0.85\linewidth]{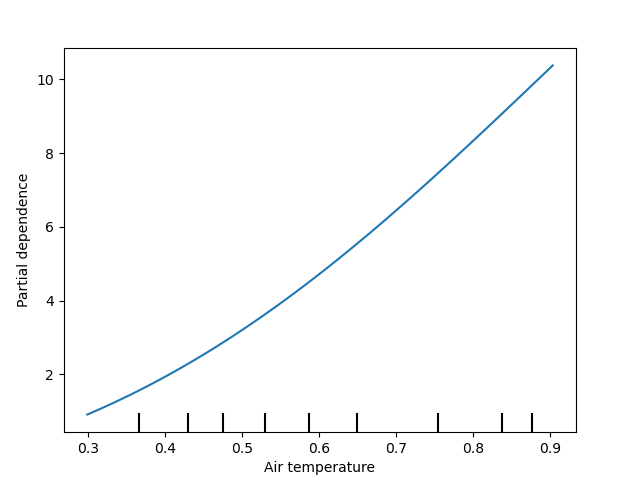}
  \caption{Air temperature}
\end{subfigure}
\begin{subfigure}{.49\textwidth}
  \centering
  \includegraphics[width=0.85\linewidth]{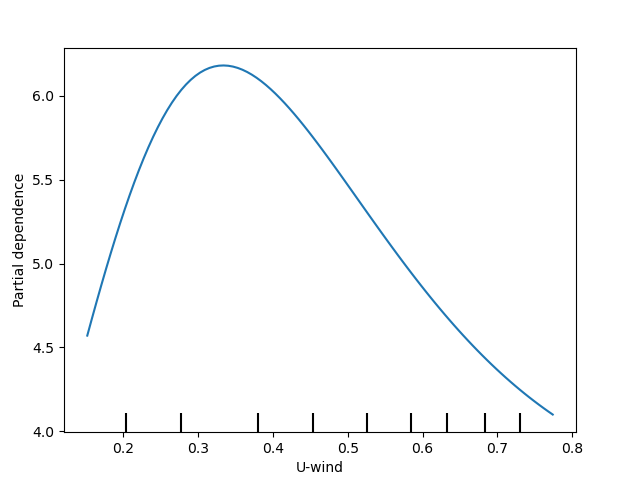}
  \caption{U-wind}
\end{subfigure}%
\caption{Partial dependency plots showing the marginal effects of temperature and west wind (U-wind).}
\label{fig-PDP-main}
\end{figure}

\section{Discussion}\label{sec:disc}
In this work, we showed how neural networks can be embedded directly within traditional geostatistical Gaussian process models, creating a hybrid machine learning statistical modeling approach that balances parsimony and flexibility. Compared to existing renditions of NN for spatial data which need to create and curate many spatial basis functions as additional features, NN-GLS simply encodes spatial information explicitly and parsimoniously through the GP covariance. It reaps all the benefits of 
of the model-based framework including the separation of the non-spatial and spatial structures into the mean and the covariance, the use of GLS loss to account for spatial dependence while estimating the non-linear mean using neural networks, and the prediction at new locations via kriging.


We show that NN-GLS can be represented as a  graph neural network. The resulting GLS loss is equivalent to adding a graph convolution layer to the output layer of an OLS-style neural network. 
This ensures the computational techniques used in standard NN optimization like mini-batching and backpropagation can be adapted for NN-GLS, resulting in a linear time algorithm. Also,  kriging predictions can be obtained entirely using the GNN architecture using graph convolution and deconvolution.

Our connection of NN-GLS to GNN is of independent importance, as it demonstrates 
the connection of GNN to traditional geostatistics. Current adaptations of GNN to spatial data \citep{tonks2022forecasting,fan2022gnn} mostly use simple graph convolution choices like equal-weighted average of neighboring observations. For irregular spatial data, this disregards the unequal distances among the observations, which dictate the strength of spatial correlations in traditional geospatial models. Our representation result shows how graph-convolution weights arise naturally as nearest neighbor kriging weights which accounts for mutual distances among data locations. Future work will extend this GNN framework and consider more general graph deconvolution layers, and applicable 
 to other types of irregular spatial data where graphical models are already in use, like multivariate spatial outcomes using inter-variable graphs \citep{dey2022graphical} or areal data graphs \citep{datta2019spatial}.

We 
prove general theory on the existence and consistency of neural networks using GLS loss for spatial processes, including Mat\'ern GP, observed over an irregular set of locations. \blue{We establish finite sample error rates of NN-GLS, which improves when the working covariance matrix used in the GLS loss is close to the true data covariance matrix. It proves that ignoring the spatial covariance by simply using the OLS loss leads to large error rates.}
To the best of our knowledge, we presented the first theoretical results for any neural network approach under spatial dependency.   

There is a gap between the theory and the actual implementation of NN-GLS. The theory relies on a restricted class of neural networks, and 
does not consider steps like mini-batching and backpropagation used in practice. However, even in a non-spatial context, this optimization error gap between the practice and theory of NN is yet to be bridged. \blue{Our theory is also not complexity-adaptive as we do not assume any structure on the true mean function beyond continuity, It will be interesting to pursue adaptive rates for NN-GLS akin to recent works of \cite{schmidthieber2020} and others in a non-spatial context (see Section \ref{sec:threv}). Extension to high-dimensional covariates as considered in \cite{fan2023} is also a future direction. Theoretically justified methodology for pointwise inference (interval estimates) for the mean function from neural networks remains open even for i.i.d. case.}



 We primarily focussed on modeling the mean as a function of the covariates using a rich non-linear family, i.e., the neural network class, while using stationary covariances to model the spatial structure. However, non-stationarity can be easily accommodated in NN-GLS either by including a few basis functions in the mean or using the GLS loss with a non-stationary covariance function. For example, \cite{zammit2021deep} proposed rich classes of non-stationary covariance functions using transformations of the space modeled with neural networks. 

\section*{\cyan{Supplementary materials and Software}}
\cyan{The supplement contains proofs, additional theoretical insights or methodological details, and results from more numerical experiments. A Python implementation of NN-GLS which is publicly available in the geospaNN package at \url{https://pypi.org/project/geospaNN/}.
}

\section*{\cyan{Acknowledgement and Disclosures}}
\cyan{This work is supported by National Institute of Environmental Health Sciences grant R01ES033739 and National Science Foundation (NSF) Division
	of Mathematical Sciences grant DMS-1915803. 
The authors report there are no competing interests to declare.}

\begin{singlespace}
{\footnotesize
\bibliography{zwtBib}}
\end{singlespace}
\newpage

\setcounter{section}{0}
\setcounter{theorem}{0}
\setcounter{equation}{0}
\setcounter{lemma}{0}
\setcounter{proposition}{0}

\renewcommand{\thesection}{S\arabic{section}}
\renewcommand{\thesubsection}{S\arabic{section}.\arabic{subsection}}
\renewcommand{\thesubsubsection}{S\arabic{section}.\arabic{subsection}.\arabic{subsubsection}}
\renewcommand{\thealgorithm}{S\arabic{algorithm}}
\renewcommand{\thefigure}{S\arabic{figure}}
\renewcommand{\thetable}{S\arabic{table}}
\renewcommand{\theequation}{S\arabic{equation}}
\renewcommand{\thetheorem}{S\arabic{theorem}}
\renewcommand{\thelemma}{S\arabic{lemma}}
\renewcommand{\theproposition}{S\arabic{proposition}}

\section{NN-GLS algorithm}\label{sec:alg}

\begin{algorithm}[!h]
\caption{NN-GLS algorithm using GNN representation}\label{alg-NN-GLS-main}
\begin{algorithmic}
\REQUIRE{Data $\mathbf{X}, \bY, \bm{S}=\{s_1,\ldots,s_n\}$, 
neighborhood size $m$, number of mini-batches $B$, update frequency $I$.}\\

\STATE \textbf{Initialization}: 
\STATE Create the $m$ nearest-neighbor DAG for the locations $\bS$.
\STATE Split $\bS$ 
randomly into $B$ mini-batches  $(S_1, \cdots, S_B)$. 
\STATE Obtain initial estimate $\hat{f}_{init}(\cdot)$ using a non-spatial NN (\ref{def-NN}). 
\STATE Estimate the spatial parameters $\hat{\theta}$ from (\ref{eq:nngploglik}) with $\bY$ and $\hat{\bm{f}}_{init}(\Xb)$.
\STATE Obtain the graph-convolution weights $\bv_i$ from (\ref{eq:graph_weights})

 \textbf{Estimation:} 
 \FOR{$epoch = 1: \text{max epochs}$}
  \FOR{$b = 1:B$}
  \STATE Compute $\mathcal L_{b,n} = \sum_{i \in S_b} (Y^*_i - O^*_i)^2$ with $Y^*_i = \bv_i^\top \bY_{N^*[i]}$ and, $O^*_i = \bv_i^\top \bO_{N^*[i]}$.
  \STATE Update $\hat{f}(\cdot)$ by updating the NN weights via gradient-descent 
   (\ref{eq:backprop}).
  \ENDFOR
    \IF{$\text{epoch}\mod I = 0$}
    \STATE Update $\hat{\btheta}$ from (\ref{eq:nngploglik}) with $\bY$ and the current $\hat{\bm{f}}(\Xb)$.
    \STATE Update the graph-convolution weights $\bv_i$ from (\ref{eq:graph_weights}).
    \ENDIF
    \IF{Early stopping rule is met}
    \STATE Break.
    \ENDIF
  \ENDFOR
\STATE \textbf{Prediction} 
at new location $s_{0}$ with covariates $\bX_{0}$. 
\STATE Add $s_{0}$ as a new node in the DAG connected to $m$ nearest neighbors $N(0)$.
\STATE Predict $\hat Y_{0}$ using (\ref{eq:gnnkrig}). 
\STATE \textbf{Output:} $\hat{f}(\cdot)$, $\hat{\theta}$, and $\hat{Y}_{0}$. 
\end{algorithmic}
\end{algorithm}

\newpage
\blue{\section{Uncertainty quantification (UQ) for the estimated mean function}\label{sec-mtd-CI}
\subsection{Review of current theory and methods for UQ in neural networks}}
\blue{Pointwise uncertainty quantification for the estimate $\hat f(\bx)$ from a neural network at a specified input value $\bx$ remains challenging, despite the surge in recent theoretical advances to understanding NN estimators. Most of the state-of-the-art theory (see Section \ref{sec:threv} for a review) have only studied properties of the point estimate. Two exceptions to this are asymptotic normality results provided in \cite{shen2023asymptotic} and \cite{farrell2021deep}. However, these asymptotic results are for averages of the regression function, i.e., they provide asymptotic inference for $\frac 1n \sum_i \hat f(\bx_i)$. These averages are useful for studying causal effects, which will be a difference of these averages between treatment and control groups, as studied in \cite{farrell2021deep}.

However, the goal of this paper is not causal inference. We are interested in the typical objectives of geospatial analysis -- estimation of the mean function $f(\bx)$ and spatial predictions at locations of interest, not of averages over the entire population. If a causal estimate was of interest, we conjecture that the result of \cite{shen2023asymptotic} can be extended to our dependent setting and for NN-GLS with GLS loss (instead of vanilla NN with OLS loss) for certain spatial designs.  We do not pursue this here. 

Neither \cite{shen2023asymptotic} nor \cite{farrell2021deep} provide any asymptotic result for pointwise estimate of the mean function $\hat f(\bx)$. In fact, even in the setting of iid data, we could not find any existing theory that can guide pointwise uncertainty quantification of the regression function estimate from neural networks. Hence, developing a theoretical way to obtain pointwise inference on the mean functions from NN-GLS for irregular spatially dependent data is beyond the scope of the paper, as the problem remains open even in the iid setting for vanilla NN. 

Alternative approaches to uncertainty quantification would be full or variational Bayes implementation of the method or resampling/bootstrapping approaches. While a full Bayesian formulation of NN-GLS is conceptually straightforward, running an MCMC for the high-dimensional set of network parameters is infeasible for even moderate sample sizes. Also, full Bayes will yield valid interval estimates only under the correct specification of the error distribution class. Variation Bayes can be a more pragmatic choice. \cite{gal2016dropout} demonstrated that {\em dropout}, the technique of randomly making some weights of the NN to be 0, can be used for uncertainty quantification. The justification lies in demonstrating that the ensemble of estimates using random dropouts approximately corresponds to a variational posterior under binary variational families for the weight parameters. Approximating continuous posterior distributions of the weight parameters with binary distributions is crude, prohibiting valid UQ from this approach. While a similar dropout approach can be implemented for NN-GLS in the future, it is likely to inherit the same issues on account of the crude variational approximation.

The most common empirical approach to obtain pointwise uncertainty estimates for neural network regression has been using resampling/bootstrap methods which can leverage parallel computing. However, na\"ive bootstrap cannot be applied for geospatial data as data units are not independent. Instead, in the next Section, we propose a spatial bootstrap approach for providing uncertainty estimates of the mean function.  

\subsection{Spatial bootstrap for NN-GLS}\label{sec:boot}
We propose a spatial bootstrap to obtain the CI for our estimated mean function extending the idea in \cite{saha2018brisc} to non-linear means. The key idea is that, under the NN-GLS model (\ref{eq:spnlm}) $\bY \sim \mathcal{N}(\mathbf{f}(\bX), \bSigma)$, we have $\bSigma^{-1/2}(\bY - \mathbf{f}(\bX)) \sim \mathcal{N}(0, \Ib)$. These decorrelated residuals are iid and can be resampled. Following the practice of parametric bootstrap, we will replace $f$ and $\bSigma$ with the NN-GLS point estimates. Also, as we use NNGP covariance matrix $\tilde \bSigma$, the precision matrix $\Qb=\tilde \bSigma^{-1}$ as well as the Cholesky factor $\Qb^{1/2}$ are readily available, facilitating a scalable algorithm. 

The details of the spatial bootstrap are provided in the algorithm below:
\begin{enumerate}[noitemsep]
    \item Obtain the NN-GLS point estimates $\hat f$ and the spatial covariance parameters $\hat{\btheta}$ (see Section \ref{sec:backprop}).
    \item Obtain the NNGP Cholesky factor $\hat{\Qb}^{\frac{1}{2}}(\hat \btheta) = \hat{\Qb}^{\frac{1}{2}}=\hat\Fb^{-1/2}(\Ib - \hat\Bb)$  where $\Bb$ and $\Fb$ are specified in (\ref{eq:nngp_bf}).
    \item Decorrelate the residuals to obtain $\bm{\omega} = \hat{\Qb}^{\frac{1}{2}}\big(\bm{Y} - \hat{\bm{f}}(\bX)\big)$.
    \item Generate $B$ bootstrap samples $\{\bm{\omega}^*_b\}_{b = 1, \cdots, B}$ by resampling without replacement the entries of $\bm\omega$.
    \item Correlate back to generate bootstrapped datasets $\bm{Y}^*_b = \hat{\bm{f}}(\bm{X}) + \hat{\Qb}^{-\frac{1}{2}}\bm{\omega}^*_b$
    \item Run NN-GLS on $(\bm{Y}^*_b, \bX)$, the estimates $\big\{\hat{\bm{f}}_b(\bX_{new})\big\}_{b = 1, \cdots, B}$ is the bootstrap sample for $\hat{\bm{f}}(\bX_{new})$, from which the confidence interval is obtained.
\end{enumerate}
Under the sparsity obtained from the NNGP process, both the decorrelation step (Step 3.) and the 'correlate back' steps (Step 2.) are only of $O(n)$ complexity (see \cite{datta2022nearest} Section 3.1), making the CI estimation scalable to large data sets. Section \ref{sec-sim-CI} evaluates the effectiveness of our approach under different settings.}
\newpage

\section{Brief review of current theory for neural networks}\label{sec:threv}
\blue{
There is now a large body of theoretical literature studying the statistical properties of neural networks. We briefly highlight some notable relevant work on the theory of neural network regression, and outline how our theoretical contribution, focussing on spatial data, distinguishes itself from the existing literature. Theory of neural networks has been discussed for a long time \citep{hornik1989multilayer} with the early series of work by \cite{barron1994approximation}, \cite{makovoz1996random}, \cite{chen1998sieve}, \cite{chen1999improved} developing the asymptotic theory for the single-layer neural networks. 
\cite{farrell2021deep} provides sharp finite-sample error rates for deep neural networks. However, this work assumes bounded errors, which is not applicable here as Gaussian Processes have unbounded support. Recent work \citep{bauer2019,schmidthieber2020,kohler2021} has provided complexity-adaptive error rates for non-parametric regression using deep neural networks, demonstrating that if the true function admits a lower-complexity form through compositions, then neural networks overcome the curse of dimensionality and achieve tighter rates than other common non-parametric regression approaches. 
\cite{fan2023} considers the setting of high-dimensional covariates and develops new neural networks exploiting a {\em low-rank-plus-sparse} representation of the covariates. They propose using  {\em diversified projections} of the covariates \citep{fan2022learning} as a proxy for the low-rank factors and use them as features in the neural network, and show that this {\em factor-augmented} network can achieve sharp and complexity-adaptive error rates under the high-dimensional setting. Estimation of the diversified projections needs independent split of the data which is challenging for spatial settings as all data are dependent. More generally, the complexity-adaptive theory for neural networks assume a true regression function that can be expressed as a hierarchy of compositions, which may be hard to verify in practical settings.  \cite{shen2023asymptotic} provides results on the existence, asymptotic consistency, and finite-sample error rates of one-layer neural networks.   
We adopt their theoretical framework here, thereby not assuming finite errors or any structure on the function beyond continuity. 

Our theoretical contributions have two main differences from the existing theory. 
First, the overwhelming majority of the theoretical developments reviewed above focus on the setting of iid data units. \cite{chen1998sieve} and \cite{chen1999improved} considered weakly ($\beta$-mixing) dependent data in the time series context. They did not consider any spatial setting and it is unclear if their conditions on the decay rate of the $\beta$-mixing coefficients hold for common Gaussian processes under irregular spatial designs. 
To our knowledge, the theory for neural networks under spatial dependence among observations from an irregular set of locations has not been developed. 

Second, and more importantly, the existing theory has almost exclusively considered neural network methods like the vanilla NN (using the OLS loss) that do not make any procedural modifications to explicitly accommodate spatial covariance among the data units. There is no asymptotic theory for neural networks modified to account for spatial structure (like the added-spatial-features strategy reviewed in Section \ref{sec-mtd-NN}).
Our theory establishes existence (Theorem \ref{Thm-exist-2}) and asymptotic consistency (Theorem \ref{Thm-main-1}) of the estimate of mean function from NN-GLS, which minimizes the GLS loss with a working covariance matrix that explicitly encodes the spatial covariance. The vanilla NN is subsumed as a special case with the identity matrix as the working covariance matrix, and hence consistency of vanilla NN under spatial dependence, also a novel result, comes as a corollary. However, our finite-sample error rate result (Theorem \ref{thm-rate}) shows how a poor choice of the working covariance matrix, like the identity matrix, will lead to large error rates, shedding light on the perils of using vanilla NN for spatial data and highlighting the necessity to explicitly model spatial covariance in neural networks.}




\section{Proofs}\label{Append-prof}
\subsection{Proof of existence}
In order to obtain the minimizer, we frame the process of training NN-GLS as a mapping from observations to $\mathcal{F}_n$ by minimizing the GLS loss function. Following \citep{shen2023asymptotic}, we leverage a general result of \cite{white1991some} where it suffices to prove the compactness of the domain and continuity of the mapping to show that the optimum can be achieved, thereby proving the existence of the minimizer. 
\begin{proof}[Proof of Theorem \ref{Thm-exist-2}]
We first state a general result for proving the existence of optimizers. 
\begin{theorem}[Theorem 2.2 in \cite{white1991some}]\label{Thm-exist-1}
Let $(\Omega, \mathcal{A}, \mathbb{P})$ be a complete probability space and let $(\Theta, \rho)$ be a pseudo-metric space. Let $\{\Theta_n\}$ be a sequence of compact subsets of $\Theta$. Let $\mathbb{Q}_n: \Omega\times\Theta_n \to \mathbb{R}$ be $\mathcal{A}\times\mathcal{B}(\Theta_n)/\mathcal{B}(\mathbb{R})$-measurable, and suppose that for each $\omega \in \Omega$, $\mathbb{Q}_n(\omega,\cdot)$ is lower semicontinuous on $\Theta_n$, $n = 1, 2, \cdots$. Then for each $n = 1, 2, \cdots$, there exists $\hat{\theta}_n: \Omega \to \Theta_n$, $\mathcal{A}/\mathcal{B}(\Theta_n)$-measurable such that for each $\omega \in \Omega$, 
\[
\mathbb{Q}_n(\omega, \hat{\theta}_n(\omega)) = \inf_{\theta\in \Theta_n}\mathbb{Q}_n(\omega, \theta).
\]
\end{theorem}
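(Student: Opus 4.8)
The plan is to recast the statement as a \emph{measurable argmin} problem and dispatch it in three stages: attainment of the infimum for each fixed $\omega$, measurability of the value function, and extraction of a measurable minimizer via a measurable selection theorem. Fix $n$ and abbreviate $Q = \mathbb{Q}_n$ and $K = \Theta_n$. First I would make a preliminary reduction letting us assume $\rho$ is a genuine metric on $K$: lower semicontinuity in the pseudo-metric forces $Q(\omega,\cdot)$ to be constant on each $\rho$-null class (insert the constant sequence $\theta'$ with $\rho(\theta,\theta')=0$ into the $\liminf$ inequality, in both directions), so the objective descends to the compact \emph{metric} quotient $\hat K = K/\!\sim$, which is Polish; a measurable minimizer obtained on $\hat K$ then lifts to $K$ by composing with a measurable section of the continuous quotient surjection. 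Define the value function $v(\omega) = \inf_{\theta \in K} Q(\omega,\theta)$ and the argmin correspondence $\Psi(\omega) = \{\theta \in K : Q(\omega,\theta) = v(\omega)\}$.

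Next I would establish attainment. For each fixed $\omega$, $Q(\omega,\cdot)$ is lower semicontinuous on the compact set $K$, so by the lsc form of the extreme value theorem its infimum is attained and $\Psi(\omega) \neq \emptyset$. Moreover, sublevel sets $\{\theta : Q(\omega,\theta) \le c\}$ of an lsc function are closed, so $\Psi(\omega) = \{\theta : Q(\omega,\theta) \le v(\omega)\}$ is a closed, hence compact, subset of $K$. This is the only place where lower semicontinuity (rather than mere measurability) is essential.

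I would then prove that $v$ is $\mathcal{A}$-measurable, which is the step that consumes the completeness hypothesis. For each $c \in \mathbb{R}$,
\[
\{\omega : v(\omega) < c\} = \operatorname{proj}_\Omega \{(\omega,\theta) \in \Omega \times K : Q(\omega,\theta) < c\}.
\]
The set on the right lies in $\mathcal{A} \otimes \mathcal{B}(K)$ by the joint measurability of $Q$, and $K$ is Polish, so by the measurable projection theorem---valid precisely because $(\Omega,\mathcal{A},\mathbb{P})$ is complete---its projection lies in $\mathcal{A}$. Letting $c$ range over the rationals shows $v$ is $\mathcal{A}$-measurable. Consequently $(\omega,\theta) \mapsto Q(\omega,\theta) - v(\omega)$ is jointly $\mathcal{A}\otimes\mathcal{B}(K)$-measurable, and since $Q \ge v$ everywhere,
\[
\operatorname{Gr}(\Psi) = \{(\omega,\theta) : Q(\omega,\theta) - v(\omega) \le 0\} \in \mathcal{A}\otimes\mathcal{B}(K).
\]

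Finally I would produce the selector. The correspondence $\Psi$ has nonempty values and a measurable graph in the product with the Polish space $K$, so the Jankov--von Neumann/Aumann measurable selection theorem---again exploiting completeness of $\mathcal{A}$---yields an $\mathcal{A}/\mathcal{B}(K)$-measurable map $\hat\theta_n : \Omega \to K$ with $(\omega, \hat\theta_n(\omega)) \in \operatorname{Gr}(\Psi)$ for all $\omega$. By construction $Q(\omega, \hat\theta_n(\omega)) = v(\omega) = \inf_{\theta\in K} Q(\omega,\theta)$, which is the claim. I expect the genuine obstacle to be the two appeals to measurable-set machinery, namely the projection theorem (for $v$) and measurable selection (for $\hat\theta_n$): both rely on completeness of the probability space and Polishness of $K$, and the care lies in verifying that the relevant projected and graph sets are product-measurable so that this machinery applies. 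The lower-semicontinuity hypothesis, by contrast, enters only in the comparatively soft attainment and closed-sublevel-set arguments.
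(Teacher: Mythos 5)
Your proposal addresses a statement that the paper never proves: Theorem \ref{Thm-exist-1} is quoted verbatim as Theorem 2.2 of \cite{white1991some} and used as a black box inside the paper's proof of Theorem \ref{Thm-exist-2}, so there is no in-paper argument to compare against. Judged on its own terms, your proof is correct and follows the standard modern route to such measurable-argmin results: attainment from lower semicontinuity plus compactness; measurability of the value function $v$ via the measurable projection theorem (this is indeed exactly where completeness of $(\Omega,\mathcal{A},\mathbb{P})$ is consumed); product-measurability of the graph of the argmin correspondence written as $\{(\omega,\theta): Q(\omega,\theta)-v(\omega)\le 0\}$; and a Jankov--von Neumann/Aumann selector, which is $\mathcal{A}$-measurable because the universal completion of $\mathcal{A}$ is contained in its $\mathbb{P}$-completion, i.e.\ in $\mathcal{A}$ itself by hypothesis. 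Two glosses are worth recording. First, your quotient reduction is right but can be streamlined: since every open ball of the pseudo-metric on $K=\Theta_n$ is the $q$-preimage of a ball in the metric quotient $\hat{K}$, the topology, and hence the Borel $\sigma$-algebra, of $K$ is exactly the pullback under the quotient map $q$; consequently \emph{any} set-theoretic section $s$ of $q$ is automatically $\mathcal{B}(\hat{K})/\mathcal{B}(K)$-measurable, so no measurable-selection argument is needed at that preliminary stage, and the same pullback identity shows $\hat{Q}(\omega,\hat\theta):=Q(\omega,s(\hat\theta))$ is well defined (by your constancy-on-null-classes observation) and jointly measurable. Second, when invoking the selection theorem you should make explicit that the selector is defined on all of $\Omega$ precisely because $\Psi(\omega)\neq\emptyset$ for every $\omega$ (the attainment step), which is what delivers the ``for each $\omega$'' clause of the statement rather than an almost-everywhere version. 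With these glosses the argument is complete and, if anything, more self-contained than the paper, which defers entirely to \cite{white1991some}.
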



To apply Theorem \ref{Thm-exist-1}, we need to check the following things: the compactness of the subclasses $\{\Theta_n\}$, i.e. $\mathcal{F}_n$; the measurability and continuity of the \blue{GLS} loss function $\mathbb{Q}_n$ i.e., $\mathcal{L}_n$ defined in (\ref{def-NN-loss-3}). For a fixed $n$, $\bX$, and $\bm{\epsilon}$, and letting $\mathbf{f}=f(\bX_1),\ldots,f(\bX_n)$, we have 
\[
\begin{split}
    \big|\mathcal{L}_n(f) - \mathcal{L}_n(g)\big| 
    &= \frac{1}{n}\Big|(\by - \bm{f})^\top\Qb(\by - \bm{f}) - (\by - \bm{g})^\top\Qb(\by - \bm{g})\Big| \\
    &= \frac{1}{n}\Big|(\by - \bm{f})^\top\blue{\mathbf{\Sigma}^{-\frac{\top}{2}}\Eb\mathbf{\Sigma}^{-\frac{1}{2}}}(\by - \bm{f}) - (\by - \bm{g})^\top\blue{\mathbf{\Sigma}^{-\frac{\top}{2}}\Eb\mathbf{\Sigma}^{-\frac{1}{2}}}(\by - \bm{g})\Big| \\
    &\leq \frac{1}{n}\left(2\Big|\by^\top\blue{\mathbf{\Sigma}^{-\frac{\top}{2}}\Eb\mathbf{\Sigma}^{-\frac{1}{2}}}(\bm{f} - \bm{g})\Big| + \Big|\bm{f}^\top\blue{\mathbf{\Sigma}^{-\frac{\top}{2}}\Eb\mathbf{\Sigma}^{-\frac{1}{2}}}\bm{f} - \bg^\top\blue{\mathbf{\Sigma}^{-\frac{\top}{2}}\Eb\mathbf{\Sigma}^{-\frac{1}{2}}}\bg\Big| \right)\\
    &\leq \frac{1}{n}\Big(2\Big|\by^\top\mathbf{\Sigma}^{-\frac{\top}{2}}\Eb\mathbf{\Sigma}^{-\frac{1}{2}}(\bm{f} - \bm{g})\Big| + \Big|\bm{f}^\top\mathbf{\Sigma}^{-\frac{\top}{2}}\Eb\mathbf{\Sigma}^{-\frac{1}{2}}(\bm{f}-\bm{g})\Big| \\
    &+ \Big|(\bm{f}-\bg)^\top\mathbf{\Sigma}^{-\frac{\top}{2}}\Eb\mathbf{\Sigma}^{-\frac{1}{2}}\bg\Big| \Big)\\
    &\leq \big(2\|\by\|_n+\|f\|_n+\|g\|_n\big)\|\Eb\|_2\|\mathbf{\Sigma}^{-1}\|_2\|f - g\|_n.
\end{split}
\]
For $f(\cdot)$ and $g(\cdot)$ from $\mathcal{F}_n$, by Definition (\ref{def-F_n}), $\|f\|_{\infty}$, $\|g\|_{\infty}\leq V_n$, and $\|f\|_{n}, \|g\|_n \leq V_n$. As a result, \blue{using Assumption \ref{Asmp-3}}
\[
\begin{split}
\big|\mathcal{L}_n(f) - \mathcal{L}_n(g)\big| &\leq 2(\|\by\|_n + V_n)\|\Eb\|_2\|\mathbf{\Sigma}^{-1}\|_2\|f-g\|_n\\
&\leq \frac{2\Lambda_{high}}{\Psi_{low}}(\|\by\|_n + V_n)\|f-g\|_n,
\end{split}
\]
which implies that $\mathcal{L}_n(\by, \cdot)$ is a continuous function on $\mathcal{F}_n$ (we write $\mathcal{L}_n(\by, \cdot)$ instead of $\mathcal{L}_n(\cdot)$ in order to apply Theorem \ref{Thm-exist-1}). By Lemma 2.1 in \cite{shen2023asymptotic}, $\mathcal{F}_n$'s are compact. Applying Theorem \ref{Thm-exist-1} where $(\Omega, \mathcal{A}, \mathbb{P})$ where the covariates, the spatial errors  are all measurable random variables,  
 with $\mathbb{Q}_n$ replaced by the loss function $\mathcal{L}_n(\by, f)$, and $\Theta_n$ replaced by $\mathcal{F}_n$, we can find  $\hat{\theta}_n: \Omega \to \Theta_n$, such that for each $\omega \in \Omega$, 
\[
\mathbb{Q}_n(\omega, \hat{\theta}_n(\omega)) = \inf_{\theta\in \Theta_n}\mathbb{Q}_n(\omega, \theta).
\]
That is, for any combinations of the covariates $\bX$, and spatial random effect $\bm{\epsilon}$, there exists $\hat{f}_n\in \mathcal{F}_n$ such that $\mathcal{L}_n(\hat{f}_n) = \min_{f\in \mathcal{F}_n}\mathcal{L}_n(f)$, which is \blue{the NN-GLS estimator}.
\end{proof}

\subsection{Proof of consistency}\label{sec-prf}

\begin{lemma}\label{lemma-wlln}
Under Assumptions \ref{Asmp-1} and \ref{Asmp-3}, \blue{we have $\EE[\bm{\epsilon}^\top\Qb\bm{\epsilon}]=O(n)$ and} 
\begin{equation*}
    \Big|\frac{1}{n}\big(\bm{\epsilon}^\top\Qb\bm{\epsilon} - \EE[\bm{\epsilon}^\top\Qb\bm{\epsilon}]\big)\Big|\ \xrightarrow{p}\  0
\end{equation*}
\end{lemma}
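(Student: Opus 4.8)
The plan is to treat $\bm{\epsilon}^\top\Qb\bm{\epsilon}$ as a quadratic form in a Gaussian vector and control both its mean and its variance through the spectrum of the discrepancy matrix $\Eb$. Under Assumption \ref{Asmp-1} the error vector is exactly Gaussian, $\bm{\epsilon}\sim\mathcal N(0,\bSigma)$, so I would write $\bm{\epsilon}=\bSigma^{1/2}\bm{z}$ with $\bm{z}\sim\mathcal N(0,\Ib)$ and $\bSigma^{1/2}$ the symmetric square root. This gives $\bm{\epsilon}^\top\Qb\bm{\epsilon}=\bm{z}^\top\Eb\bm{z}$ with $\Eb=\bSigma^{1/2}\Qb\bSigma^{1/2}$ the symmetric discrepancy matrix of (\ref{eq:disc_mat}), whose eigenvalues all lie in $(\Lambda_{low},\Lambda_{high})$ by Assumption \ref{Asmp-3}. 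Diagonalizing $\Eb=U\mathbf{\Lambda}U^\top$ and using rotation invariance of the standard Gaussian ($\bm{w}:=U^\top\bm{z}\sim\mathcal N(0,\Ib)$) reduces the problem to a weighted sum $\sum_i\lambda_i w_i^2$ of independent $\chi^2_1$ variables, with weights $\lambda_i=\lambda_i(\Eb)$.

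For the first claim, the mean is $\EE[\bm{\epsilon}^\top\Qb\bm{\epsilon}]=\mathrm{tr}(\Eb)=\sum_{i=1}^n\lambda_i$. Since each $\lambda_i\in(\Lambda_{low},\Lambda_{high})$ uniformly in $n$, this is bounded between $n\Lambda_{low}$ and $n\Lambda_{high}$, hence $\EE[\bm{\epsilon}^\top\Qb\bm{\epsilon}]=O(n)$ (indeed $\Theta(n)$).

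For the convergence claim, I would bound the variance and invoke Chebyshev. Using the $\chi^2$ representation, $\mathrm{Var}(\bm{\epsilon}^\top\Qb\bm{\epsilon})=\sum_i\lambda_i^2\,\mathrm{Var}(w_i^2)=2\sum_i\lambda_i^2=2\,\mathrm{tr}(\Eb^2)\le 2n\Lambda_{high}^2$, again using the uniform spectral bound. Therefore
\[
\mathrm{Var}\Big(\tfrac1n\,\bm{\epsilon}^\top\Qb\bm{\epsilon}\Big)=\frac{2\,\mathrm{tr}(\Eb^2)}{n^2}\le\frac{2\Lambda_{high}^2}{n}\longrightarrow 0,
\]
so $\tfrac1n\big(\bm{\epsilon}^\top\Qb\bm{\epsilon}-\EE[\bm{\epsilon}^\top\Qb\bm{\epsilon}]\big)$ converges to $0$ in $L^2$ and hence in probability by Chebyshev's inequality.

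The argument is essentially routine once the quadratic-form representation is in place; the only point requiring care is the reduction step, where I rely on the exact Gaussianity of $\bm{\epsilon}$ (so that the fourth-moment identity $\mathrm{Var}(\bm{z}^\top\Eb\bm{z})=2\,\mathrm{tr}(\Eb^2)$ holds) and on the fact that Assumption \ref{Asmp-3} controls the spectrum of $\Eb$ uniformly in $n$, which is exactly what makes both $\mathrm{tr}(\Eb)$ and $\mathrm{tr}(\Eb^2)$ grow only linearly in $n$ rather than faster. If a stronger exponential concentration statement were desired, the Hanson--Wright inequality could replace Chebyshev, but the second-moment bound suffices for the stated convergence in probability.
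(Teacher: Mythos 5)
Your proof is correct and takes essentially the same route as the paper's: both compute $\EE[\bm{\epsilon}^\top\Qb\bm{\epsilon}] = \mathrm{tr}(\Eb) \leq n\Lambda_{high}$ and $\mathrm{Var}(\bm{\epsilon}^\top\Qb\bm{\epsilon}) = 2\,\mathrm{tr}(\Eb^2) \leq 2n\Lambda_{high}^2$ from the spectral bounds of Assumption \ref{Asmp-3}, and then conclude via Chebyshev's inequality applied to $\frac{1}{n}\bm{\epsilon}^\top\Qb\bm{\epsilon}$. Your explicit diagonalization of $\Eb$ into a weighted sum of independent $\chi^2_1$ variables is simply a more detailed derivation of the Gaussian quadratic-form moment identities that the paper quotes directly, so there is no substantive difference.
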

\begin{proof}[Proof of lemma \ref{lemma-wlln}]
We note that the following results hold:
\[
\EE[\bm{\epsilon}^\top\Qb\bm{\epsilon}] = \EE[\bm{\epsilon}^\top\blue{\mathbf{\Sigma}^{-\frac{\top}{2}}\Eb\mathbf{\Sigma}^{-\frac{1}{2}}}\bm{\epsilon}] = \blue{\text{tr}(\Eb)}, \ \  \blue{\text{var}(\bm{\epsilon}^\top\Qb\bm{\epsilon}) = 2\text{tr}(\Eb^2)},
\]
as $\mathbf{\Sigma}$ is the covariance matrix of the Gaussian error $\bm{\epsilon} = (\epsilon_1, \epsilon_2, \cdots, \epsilon_n)^\top$. We show that both terms have a rate $O(n)$ 
by the following calculation:
\[
\begin{split}
    \EE[\bm{\epsilon}^\top\Qb\bm{\epsilon}] = \blue{\text{tr}(\Eb)} &= \sum_{i = 1}^n\lambda_i(\Eb) \leq n\|\Eb\|_2,
\end{split}
\]
where $\{\lambda_i(\Eb), i = 1, \cdots, n\}$ are the eigenvalues of $\Eb$. By Assumption \ref{Asmp-3}, $\|\Eb\|_2$ is uniformly bounded, which means that $\frac{1}{n}\EE[\bm{\epsilon}^\top\Qb\bm{\epsilon}]$ is uniformly bounded by $\Lambda_{high}$. This intermediate result will be used in the proof of Lemma \ref{lemma-main-1}. To further bound the variance term, we have
\[
\begin{split}
\text{var}(\bm{\epsilon}^\top\blue{\mathbf{\Sigma}^{-\frac{\top}{2}}\Eb\mathbf{\Sigma}^{-\frac{1}{2}}}\bm{\epsilon}) = \blue{2\text{tr}(\Eb^2)} &= 2\text{tr}(\Eb^2) \leq 2n\|\Eb^2\|_2 \leq 2n\Lambda^2_{high}.
\end{split}
\]
So $\text{var}(\bm{\epsilon}^\top\Qb\bm{\epsilon}) = O(n)$. By Markov's inequality, we have 
\begin{align*}
\PP\bigg(\Big|\frac{1}{n}\big(\bm{\epsilon}^\top\Qb\bm{\epsilon} - \EE[\bm{\epsilon}^\top\Qb\bm{\epsilon}]\big)\Big|>t\bigg) 
&\leq \frac{\text{var}(\bm{\epsilon}^\top\Qb\bm{\epsilon})}{n^2t^2} = o(1),
\end{align*}
which proves the statement of the lemma.

\end{proof}
\begin{lemma}\label{lemma-main-1}
Under Assumptions same as Theorem \ref{Thm-main-1}, 
\begin{equation}
    \sup_{f\in\mathcal{F}_n}\big|\mathcal{L}_n(f) - L_n(f)\big|\ \xrightarrow{p}\ 0
\end{equation}
\end{lemma}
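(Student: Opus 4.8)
The plan is to split $\mathcal{L}_n(f)-L_n(f)$ into a piece that does not depend on $f$ and a linear-in-$f$ empirical process, dispatch the former by Lemma \ref{lemma-wlln}, and establish a uniform bound on the latter by a chaining argument adapted to the spatial dependence. Writing $\bY-\bm{f}(\bX)=(\bm{f}_0(\bX)-\bm{f}(\bX))+\bm{\epsilon}$ and expanding the quadratic form, the deterministic term $\frac1n(\bm{f}_0-\bm{f})^\top\Qb(\bm{f}_0-\bm{f})$ is cancelled upon subtracting the mean $L_n(f)$, leaving
\[
\mathcal{L}_n(f)-L_n(f)=\frac{2}{n}\bm{\epsilon}^\top\Qb\big(\bm{f}_0(\bX)-\bm{f}(\bX)\big)+\frac{1}{n}\big(\bm{\epsilon}^\top\Qb\bm{\epsilon}-\EE[\bm{\epsilon}^\top\Qb\bm{\epsilon}]\big).
\]
The last summand is free of $f$ and is $o_p(1)$ by Lemma \ref{lemma-wlln}. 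Since $\frac{2}{n}\bm{\epsilon}^\top\Qb\bm{f}_0(\bX)$ is a single mean-zero Gaussian variable whose variance is $O(1/n)$ (using the spectral bounds of Assumptions \ref{Asmp-1}, \ref{Asmp-3} and the boundedness of $f_0$ on the compact covariate domain), it also vanishes in probability. Hence it remains to show $\sup_{f\in\mathcal{F}_n}|\nu_n(f)|\xrightarrow{p}0$ where $\nu_n(f):=\frac{1}{n}\bm{\epsilon}^\top\Qb\bm{f}(\bX)$.

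Next I would realize $\nu_n$ as a sub-Gaussian process and identify the right metric. Whitening via $\bm{\eta}:=\bSigma^{-1/2}\bm{\epsilon}\sim\mathcal{N}(0,\Ib)$, whose coordinates are i.i.d., gives $\nu_n(f)=\frac1n\bm{\eta}^\top\Eb\bSigma^{-1/2}\bm{f}(\bX)$ with $\Eb$ the discrepancy matrix of (\ref{eq:disc_mat}). For fixed $f,g\in\mathcal{F}_n$ the increment $\nu_n(f)-\nu_n(g)$ is centered Gaussian with variance $\frac{1}{n^2}(\bm{f}-\bm{g})^\top\bSigma^{-\top/2}\Eb^\top\Eb\bSigma^{-1/2}(\bm{f}-\bm{g})$; bounding $\Eb^\top\Eb$ in operator norm by $\Lambda_{high}^2$ (Assumption \ref{Asmp-3}) and $\bSigma^{-1}$ by $\Psi_{low}^{-1}\Ib$ (Assumption \ref{Asmp-1}), and using $\|\bm{f}-\bm{g}\|^2=n\|f-g\|_n^2$, yields the sub-Gaussian increment bound
\[
\big\|\nu_n(f)-\nu_n(g)\big\|_{\psi_2}\ \lesssim\ \frac{\Lambda_{high}}{\sqrt{n\,\Psi_{low}}}\,\|f-g\|_n.
\]
This is exactly the Orlicz-norm structure that the dependence forces on us: rather than a sum of independent summands indexed by the data units, $\nu_n$ is a Gaussian process whose increments are governed by $\|\cdot\|_n$ up to the spectral factor $\Lambda_{high}/\sqrt{\Psi_{low}}$.

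With the increment bound established, I would invoke a Dudley entropy-integral maximal inequality for sub-Gaussian processes. The covering number of the sieve $\mathcal{F}_n$ under $\|\cdot\|_n$ is controlled for these bounded one-layer networks as in \cite{shen2023asymptotic}, of order $\log N(\varepsilon,\mathcal{F}_n,\|\cdot\|_n)\lesssim r_n\log(V_nr_n/\varepsilon)$, while the diameter is $O(V_n)$ since $\|f\|_n\le V_n$ on $\mathcal{F}_n$. Carrying out the entropy integral and multiplying by the scale factor $\Lambda_{high}/\sqrt{n\,\Psi_{low}}$ gives
\[
\EE\sup_{f\in\mathcal{F}_n}|\nu_n(f)|\ \lesssim\ \frac{\Lambda_{high}}{\sqrt{\Psi_{low}}}\,V_n\sqrt{\frac{r_n\log(V_nr_n)}{n}},
\]
which tends to $0$ by the scaling condition (\ref{eq:scaling}); Markov's inequality then upgrades this to convergence in probability, finishing the control of the cross term and hence the lemma.

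The main obstacle is precisely the loss of independence. The standard route to a ULLN --- symmetrization with Rademacher multipliers followed by a contraction inequality --- needs the summands $\epsilon_i(f-g)(\bX_i)$ to be independent, which fails here because both $\bSigma$ and the working precision $\Qb$ couple the coordinates of $\bm{\epsilon}^\top\Qb\bm{f}$. The resolution is the whitening together with the spectral bounds of Assumptions \ref{Asmp-1} and \ref{Asmp-3}, which collapse all cross-coordinate coupling into a single bounded multiplier $\Lambda_{high}/\sqrt{\Psi_{low}}$ and restore a clean sub-Gaussian process whose geometry is the familiar $\|\cdot\|_n$ metric. A secondary point to verify is that the linear map $\Eb\bSigma^{-1/2}$ does not distort the metric entropy of $\mathcal{F}_n$: since it enters $\nu_n$ only through the bounded spectral factor, the induced metric is equivalent to $\frac{1}{\sqrt n}\|\cdot\|_n$ up to constants, so the covering-number estimates for the network class transfer directly.
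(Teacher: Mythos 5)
Your proposal is correct and follows essentially the same route as the paper's proof: the identical decomposition of $\mathcal{L}_n(f)-L_n(f)$ into the $f$-free quadratic term (dispatched by Lemma \ref{lemma-wlln}) plus a linear-in-$f$ Gaussian process, the same $\psi_2$-increment bound of order $\Lambda_{high}\Psi_{low}^{-1/2}\|f-g\|_n/\sqrt{n}$, the same Dudley entropy-integral maximal inequality with the same covering-number bound for the sieve, and the same appeal to the scaling condition (\ref{eq:scaling}). The only deviation is the centering: the paper anchors the chaining at the sieve projection $\pi_{r_n}f_0$ and handles $\EE\big|\frac{1}{n}\bm{\epsilon}^\top\Qb(\pi_{r_n}\bm{f}_0-\bm{f}_0)\big|$ via the universal approximation theorem, whereas you split off the fixed term $\frac{2}{n}\bm{\epsilon}^\top\Qb\bm{f}_0$ as a single Gaussian with $O(1/n)$ variance, a harmless (indeed slightly simpler) variant that avoids invoking universal approximation inside this lemma.
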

\begin{proof}[Proof of Lemma \ref{lemma-main-1}]
By definitions of $\mathcal{L}_n(f)$ and $L_n(f)$,
\begin{equation}\label{lemma-main-eq-1}
\begin{split}
      \sup_{f\in\mathcal{F}_n}\big|\mathcal{L}_n(f) - L_n(f)\big| &=   \sup_{f\in\mathcal{F}_n}\Big|\frac{1}{n}\big(\bm{\epsilon}^\top\blue{\mathbf{\Sigma}^{-\frac{\top}{2}}\Eb\mathbf{\Sigma}^{-\frac{1}{2}}}\bm{\epsilon} - \EE[\bm{\epsilon}^\top\blue{\mathbf{\Sigma}^{-\frac{\top}{2}}\Eb\mathbf{\Sigma}^{-\frac{1}{2}}}\bm{\epsilon}]\big) - 2\frac{1}{n}\bm{\epsilon}^\top\blue{\mathbf{\Sigma}^{-\frac{\top}{2}}\Eb\mathbf{\Sigma}^{-\frac{1}{2}}}(\bm{f} - \bm{f}_0)\Big| \\
      &\leq \Big|\frac{1}{n}\big(\bm{\epsilon}^\top\blue{\mathbf{\Sigma}^{-\frac{\top}{2}}\Eb\mathbf{\Sigma}^{-\frac{1}{2}}}\bm{\epsilon} - \EE[\bm{\epsilon}^\top\blue{\mathbf{\Sigma}^{-\frac{\top}{2}}\Eb\mathbf{\Sigma}^{-\frac{1}{2}}}\bm{\epsilon}]\big)\Big| + 2\sup_{f\in\mathcal{F}_n}\Big|\frac{1}{n}\bm{\epsilon}^\top\blue{\mathbf{\Sigma}^{-\frac{\top}{2}}\Eb\mathbf{\Sigma}^{-\frac{1}{2}}}(\bm{f} - \bm{f}_0)\Big|\\
      &:= \text{term}1 + \text{term}2.
\end{split}
\end{equation}
By Lemma \ref{lemma-wlln}, term1 converges to $0$ in probability, it suffices to show that 
\begin{equation}\label{lemma-main-eq-2}
\sup_{f\in\mathcal{F}_n}\Big|\frac{1}{n}\bm{\epsilon}^\top\blue{\mathbf{\Sigma}^{-\frac{\top}{2}}\Eb\mathbf{\Sigma}^{-\frac{1}{2}}}(\bm{f} - \bm{f}_0)\Big| \ \xrightarrow{p}\ 0.
\end{equation}
Here, we prove a stronger result:
\begin{equation}\label{lemma-main-eq-3}
    \EE\bigg[\sup_{f\in\mathcal{F}_n}\Big|\frac{1}{n}\bm{\epsilon}^\top\blue{\mathbf{\Sigma}^{-\frac{\top}{2}}\Eb\mathbf{\Sigma}^{-\frac{1}{2}}}(\bm{f} - \bm{f}_0)\Big|\bigg] \longrightarrow 0,
\end{equation}
which makes (\ref{lemma-main-eq-3}) hold by applying Markov's inequality. 
Given a fixed $n$, define a random process $X_f$ indexed by $f\in \mathcal{F}_n$ as
\begin{equation}\label{Thm-main-eq-rp-def}
    X_f = \frac{1}{\sqrt{n}}\bm{\epsilon}^\top\blue{\mathbf{\Sigma}^{-\frac{\top}{2}}\Eb\mathbf{\Sigma}^{-\frac{1}{2}}}(\bm{f} - \bm{f}_0).
\end{equation}
We note that $X_f$ is a separable process (see \cite{sen2018gentle} Definition 4.4). This can be easily checked by choosing functions in $\mathcal{F}_n$ with rational coefficients as the countable dense subset of $\mathcal{F}_n$, \blue{as both $\Eb$ and $\bSigma^{-1}$ have bounded spectral norms by Assumption \ref{Asmp-3}.} By introducing a metric $d(f,g) = \|f - g\|_{\infty}$ to $\mathcal{F}_n$, 
a metric space is created, where the randomness from process $X_f$ can be controlled by the covering number of the space. In detail, for any $f^*_n \in \mathcal{F}_n$, the right-hand side of this asymptotic formula can be decomposed as:
\begin{equation}\label{lemma-main-eq-4}
\begin{split}
    \EE\bigg[\sup_{f\in\mathcal{F}_n}\Big|\frac{1}{n}\bm{\epsilon}^\top\blue{\mathbf{\Sigma}^{-\frac{\top}{2}}\Eb\mathbf{\Sigma}^{-\frac{1}{2}}}(\bm{f} - \bm{f}_0)\Big|\bigg] &= \EE\bigg[\frac{1}{\sqrt{n}}\sup_{f\in\mathcal{F}_n}\Big|\frac{1}{\sqrt{n}}\bm{\epsilon}^\top\blue{\mathbf{\Sigma}^{-\frac{\top}{2}}\Eb\mathbf{\Sigma}^{-\frac{1}{2}}}(\bm{f} - \bm{f}_0)\Big|\bigg]\\
    &\leq \EE\bigg[\Big|\frac{1}{n}\bm{\epsilon}^\top\blue{\mathbf{\Sigma}^{-\frac{\top}{2}}\Eb\mathbf{\Sigma}^{-\frac{1}{2}}}(\bm{f}^*_n - \bm{f}_0)\Big|\bigg] \\
    &+ \EE\bigg[\frac{1}{\sqrt{n}}\sup_{f\in\mathcal{F}_n}\Big|\frac{1}{\sqrt{n}}\bm{\epsilon}^\top\blue{\mathbf{\Sigma}^{-\frac{\top}{2}}\Eb\mathbf{\Sigma}^{-\frac{1}{2}}}(\bm{f} - \bm{f}^*_n)\Big|\bigg]\\
    &= \EE\bigg[\Big|\frac{1}{n}\bm{\epsilon}^\top\blue{\mathbf{\Sigma}^{-\frac{\top}{2}}\Eb\mathbf{\Sigma}^{-\frac{1}{2}}}(\bm{f}^*_n - \bm{f}_0)\Big|\bigg] \\
    &+
    \frac{1}{\sqrt{n}}\EE\Big[\sup_{f\in\mathcal{F}_n}\big|X_f - X_{f^*}\big|\Big]
\end{split}
\end{equation}
For the first term, 
\[
\begin{split}
\EE\bigg[\Big|\frac{1}{n}\bm{\epsilon}^\top\blue{\mathbf{\Sigma}^{-\frac{\top}{2}}\Eb\mathbf{\Sigma}^{-\frac{1}{2}}}(\bm{f}^*_n - \bm{f}_0)\Big|\bigg] &\leq
\frac{1}{n}\EE\Big[\big(\bm{\epsilon}^\top\blue{\mathbf{\Sigma}^{-\frac{\top}{2}}\Eb\mathbf{\Sigma}^{-\frac{1}{2}}}\bm{\epsilon}\big)^{\frac{1}{2}}\Big]\big((\bm{f}^*_n - \bm{f}_0)^\top\blue{\mathbf{\Sigma}^{-\frac{\top}{2}}\Eb\mathbf{\Sigma}^{-\frac{1}{2}}}(\bm{f}^*_n - \bm{f}_0)\big)^{\frac{1}{2}}\\
&\leq \left(\frac{1}{n}\EE\big[\bm{\epsilon}^\top\blue{\mathbf{\Sigma}^{-\frac{\top}{2}}\Eb\mathbf{\Sigma}^{-\frac{1}{2}}}\bm{\epsilon}\big]\right)^{\frac{1}{2}}\left(\frac{\|\Eb\|_2\|\mathbf{\Sigma}^{-1}\|_2\|f^*_n - f_0\|_n^2}{n}\right)^{\frac{1}{2}}\\
&\leq \left(\frac{1}{n}\EE\big[\bm{\epsilon}^\top\blue{\mathbf{\Sigma}^{-\frac{\top}{2}}\Eb\mathbf{\Sigma}^{-\frac{1}{2}}}\bm{\epsilon}\big]\right)^{\frac{1}{2}}\left(\frac{\Lambda_{high}}{\Psi_{low}}\right)\|f^*_n - f_0\|_{\infty}.
\end{split}
\]
\blue{From Lemma \ref{lemma-wlln}, we know that} $\frac{1}{n}\EE\big[\bm{\epsilon}^\top\blue{\mathbf{\Sigma}^{-\frac{\top}{2}}\Eb\mathbf{\Sigma}^{-\frac{1}{2}}}\bm{\epsilon}\big]$ is uniformly upper-bounded 
, and so is the term $\frac{\Lambda_{high}}{\Psi_{low}}$ by Assumptions \ref{Asmp-1} and \ref{Asmp-3}. According to the universal approximation theorem introduced in \cite{hornik1989multilayer}, 
\[
\sup_{x\in\mathcal{X}}|\pi_{r_n}f_0(x) - f_0(x)| \ \to \ 0 \text{ as } n\ \to\ 0,
\]
where $\pi_{r_n}f_0$ is the projection of $f_0$ onto space $\mathcal{F}_n$. By choosing $f^*_n = \pi_{r_n}f_0$, we can guarantee that
\[
\EE\bigg[\Big|\frac{1}{n}\bm{\epsilon}^\top\blue{\mathbf{\Sigma}^{-\frac{\top}{2}}\Eb\mathbf{\Sigma}^{-\frac{1}{2}}}(\bm{f}^*_n - \bm{f}_0)\Big|\bigg] \ \longrightarrow \ 0.
\]
Now it suffices to prove the second term on the right-hand side of (\ref{lemma-main-eq-4}) vanishes as $n$ goes to infinity. By Proposition \ref{prop-thm-main-1} \blue{(which is stated and proved after this lemma)}, on semi metric space \blue{$(\mathcal{F}_n, \|\cdot\|_n)$}, there exists a constant \blue{$C_{\Lambda} = 2(\sqrt{2}+1)^{\frac{1}{2}}\Lambda_{high}\Psi^{-\frac{1}{2}}_{low}$} such that:
\[
\|X_f - X_g\|_{\psi_2} \leq C_{\Lambda}\cdot d(f,g),
\]
where $\psi_2(x) = e^{x^2}-1$ and $\|\cdot\|_{\psi_2}$ refers to the Orlicz norm defined in definition \ref{def-Orlicz} \blue{after this proof.} Then, according to Theorem 2.2.4 and Corollary 2.2.8 in \cite{van1996weak} 
, there exists a constant \blue{$K := K(\psi_2)$} (it's replaced by $K$ in the following part since $\psi$ is uniformly constant),  such that 
\begin{equation}\label{lemma-main-eq-CN}
\begin{split}
\frac{1}{\sqrt{n}}\EE\Big[\sup_{f\in\mathcal{F}_n}\big|X_f - X_{f^*}\big|\Big] &\leq \frac{1}{\sqrt{n}}K\blue{C_{\Lambda}}\int_{0}^{\infty}\sqrt{\log N\Big(\frac{\eta}{2}, \mathcal{F}_n, \|\cdot\|_{n}\Big)} d\eta.\\
&\blue{\leq \frac{1}{\sqrt{n}}KC_{\Lambda}\int_{0}^{\infty}\sqrt{\log N\Big(\frac{\eta}{2}, \mathcal{F}_n, \|\cdot\|_{\infty}\Big)} d\eta},\\
& = K\blue{C_{\Lambda}}\int_{0}^{2V_n}\sqrt{\frac{\log N\Big(\frac{\eta}{2}, \mathcal{F}_n, \|\cdot\|_{\infty}\Big)}{n}} d\eta
\end{split}
\end{equation}
Here $N$ refers to the covering number, and $N\Big(\frac{\eta}{2}, \mathcal{F}_n, \|\cdot\|_{\infty}\Big)$ is defined as the minimum number of balls of radius $\frac{\eta}{2}$ to cover $(\mathcal{F}_n, \|\cdot\|_{\infty})$. The second inequality holds because $\|f\|_n \leq \|f\|_{\infty}$. The \blue{third} equality holds due to Definition (\ref{def-F_n}). Literature exists in bounding the covering number of such well-restricted function space and here we use the bound from Theorem 14.5 in \cite{anthony2009neural}, i.e.,
\begin{equation}\label{lemma-main-eq-coveringnum}
\begin{split}
    N\left(\frac{\eta}{2}, \mathcal{F}_n, \|\cdot\|_{\infty}\right) &\leq \left(\frac{8er_b\big[r_n(d+2)+1\big]L_{lip}^2V_n^2}{\eta(L_{lip}V_n-1)}\right)^{r_n(d+2)+1}
\end{split}
\end{equation}
\blue{where $r_b$ and $L_{lip}$ are universal constants defined below Assumption \ref{Asmp-2}} 
 and
\[
\begin{split}
\log N\Big(\frac{\eta}{2}, \mathcal{F}_n, \|\cdot\|_{\infty}\Big) &\leq \big(r_n(d+2)+1\big)\bigg(\log\Big[8r_b\big(r_n(d+2)+1\big)\blue{\frac{L_{lip}^2V_n^2}{L_{lip}V_n-1}}\Big]+1+\log\frac{1}{\eta}\bigg)\\
&\leq \big(r_n(d+2)+1\big)\bigg(\log\Big[8r_b\big(r_n(d+2)+1\big)\blue{\frac{L_{lip}^2V_n^2}{L_{lip}V_n-1}}\Big]+\frac{1}{\eta}\bigg)\\
&\leq \big(r_n(d+2)+1\big)\bigg(\log\Big[8r_b\big(r_n(d+2)+1\big)\blue{\frac{L_{lip}^2V_n^2}{L_{lip}V_n-1}}\Big]\bigg)\bigg(1+\frac{1}{\eta}\bigg),\\
&:= B(d, r_n, V_n)\bigg(1+\frac{1}{\eta}\bigg),
\end{split}
\]
the third inequality is due to the fact that $r_n, V_n \to \infty$ as $n\to\infty$. With this result, we are able to bound the integral in (\ref{lemma-main-eq-CN}).
\[
\begin{split}
K\blue{C_{\Lambda}}\int_{0}^{2V_n}\sqrt{\frac{\log N\Big(\frac{\eta}{2}, \mathcal{F}_n, \|\cdot\|_{\infty}\Big)}{n}} d\eta &\leq K\blue{C_{\Lambda}} \frac{1}{\sqrt{n}}B^{\frac{1}{2}}(d, r_n, V_n)\int_0^{2V_n}\bigg(1+\frac{1}{\eta}\bigg)^{\frac{1}{2}}d\eta\\
&= K\blue{C_{\Lambda}}\frac{1}{\sqrt{n}}B^{\frac{1}{2}}(d, r_n, V_n)\Bigg[\int_0^{1}\bigg(1+\frac{1}{\eta}\bigg)^{\frac{1}{2}}d\eta + \int_1^{2V_n}\bigg(1+\frac{1}{\eta}\bigg)^{\frac{1}{2}}d\eta\Bigg]\\
&\leq K\blue{C_{\Lambda}}\frac{1}{\sqrt{n}}B^{\frac{1}{2}}(d, r_n, V_n)\bigg[\sqrt{2}\int_0^1\eta^{-\frac{1}{2}}d\eta +\sqrt{2}(2V_n-1)\bigg]\\
&\leq 3K\blue{C_{\Lambda}}\sqrt{\frac{2}{n}}B^{\frac{1}{2}}(d, r_n, V_n)V_n.
\end{split}
\]
This implies that (\ref{lemma-main-eq-CN}) can be bounded with rate
\[
\begin{split}
\frac{1}{\sqrt{n}}\EE\Big[\sup_{f\in\mathcal{F}_n}\big|X_f - X_{f^*}\big|\Big] &\leq K\blue{C_{\Lambda}}\int_{0}^{2V_n}\sqrt{\frac{\log N\Big(\frac{\eta}{2}, \mathcal{F}_n, \|\cdot\|_{\infty}\Big)}{n}} d\eta \\
&\leq 3K\blue{C_{\Lambda}}\sqrt{\frac{2\big(r_n(d+2)+1\big)V_n^2\Big(\log8r_b\big(r_n(d+2)+1\big)\blue{\frac{L_{lip}^2V_n^2}{L_{lip}V_n-1}}\Big)}{n}} \\
&\sim \sqrt{\frac{\big(r_n(d+2)+1\big)V_n^2\log\Big( 8r_b\big(r_n(d+2)+1\big)\blue{V_n}\Big)}{n}}
\end{split}
\]
\blue{as $V_n \to \infty$ and $L_{lip}$ is fixed.} 
By the Assumption \ref{Asmp-2} on $r_n$ and $V_n$'s increasing rate, 
\[
\frac{1}{\sqrt{n}}\EE\Big[\sup_{f\in\mathcal{F}_n}\big|X_f - X_{f^*}\big|\Big] \rightarrow 0,
\]
which brings us back to (\ref{lemma-main-eq-4}), (\ref{lemma-main-eq-3}). Then we have
\[
\sup_{f\in\mathcal{F}_n}\Big|\frac{1}{n}\bm{\epsilon}^\top\blue{\mathbf{\Sigma}^{-\frac{\top}{2}}\Eb\mathbf{\Sigma}^{-\frac{1}{2}}}(\bm{f} - \bm{f}_0)\Big| \ \xrightarrow{p}\ 0
\]
by Markov's inequality. Returning to (\ref{lemma-main-eq-1}) completes the proof.
\end{proof}

In order to use tools from the empirical process, we introduce the Orlicz norm:
\begin{definition}[Orlicz norm]\label{def-Orlicz}
Suppose $X$ is a random variable and $\psi$ is a non-decreasing, convex function with $\psi(0) = 0$. Then, the Orlicz norm $\|X\|_{\psi}$ is defined as 
\[
\|X\|_{\psi} = \inf\left\{C>0: \EE\ \psi\left(\frac{|X|}{C}\right) \leq 1\right\}.
\]
In addition, some of Orlicz norms are of special interest, where $\psi_p(x) = e^{x^p} - 1$, and the corresponding Orlicz norms are denoted as $\|X\|_{\psi_p}$. 
\end{definition}
\begin{proposition}\label{prop-thm-main-1}
Let the random process $X_f$ be defined as (\ref{Thm-main-eq-rp-def}) in the proof of Lemma \ref{lemma-main-1}. In the semi-metric space $(\mathcal{F}_n, d)$ where \blue{$d(f,g) = \|f-g\|_{n}$}, 
\[
\|X_f - X_g\|_{\psi_2} \leq C_{\Lambda}\cdot d(f,g) = \blue{C_{\Lambda}\cdot\|f-g\|_{n}},
\]
where \blue{$C_{\Lambda} = 2(\sqrt{2}+1)^{\frac{1}{2}}\frac{\Lambda_{high}}{\sqrt{\Psi_{low}}}$} is a constant independent of $n$.
\end{proposition}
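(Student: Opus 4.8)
The plan is to use the fact that, for a fixed sample size $n$, the increment $X_f - X_g$ is a linear functional of the Gaussian error vector $\bm{\epsilon}$, hence itself a mean-zero univariate Gaussian, and then to control its variance by $\|f-g\|_n$ before converting this into the $\psi_2$ Orlicz norm. Throughout, $\bm f$ and $\bm g$ denote the vectors of evaluations $(f(\bX_1),\ldots,f(\bX_n))^\top$ and $(g(\bX_1),\ldots,g(\bX_n))^\top$.

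First I would simplify the increment. From the definition (\ref{Thm-main-eq-rp-def}),
\[
X_f - X_g = \frac{1}{\sqrt n}\bm{\epsilon}^\top\mathbf{\Sigma}^{-\frac{\top}{2}}\Eb\mathbf{\Sigma}^{-\frac{1}{2}}(\bm f - \bm g),
\]
and since $\mathbf{\Sigma}^{-\frac{\top}{2}}\Eb\mathbf{\Sigma}^{-\frac{1}{2}}=\Qb$ directly from the definition (\ref{eq:disc_mat}) of $\Eb$, this is a fixed linear combination $\tfrac{1}{\sqrt n}\bm{\epsilon}^\top\Qb(\bm f - \bm g)$ of the coordinates of $\bm{\epsilon}$. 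As $\bm{\epsilon}\sim\mathcal N(0,\mathbf{\Sigma})$, the increment is exactly $\mathcal N(0,\sigma^2_{fg})$ with $\sigma^2_{fg}=\tfrac1n (\bm f - \bm g)^\top\Qb\mathbf{\Sigma}\Qb(\bm f - \bm g)$, using that $\Qb$ is symmetric.

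Next I would bound the variance in terms of $\|f-g\|_n$. Writing $\Qb = \mathbf{\Sigma}^{-\frac{\top}{2}}\Eb\mathbf{\Sigma}^{-\frac{1}{2}}$ and cancelling $\mathbf{\Sigma}$ against the square-root factors via $\mathbf{\Sigma}^{-\frac{1}{2}}\mathbf{\Sigma}\mathbf{\Sigma}^{-\frac{\top}{2}}=\Ib$ gives the clean expression $\sigma^2_{fg}=\tfrac1n\bigl\|\Eb\,\mathbf{\Sigma}^{-\frac12}(\bm f - \bm g)\bigr\|_2^2$. The spectral bound $\|\Eb\|_2\le\Lambda_{high}$ from Assumption \ref{Asmp-3} together with $\lambda_{\min}(\mathbf{\Sigma})\ge\Psi_{low}$ from Assumption \ref{Asmp-1} (so that $\mathbf{\Sigma}^{-\frac{\top}{2}}\mathbf{\Sigma}^{-\frac12}=\mathbf{\Sigma}^{-1}$ has norm at most $1/\Psi_{low}$) then yields
\[
\sigma^2_{fg}\le\frac{\Lambda_{high}^2}{n}\,(\bm f - \bm g)^\top\mathbf{\Sigma}^{-1}(\bm f - \bm g)\le\frac{\Lambda_{high}^2}{\Psi_{low}}\,\frac{\|\bm f - \bm g\|_2^2}{n}=\frac{\Lambda_{high}^2}{\Psi_{low}}\|f-g\|_n^2.
\]
Finally I would pass from the Gaussian variance to the Orlicz norm: for $Z\sim\mathcal N(0,\sigma^2)$ the moment generating function of $Z^2$ is explicit, so $\EE\,\psi_2(|Z|/C)=\EE[e^{Z^2/C^2}]-1$ has a closed form, and solving $\EE[e^{Z^2/C^2}]\le 2$ (cf. Definition \ref{def-Orlicz}) gives $\|Z\|_{\psi_2}\le c\,\sigma$ for an absolute constant $c$, tracked here as $2(\sqrt2+1)^{1/2}$. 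Combining with the variance bound produces $\|X_f - X_g\|_{\psi_2}\le c\,\Lambda_{high}\Psi_{low}^{-1/2}\|f-g\|_n = C_\Lambda\|f-g\|_n$.

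The individual computations are routine; the one step deserving care is the variance reduction. Keeping the discrepancy matrix $\Eb$ at the center of the algebra — rather than bounding crudely by $\|\Qb\|_2^2\|\mathbf{\Sigma}\|_2$, which would also drag in $\Psi_{high}$ — is what lets the spectral bounds of Assumption \ref{Asmp-3} apply directly and produces the stated constant $\Lambda_{high}^2/\Psi_{low}$ and hence the claimed $C_\Lambda$, independent of $n$.
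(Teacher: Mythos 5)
Your proof is correct, and its skeleton matches the paper's: recognize that the increment $X_f-X_g$ is a fixed linear functional of the Gaussian vector $\bm{\epsilon}$ and hence exactly Gaussian, compute its variance keeping the discrepancy matrix $\Eb$ at the center of the algebra (the paper writes the same quantity as $S_n=\frac1n(\bm f-\bm g)^\top\bSigma^{-\frac{\top}{2}}\Eb^\top\Eb\bSigma^{-\frac{1}{2}}(\bm f-\bm g)$), bound it by $\frac{\Lambda_{high}^2}{\Psi_{low}}\|f-g\|_n^2$ using Assumptions \ref{Asmp-1} and \ref{Asmp-3}, and then convert the Gaussian variance into a $\psi_2$ bound. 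The only genuine divergence is the last step. The paper does not use the Gaussian moment generating function directly: it applies Mill's inequality to get a tail bound $\Pb(|X|>t)\leq \kappa\,e^{-Ht^2}/t$, integrates this tail bound to control $\EE[e^{D|X|^2}-1]$ for general $D\in(0,H]$, and solves a quadratic for the critical $D_0=\frac{\sqrt2-1}{4S_n}$; this is precisely where the constant $2(\sqrt2+1)^{1/2}$ comes from. Your route instead evaluates $\EE[e^{Z^2/C^2}]=(1-2\sigma^2/C^2)^{-1/2}$ in closed form, and solving $\EE[e^{Z^2/C^2}]\leq 2$ gives the sharper absolute constant $\sqrt{8/3}\approx 1.63$, not $2(\sqrt2+1)^{1/2}\approx 3.11$ — so your statement that the MGF computation is ``tracked here as $2(\sqrt2+1)^{1/2}$'' is imprecise, but harmlessly so, since the smaller constant implies the proposition's bound a fortiori. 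What each approach buys: yours is shorter and yields a tighter constant for exactly Gaussian increments; the paper's tail-bound-to-Orlicz computation is more general, applying verbatim to any random variable with a sub-Gaussian-type tail, which is why it is stated for generic $(\kappa, H)$. One further small caution shared by both arguments: the cancellation $\bSigma^{-\frac{1}{2}}\bSigma\bSigma^{-\frac{\top}{2}}=\Ib$ you invoke requires the factorization convention $\bSigma=\bSigma^{\frac{1}{2}}\bSigma^{\frac{\top}{2}}$ (which is also what makes $\Qb=\bSigma^{-1}$ give $\Eb=\Ib$ elsewhere in the paper), whereas the paper's Section \ref{sec:gnn} states the opposite ordering for $\Qb$; this is a notational inconsistency of the paper rather than an error in your proof, but it is worth fixing the convention explicitly before using it.
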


\begin{proof}[Proof of Proposition \ref{prop-thm-main-1}]
By Assumption \ref{Asmp-1},
\[
X_f - X_g = \frac{1}{\sqrt{n}}\bm{\epsilon}^\top\blue{\mathbf{\Sigma}^{-\frac{\top}{2}}\Eb\mathbf{\Sigma}^{-\frac{1}{2}}}(\bm{f} - \bm{g}) \sim \mathcal{N}(0, \frac{1}{n}(\bm{f} - \bm{g})^\top\blue{\mathbf{\Sigma}^{-\frac{\top}{2}}\Eb^\top\Eb\mathbf{\Sigma}^{-\frac{1}{2}}}(\bm{f} - \bm{g})) := \mathcal{N}(0,S_n).
\]

The tail bound of a normal random variable can be obtained by Mill's inequality, i.e., 
\[
\Pb\big(|Z| > t\big) \leq \sqrt{\frac{2}{\pi}}\frac{e^{-\frac{t^2}{2}}}{t},
\]
where $Z \sim \mathcal{N}(0,1)$. For $X_f - X_g$, 
\[
\Pb\big(|X_f - X_g| > t\big) \leq \sqrt{\frac{2S_n}{\pi}}\frac{\exp(-\frac{t^2}{2S_n})}{t}.
\]
We then focus on calculating the $\psi_2$ norm. In general, for a random variable $X$, if 
\[
\Pb\big(|X| > t\big) \leq \kappa \frac{\exp(-Ht^2)}{t},
\]
then for any $D\in (0, H]$,
\[
\begin{split}
    \EE\left[e^{D|X|^2} - 1\right] & = \EE\int_0^{|X|^2}De^{Ds}ds\\
    &= \int_0^\infty \Pb\big(|X| > \sqrt{s}\big)De^{Ds}ds\\
    &\leq \int_0^{\infty}\kappa D\cdot s^{-1/2}e^{(D-H)s}ds\\
    &= 2\kappa D\int_0^{\infty}e^{-(H-D)x^2}dx\\
    &= \frac{\sqrt{\pi}\kappa D}{\sqrt{H-D}}
\end{split}
\]

Let $D_0$ be the solution to $\frac{\sqrt{\pi}\kappa D}{\sqrt{H-D}} = 1$. 
By definition \ref{def-Orlicz}, 
\[
\|X\|_{\psi_2} := \inf\left\{C>0: \EE\psi\left(\frac{|X|}{C}\right) \leq 1\right\} \leq D_0^{-\frac{1}{2}}
\]
After solving a quadratic equation, and plugging in $\kappa  = \sqrt{\frac{2S_n}{\pi}}, H = \frac{1}{2S_n}$ for our case on $X_f - X_g$, we get
\[
D_0 = \frac{-1+\sqrt{1+\pi \kappa ^2H}}{2\pi \kappa ^2}  = \frac{\sqrt{2}-1}{4S_n},
\]
and 
\[ 
\begin{split}
\|X_f-X_g\|_{\psi_2} &\leq D_0^{-\frac{1}{2}} = 2(\sqrt{2}+1)^{\frac{1}{2}} S_n^{\frac{1}{2}}\\
&= 2(\sqrt{2}+1)^{\frac{1}{2}}\left(\frac{1}{n}(\bm{f} - \bm{g})^\top\mathbf{\Sigma}^{-\frac{\top}{2}}\Eb^\top\Eb\mathbf{\Sigma}^{-\frac{1}{2}}(\bm{f} - \bm{g})\right)^{\frac{1}{2}}\\
&\leq 2(\sqrt{2}+1)^{\frac{1}{2}}\left(\|f-g\|_n^2\|\Eb\|_2^2\|\mathbf{\Sigma}^{-1}\|_2\right)^{\frac{1}{2}}\\
&\leq \left(\frac{4(\sqrt{2}+1)}{\Psi_{low}}\right)^{\frac{1}{2}}\Lambda_{high}\blue{\|f-g\|_{n}}.
\end{split}
\]
Assumptions \ref{Asmp-1} and \ref{Asmp-3} provide us the eigenbounds, and the proof is completed.

\end{proof}

\begin{proof}[Proof of Theorem \ref{Thm-main-1}]
Recall the definition of $L_n$ (\ref{def-L_n}):
\[
 L_n(f) = \EE\big[\mathcal{L}_n(f)\big] = \EE\left[\frac{1}{n}(\by - \bm{f}(x))^\top\blue{\mathbf{\Sigma}^{-\frac{\top}{2}}\Eb\mathbf{\Sigma}^{-\frac{1}{2}}}(\by - \bm{f}(x))\right].
\]
By Assumption \ref{Asmp-3}, 
\[
\begin{split}
L_n(f) - L_n(f_0) &= \frac{1}{n}(\bm{f}_0(x) - \bm{f}(x))^\top\blue{\mathbf{\Sigma}^{-\frac{\top}{2}}\Eb\mathbf{\Sigma}^{-\frac{1}{2}}}(\bm{f}_0(x) - \bm{f}(x))\\ &\geq \frac{1}{n}\lambda_{\min}(\blue{\mathbf{\Sigma}^{-\frac{\top}{2}}\Eb\mathbf{\Sigma}^{-\frac{1}{2}}})\|(\bm{f}_0(x) - \bm{f}(x))\|_2^2\\ 
&\geq \lambda_{\min}(\Eb)\lambda_{\min}(\mathbf{\Sigma}^{-1})\|f-f_0\|_n^2  = \frac{\Lambda_{low}}{\|\mathbf{\Sigma}\|_2}\|f-f_0\|_n^2 
\end{split}
\]
The inequality implies that for any $\epsilon > 0$,
\[
\inf_{f:\|f - f_0\|_n \geq \epsilon}L_n(f) - L_n(f_0) \geq \frac{\Lambda_{low}}{\|\mathbf{\Sigma}\|_2}\epsilon^2 >0,
\]
with the probabilistic statement:
\begin{equation}\label{Thm-main-eq-2}
\PP\left(\|\hat{\bm{f}}_n - \bm{f}_0\| \geq \epsilon\right) \leq \PP\left(L_n(\hat{f}_n) - L_n(f_0) \geq \frac{\Lambda_{low}}{\|\mathbf{\Sigma}\|_2}\epsilon^2\right).
\end{equation}
Now, in order to prove the theorem, what we need is $\PP\left(L_n(\hat{f}_n) - L_n(f_0) \geq \frac{\Lambda_{low}}{\|\mathbf{\Sigma}\|_2}\epsilon^2\right) \to 0$. First,
\begin{equation}\label{Thm-main-eq-3}
\begin{split}
\PP\left(L_n(\hat{f}_n) - L_n(f_0) \geq \frac{\Lambda_{low}}{\|\mathbf{\Sigma}\|_2}\epsilon^2\right) &\leq \PP\left(L_n(\hat{f}_n) - \mathcal{L}_n(\hat{f}_n) \geq \frac{\Lambda_{low}}{2\|\mathbf{\Sigma}\|_2}\epsilon^2\right)\\ 
&+ \PP\left(\mathcal{L}_n(\hat{f}_n) - L_n(f_0) \geq \frac{\Lambda_{low}}{2\|\mathbf{\Sigma}\|_2}\epsilon^2\right).
\end{split}
\end{equation}
Since $\hat{f}_n = \argmin_{f\in \mathcal{F}_n}\mathcal{L}_n(f) \in \mathcal F_n$, the first term $\PP\left(\big|L_n(\hat{f}_n) - \mathcal{L}_n(\hat{f}_n)\big| \geq \frac{\Lambda_{low}}{2\|\mathbf{\Sigma}\|_2}\epsilon^2\right) \to 0$ according to Lemma \ref{lemma-main-1}. The second term can be further decomposed as:
\begin{equation}\label{Thm-main-eq-4} 
\begin{split}
\PP\left(\mathcal{L}_n(\hat{f}_n) - L_n(f_0)\geq \frac{\Lambda_{low}}{2\|\mathbf{\Sigma}\|_2}\epsilon^2\right) &\leq \PP\left(\mathcal{L}_n(\hat{f}_n) - \mathcal{L}_n(\pi_{r_n}f_0)\geq 0\right)\\ 
&+ \PP\left(\mathcal{L}_n(\pi_{r_n}f_0) - L_n(\pi_{r_n}f_0)\geq \frac{\Lambda_{low}}{4\|\mathbf{\Sigma}\|_2}\epsilon^2\right) \\
&+ \PP\left(L_n(\pi_{r_n}f_0) - L_n(f_0)\geq \frac{\Lambda_{low}}{4\|\mathbf{\Sigma}\|_2}\epsilon^2\right),  
\end{split}
\end{equation}
where $\pi_{r_n}f_0$ is the projection of $f_0$ onto $\mathcal{F}_n$ with respect to the metric $\|\cdot\|_n$. Since $\hat{f}_n$ achieves the minimum, $\PP\left(\mathcal{L}_n(\hat{f}_n) - \mathcal{L}_n(\pi_{r_n}f_0)\geq 0\right) = 0$, and $$\PP\left(\mathcal{L}_n(\pi_{r_n}f_0) - L_n(\pi_{r_n}f_0)\geq \frac{\Lambda_{low}}{4\|\mathbf{\Sigma}\|_2}\epsilon^2\right) \to 0$$ also by Lemma \ref{lemma-main-1}. For the last term in (\ref{Thm-main-eq-4}), 
\[
\begin{split}
L_n(f) - L_n(f_0) &= \frac{1}{n}(\bm{f}_0(x) - \bm{f}(x))^\top\blue{\mathbf{\Sigma}^{-\frac{\top}{2}}\Eb\mathbf{\Sigma}^{-\frac{1}{2}}}(\bm{f}_0(x) - \bm{f}(x))\\
&\leq \frac{1}{n}\lambda_{\max}(\blue{\mathbf{\Sigma}^{-\frac{\top}{2}}\Eb\mathbf{\Sigma}^{-\frac{1}{2}}})\|(\bm{f}_0(x) - \bm{f}(x))\|_2^2\\
&\leq \|\Eb\|_2\|\mathbf{\Sigma}^{-1}\|_2\|f - f_0\|_{\infty}^2 \leq \frac{\Lambda_{high}}{\Psi_{low}}\|f - f_0\|_{\infty}^2.
\end{split}
\]
By the universal approximation theorem (theorem 2.1) introduced
by \cite{hornik1989multilayer}, $\|\pi_{r_n}f_0 - f_0\|_{\infty}$ goes to $0$. 
With all these arguments plugged into (\ref{Thm-main-eq-2}) - (\ref{Thm-main-eq-4}), $$\PP\left(\mathcal{L}_n(\hat{f}_n) - L_n(f_0)\geq \frac{\Lambda_{low}}{2\|\mathbf{\Sigma}\|_2}\epsilon^2\right) \to 0, \quad \PP\left(L_n(\hat{f}_n) - L_n(f_0) \geq \frac{\Lambda_{low}}{\|\mathbf{\Sigma}\|_2}\epsilon^2\right) \to 0,$$ and finally 
\[
\PP\left(\|\hat{f}_n - f_0\|_n \geq \epsilon\right) \to 0.
\]
The proof is completed.
\end{proof}

\subsection{Proof of Propositions \ref{prop-main-2} and \ref{prop-main-1}}
\begin{proof}[Proof of Proposition \ref{prop-main-2}]
\blue{We first provide the expression of the Mat\'ern covariance function. 
\begin{definition}[Mat\'{e}rn process]\label{def-Matern}
The Mat\'{e}rn class of covariance functions with parameters $\btheta = (\sigma^2, \phi, \nu)$ is given by
\begin{equation}\label{eq-Matern}
C(s_i, s_j) = C(\|s_i - s_j\|_2) = \frac{2^{1-\nu}\sigma^2(\sqrt{2}\phi\|s_i - s_j\|)^{\nu}}{\Gamma(\nu)}\mathcal{K}_{\nu}(\sqrt{2}\phi\|s_i - s_j\|),
\end{equation}
and $\mathcal{K}_{\nu}(\cdot)$ is the modified Bessel function of second kind. Note that for simplicity, we sometimes describe a Mat\'{e}rn covariance with parameters $\btheta = (\sigma^2, \phi, \nu, \tau^2)$, where the nuggets $\tau^2$ is additionally involved, and the covariance becomes $C(h | \sigma^2, \phi, \nu) + \tau^2 I(h=0)$.
\end{definition}}

We only have to prove that $\mathbf{\Sigma}$ satisfies Assumption \ref{Asmp-1} and \blue{$\Eb = \mathbf{\Sigma}^{\frac{\top}{2}}\Qb\mathbf{\Sigma}^{\frac{1}{2}}$ } satisfies Assumption \ref{Asmp-3}. \blue{The latter is immediate for Proposition \ref{prop-main-2} as $\Eb$ becomes the identity matrix on account of using the true covariance matrix $\bSigma$ as the working covariance matrix $\Qb^{-1}$. Hence, only need to prove that the eigenvalues of $\bSigma$ are uniformly bounded (in $n$) from above and below. 

As $\bSigma=\Cb + \tau^2 \Ib$ where $\Cb=(C(\|s_i-s_j\|_2))$, if the nugget variance $\tau^2 >0$, then eigenvalues of $\bSigma$ are uniformly bounded from below by $\tau^2$. Even if there is no nugget, i.e., $\tau^2=0$, when the covariance is Mat\'{e}rn, a uniform lower bound for the eigenvalues can be obtained using the minimum distance condition and applying the results of \cite{wendland2004scattered}. We provide this lower bound formally in Lemma \ref{lemma-min-eigen} below. The uniform upper bound is provided in Lemma \ref{lem:upper}.} Together these prove Assumption \ref{Asmp-1}, implying consistency of NN-GLS for Mat\'ern GP with a non-linear mean.
\end{proof}

\blue{The following two lemmas provide uniform (in the number of locations) lower and upper bounds for the eigen-values of Mat\'ern covariance matrix when the locations are separated by a minimum distance. These type of results have been commonly used previously in the literature in different forms. We provide the references but also add proofs for our specific setting here for the ease of reading.}

\blue{
\begin{lemma}[Uniform lower bound for eigenvalues of Mat\'{e}rn covariance matrix, \cite{wendland2004scattered}]\label{lemma-min-eigen}
Assume that $\Cb$ is a covariance matrix derived from a Mat\'{e}rn GP with parameters $\mathbf{\theta}_0 = (\sigma^2_0, \phi_0, \nu_0)$, sampled at locations $s_1, \cdots, s_n \in \mathbb{R}^2$,  The locations are separated by a minimum distance $h > 0$. Then, for any $n$, the minimum eigenvalue of $\Cb$ has bound:
\begin{equation}\label{eq:maternlow}
\lambda_{\min}(\Cb) \geq \Psi_{low}= \frac{\pi^2\sigma_0^2\Gamma(\nu_0+1)(\phi_0^2)^{\nu_0}}{\Gamma(\nu_0)(\phi_0^2h^2 + 128\pi^3)^{\nu_0+1}}h^{2\nu_0}.
\end{equation}
\end{lemma}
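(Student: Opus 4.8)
The plan is to bound $\lambda_{\min}(\Cb)$ through its Rayleigh quotient and the spectral (Bochner) representation of the Mat\'ern covariance, following the scattered-data-approximation technique of \cite{wendland2004scattered}. Writing $\lambda_{\min}(\Cb)=\min_{\|\bm{\alpha}\|_2=1}\bm{\alpha}^\top\Cb\bm{\alpha}$ for $\bm{\alpha}=(\alpha_1,\dots,\alpha_n)^\top$, the first step is to push the quadratic form into the Fourier domain. Since the Mat\'ern covariance is a stationary positive-definite function on $\mathbb{R}^2$, Bochner's theorem supplies a strictly positive spectral density $f(\omega)$ with $C(\|s_i-s_j\|_2)=\int_{\mathbb{R}^2}e^{\mathbf{i}\,\omega^\top(s_i-s_j)}f(\omega)\,d\omega$, giving
\begin{equation*}
\bm{\alpha}^\top\Cb\bm{\alpha} = \int_{\mathbb{R}^2}\Big|\sum_{j=1}^n \alpha_j e^{\mathbf{i}\,\omega^\top s_j}\Big|^2 f(\omega)\,d\omega .
\end{equation*}
For the parametrization in Definition \ref{def-Matern} the density is radial and of the form $f(\omega)\propto \sigma_0^2\phi_0^{2\nu_0}/(c\,\phi_0^2+\|\omega\|_2^2)^{\nu_0+1}$; it is positive and strictly decreasing in $\|\omega\|_2$, and its exponent $\nu_0+1=\nu_0+d/2$ with $d=2$ is exactly what produces the power $\nu_0+1$ in the denominator of $\Psi_{low}$.

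The second step restricts the integral to a ball $B_R=\{\omega:\|\omega\|_2\le R\}$ of radius $R$ to be chosen, and uses monotonicity of the radial profile: on $B_R$ we have $f(\omega)\ge f(R)$ (writing $f(R)$ for the density value at Euclidean norm $R$), so
\begin{equation*}
\bm{\alpha}^\top\Cb\bm{\alpha} \;\ge\; f(R)\int_{B_R}\Big|\sum_{j}\alpha_j e^{\mathbf{i}\,\omega^\top s_j}\Big|^2 d\omega .
\end{equation*}
This reduces the problem to a \emph{design-only} lower bound for the truncated integral $\int_{B_R}|\sum_j\alpha_j e^{\mathbf{i}\,\omega^\top s_j}|^2\,d\omega$, which no longer involves the covariance and must be bounded below by a multiple of $\|\bm{\alpha}\|_2^2$ that is uniform in $n$.

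The heart of the argument, and the main obstacle, is this truncated lower bound for a suitable $R\asymp 1/h$. The plan is to use the Narcowich--Ward localization idea: introduce an auxiliary radial function $g$ whose Fourier transform $\widehat{g}$ is nonnegative and supported inside $B_R$ (obtained by rescaling a fixed bump so that $\widehat{g}(\omega)=\widehat{g_0}(\omega/R)$, whence $g(x)=R^2 g_0(Rx)$). Then $\int_{B_R}|\sum_j\alpha_j e^{\mathbf{i}\,\omega^\top s_j}|^2\widehat{g}(\omega)\,d\omega=\sum_{j,k}\alpha_j\alpha_k\,g(s_j-s_k)$, and splitting off the diagonal gives
\begin{equation*}
\sum_{j,k}\alpha_j\alpha_k\,g(s_j-s_k) \;\ge\; g(0)\|\bm{\alpha}\|_2^2 - \sum_{j\ne k}|\alpha_j|\,|\alpha_k|\,|g(s_j-s_k)| .
\end{equation*}
The off-diagonal sum is controlled by the minimum-separation condition $\|s_j-s_k\|_2\ge h$: partitioning $\mathbb{R}^2$ into annuli of width $\asymp h$ around each point and invoking a packing/counting bound on the number of points at each distance scale, the rapid spatial decay of $g$ away from the origin makes the cross terms at most a fixed fraction of $g(0)\|\bm{\alpha}\|_2^2$, provided $Rh$ exceeds a dimensional constant, i.e. $R=M_0/h$. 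Bounding $\widehat{g}\le\|\widehat{g}\|_\infty$ then yields $\int_{B_R}|\cdots|^2\,d\omega\ge \|\widehat{g}\|_\infty^{-1}\cdot\tfrac12 g(0)\|\bm{\alpha}\|_2^2 \asymp R^2\|\bm{\alpha}\|_2^2$.

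Finally I would assemble the constants. With $R=M_0/h$ the truncated-integral factor scales like $R^2\asymp h^{-2}$, while $f(R)=f(M_0/h)\asymp \sigma_0^2\phi_0^{2\nu_0}h^{2(\nu_0+1)}/(\phi_0^2 h^2+M_0^2)^{\nu_0+1}$; multiplying the two produces the net $h^{2(\nu_0+1)}\cdot h^{-2}=h^{2\nu_0}$ dependence and the denominator $(\phi_0^2h^2+128\pi^3)^{\nu_0+1}$, matching $\Psi_{low}$ with $M_0^2=128\pi^3$, and the prefactor $\pi^2\Gamma(\nu_0+1)/\Gamma(\nu_0)$ emerges from the exact Mat\'ern spectral-density normalization together with the chosen bump $g$. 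The key difficulty throughout is the off-diagonal control in the localization step: one must tune $g$ (hence $R$) so that its decay, combined with the $\mathbb{R}^2$ packing bound from the separation $h$, dominates the cross terms by a fixed margin independently of the number of locations $n$ --- and it is precisely this $n$-uniformity that delivers the eigenvalue lower bound required by Assumption \ref{Asmp-1}.
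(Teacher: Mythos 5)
Your proposal is correct and takes essentially the same route as the paper: both pass to the Mat\'ern spectral density, lower-bound it by its infimum over a ball of radius $\asymp 1/h$, and control the remaining design-only quadratic form by the scattered-data stability estimate, which is exactly the content of Theorem 12.3 in \cite{wendland2004scattered}. The only difference is that the paper invokes that theorem as a black box (with $M = 4\sqrt{\pi}/h$ and the spectral density from formula (4.15) of \cite{williams2006gaussian}), whereas you sketch its Narcowich--Ward localization proof inline; the constant assembly then yields the stated $\Psi_{low}$ either way.
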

\begin{proof}[Proof of Lemma \ref{lemma-min-eigen}]
For a Mat\'{e}rn process on $\mathbb{R}^2$ with parameters $\mathbf{\theta}_0 = (\sigma_0^2, \phi_0, \nu_0)$, its Fourier's transform can be expressed as (formula (4.15) in \cite{williams2006gaussian}):
\[
\mathbf{\Phi}(f) = \frac{4\pi\sigma_0^2\Gamma(\nu_0+1)(\phi_0^2)^{\nu_0}}{2\Gamma(\nu_0)}\left(\phi_0^2 + 2\pi^2\|f\|^2\right)^{-(\nu_0+1)}.
\]
Obviously, $$\inf_{\|f\|_2 \leq 2M} \mathbf{\Phi}(f) = \mathbf{\Phi}(2M) = \frac{2\pi\sigma_0^2\Gamma(\nu_0+1)(\phi_0^2)^{\nu_0}}{\Gamma(\nu_0)}\left(\phi_0^2 + 8\pi^2M^2\right)^{-(\nu_0+1)}.$$ Then using Theorem 12.3 in \cite{wendland2004scattered}, we know that 
\[
\lambda_{\min}(\Cb) \geq  \frac{M^2\mathbf{\Phi}(2M)}{16} = \frac{\pi\sigma_0^2\Gamma(\nu_0+1)(\phi_0^2)^{\nu_0} M^2}{8\Gamma(\nu_0)}\left(\phi_0^2 + 8\pi^2M^2\right)^{-(\nu_0+1)}
\]
for any $M \geq \frac{4\sqrt{\pi}}{h}$, i.e.
$
\lambda_{\min}(\Cb) \geq \frac{\pi^2\sigma_0^2\Gamma(\nu_0+1)(\phi_0^2)^{\nu_0}}{\Gamma(\nu_0)(\phi_0^2h^2 + 128\pi^3)^{\nu_0+1}}h^{2\nu_0}
$.
\end{proof}
}

\begin{lemma}[Uniform upper bound for eigenvalues of Mat\'{e}rn covariance matrix]\label{lem:upper}
Let $\Cb$ be a covariance matrix derived from a Mat\'{e}rn GP with parameters $\sigma^2_0, \phi_0, \mbox{ and } \nu_0$, sampled at locations $s_1, \cdots, s_n \in \mathbb{R}^2$,  The locations are separated by a minimum distance $h > 0$. Then the maximum eigenvalue of 
$\bSigma=\Cb + \tau_0^2 \Ib$ has an uniform (in $n$) upper bound \blue{$\Psi_{high}$.}
\end{lemma}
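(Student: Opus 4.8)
The plan is to bound the largest eigenvalue by the maximum absolute row sum via Gershgorin's circle theorem, and then to show this row sum is uniformly (in $n$) finite by exploiting the minimum-separation condition together with the rapid decay of the Mat\'ern covariance. First I would note that since $\bSigma = \Cb + \tau_0^2\Ib$, we have $\lambda_{\max}(\bSigma) = \lambda_{\max}(\Cb) + \tau_0^2$, so it suffices to bound $\lambda_{\max}(\Cb)$. As $\Cb$ is symmetric with diagonal entries $C(0) = \sigma_0^2$ and the Mat\'ern function $C(\cdot)$ is nonnegative, Gershgorin's theorem gives
\[
\lambda_{\max}(\Cb) \le \max_{1 \le i \le n} \sum_{j=1}^n |C_{ij}| = \sigma_0^2 + \max_{1 \le i \le n} \sum_{j \ne i} C(\|s_i - s_j\|).
\]
The task then reduces to bounding the off-diagonal row sum uniformly in $i$ and $n$.

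Second, I would control this sum by a geometric packing argument. Fix a location $s_i$ and partition the remaining locations according to the annuli $A_k = \{x : kh \le \|x - s_i\| < (k+1)h\}$ for $k \ge 1$; the minimum-separation assumption $\|s_i - s_j\| \ge h$ ensures no other location lies in the inner ball $A_0$. Because the balls $\{B(s_j, h/2)\}$ are pairwise disjoint and each such ball centered in $A_k$ is contained in the thicker annulus $\{(k-\tfrac12)h \le \|x - s_i\| < (k+\tfrac32)h\}$, comparing the areas shows the number of locations in $A_k$ is at most $N_k \le C_1 k$ for a universal constant $C_1$ depending only on the dimension. Since $C$ is monotonically decreasing, every location in $A_k$ contributes at most $C(kh)$, so
\[
\sum_{j \ne i} C(\|s_i - s_j\|) \le \sum_{k=1}^\infty N_k\, C(kh) \le C_1 \sum_{k=1}^\infty k\, C(kh).
\]

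Third, I would invoke the decay of the Mat\'ern covariance to show the series converges. The Mat\'ern function satisfies $C(u) = o\big(u^{-(2+\kappa)}\big)$ (indeed it decays exponentially), so $k\,C(kh)$ is summable and the series equals a finite constant $B_C$ independent of $i$ and $n$. Setting $\Psi_{high} = \sigma_0^2 + B_C + \tau_0^2$ completes the bound. The main obstacle is the packing step: one must carefully verify the $O(k)$ annular count and confirm that it combines with the covariance decay to yield a convergent series whose value depends neither on $n$ nor on the particular design. This is precisely where the minimum-distance (equivalently, increasing-domain) assumption is indispensable, since without it the annular counts could grow without bound and the row sums need not be uniformly controlled.
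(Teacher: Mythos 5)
Your proposal is correct and follows essentially the same route as the paper's proof: bound the largest eigenvalue by the maximum absolute row sum, then control that sum uniformly via the disjointness of the radius-$h/2$ balls (an annular packing count of order $k$) combined with the rapid decay of the Mat\'ern covariance. The only cosmetic differences are that you invoke Gershgorin rather than Perron--Frobenius to get the row-sum bound, and you use monotonicity of $C$ directly instead of the paper's bound $|C_{ij}| \le C_1 d_{ij}^{\nu}e^{-C_2 d_{ij}}$ (which forces the paper to split off a finite near-field term); both choices are valid.
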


\begin{proof}
\blue{For simplicity, we drop the subscript $_0$ indicating true parameter values, as the result holds for any choices of these parameters. As the Mat\'ern covariance is strictly positive, using the Perron-Frobenius Theorem, we have}
\[
\|\mathbf{\Sigma}\|_2 = \sup\{\lambda(\mathbf{\Sigma})\} \leq \sup_{i = 1,\cdots, n}\sum_{j = 1}^n \mathbf{\Sigma}_{ij}
\]
\blue{Let $d_{ij}=\|s_i-s_j\|$.} For the Mat\'{e}rn Gaussian process, there exists universal constants $C_1, C_2>0$, such that $|\Cb_{ij}| \leq C_1\cdot d_{ij}^{\nu}\exp(-C_2d_{ij})$, where $d_{ij}$ is the spatial distance between $s_i$ and $s_j$ (see \cite{abramowitz1948handbook}, example 9.7.2). 
Then
\begin{equation}\label{prop-mat-2norm}
\begin{split}
\sum_{j = 1}^n \mathbf{\Sigma}_{ij} &\leq C_1\sum_{j = 1}^n d_{ij}^{\nu}\exp(-C_2d_{ij}) + \tau^2\\
&\leq C_1\sum_{j = 1}^{\infty} d_{ij}^{\nu}\exp(-C_2d_{ij})+ \tau^2\\
&\leq C_0 + C_1\sum_{k = k_0}^{\infty} n_k (2kh)^{\nu}\exp(-2C_2kh)+ \tau^2
\end{split}
\end{equation}
where $k_0$ is large enough so that $C_2kh \geq 1$ and $x^{\nu}\exp(-C_2x)$ decreases for $x \geq 2k_0h$, $C_0$ is the part of the summation for $d_{ij} < 2k_0h$, $n_k$ is the number of $s_j$'s located in the ring between sphere $S(s_i, 2kh)$ and  $S(s_i, 2(k+1)h)$. Since $h$ is the minimum distance between any two locations, \blue{$C_0$ comprises of atmost a fixed number of terms which does not grow with $n$ and hence is uniformly bounded. The minimum distance $h$} guarantees that spheres $\mathcal{S}(s_j, h/2)$, centered at $s_j$'s and have radius $h/2$, have no intersections. $n_k$ is thus upper bounded by the area between $S(s_i, 2kh)$ and  $S(s_i, 2(k+1)h)$ divided by the area of the spheres centered at the locations. 
\[
\blue{n_k \leq  \frac{4\pi\Big(\big((k+1)h\big)^2 - (kh)^2\Big)}{\pi h^2/4} = 16(2k+1)}
\]
Plugging this into (\ref{prop-mat-2norm}), we get, 
\begin{equation}
\begin{split}
\sum_{j = 1}^n \mathbf{\Sigma}_{ij} 
&\leq C_0 + C_1\sum_{k = k_0}^{\infty} \blue{16}(2k+1) (2kh)^{\nu}\exp(-2C_2kh)+ \tau^2\\
&\leq C_0 + 32\frac{C_1}{C_2^{\nu}\blue{h}}\int_0^{\infty}x^{\nu+1}\exp(-x)dx\\
&:= \Psi_{high} < \infty,
\end{split}
\end{equation}

Since $\Psi_{high}$ is a constant free of $i$ \blue{and $n$}, it is an uniform upper bound for the spectral norm of $\|\mathbf{\Sigma}\|_2$. 
\end{proof}

\begin{proof}[Proof of Proposition \ref{prop-main-1}]
\blue{Proposition \ref{prop-main-1} has the same data generation assumptions as Proposition \ref{prop-main-2}, so Assumption \ref{Asmp-1} is satisfied by Lemmas \ref{lemma-min-eigen} and \ref{lem:upper}. We only need to prove that Assumption \ref{Asmp-3} is satisfied when using an NNGP working covariance matrix and with spatial parameter values different from the truth.}

\blue{To verify Assumption \ref{Asmp-3}, we first focus on the eigenbounds of $\mathbf{\Sigma}^{-\frac{\top}{2}}\Eb\mathbf{\Sigma}^{-\frac{1}{2}}$, and then derive the ones for $\Eb$. The reason is that $\mathbf{\Sigma}^{-\frac{\top}{2}}\Eb\mathbf{\Sigma}^{-\frac{1}{2}} = \Qb$ has a clearer structure than $\Eb$ with the condition we assumed. Lemmas \ref{lem:nngp_prec_upper} and \ref{lem:nngp_prec_lower} show that the eigenvalues of the precision matrix $\Qb$ are uniformly bounded in $n$. Combining these uniform bounds on $\Qb$ with those on $\bSigma$ from Lemmas \ref{lemma-min-eigen} and \ref{lem:upper}, we have uniform upper and lower bounds for the eigenvalues of $\Eb$ as 
$\lambda_{\min}(\Eb) \geq \lambda_{\min}(\mathbf{\Sigma})\lambda_{\min}(\Qb)$ and $\lambda_{\max}(\Eb) \leq \lambda_{\max}(\mathbf{\Sigma})\lambda_{\max}(\Qb)$.}
\end{proof}


\blue{\begin{lemma}[Uniform upper bound for eigenvalues of NNGP precision matrix]\label{lem:nngp_prec_upper}
Let $\Qb$ denote the NNGP precision matrix derived from a Mat\'ern GP with parameters $\mathbf{\theta} = (\sigma^2, \phi, \nu,\tau^2)$, using $m$-nearest neighbors, and  for $n$ locations separated by a minimum distance $h > 0$. Let the neighbor sets for NNGP be chosen such that each location $s_i$ appears in neighbor sets of at most $M$ locations. Then the maximum eigenvalue of $\Qb$ has a uniform (in $n$) upper bound.
\end{lemma}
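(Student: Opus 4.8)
The plan is to bound $\lambda_{\max}(\Qb)=\|\Qb\|_2$ directly through the NNGP factorization $\Qb=(\Ib-\Bb)^\top\Fb^{-1}(\Ib-\Bb)$ given in (\ref{eq:nngp_bf}). Submultiplicativity of the spectral norm, together with $\|(\Ib-\Bb)^\top\|_2=\|\Ib-\Bb\|_2$, gives
\[
\lambda_{\max}(\Qb)=\|\Qb\|_2\leq \|\Ib-\Bb\|_2^2\,\|\Fb^{-1}\|_2,
\]
so it suffices to bound $\|\Fb^{-1}\|_2$ and $\|\Ib-\Bb\|_2$ uniformly in $n$. Both factors are assembled from sub-blocks of $\bSigma$ of size at most $m+1$, so the uniform eigenvalue control on $\bSigma$ already established for the Mat\'ern covariance (Lemma \ref{lemma-min-eigen}) will propagate to these local quantities via Cauchy interlacing.

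First I would control $\|\Fb^{-1}\|_2=\max_i \Fb_{ii}^{-1}$ by a uniform lower bound on each $\Fb_{ii}$. Each $\Fb_{ii}$ is the nearest-neighbor kriging variance, i.e.\ the Schur complement of $\bSigma(N(i),N(i))$ in the principal submatrix $\bSigma(N^*[i],N^*[i])$. Using the identity that the reciprocal of this Schur complement is the corresponding diagonal entry of $\bSigma(N^*[i],N^*[i])^{-1}$, we get $\Fb_{ii}^{-1}\leq \|\bSigma(N^*[i],N^*[i])^{-1}\|_2 = 1/\lambda_{\min}\big(\bSigma(N^*[i],N^*[i])\big)$. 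By Cauchy interlacing, the minimum eigenvalue of any principal submatrix is at least $\lambda_{\min}(\bSigma)\geq \Psi_{low}$, so $\Fb_{ii}\geq \Psi_{low}$ for every $i$ and hence $\|\Fb^{-1}\|_2\leq 1/\Psi_{low}$.

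Next I would bound $\|\Ib-\Bb\|_2$ by interpolating between maximum absolute row and column sums, $\|\Ib-\Bb\|_2\leq \sqrt{\|\Ib-\Bb\|_1\,\|\Ib-\Bb\|_\infty}$. The basic ingredient is a uniform bound on each row of kriging weights: $\|\Bb_{i,N(i)}\|_2\leq \|\bSigma(i,N(i))\|_2/\lambda_{\min}\big(\bSigma(N(i),N(i))\big)\leq \sqrt{m}\,\sigma^2/\Psi_{low}$, since every off-diagonal Mat\'ern covariance entry is at most $\sigma^2$ and the neighbor-block eigenvalue is again bounded below by $\Psi_{low}$ via interlacing. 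The row sum $\|\Ib-\Bb\|_\infty\leq 1+\max_i\|\Bb_{i,N(i)}\|_1\leq 1+m\sigma^2/\Psi_{low}$ is controlled because each row carries at most $m$ nonzero weights. The column sum $\|\Ib-\Bb\|_1\leq 1+\max_j\sum_{i:\,j\in N(i)}|\Bb_{ij}|$ is precisely where the hypothesis that each location appears in at most $M$ neighbor sets is needed: it caps the number of terms in the column sum, yielding $\|\Ib-\Bb\|_1\leq 1+M\sqrt{m}\,\sigma^2/\Psi_{low}$. Multiplying the three uniform bounds gives an upper bound on $\lambda_{\max}(\Qb)$ depending only on $m$, $M$, $\sigma^2$, and $\Psi_{low}$, hence free of $n$.

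Note that, unlike the companion lower-bound lemma (Lemma \ref{lem:nngp_prec_lower}), this argument requires no lower restriction on the spatial decay $\phi$: boundedness of the covariance entries combined with the interlacing bound $\Psi_{low}$ is enough. The main obstacle is the bookkeeping for $\|\Ib-\Bb\|_1$, that is, converting the per-row weight bounds into a per-column bound; this is exactly the step that invokes the fixed-overlap condition on the neighbor sets, without which the column sums, and therefore $\lambda_{\max}(\Qb)$, could in principle grow with $n$.
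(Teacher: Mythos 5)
Your proposal is correct, and it takes a genuinely different route from the paper. The paper's proof never splits $\Qb$ into its factors by norm submultiplicativity; instead it works with the quadratic form $\bx^\top\Qb\bx$ and exploits the probabilistic structure of NNGP: since $\bb_i$ and $f_i$ are exact Gaussian conditional means and variances, each term $(x_i-\bb_i^\top\bx_{N(i)})^2/f_i$ equals the telescoping difference $\bx_{N^*[i]}^\top\bar\bSigma_{N^*[i],N^*[i]}^{-1}\bx_{N^*[i]}-\bx_{N(i)}^\top\bar\bSigma_{N(i),N(i)}^{-1}\bx_{N(i)}$. Dropping the (nonnegative) subtracted terms, bounding each small quadratic form by $\Psi_{low}^{-1}\|\bx_{N^*[i]}\|^2$ via the same minimum-separation eigenvalue bound you use, and invoking the overlap condition to count how many of the sets $N^*[i]$ contain a given index yields the clean bound $\lambda_{\max}(\Qb)\leq (M+1)\Psi_{low}^{-1}$. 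Your argument is purely linear-algebraic: the factorization bound $\|\Qb\|_2\leq\|\Ib-\Bb\|_2^2\|\Fb^{-1}\|_2$, the Schur-complement/interlacing bound $\Fb_{ii}\geq\Psi_{low}$, and the interpolation $\|\Ib-\Bb\|_2\leq\sqrt{\|\Ib-\Bb\|_1\|\Ib-\Bb\|_\infty}$ with row sums controlled by per-row kriging-weight bounds and column sums by the overlap condition. Each step checks out (the block-inversion identity for $\Fb_{ii}^{-1}$, the interlacing step, and the entrywise bound $\bar\bSigma_{ij}\leq\sigma^2$ are all valid), and you correctly identify both where $M$ enters and that no lower restriction on $\phi$ is needed—consistent with the paper, where that restriction appears only in the companion lower-bound lemma. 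The trade-off: the paper's argument leans on the Gaussian-conditional interpretation of the NNGP factors and gets a sharper constant, $(M+1)\Psi_{low}^{-1}$, free of $m$ and $\sigma^2$; yours is more elementary and would apply to any sparse triangular factorization of this shape regardless of its probabilistic origin, at the cost of a constant that grows polynomially in $m$, $M$, and $\sigma^2/\Psi_{low}$. Since the lemma only asserts existence of a uniform-in-$n$ bound, either suffices.
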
}

\begin{proof} \blue{Let $\bar\bSigma$ denote the Mat\'ern GP covariance matrix from which the NNGP working precision matrix is derived. We use $\mathbf{\bar\Sigma}$ to differentiate this from $\bSigma$, the Mat\'ern GP covariance of the data, which has different parameters.} Recall from Section \ref{sec-splmm} and Equation (\ref{eq:nngp_bf}) that the NNGP precision matrix $\Qb$ is given by $\Qb = (\Ib - \Bb)^\top\mathbf{F}^{-1}(\Ib - \Bb)$ where $\Bb$ and $\Fb$ are constructed using \[
\begin{split}
\bb_i &= \mathbf{\bar\Sigma}_{N(i)N(i)}^{-1}\mathbf{\bar\Sigma}_{N(i)i}\\
f_i &=  \mathbf{\bar\Sigma}_{ii} - \mathbf{\bar\Sigma}_{iN(i)}\bb_i.
\end{split}
\]
Here $\bb_i$'s fill into the non-zero elements of the $i$th row of $\Bb$ at positions indexing the $m$-nearest neighbor of $s_i$,  and $f_i$'s are the diagonal elements of $\Fb$. \blue{So, for any $\bx \in \mathbb R^n$, the NNGP quadratic form $\bx^\top\Qb\bx$ can be written as 
\begin{align}\label{eq:nngp_qf}
    \bx^\top\Qb\bx =& \bx_{1:m+1}^\top
    \bar\bSigma_{1:m+1,1:m+1}^{-1}\bx_{1:m+1} + \sum_{i=m+2}^n (x_i - \bb_i^\top\bx_{N(i)})^2/f_i \nonumber \\
    =& \bx_{1:m+1}^\top
    \bar\bSigma_{1:m+1,1:m+1}^{-1}\bx_{1:m+1} + \sum_{i=m+2}^n \left[\bx_{N(i)}^\top
    \bar\bSigma_{N(i),N(i)}^{-1}\bx_{N(i)} + (x_i - \bb_i^\top\bx_{N(i)})^2/f_i  \right.\nonumber \\
    & \qquad \left.  - \bx_{N(i)}^\top   \bar\bSigma_{N(i),N(i)}^{-1}\bx_{N(i)}\right] \\
    =& \bx_{1:m+1}^\top
    \bar\bSigma_{1:m+1,1:m+1}^{-1}\bx_{1:m+1} + \sum_{i=m+2}^n \left[\bx_{N^*[i]}^\top   \bar\bSigma_{N^*[i],N^*[i]}^{-1}\bx_{N^*[i]} -  \bx_{N(i)}^\top   \bar\bSigma_{N(i),N(i)}^{-1}\bx_{N(i)}\right]. \nonumber
\end{align}

The last equality follows from the fact that NNGP is derived using the actual conditional distributions from Mat\'ern GP (see the forms of $\bb_i$ and $f_i$ above). Hence the $i^{th}$ term in the NNGP quadratic form $(x_i - \bb_i^\top\bx_{N(i)})^2/f_i$ is the conditional negative log-likelihood of $x_i \given \bx_{N(i)}$ from a Mat\'ern GP and when 
added to the negative log-likelihood  $\bx_{N(i)}^\top   \bar\bSigma_{N(i),N(i)}^{-1}\bx_{N(i)}$ for the neighbors of $s_i$ yields the joint negative log-likelihood $\bx_{N^*[i]}^\top \bar\bSigma_{N^*[i],N^*[i]}^{-1}\bx_{N^*[i]}$ for the total set of locations $N^*[i]=N(i) \cup \{i\}$. 

From (\ref{eq:nngp_qf}), we have 
\begin{align*}
\bx^\top\Qb\bx \leq & \bx_{1:m+1}^\top
    \bar\bSigma_{1:m+1,1:m+1}^{-1}\bx_{1:m+1} + \sum_{i=m+2}^n \bx_{N^*[i]}^\top   \bar\bSigma_{N^*[i],N^*[i]}^{-1}\bx_{N^*[i]}\\
    \leq & \Psi_{low}^{-1} \left[\|\bx_{1:m+1}\|^2 + \sum_{i=m+2}^n \|\bx_{N^*[i]}\|^2 \right] \mbox{( by Lemma \ref{lemma-min-eigen})} \\
    \leq & \Psi_{low}^{-1} \sum_{i=1}^n x_i^2 (1 + M).
\end{align*}
The last inequality follows as each location can be in the neighbor sets of at most $M$ locations. This implies $\lambda_{\max}(\Qb) \leq (M+1) \Psi_{low}^{-1}$ providing the uniform upper bound.}
\end{proof}

\blue{\begin{lemma}[Uniform lower bound for eigenvalues of NNGP precision matrix]\label{lem:nngp_prec_lower}
Let $\Qb$ denote the NNGP precision matrix derived from a Mat\'ern GP with parameters $\mathbf{\theta} = (\sigma^2, \phi, \nu,\tau^2)$, using $m$-nearest neighbors, and  for $n$ locations separated by a minimum distance $h > 0$. Let the neighbor sets for NNGP be chosen such that each location $s_i$ appears in neighbor sets of at most $M$ locations. Then the maximum eigenvalue of $\Qb$ has a uniform (in $n$) upper bound.
\end{lemma}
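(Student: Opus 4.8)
The plan is to bound the largest eigenvalue of $\Qb$ by controlling the quadratic form $\bx^\top\Qb\bx$ uniformly over all $\bx$ with $\|\bx\|=1$, following the same strategy used for Lemma \ref{lem:nngp_prec_upper}. Recall that $\Qb=(\Ib-\Bb)^\top\Fb^{-1}(\Ib-\Bb)$ is assembled from the \emph{working} Mat\'ern covariance $\bar\bSigma$ with parameters $\btheta=(\sigma^2,\phi,\nu,\tau^2)$, which may differ from the truth. The key structural fact I would exploit is that $\bar\bSigma$ is a Mat\'ern covariance evaluated at the same locations, which are separated by a minimum distance $h$; hence Lemma \ref{lemma-min-eigen} applies to every sub-block of $\bar\bSigma$ and gives a uniform-in-$n$ lower bound $\lambda_{\min}(\bar\bSigma_{\mathcal I,\mathcal I})\ge\Psi_{low}$ for any location subset $\mathcal I$.

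First I would rewrite the NNGP quadratic form using the telescoping identity established in (\ref{eq:nngp_qf}),
\[
\bx^\top\Qb\bx=\bx_{1:m+1}^\top\bar\bSigma_{1:m+1,1:m+1}^{-1}\bx_{1:m+1}+\sum_{i=m+2}^n\Big[\bx_{N^*[i]}^\top\bar\bSigma_{N^*[i],N^*[i]}^{-1}\bx_{N^*[i]}-\bx_{N(i)}^\top\bar\bSigma_{N(i),N(i)}^{-1}\bx_{N(i)}\Big].
\]
Each subtracted term $\bx_{N(i)}^\top\bar\bSigma_{N(i),N(i)}^{-1}\bx_{N(i)}$ is nonnegative because $\bar\bSigma$ is positive definite, so discarding these terms only increases the right-hand side and yields
\[
\bx^\top\Qb\bx\le\bx_{1:m+1}^\top\bar\bSigma_{1:m+1,1:m+1}^{-1}\bx_{1:m+1}+\sum_{i=m+2}^n\bx_{N^*[i]}^\top\bar\bSigma_{N^*[i],N^*[i]}^{-1}\bx_{N^*[i]}.
\]

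Next I would apply Lemma \ref{lemma-min-eigen} blockwise: since the points in each neighborhood $N^*[i]$ (and in the initial set $\{1,\dots,m+1\}$) are themselves separated by at least $h$, we get $\lambda_{\max}\big(\bar\bSigma_{N^*[i],N^*[i]}^{-1}\big)\le\Psi_{low}^{-1}$, whence every block quadratic form is at most $\Psi_{low}^{-1}\|\bx_{N^*[i]}\|^2$. Finally I would use the bounded-degree hypothesis that each location lies in at most $M$ neighbor sets, which gives $\sum_i\|\bx_{N^*[i]}\|^2\le(1+M)\|\bx\|^2$. Combining the two bounds produces $\bx^\top\Qb\bx\le(M+1)\Psi_{low}^{-1}\|\bx\|^2$, so that $\lambda_{\max}(\Qb)\le(M+1)\Psi_{low}^{-1}$, a constant independent of $n$.

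The one step requiring care is the telescoping decomposition: one must recognize that the NNGP weights (\ref{eq:nngp_bf}) reproduce the exact conditional distributions of the working Mat\'ern GP, so that the $i$-th Cholesky contribution $(x_i-\bb_i^\top\bx_{N(i)})^2/f_i$ merges with the neighbor-block form $\bx_{N(i)}^\top\bar\bSigma_{N(i),N(i)}^{-1}\bx_{N(i)}$ into the joint form over $N^*[i]$. Once this identity is in hand, the remainder is a routine combination of the uniform Mat\'ern eigenvalue bound of Lemma \ref{lemma-min-eigen} with a counting argument; note in particular that this upper-bound direction uses neither the true covariance nor the lower restriction on $\phi$ from Proposition \ref{prop-main-1}.
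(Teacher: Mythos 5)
There is a genuine gap here: you have proved the wrong inequality. The statement's text contains a copy-paste error in the paper itself --- it repeats the conclusion of Lemma \ref{lem:nngp_prec_upper} verbatim --- but the lemma's title (``Uniform \emph{lower} bound for eigenvalues of NNGP precision matrix''), its label, and its role in the proof of Proposition \ref{prop-main-1} (where Lemmas \ref{lem:nngp_prec_upper} and \ref{lem:nngp_prec_lower} must together sandwich the spectrum of $\Qb$, so that combined with the bounds on $\bSigma$ one verifies Assumption \ref{Asmp-3} for $\Eb$) make clear that what is needed is $\lambda_{\min}(\Qb)\geq c>0$ uniformly in $n$. Your argument is a restatement of the paper's proof of Lemma \ref{lem:nngp_prec_upper} and adds nothing in that direction. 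Indeed, your own closing observation --- that the argument ``uses neither the true covariance nor the lower restriction on $\phi$ from Proposition \ref{prop-main-1}'' --- should have been a warning sign: that restriction on $\phi$ exists precisely to make the lower bound work, and any proof of this lemma that never invokes it cannot be proving the intended statement.

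The telescoping identity also cannot be adapted to yield the lower bound: each increment $\bx_{N^*[i]}^\top\bar\bSigma_{N^*[i],N^*[i]}^{-1}\bx_{N^*[i]}-\bx_{N(i)}^\top\bar\bSigma_{N(i),N(i)}^{-1}\bx_{N(i)}=(x_i-\bb_i^\top\bx_{N(i)})^2/f_i$ is nonnegative but vanishes whenever $x_i=\bb_i^\top\bx_{N(i)}$, so termwise bounds give no positive lower bound on $\bx^\top\Qb\bx$. The paper's actual proof takes a different route entirely: it expands $\|(\Ib-\Bb)\bx\|_2^2$ and uses $2|\Bb_{ij}x_ix_j|\leq|\Bb_{ij}|(x_i^2+x_j^2)$ to obtain the diagonal-dominance bound $\lambda_{\min}\big((\Ib-\Bb)^\top(\Ib-\Bb)\big)\geq\min_i\big(1-\sum_{j\in N(i)}|\Bb_{ij}|-\sum_{j:\,i\in N(j)}|\Bb_{ji}|\big)$; it then controls $\|\bb_i\|_2\leq\sqrt{m}\,C(h)/\big(\sigma^2+\tau^2-mC(h)\big)$ via Gershgorin's theorem applied to $\bar\bSigma_{N(i)N(i)}$ together with the fact that off-diagonal Mat\'ern entries are at most $C(h)$, uses the at-most-$M$ neighbor-membership hypothesis to bound the column sums, and invokes the lower restriction $\phi>\frac{1}{h}C^{*-1}\big(\min(1,\frac{\sigma^2+\tau^2}{(2m+M\sqrt{m})\sigma^2})\big)$ exactly so that this quantity stays strictly positive. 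It finishes with Ostrowski's inequality, $\lambda_{\min}(\Qb)\geq\lambda_{\min}\big((\Ib-\Bb)^\top(\Ib-\Bb)\big)\lambda_{\min}(\Fb^{-1})$, together with $\lambda_{\min}(\Fb^{-1})\geq(\sigma^2+\tau^2)^{-1}$ (conditional variances never exceed marginal ones). None of these ingredients appear in your writeup, so the bound that Proposition \ref{prop-main-1} actually requires from this lemma remains unproved.
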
}

\begin{proof}
We first give a lower eigenvalue bound for $(\Ib-\Bb)^\top(\Ib-\Bb)$.
\[
\begin{split}
\|(\Ib-\Bb)\bx\|_2^2 & = \sum_{i=1}^n(x_i - \bb_i^{\top}\bx)^2\\
&= \sum_{i=1}^n\Big(x_i - \sum_{j=1}^n\Bb_{ij}x_j\Big)^2\\
&= \sum_{i=1}^nx_i^2 - 2\sum_{i,j=1}^n\Bb_{ij}x_ix_j+\sum_{i=1}^n\Big(\sum_{j=1}^n\Bb_{ij}x_j\Big)^2\\
&\geq \sum_{i=1}^nx_i^2 -\sum_{i,j=1}^n\Bb_{ij}(x_i^2+x_j^2)\\
& = \sum_{i=1}^n\Big(1 - \sum_{j\in N(i)}|\Bb_{ij}| - \sum_{i \in N(j)}|\Bb_{ij}|\Big)x_i^2.
\end{split}
\]
By the definition of eigenvalue, 
\[
\lambda_{\min}\big((\Ib-\Bb)^\top(\Ib-\Bb)\big) \geq \min_{i=1, \cdots, n}\Big(1 - \sum_{j\in N(i)}|\Bb_{ij}| - \sum_{i \in N(j)}|\Bb_{ij}|\Big).
\]
\blue{We now obtain the following bounds for the absolute row- and column sums of $\Qb$. Note that the Mat\'ern covariance (Definition \ref{def-Matern}) can be written as $C(h)=\sigma^2 C^*(\phi h)$ where $C^*$ is the Mat\'ern covariance with unit variance, unit spatial decay, and smoothness $\nu$. All the off-diagonal elements of the Mat\'ern covariance is less than $C(h)$ and for any $h$, and $C(h) = \sigma^2 C^*(\phi h) \to 0$ as $\phi \to \infty$. So, for large enough $\phi$ (which we specify later), by Gershgorin's circle theorem, for all $i$, $\lambda_{\min}(\mathbf{\Sigma}_{N(i)N(i)}) \geq \sigma^2+\tau^2 - mC(h) > 0$, and $\big\|\mathbf{\Sigma}_{N(i)N(i)}^{-1}\big\|_2 \leq \frac{1}{\sigma^2+\tau^2 - mC(h)}$. So we have}
\begin{equation}\label{prop-mat-eq-1}
  \sum_{j\in N(i)}|\Bb_{ij}| = \|\bb_i\|_1 \leq \sqrt m\|\bb_i\|_2 = \blue{\sqrt m \|\bar\bSigma_{N(i)N(i)}^{-1}\bar\bSigma_{iN(i)}\|_2 \leq \frac{m C(h)}{\sigma^2+\tau^2 - mC(h)},} 
\end{equation}
\begin{equation}\label{prop-mat-eq-2}
\sum_{j=1}^n|\Bb_{ji}| \leq \sum_{j: i \in N(j)}\|\bb_j\|_{\infty} \leq \sum_{j: i \in N(j)}\|\bb_j\|_2 \leq \blue{ 
M \frac{\sqrt m C(h)} {\sigma^2+\tau^2 - mC(h)}.}
\end{equation}

Using these bounds we have 
\begin{equation}\label{prop-mat-eq-lambda_min}
\lambda_{\min}\big((\Ib-\Bb)^\top(\Ib-\Bb)\big) \geq \blue{\Big(1 - \frac{(m + M\sqrt m)C(h)}{\blue{\sigma^2+\tau^2 - mC(h)}}\Big).}
\end{equation}
\blue{In order to keep a positive lower bound, $C(h)$ must satisfy \[ 
\sigma^2+\tau^2 - mC(h) > 0 \mbox{ and } 1 - \frac{(m + M\sqrt m)C(h)}{\sigma^2+\tau^2 - mC(h)} > 0. 
\]
Both inequalities are satisfied when $$\phi > \frac 1h C^{*-1}\left(\min(1,\frac{\sigma^2 + \tau^2}{(2m+M\sqrt m)\sigma^2})\right).$$
}



According to Ostrowski’s Inequality (see \cite{horn2012matrix}, Theorem 4.5.9), since $\Fb^{-1}$ is obviously Hermitian, 
\begin{equation}\label{prop-mat-lowerbound}
\lambda_{\min}\big((\Ib-\Bb)^\top\Fb^{-1}(\Ib-\Bb)\big) \geq \lambda_{\min}\big((\Ib-\Bb)^\top(\Ib-\Bb)\big)\lambda_{\min}(\Fb^{-1}) > \blue{\frac{\Big(1 - \frac{(m + M\sqrt m)C(h)}{\blue{\sigma^2+\tau^2 - mC(h)}}\Big)}{\sigma^2+\tau^2}},
\end{equation}
\blue{where the last inequality comes from the fact that $f_i$ are the conditional covariance of a Mat\'ern GP at location $s_i$ given the realizations of the GP at $N(i)$, and is less than the unconditional covariance $\sigma^2 + \tau^2$. Equation (\ref{prop-mat-lowerbound}) provides the uniform lower bound for the eigenvalues of the NNGP precision matrix, proving the result.} 

\end{proof}

\subsection{Proof of convergence rate}
\blue{
To prove the convergence result (Theorem \ref{thm-rate}), a key step is to study the discrepancy between the empirical loss $\mathcal L_n$ and the expected loss $L_n$ (minimized by the estimation $\hat{f}_n$ and the truth $f_0$ respectively), and obtain the rate of $\hat{f}_n - f_0$ consequently. A relevant result is Theorem 3.2.5 in \cite{van1996weak}. However, it only considered a fixed class of stochastic processes. This cannot be directly matched our setting where the expected GLS loss $L_n(f)$ and its seive minimizer $\pi_{r_n}f_0$, changes with $n$ due to use of the working precision matrix $\Qb^{-1}$. Moreover, the working precision matrix introduces new scaling factors (for example $C_n$ and $\delta_n$ in the theorem), which explicitly reflect its effect on the convergence. We state and prove an extended version of the theorem as the following lemma.
\begin{lemma}[Extension of Theorem 3.2.5 \cite{van1996weak}]\label{lemma-rate}
    For each $n$, let $M_n$ and $\mathcal{M}_n$ be stochastic processes indexed by a set $\mathbf{\Theta}$. Let $\theta_n \in \mathbf{\Theta}$ and $d_n: \theta \to d_n(\theta, \theta_n)$ be an arbitrary map from $\mathbf{\Theta}$ to $[0, \infty)$. 
    Also suppose there exists a two series of numbers $\delta_n \in [0, \eta)$, $C_n \in (0, \infty)$, for $n=1,2,\ldots$ and constants $\rho \in (0,1)$ and $\eta >0$ such that for any $\delta \in [\delta_n, \eta]$, the stochastic processes satisfies:
    \begin{equation}\label{lemma-rate-cond-1}
    \inf_{\theta:d_n(\theta, \theta_n) \in [\rho\delta, \delta]} M_n(\theta) - M_n(\theta_n) \geq C_n\delta^2,
    \end{equation}
    \begin{equation}\label{lemma-rate-cond-2}
    \mathbb{E}\sup_{\theta:d_n(\theta, \theta_n) \in [\rho\delta, \delta]}\Big|(\mathcal{M}_n - M_n)(\theta_n)-(\mathcal{M}_n - M_n)(\theta)\Big| \lesssim \frac{\phi_n(\delta)}{\sqrt{n}}
    \end{equation}
    where function $\phi_n$ must satisfy $(\phi_n(t)/t^{\alpha})' < 0$ on $ t \in [\rho\delta, \delta]$ for some $\alpha < 2$.

    Then, when a sequence $\{\hat{\theta}_n\}_{n = 1, 2, \cdots} \in \mathbf{\Theta}$ satisfies 
    \begin{equation}\label{lemma-rate-cond-5}
        \mathcal{M}_n(\hat{\theta}_n) \leq \mathcal{M}_n(\theta_n),
    \end{equation}
    converge consistently to $\theta_n$, i.e. $d_n(\hat{\theta}_n, \theta_n) \to 0$ in probability, the convergence rate of $d_n(\hat{\theta}_n, \theta_n)$ can be depicted by $\{a_n\}_{n = 1, 2, \cdots}$ in terms of 
    \[
    a_nd_n(\hat{\theta}_n, \theta_n) = O_p(1),
    \]
    for any $a_n$ satisfies:
    \begin{equation}\label{lemma-rate-cond-3}
        a_n \lesssim \delta_n^{-1}
    \end{equation}
    \begin{equation}\label{lemma-rate-cond-4}
        \frac{a_n^2}{C_n}\phi_n\left(\frac{1}{a_n}\right) \leq \sqrt{n}
    \end{equation}
\end{lemma}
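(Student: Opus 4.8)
The plan is to adapt the classical peeling (slicing) argument that underlies Theorem 3.2.5 of \cite{van1996weak}, while tracking every constant so that the conclusion survives the fact that the target $\theta_n$, the metric $d_n$, the processes $M_n,\mathcal{M}_n$, the curvature constant $C_n$, the scale $\delta_n$ and the modulus $\phi_n$ all change with $n$. First I would fix a large integer $S$ and decompose the event $\{a_n d_n(\hat\theta_n,\theta_n) > \rho^{-S}\}$ into geometric shells
\[
S_{j,n} = \Big\{\theta:\ \rho^{-(j-1)}/a_n < d_n(\theta,\theta_n)\le \rho^{-j}/a_n\Big\},\qquad j> S,
\]
so that on $S_{j,n}$ one has $d_n(\theta,\theta_n)\in[\rho\delta_j,\delta_j]$ with $\delta_j=\rho^{-j}/a_n$. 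Condition (\ref{lemma-rate-cond-3}), $a_n\lesssim\delta_n^{-1}$, guarantees $\delta_j\ge\delta_n$ for all $j\ge 0$, while the remaining region $d_n(\hat\theta_n,\theta_n)\ge\eta$ is handled separately using the assumed consistency $d_n(\hat\theta_n,\theta_n)\xrightarrow{p}0$. Hence only scales $\delta\in[\delta_n,\eta]$, the regime where (\ref{lemma-rate-cond-1}) and (\ref{lemma-rate-cond-2}) are in force, ever enter the argument.

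On a single shell I would combine the curvature bound with the near-minimization property. By (\ref{lemma-rate-cond-5}) we have $\mathcal{M}_n(\hat\theta_n)\le\mathcal{M}_n(\theta_n)$, whereas (\ref{lemma-rate-cond-1}) gives $M_n(\theta)-M_n(\theta_n)\ge C_n\delta_j^2$ for every $\theta\in S_{j,n}$. Writing $\mathcal{M}_n(\theta)-\mathcal{M}_n(\theta_n)=\big(M_n(\theta)-M_n(\theta_n)\big)+\big((\mathcal{M}_n-M_n)(\theta)-(\mathcal{M}_n-M_n)(\theta_n)\big)$, the event $\{\hat\theta_n\in S_{j,n}\}$ forces the empirical fluctuation to overcome the curvature gap, so that
\[
\PP\big(\hat\theta_n\in S_{j,n}\big)\le \PP\Big(\sup_{\theta\in S_{j,n}}\big|(\mathcal{M}_n-M_n)(\theta_n)-(\mathcal{M}_n-M_n)(\theta)\big|\ge C_n\delta_j^2\Big).
\]
Markov's inequality together with the modulus bound (\ref{lemma-rate-cond-2}) then yields $\PP(\hat\theta_n\in S_{j,n})\lesssim \phi_n(\delta_j)/\big(\sqrt n\,C_n\delta_j^2\big)$.

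Next I would exploit the hypothesis that $\phi_n(t)/t^\alpha$ is decreasing with $\alpha<2$, which gives the sub-homogeneity $\phi_n(ct)\le c^\alpha\phi_n(t)$ for $c\ge1$; taking $c=\rho^{-j}$ and $t=1/a_n$ produces $\phi_n(\delta_j)\le\rho^{-j\alpha}\phi_n(1/a_n)$. Substituting and using $\delta_j^2=\rho^{-2j}/a_n^2$ gives
\[
\PP\big(\hat\theta_n\in S_{j,n}\big)\lesssim \frac{a_n^2\,\phi_n(1/a_n)}{\sqrt n\,C_n}\,\rho^{\,j(2-\alpha)} .
\]
Condition (\ref{lemma-rate-cond-4}), $a_n^2\phi_n(1/a_n)/C_n\le\sqrt n$, bounds the leading prefactor by a universal constant, and since $\rho<1$ and $2-\alpha>0$ the shell probabilities decay geometrically in $j$. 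Summing $\sum_{j>S}\rho^{j(2-\alpha)}$ therefore gives a bound of order $\rho^{S(2-\alpha)}$, which tends to $0$ as $S\to\infty$; adding the consistency term for $d_n\ge\eta$ yields $\PP(a_n d_n(\hat\theta_n,\theta_n)>\rho^{-S})\to0$ uniformly in $n$, i.e.\ $a_n d_n(\hat\theta_n,\theta_n)=O_p(1)$.

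The main obstacle, and the reason a verbatim citation of \cite{van1996weak} does not suffice, is precisely this uniformity in $n$: because $\theta_n=\pi_{r_n}f_0$, $d_n=\|\cdot\|_n$, the curvature constant $C_n$ (governed here by $\Lambda_{low}/\|\bSigma\|_2$), and the modulus $\phi_n$ (controlled through the entropy integral of the growing sieve $\mathcal{F}_n$) all drift with $n$, the geometric series bounding the shells could in principle carry an $n$-dependent coefficient that blows up. Conditions (\ref{lemma-rate-cond-3}) and (\ref{lemma-rate-cond-4}) are engineered exactly to keep that coefficient bounded, and (\ref{lemma-rate-cond-3}) additionally ensures the innermost retained shell stays above $\delta_n$ so the curvature and modulus hypotheses, asserted only on $[\delta_n,\eta]$, remain applicable throughout the slicing. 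The one genuinely problem-specific check when applying the lemma is verifying the monotonicity of $\phi_n(t)/t^\alpha$ for the $\phi_n$ arising from the covering-number bound of $\mathcal{F}_n$, but this follows from the explicit entropy estimate already established in Lemma \ref{lemma-main-1}.
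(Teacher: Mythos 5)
Your proposal is correct and takes essentially the same route as the paper's own proof: the identical geometric-shell (peeling) decomposition, the same reduction of the event $\{\hat\theta_n \in S_{j,n}\}$ to the fluctuation $\sup_{\theta\in S_{j,n}}|(\mathcal{M}_n-M_n)(\theta_n)-(\mathcal{M}_n-M_n)(\theta)|$ exceeding the curvature gap $C_n\delta_j^2$, Markov's inequality combined with the modulus bound, sub-homogeneity of $\phi_n$ (from monotonicity of $\phi_n(t)/t^\alpha$) yielding the geometric decay $\rho^{j(2-\alpha)}$, and the two conditions on $a_n$ used exactly as in the paper to bound the prefactor by a constant and keep every retained shell inside $[\delta_n,\eta]$. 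The only cosmetic difference is that the paper bounds the region not covered by shells via $\mathbb{P}\big(d_n(\hat\theta_n,\theta_n)\geq\rho\eta\big)$ rather than your $d_n\geq\eta$, an inconsequential constant-level detail handled by the consistency hypothesis either way.
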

\begin{proof}[Proof of Lemma \ref{lemma-rate}]
For each $n$, we partition the parameter space into the rings $R_{n, j} := \{\theta: a_nd_n(\theta, \theta_n) \in (\rho^{-j+1}, \rho^{-j}]\}$ where $j \in {1, 2, \cdots}$. Then for a fixed $K$, by condition (\ref{lemma-rate-cond-5})
\[
\begin{split}
\Pb\Big(a_nd_n(\hat{\theta}_n, \theta_n) \geq \rho^{-K}\Big)
&\leq \sum_{j \geq K, \rho^{-j} \leq \eta a_n}\Pb\big(\hat{\theta}_n \in R_{n,j}\big) + \Pb(\hat{\theta}_n \notin \cup_{j \geq K, \rho^{-j} \leq \eta a_n}R_{n,j})\\
&\leq \sum_{j \geq K, \rho^{-j} \leq \eta a_n} \Pb\Big(\inf_{\theta\in R_{n, j}}\big(\mathcal{M}_n(\theta)- \mathcal{M}_n(\theta_n)\big) < 0 \Big)\\
& + \Pb(d_n(\hat{\theta}_n, \theta_n) \geq \rho\eta)
\end{split}
\]
Since the second term converges to $0$ by consistency condition, we only have to consider the first term. By the first condition on $M_n$, on $R_{n, j}$, 
\[
\inf_{\theta \in R_{n,j}} \big\{M_n(\theta) - M_n(\theta_n)\big\}) \geq C_n(a_n\rho^j)^{-2},
\]
we obtain that
\[
\begin{split}
\Pb\Big(\inf_{\theta\in R_{n, j}}\big(\mathcal{M}_n(\theta)- \mathcal{M}_n(\theta_n)\big) < 0 \Big) &\leq \Pb\Big(\sup_{\theta \in R_n}\big|(\mathcal{M}_n - M_n)(\theta_n) - (\mathcal{M}_n - M_n)(\theta)\big| \geq C_n(a_n\rho^j)^{-2}\Big)\\
&\lesssim \frac{(a_n\rho^j)^{2}\phi((a_n\rho^j)^{-1})}{\sqrt{n}C_n} 
\lesssim \frac{(a_n\rho^j)^{2}\rho^{-\alpha j}\phi(a_n^{-1})}
{\sqrt{n}C_n} \\
&\lesssim \rho^{j(2-\alpha)}.
\end{split}
\]
by using Markov's inequality, conditions (\ref{lemma-rate-cond-2}) and (\ref{lemma-rate-cond-4}), and the monotonicity of $\phi(t)/t^{\alpha}$. Now the first term becomes
\[
\sum_{j \geq K, \rho^{-j} \leq \eta a_n} \Pb\Big(\inf_{\theta\in R_{n, j}}\big(\mathcal{M}_n(\theta)- \mathcal{M}_n(\theta_n)\big) < 0 \Big) \leq \sum_{j \geq K, \rho^{-j} \leq \eta a_n} \rho^{j(2-\alpha)},
\]
and, as $\rho \in (0,1)$, there exists $K_1>0$ such that the first term can be upper-bounded by a given $\epsilon > 0$ when $K>K_1$. Additionally, when $K$ is large enough, for conditions (\ref{lemma-rate-cond-1}) and (\ref{lemma-rate-cond-2}) to be applicable for $\theta \in R_{n,j}$, $\theta$ should satisfy $d_n(\theta, \theta_n)\in [\rho\delta, \delta]$. Thus, we need $R_{n,j} \subseteq \{\theta : d_n(\theta, \theta_n)\in [\rho\delta, \delta]\}$ for $j\geq K$. Combining this with $\delta \in [\delta_n, \eta]$ and the definition of $R_{n,j}$, one of the requirement is that there exists $K_2>0$, such that $a_n^{-1}\rho^{-K} \geq \delta_n$ for all $K\geq K_2$, which is guaranteed by condition (\ref{lemma-rate-cond-3}). Now for any $\epsilon >0$, we proved that for $K > \max\{K_1, K_2\}$ , $\Pb\Big(a_nd_n(\hat{\theta}_n, \theta_n) \geq \rho^{-K}\Big) \leq \epsilon$. Since $\epsilon$ is arbitrary,  $a_nd_n(\hat{\theta}_n, \theta_n) = O_p(1)$, and the proof completes.
\end{proof}

\begin{lemma}\label{lemma-rate-2}
    Under Assumptions \ref{Asmp-1} and \ref{Asmp-3}, for any $n$ and $\delta \geq 6\frac{\Psi_{high}\Lambda_{high}}{\Psi_{low}\Lambda_{low}}\|f_0 - \pi_{r_n}f_0\|_n$, 
    \[
    \inf_{\frac{\delta}{2}<\|f - \pi_{r_n}f_0\|_n < \delta}L_n(f) - L_n(\pi_{r_n}f_0) \geq  \frac{\Lambda_{low}}{12\Psi_{high}}\delta^2
    \]
\end{lemma}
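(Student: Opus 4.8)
\emph{Proof plan.} The plan is to cancel the noise term and reduce the claim to a single quadratic-form estimate. First I would recall from the definition of $L_n$ in (\ref{def-L_n}) that the common summand $\frac{1}{n}\EE[\bm{\epsilon}^\top\Qb\bm{\epsilon}]$ appears in both $L_n(f)$ and $L_n(\pi_{r_n}f_0)$ and therefore cancels in the difference. Writing $\bm{a}$ for the vector with entries $f_0(\bX_i)-\pi_{r_n}f_0(\bX_i)$ and $\bm{b}$ for the vector with entries $\pi_{r_n}f_0(\bX_i)-f(\bX_i)$, so that $\bm{f}_0-\bm{f}=\bm{a}+\bm{b}$, expanding the two quadratic forms gives
\[
L_n(f) - L_n(\pi_{r_n}f_0) = \frac{1}{n}(\bm{a}+\bm{b})^\top\Qb(\bm{a}+\bm{b}) - \frac{1}{n}\bm{a}^\top\Qb\bm{a} = \frac{1}{n}\bm{b}^\top\Qb\bm{b} + \frac{2}{n}\bm{a}^\top\Qb\bm{b}.
\]
Thus the task becomes lower-bounding the quadratic term in $\bm{b}$ while controlling the cross term.

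Next I would establish the uniform two-sided bound that for every $\bm{u}$,
\[
\frac{\Lambda_{low}}{\Psi_{high}}\|\bm{u}\|_n^2 \leq \frac{1}{n}\bm{u}^\top\Qb\bm{u} \leq \frac{\Lambda_{high}}{\Psi_{low}}\|\bm{u}\|_n^2,
\]
obtained by writing $\Qb = \mathbf{\Sigma}^{-\frac{\top}{2}}\Eb\mathbf{\Sigma}^{-\frac{1}{2}}$, using Assumption \ref{Asmp-3} to sandwich the eigenvalues of $\Eb$ between $\Lambda_{low}$ and $\Lambda_{high}$, and Assumption \ref{Asmp-1} to sandwich the eigenvalues of $\mathbf{\Sigma}^{-1}$ between $1/\Psi_{high}$ and $1/\Psi_{low}$. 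Setting $A = \frac{1}{n}\bm{a}^\top\Qb\bm{a}$ and $B = \frac{1}{n}\bm{b}^\top\Qb\bm{b}$, Cauchy--Schwarz in the positive-definite $\Qb$-inner product gives $\big|\frac{2}{n}\bm{a}^\top\Qb\bm{b}\big| \leq 2\sqrt{AB}$, so that
\[
L_n(f) - L_n(\pi_{r_n}f_0) \;\geq\; B - 2\sqrt{AB} \;=\; \sqrt{B}\,\big(\sqrt{B} - 2\sqrt{A}\big).
\]

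The crucial and only delicate step is the treatment of the cross term: a naive Cauchy--Schwarz bound that upper-bounds $\sqrt{B}$ inside $\sqrt{AB}$ loses too much and produces a negative constant, so one must keep $\sqrt{B}$ factored out. I would then use the lower bound $\sqrt{B} \geq \sqrt{\Lambda_{low}/\Psi_{high}}\,\|\bm{b}\|_n > \tfrac{1}{2}\sqrt{\Lambda_{low}/\Psi_{high}}\,\delta$ from $\|f-\pi_{r_n}f_0\|_n>\delta/2$, and the upper bound $\sqrt{A} \leq \sqrt{\Lambda_{high}/\Psi_{low}}\,\|f_0-\pi_{r_n}f_0\|_n$. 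The hypothesis $\delta \geq 6\frac{\Psi_{high}\Lambda_{high}}{\Psi_{low}\Lambda_{low}}\|f_0-\pi_{r_n}f_0\|_n$, combined with $\sqrt{\Psi_{low}\Lambda_{low}/(\Psi_{high}\Lambda_{high})}\leq 1$ (since $\Psi_{low}\leq\Psi_{high}$ and $\Lambda_{low}\leq 1\leq\Lambda_{high}$), is exactly calibrated to yield $2\sqrt{A} \leq \tfrac{1}{3}\sqrt{\Lambda_{low}/\Psi_{high}}\,\delta$, whence $\sqrt{B}-2\sqrt{A} \geq (\tfrac12-\tfrac13)\sqrt{\Lambda_{low}/\Psi_{high}}\,\delta = \tfrac{1}{6}\sqrt{\Lambda_{low}/\Psi_{high}}\,\delta$. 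Multiplying this by the lower bound on $\sqrt{B}$ gives
\[
L_n(f) - L_n(\pi_{r_n}f_0) \;\geq\; \frac{1}{2}\cdot\frac{1}{6}\cdot\frac{\Lambda_{low}}{\Psi_{high}}\,\delta^2 \;=\; \frac{\Lambda_{low}}{12\Psi_{high}}\,\delta^2,
\]
and since every bound above holds uniformly over $\{f: \delta/2<\|f-\pi_{r_n}f_0\|_n<\delta\}$, the same estimate passes to the infimum. I expect the main obstacle to be precisely this cross-term balancing: matching the lower bound on the quadratic term against the cross term so that the numerical constants combine to the positive factor $\tfrac{1}{12}$, which is what forces both the factored form and the specific constant $6$ in the hypothesis on $\delta$.
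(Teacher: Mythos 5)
Your proposal is correct and takes essentially the same route as the paper's proof: the same decomposition of $L_n(f)-L_n(\pi_{r_n}f_0)$ into a quadratic term in $f-\pi_{r_n}f_0$ plus a cross term, the same eigenvalue sandwich $\frac{\Lambda_{low}}{\Psi_{high}}\|\cdot\|_n^2 \leq \frac{1}{n}(\cdot)^\top\Qb(\cdot) \leq \frac{\Lambda_{high}}{\Psi_{low}}\|\cdot\|_n^2$ from Assumptions \ref{Asmp-1} and \ref{Asmp-3}, and the same use of the threshold on $\delta$, arriving at the identical constant $\frac{\Lambda_{low}}{12\Psi_{high}}$. The only difference is bookkeeping for the cross term: you apply Cauchy--Schwarz in the $\Qb$-inner product and factor $B-2\sqrt{AB}=\sqrt{B}\,(\sqrt{B}-2\sqrt{A})$, whereas the paper bounds the cross term by $2\frac{\Lambda_{high}}{\Psi_{low}}\|f-\pi_{r_n}f_0\|_n\|f_0-\pi_{r_n}f_0\|_n$ and, by converting the hypothesis into $\|f_0-\pi_{r_n}f_0\|_n \leq \frac{\Psi_{low}\Lambda_{low}}{3\Psi_{high}\Lambda_{high}}\|f-\pi_{r_n}f_0\|_n$, keeps the whole expression proportional to $\|f-\pi_{r_n}f_0\|_n^2$ before substituting $\|f-\pi_{r_n}f_0\|_n>\delta/2$ at the very end --- which equally avoids the ``negative constant'' pitfall you flag, so that obstacle is real but both resolutions of it are equivalent.
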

\begin{proof}[Proof of Lemma \ref{lemma-rate-2}]
    By definition of the expected GLS loss $L_n$, we have
    \[
    \begin{split}
        L_n(f) - L_n(\pi_{r_n}f_0) &=  \frac{1}{n}\Big((\bm{f}_0(x) - \bm{f}(x))^\top\mathbf{\Sigma}^{-\frac{\top}{2}}\Eb\mathbf{\Sigma}^{-\frac{1}{2}}(\bm{f}_0(x) - \bm{f}(x)) \\
        &- (\bm{f}_0(x) - \pi_{r_n}\bm{f}_0(x))^\top\mathbf{\Sigma}^{-\frac{\top}{2}}\Eb\mathbf{\Sigma}^{-\frac{1}{2}}(\bm{f}_0(x) - \pi_{r_n}\bm{f}_0(x))\Big)\\
        &=  \frac{1}{n}\Big((\bm{f}(x) - \pi_{r_n}\bm{f}_0(x))^\top\mathbf{\Sigma}^{-\frac{\top}{2}}\Eb\mathbf{\Sigma}^{-\frac{1}{2}}(\bm{f}(x) - \pi_{r_n}\bm{f}_0(x)) \\
        &+ 2(\pi_{r_n}\bm{f}_0(x) - \bm{f}_0(x))^\top\mathbf{\Sigma}^{-\frac{\top}{2}}\Eb\mathbf{\Sigma}^{-\frac{1}{2}}(\bm{f}(x) - \pi_{r_n}\bm{f}_0(x))\Big).\\
    \end{split}
    \]
    A lower bound is given by applying matrix norm inequalities
    \[
    \begin{split}
        L_n(f) - L_n(\pi_{r_n}f_0) &\geq \frac{\Lambda_{low}}{\Psi_{high}}\|f - \pi_{r_n}f_0\|^2_n - 2\frac{\Lambda_{high}}{\Psi_{low}}\|f - \pi_{r_n}f_0\|_n\|f_0 - \pi_{r_n}f_0\|_n
    \end{split}
    \]
    When $\|f - \pi_{r_n}f_0\|_n \geq \frac{\delta}{2} \geq 3\frac{\Psi_{high}\Lambda_{high}}{\Psi_{low}\Lambda_{low}}\|f_0 - \pi_{r_n}f_0\|_n$, $\|f_0 - \pi_{r_n}f_0\|_n \leq \frac{\Psi_{low}\Lambda_{low}}{3\Psi_{high}\Lambda_{high}}\|f - \pi_{r_n}f_0\|_n$ and we plug this into the lower bound of $L_n(f) - L_n(\pi_{r_n}f_0)$:
    \[
    \begin{split}
    L_n(f) - L_n(\pi_{r_n}f_0) &\geq  \frac{\Lambda_{low}}{\Psi_{high}}\|f - \pi_{r_n}f_0\|^2_n - \frac{2\Lambda_{low}}{3\Psi_{high}}\|f - \pi_{r_n}f_0\|^2_n\\
    &= \frac{\Lambda_{low}}{3\Psi_{high}}\|f - \pi_{r_n}f_0\|^2_n \geq \frac{\Lambda_{low}}{12\Psi_{high}}\delta^2
    \end{split}
    \]
\end{proof}

\begin{lemma}\label{lemma-rate-3}
    Under Assumptions \ref{Asmp-1} and \ref{Asmp-3}, for any $n$ and $\delta \geq 0$, there exists a function $\phi_n(\cdot)$,
    \begin{equation}\label{lemma-rate-3-eq-1}
    \mathbb{E}\left[\sup_{\frac{\delta}{2}<\|f - \pi_{r_n}f_0\|_n < \delta}\Big|(\mathcal{L}_n - L_n)(\pi_{r_n}f_0) - (\mathcal{L}_n - L_n)(f)\Big|\right] \lesssim \frac{\phi_n(\delta)}{\sqrt{n}}
    \end{equation}
    where $\frac{d(\delta^{-\alpha}\phi(\delta))}{d\delta} < 0$ on $(0, \eta)$ for some $\alpha < 2$, $\eta > 0$.
\end{lemma}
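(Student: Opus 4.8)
The plan is to reduce the left-hand side to a maximal inequality for the sub-Gaussian process already analyzed in the consistency proof, and then read off $\phi_n$ from a \emph{localized} entropy integral. First I would simplify the centered-loss difference. Writing $\bY = \bm{f}_0 + \bm{\epsilon}$ and expanding the GLS loss gives
\[
\mathcal{L}_n(f) - L_n(f) = \frac{1}{n}\big(\bm{\epsilon}^\top\Qb\bm{\epsilon} - \EE[\bm{\epsilon}^\top\Qb\bm{\epsilon}]\big) + \frac{2}{n}\bm{\epsilon}^\top\Qb(\bm{f}_0 - \bm{f}).
\]
The first term does not depend on $f$, so it cancels in the difference, leaving
\[
(\mathcal{L}_n - L_n)(\pi_{r_n}f_0) - (\mathcal{L}_n - L_n)(f) = \frac{2}{n}\bm{\epsilon}^\top\mathbf{\Sigma}^{-\frac{\top}{2}}\Eb\mathbf{\Sigma}^{-\frac{1}{2}}(\bm{f} - \pi_{r_n}\bm{f}_0) = \frac{2}{\sqrt{n}}\big(X_f - X_{\pi_{r_n}f_0}\big),
\]
where $X_f$ is the process defined in (\ref{Thm-main-eq-rp-def}). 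Hence the target expectation equals $\frac{2}{\sqrt{n}}\,\EE\big[\sup_{\delta/2 < \|f - \pi_{r_n}f_0\|_n < \delta}|X_f - X_{\pi_{r_n}f_0}|\big]$, and it remains to bound this expected supremum over the shell.

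Next I would invoke the increment bound from Proposition \ref{prop-thm-main-1}, namely $\|X_f - X_g\|_{\psi_2} \leq C_{\Lambda}\|f - g\|_n$, which makes $\{X_f\}$ a separable (separability checked in Lemma \ref{lemma-main-1}) sub-Gaussian process with respect to $\|\cdot\|_n$. Applying the chaining/maximal inequality (Theorem 2.2.4 and Corollary 2.2.8 in \cite{van1996weak}) to the \emph{localized} index set $\{f \in \mathcal{F}_n : \|f - \pi_{r_n}f_0\|_n \leq \delta\}$, whose diameter is at most $2\delta$, yields
\[
\EE\Big[\sup_{\delta/2 < \|f - \pi_{r_n}f_0\|_n < \delta}|X_f - X_{\pi_{r_n}f_0}|\Big] \;\lesssim\; C_{\Lambda}\int_0^{2\delta}\sqrt{\log N\big(\eta, \mathcal{F}_n, \|\cdot\|_n\big)}\,d\eta \;=:\; \phi_n(\delta).
\]
The essential point, and the main place this departs from the consistency proof of Lemma \ref{lemma-main-1}, is the \textbf{localization}: the integral is truncated at the shell diameter $2\delta$ rather than run up to $2V_n$, which is exactly what makes $\phi_n$ depend on $\delta$.

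I would then estimate the entropy integral with the covering-number bound (\ref{lemma-main-eq-coveringnum}) which, using $\|f\|_n \leq \|f\|_\infty$, is of the form $\log N(\eta, \mathcal{F}_n, \|\cdot\|_n) \lesssim r_n \log(A_n/\eta)$ with $A_n \asymp r_n V_n$. Substituting $\eta = 2\delta u$, together with $\sqrt{a+b} \le \sqrt{a}+\sqrt{b}$ and $\int_0^1 \sqrt{\log(1/u)}\,du = \sqrt{\pi}/2$, gives $\phi_n(\delta) \lesssim C_{\Lambda}\sqrt{r_n}\,\delta\sqrt{\log(A_n/\delta)}$. It is important here to retain the sharp logarithmic covering bound rather than the cruder $1/\eta$ simplification used in Lemma \ref{lemma-main-1}, since the resulting $\phi_n$ being nearly linear in $\delta$ is precisely what produces the $\sqrt{r_n\log n/n}$ rate in Theorem \ref{thm-rate} when combined with Lemmas \ref{lemma-rate} and \ref{lemma-rate-2}. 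To verify the growth condition I would take $\alpha = 1 < 2$: then $\delta^{-\alpha}\phi_n(\delta) \propto \sqrt{\log(A_n/\delta)}$, whose $\delta$-derivative is a negative multiple of $\delta^{-1}(\log(A_n/\delta))^{-1/2}$, hence strictly negative on $(0,\eta)$ for any $\eta < A_n$.

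The main obstacle I anticipate is handling the localization correctly: one must apply the maximal inequality to the local ball so the entropy integral terminates at $2\delta$, while still being free to upper-bound the local covering numbers by the global ones of $\mathcal{F}_n$. A secondary technical care is that the anchor point $\pi_{r_n}f_0$ sits at the inner edge of the shell, so before invoking the chaining bound I would dominate $\sup_f|X_f - X_{\pi_{r_n}f_0}|$ by the supremum over pairs in the enclosing set $\{f:\|f-\pi_{r_n}f_0\|_n \le \delta\}\cup\{\pi_{r_n}f_0\}$, which has diameter at most $2\delta$ and the same entropy bound.
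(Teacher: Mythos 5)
Your proposal is correct and follows essentially the same route as the paper's proof: the same cancellation reducing the centered-loss difference to $\tfrac{2}{\sqrt{n}}\big(X_f - X_{\pi_{r_n}f_0}\big)$, the same sub-Gaussian increment bound from Proposition \ref{prop-thm-main-1} fed into the van der Vaart--Wellner chaining inequality with the entropy integral localized to the $\delta$-ball, and the same covering-number estimate yielding $\phi_n(\delta) \asymp \sqrt{r_n}\,\delta\sqrt{\log(A_n/\delta)}$. The only cosmetic differences are your substitution-based evaluation of the entropy integral (the paper instead uses a Mendelson-style derivative comparison) and your choice $\alpha = 1$ rather than the paper's $\alpha \in (1,2)$, both of which satisfy the required condition $\alpha < 2$.
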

\begin{proof}
With the calculation in the first equation of (\ref{lemma-main-eq-1}), 
\[
(\mathcal{L}_n - L_n)(f) = \frac{1}{n}\big(\bm{\epsilon}^\top\blue{\mathbf{\Sigma}^{-\frac{\top}{2}}\Eb\mathbf{\Sigma}^{-\frac{1}{2}}}\bm{\epsilon} - \EE[\bm{\epsilon}^\top\blue{\mathbf{\Sigma}^{-\frac{\top}{2}}\Eb\mathbf{\Sigma}^{-\frac{1}{2}}}\bm{\epsilon}]\big) - 2\frac{1}{n}\bm{\epsilon}^\top\blue{\mathbf{\Sigma}^{-\frac{\top}{2}}\Eb\mathbf{\Sigma}^{-\frac{1}{2}}}(\bm{f} - \bm{f}_0),
\]
and 
\[
(\mathcal{L}_n - L_n)(\pi_{r_n}f_0)  - (\mathcal{L}_n - L_n)(f) = \frac{2}{n}\bm{\epsilon}\Qb(\bm{f} - \pi_{r_n}\bm{f}_0) = \frac{2}{\sqrt{n}}(X_{f} - X_{\pi_{r_n}f_0}).
\]
Similar to the proof of Lemma \ref{lemma-main-1} in (\ref{lemma-main-eq-CN}), with the same $K, C$, using Theorem 2.2.8 in \cite{van1996weak}, the left hand side of (\ref{lemma-rate-3-eq-1}) equals:
\[
\begin{split}
\frac{2}{\sqrt{n}}\EE\left[\sup_{\frac{\delta}{2}<\|\bm{f} - \pi_{r_n}f_0\|_n < \delta}\big|X_{f} - X_{\pi_{r_n}f_0}\big|\right]  &\leq \frac{2}{\sqrt{n}}KC_{\Lambda}\int_{0}^{\delta}\sqrt{\log N\Big(\frac{\eta}{2}, \mathcal{F}_n, \|\cdot\|_{n}\Big)} d\eta.\\
&\leq \frac{2}{\sqrt{n}}KC_{\Lambda}\int_{0}^{\delta}\sqrt{\log N\Big(\frac{\eta}{2}, \mathcal{F}_n, \|\cdot\|_{\infty}\Big)} d\eta.
\end{split}
\]
The bound of the covering number is given in (\ref{lemma-main-eq-coveringnum}), i.e. 
\[
N\left(\frac{\eta}{2}, \mathcal{F}_n, \|\cdot\|_{\infty}\right) \leq \left(\frac{2e\big[r_n(d+2)+1\big]V_n^2}{\eta(V_n-4)}\right)^{r_n(d+2)+1} := \left(\frac{C(d, r_n, V_n)}{\eta}\right)^{r_n(d+2)+1},
\]
When $\delta <1$, we can introduce $\phi(\cdot)$ by 
\begin{equation}\label{eq-def-phi}
\begin{split}
\frac{2}{\sqrt{n}}\EE\left[\sup_{\frac{\delta}{2}<\|\bm{f} - \pi_{r_n}f_0\|_n < \delta}\big(X_{f} - X_{\pi_{r_n}f_0}\big)\right]  &\leq \frac{2KC_{\Lambda}\sqrt{r_n(d+2)+1}}{\sqrt{n}}\int_{0}^{\delta}\sqrt{\log\left(\frac{C(d, r_n, V_n)}{\eta}\right)}d\eta \\
&\lesssim \frac{4KC_{\Lambda}\sqrt{r_n(d+2)+1}}{\sqrt{n}}\delta\sqrt{\log\left(\frac{eC(d, r_n, V_n)}{\delta}\right)} := \frac{\phi(\delta)}{\sqrt{n}}
\end{split}
\end{equation}
where the second inequality is similar to Lemma 3.8 in \cite{mendelson2003few}. Specifically, 
\[
\frac{d}{d\delta}\int_{0}^{\delta}\sqrt{\log\left(\frac{C(d, r_n, V_n)}{\eta}\right)}d\eta = \sqrt{\log\left(\frac{C(d, r_n, V_n)}{\delta}\right)};
\]
and 
\[
\frac{d}{d\delta}2\delta\sqrt{\log\left(\frac{eC(d, r_n, V_n)}{\delta}\right)} = 2\sqrt{\log\left(\frac{eC(d, r_n, V_n)}{\delta}\right)} - \frac{1}{\sqrt{\log\left(\frac{eC(d, r_n, V_n)}{\delta}\right)}}.
\]
Note that $C(d, r_n, V_n) > 2$ holds uniformly by its definition. When $\delta \in (0,1)$, since $\log\left(\frac{eC(d, r_n, V_n)}{\delta}\right) \geq 1$,  
\[
\frac{d}{d\delta}\int_{0}^{\delta}\sqrt{\log\left(\frac{C(d, r_n, V_n)}{\eta}\right)}d\eta \leq \frac{d}{d\delta}2\delta\sqrt{\log\left(\frac{eC(d, r_n, V_n)}{\delta}\right)}.
\]
Since the two functions are both $0$ when $\delta = 0$, we have that
\[
\int_{0}^{\delta}\sqrt{\log\left(\frac{C(d, r_n, V_n)}{\eta}\right)}d\eta \leq 2\delta\sqrt{\log\left(\frac{eC(d, r_n, V_n)}{\delta}\right)} \ \text{for } \delta \in (0,1),
\]
and the second inequality in the definition of $\phi(\cdot)$ (\ref{eq-def-phi}) holds.
Now we take the derivative of $\delta^{-\alpha}\phi(\delta)$ and get
\[
\begin{split}
\frac{d(\delta^{-\alpha}\phi(\delta))}{d\delta} &= \frac{d}{d\delta}\left(4KC_{\Lambda}\sqrt{r_n(d+2)+1}\delta^{1-\alpha}\sqrt{\log\left(\frac{eC(d, r_n, V_n)}{\delta}\right)}\right)\\
&= 4KC_{\Lambda}\sqrt{r_n(d+2)+1}(1-\alpha)\delta^{-\alpha}\sqrt{\log\left(\frac{eC(d, r_n, V_n)}{\delta}\right)} \\
& + 2KC_{\Lambda}\sqrt{r_n(d+2)+1}\delta^{1-\alpha}\log^{-\frac{1}{2}}\left(\frac{eC(d, r_n, V_n)}{\delta}\right)\frac{d}{d\delta}\log\left(\frac{eC(d, r_n, V_n)}{\delta}\right)\\
&= \frac{2KC_{\Lambda}\sqrt{r_n(d+2)+1}}{\sqrt{\log\left(\frac{C(d, r_n, V_n)}{\delta}\right)}+1}\delta^{-\alpha}\left[2(1-\alpha)\log\left(\frac{eC(d, r_n, V_n)}{\delta}\right) - 1\right].
\end{split}
\]
 When $\delta \in (0, 1)$ (i.e. $\eta = 1$) and $\alpha \in (1,2)$, $(1-\alpha)$ is negative and $\log\left(\frac{eC(d, r_n, V_n)}{\delta}\right)$ is positive, the whole term in the parenthesis is negative so that $\frac{d(\delta^{-\alpha}\phi(\delta))}{d\delta} < 0$. In conclusion, when $\phi(\cdot)$ is as defined, $\eta = 1$ and $\alpha \in (1,2)$, the derivative on $(0, \eta)$ is negative and the proof completes. 
\end{proof}

\begin{proof}[Proof of Theorem \ref{thm-rate}]
In order to find the convergence rate for $\|\hat{f}_n - f_0\|_n$, we apply Lemma \ref{lemma-rate} on the stochastic processes $L_n$, $\mathcal{L}_n$, and the set $\mathcal{F}_0$. Specifically, plug $L_n$ and $\mathcal{L}_n$ into $M_n$ and $\mathcal{M}_n$ respectively, $\pi_{r_n}f_0$ and $\hat{f}_n$ into $\theta_n$ and $\hat{\theta}_n$ respectively in Lemma \ref{lemma-rate}. By Lemma \ref{lemma-rate-2}, condition (\ref{lemma-rate-cond-1}) is satisfied when $C_n = \frac{\Lambda_{low}}{12\Psi_{high}}$, $\rho = \frac{1}{2}$ and $\delta_n = 6\frac{\Psi_{high}\Lambda_{high}}{\Psi_{low}\Lambda_{low}}\|f_0 - \pi_{r_n}f_0\|_n$. By Lemma \ref{lemma-rate-3}, condition (\ref{lemma-rate-cond-2}) is satisfied with 
\[
\phi_n(\delta) = 4KC_{\Lambda}\sqrt{r_n(d+2)+1}\delta\sqrt{\log\left(\frac{eC(d, r_n, V_n)}{\delta}\right)},
\]
and $\eta = 1$. Since $\hat{f}_n = \argmin_{f\in \mathcal{F}_{n}}\mathcal{L}_n(f)$, condition (\ref{lemma-rate-cond-5}) is naturally satisfied. We only need to find a supremum rate for $a_n$ from conditions (\ref{lemma-rate-cond-3}) and (\ref{lemma-rate-cond-4}).
Note that condition (\ref{lemma-rate-cond-4}) implies:
\[
KC_{\Lambda}\sqrt{r_n(d+2)+1}a_n\sqrt{\log C(d, r_n, V_n) + \log a_n + 1} \lesssim \sqrt{n}C_n,
\]
recall that $K$ is uniformly constant and $C = 2(\sqrt{2}+1)^{\frac{1}{2}}\Lambda_{high}\Psi^{-\frac{1}{2}}_{low}$ from Proposition \ref{prop-thm-main-1}, it is sufficient to find $a_n$ such that,
\[
C_{\Lambda}(r_n(d+2)+1)\log C(d, r_n, V_n)a^2_n \lesssim nC_n^2, \quad C_{\Lambda}(r_n(d+2)+1)a^2_n\log a_n \lesssim nC_n^2.
\]
With $C(d, r_n, V_n) = \frac{2e[r_n(d+2)+1]V_n^2}{(V_n-4)} \sim r_nV_n$ and $r_n \to \infty$, from the first restriction, we obtain that $a_n^2 \lesssim \frac{n}{r_n\log(r_nV_n)}$. By the condition on $r_n, V_n$ in Assumption \ref{Asmp-2}, $r_nV_n = o(n)$, and $\log a_n \lesssim \log n$, the second restriction can be relaxed to $C(r_n(d+2)+1)a^2_n\log n \lesssim nC_n^2$. Now it suffices to find $a_n$ satisfies,
\[
a^2_n \lesssim \left(\frac{nC_n^2}{C_{\Lambda}^2r_n\log(r_nV_n)} \wedge \frac{nC_n^2}{C_{\Lambda}^2r_n\log n}\right) \lesssim \frac{nC_n^2}{C_{\Lambda}^2r_n\log n}.
\]
Combine with Condition (\ref{lemma-rate-cond-4}), by letting $\{a_n\}$ to have the maximum rate, the result of Lemma \ref{lemma-rate} implies that
\[
d_n(\hat{f}_n, \pi_{r_n}f_0) = \|\hat{f}_n - \pi_{r_n}f_0\|_n =  O_p\left(\frac{\Psi_{high}\Lambda_{high}}{\Psi_{low}\Lambda_{low}}\|\pi_{r_n}f_0 - f_0\|_n  \vee \frac{\Psi_{high}\Lambda_{high}}{\Psi^{\frac{1}{2}}_{low}\Lambda_{low}}\sqrt{\frac{r_n\log n}{n}}\right).
\]
The same rate also applies to $\|\hat{f}_n - f_0\|_n$ since $\frac{\Psi_{high}\Lambda_{high}}{\Psi_{low}\Lambda_{low}} > 1$, and we have that:
\[
d_n(\hat{f}_n, f_0) = \|\hat{f}_n - \pi_{r_n}f_0\|_n + \|\pi_{r_n}f_0 - f_0\|_n =  O_p\left(\frac{\Psi_{high}\Lambda_{high}}{\Psi_{low}\Lambda_{low}}\|\pi_{r_n}f_0 - f_0\|_n \vee \frac{\Psi_{high}\Lambda_{high}}{\Psi^{\frac{1}{2}}_{low}\Lambda_{low}}\sqrt{\frac{r_n\log n}{n}}\right).
\]
\end{proof}

}

\newpage
\blue{\section{Additional theoretical insights guiding implementation choices}\label{Append-add}

\subsection{Consistency of spatial parameter estimation}\label{Append-parameter}
The theory of consistency of NN-GLS in estimating the the mean function assumed that the spatial parameters $\btheta$ specifying $\Qb$ are held fixed (at values possibly different from the true values). This is the norm in theoretical studies of non-parametric mean estimators for spatial data. For example, the theoretical properties of GAM-GLS were developed in \cite{nandy2017additive} assuming fixed spatial parameter values, the estimation of which is not discussed (only cross-validation is done in practice). Similarly, the theory of RF-GLS \citep{saha2023random} assumes a known parameter value, although in practice they pre-estimate these values using a crude method. 

The practical implementation of NN-GLS offers benefits regarding this aspect. Using the representation of NN-GLS as a graph neural network (GNN), the spatial parameters of the working covariance matrix simply become parameters specifying the graph convolution laters, and hence are updated similarly to the NN parameters. Equation (12) in Section \ref{sec:backprop} provides the updated equation for the spatial parameters. 

We now provide some informal theory and empirical evidence on how the spatial parameters can be consistently updated subsequent to estimation of the mean function. 

Let $\hat f_n$ denote the NN-GLS estimate of $f_0$ obtained using a working covariance matrix with fixed spatial parameters. Using (12), we can then maximize the following likelihood based on the residual estimation:
\[
\begin{split}
    \log\text{-Likelihood}(\btheta) &=  -\frac{1}{n}\big(\by - \hat{\bm{f}_n}\big)'\Qb\big(\by - \hat{\bm{f}_n}\big) + \log|\Qb|\\
     &=  -\frac{1}{n}\big(\bm{f_0} + \bm{\epsilon} - \hat{\bm{f}_n}\big)'\Qb\big(\bm{f_0} + \bm{\epsilon} - \hat{\bm{f}_n}\big) + \log|\Qb|\\
    & = -\frac{1}{n}\big(\bm{f_0} - \hat{\bm{f}_n}\big)\Qb'\big(\bm{f_0} - \hat{\bm{f}_n}\big)' - \frac{2}{n}\bm{\epsilon}'\Qb\big(\bm{f_0} - \hat{\bm{f}_n}\big)\\
    & + \frac{1}{n}\big( - \bm{\epsilon}'\Qb\bm{\epsilon} +\log|\Qb|\big),
\end{split}
\]
where $\boldsymbol\epsilon \sim \mathcal{N}(\mathbf{0}, \mathbf{\Sigma})$. As $\Qb =\bSigma^{-\frac\top 2}\Eb\bSigma^{\frac 12}$. Under Assumpions \ref{Asmp-1} and \ref{Asmp-3}, the first term is $O_p(\|\hat{f}_n - f_0\|^2_n)$ and the second term is $O_p(\|\hat{f}_n - f_0\|_n)$. By the consistency result of Theorem \ref{Thm-main-1}, both these terms converge to $0$ in probability. The terms left are $ \frac{1}{n}\big( - \bm{\epsilon}'\Qb\bm{\epsilon} + \log|\Qb|\big)$. As was shown in Proposition \ref{prop-E}, $\frac{1}{n}\mathbb{E}(\bm{\epsilon}'\Qb\bm{\epsilon}) = 1$, which guarantees that the log-likelihood is dominated by these terms when $n$ is large. We consider the simple case $\Qb=\mathbf{\Sigma}^{-1}$, a full GP precision matrix. Then the log-likelihood for $\btheta$ is dominated by the term 
\[
\frac{1}{n}\big( - \bm{\epsilon}'\mathbf{\Sigma}^{-1}\bm{\epsilon} - \log|\mathbf{\Sigma}|\big).
\]
This is actually the log-likelihood of a spatial GP. Though we didn't find a consistency result for the MLE estimator of the spatial GP exactly under our setting, \cite{mardia1984maximum} shows that for the Mat\'ern GP, $\btheta$ can be estimated consistently by optimizing this log-likelihood, under the increasing domain (same assumption as ours). This provides a theoretical intuition on the consistency of the estimate of $\btheta$ in NN-GLS algorithm. We do not pursue this more formally here, as the primary goal of the theory was to demonstrate consistent estimation of the mean function using NN-GLS. 

\begin{figure}[!t]
\centering
\begin{subfigure}{.5\textwidth}
  \centering
  \includegraphics[width=0.9\linewidth]{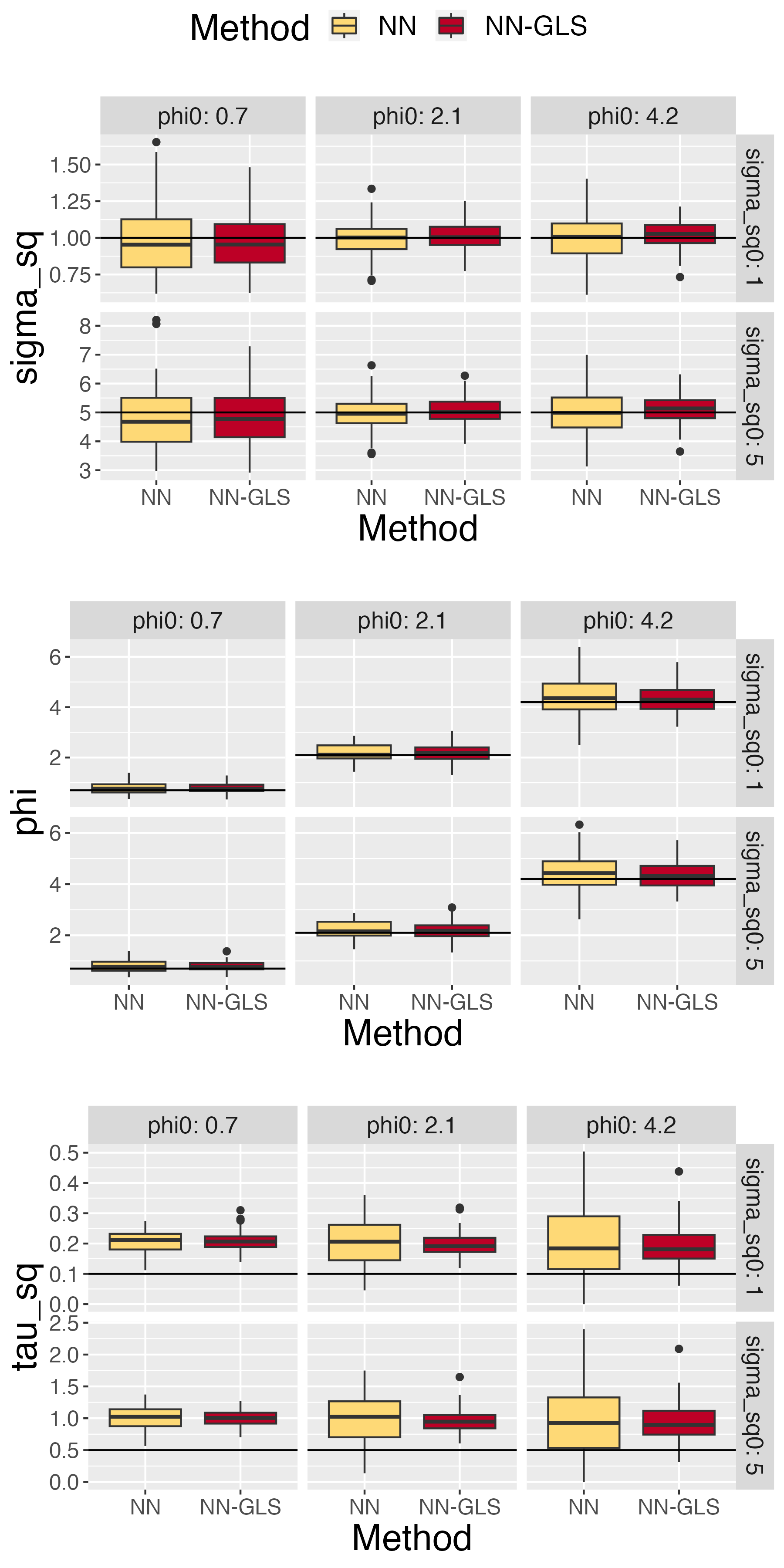}
  \caption{Estimation when $f_0 = f_1$}
\end{subfigure}%
\begin{subfigure}{.5\textwidth}
  \centering
  \includegraphics[width=0.9\linewidth]{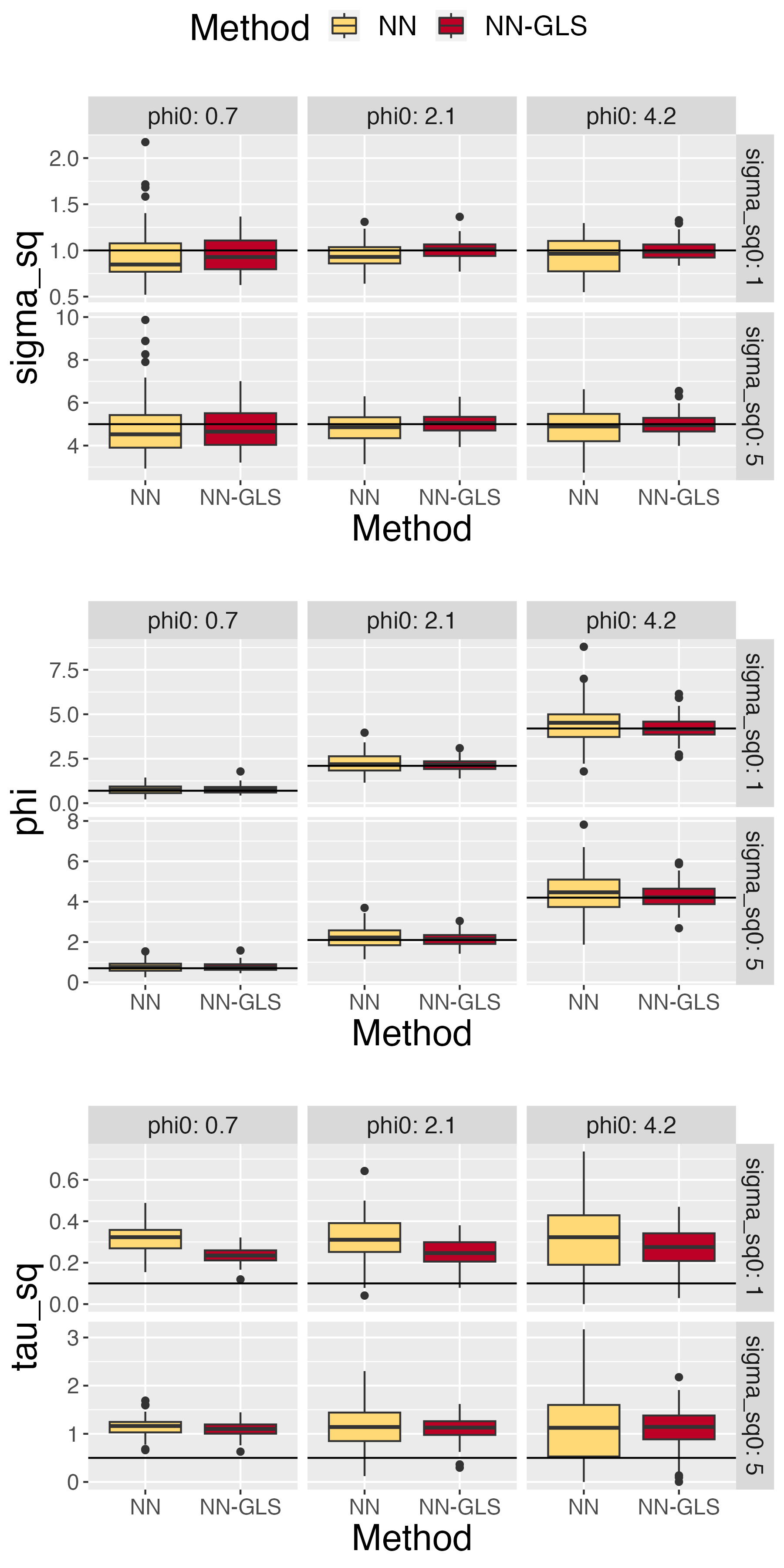}
  \caption{Estimation when $f_0 = f_2$}
\end{subfigure}
\caption{\blue{Parameter estimations in NN-GLS. We denote NN-nonspatial by NN here to save space for axis text.}}
\label{fig-par}
\end{figure}

An empirical illustration of this comes from the simulation study (Figure \ref{fig-par}). We present two sets of estimates of the spatial parameters $\sigma^2$, $\phi$ and $\tau^2$ along with their true values. The first estimate corresponds to the MLE of $\btheta$ based on initial residuals non-spatial NN estimate of $f_0$ (see algorithm \ref{alg-NN-GLS-main}). The second estimate is from NN-GLS where $\btheta$ is updated using (12). The estimates are compared under settings similar to Section \ref{sec-sim}, as the true spatial parameters $\sigma^2 \in \{1, 5\}$, $\phi \in \{1/\sqrt{2}, 3/\sqrt{2}, 6/\sqrt{2}\}$ and $\tau^2/\sigma^2 = 0.1$. 

For $\sigma^2$ and $\phi$, both approaches provide accurate estimations. This is not surprising as even the non-spatial NN estimate is consistent for $f_0$, so the argument given above applies to the first estimate of $\btheta$ as well, in addition to the NN-GLS estimate. For $\tau^2/\sigma^2$, bias exists for both approaches. However, the bias is on the conservative side (overestimation of the nugget variance) and the bias is less when using the NN-GLS estimate for $\hat f_n$ than from a non-spatial NN.  
Overall, NN-GLS estimates of $\btheta$ have smaller bias and variance across all settings, which shows the advantage of updating the spatial parameters as part of the estimation process. Note that this comparison is just for the spatial parameter estimates. The main benefits of NN-GLS lie in more precise estimates of the mean function and spatially informed prediction. These advantages are illustrated via the extensive experiments of Sections \ref{sec-sim} and \ref{sec-real}. Additionally,  Section \ref{sec-sim-orc} demonstrates that there is minimal impact of spatial parameter estimation on mean function estimation and predictions. 

\subsection{Benefits of using GLS loss over OLS loss in neural networks for spatial data}\label{Append-E}
Theorem \ref{thm-rate} shows that the error rates of NN-GLS get scaled up by a factor $\Lambda_{high}/\Lambda_{low}$. This ratio of the upper and lower eigen-bounds of $E$ is always $\geq 1$ and is smallest when it is close to one. The ratio thus measures the distance of the discrepancy matrix $\Eb$ and the identity matrix (for which the ratio is $1$), or equivalently the distance between the true covariance matrix $\bSigma$ and the working covariance matrix $\Qb^{-1}$. As was stated in Section \ref{sec:gnn}, we use the NNGP precision matrix $\Qb=\tilde{\mathbf{\Sigma}}^{-1}$ as an approximation of the true precision matrix $\mathbf{\Sigma}^{-1}$ to offer scalable implementation of NN-GLS using connections to Graph Neural Networks. The NNGP approximation gets better when the neighborhood size $m$ increases, in the following sense:
\begin{proposition}\label{prop-E}
    When using NNGP precision matrix with a neighbor size $m$, i.e. $\Qb(m) := \big(\Ib - \Bb(m)\big)^T\mathbf{F}(m)^{-1}\big(\Ib - \Bb(m)\big)$ as the working precision matrix $\Qb$. The {\em Kullback–Leibler divergences} (KLD) between identity matrix $\Ib$ and the discrepancy matrix $\Eb(m)  = \mathbf{\Sigma}^{\frac{\top}{2}}\Qb(m)\mathbf{\Sigma}^{\frac{1}{2}}$, defined as $\text{KLD}\big(\Ib, \Eb(m)\big)$, decrease monotically with $m$.
\end{proposition}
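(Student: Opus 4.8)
The plan is to reduce the matrix-level divergence to a sum of scalar terms in which the dependence on the neighbor size $m$ is completely transparent, and then to read off monotonicity from a single elementary fact about Gaussian conditioning. Throughout I write $N_m(i)$ for the $m$-nearest-neighbor set of location $s_i$ among its predecessors in the DAG ordering, $F_{ii}(m)$ for the corresponding NNGP conditional variances in (\ref{eq:nngp_bf}), and $\sigma^2_{i\mid S} = \mathbf{\Sigma}_{ii} - \mathbf{\Sigma}_{iS}\mathbf{\Sigma}_{SS}^{-1}\mathbf{\Sigma}_{Si}$ for the true conditional variance of $\epsilon_i$ given $\bm{\epsilon}_S$.

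First I would write out the KL divergence between the mean-zero Gaussians whose precision matrices are $\Ib$ and $\Eb(m)$, namely
\[
\text{KLD}\big(\Ib, \Eb(m)\big) = \tfrac{1}{2}\big(\text{tr}(\Eb(m)) - n - \log\det\Eb(m)\big).
\]
The crucial simplification is that the trace term equals $n$ exactly. Since $\text{tr}(\Eb(m)) = \text{tr}(\Qb(m)\mathbf{\Sigma}) = \EE[\bm{\epsilon}^\top\Qb(m)\bm{\epsilon}]$ for $\bm{\epsilon}\sim\mathcal{N}(\mathbf{0},\mathbf{\Sigma})$, I would expand the NNGP quadratic form as $\bm{\epsilon}^\top\Qb(m)\bm{\epsilon} = \sum_i (\epsilon_i - \Bb_{i,N_m(i)}\bm{\epsilon}_{N_m(i)})^2/F_{ii}(m)$. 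Because the weights $\Bb_{i,N_m(i)}$ and variances $F_{ii}(m)$ in (\ref{eq:nngp_bf}) are precisely the true conditional regression coefficients and kriging variances of the generating GP, each numerator is a kriging residual with variance exactly $F_{ii}(m)$, so every summand has expectation one and $\text{tr}(\Eb(m)) = n$. Hence $\text{KLD}(\Ib,\Eb(m)) = -\tfrac{1}{2}\log\det\Eb(m)$.

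Next I would evaluate the determinant. Using $\det\Eb(m) = \det\mathbf{\Sigma}\cdot\det\Qb(m) = \det\mathbf{\Sigma}/\det\tSig(m)$, the unit-triangular structure of $\Ib-\Bb(m)$ giving $\det\tSig(m) = \prod_i F_{ii}(m)$, and the sequential factorization $\log\det\mathbf{\Sigma} = \sum_i \log\sigma^2_{i\mid\{1,\dots,i-1\}}$, I obtain
\[
\text{KLD}\big(\Ib, \Eb(m)\big) = \tfrac{1}{2}\sum_{i=1}^n\Big(\log F_{ii}(m) - \log\sigma^2_{i\mid\{1,\dots,i-1\}}\Big) = \tfrac{1}{2}\sum_{i=1}^n \log\frac{\sigma^2_{i\mid N_m(i)}}{\sigma^2_{i\mid\{1,\dots,i-1\}}},
\]
where the last equality uses $F_{ii}(m) = \sigma^2_{i\mid N_m(i)}$. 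This is exactly the classical decomposition of the KL divergence between the full GP $\mathcal{N}(\mathbf{0},\mathbf{\Sigma})$ and its NNGP approximation, and it isolates the entire $m$-dependence into the conditional variances $\sigma^2_{i\mid N_m(i)}$ in the numerators.

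Monotonicity then follows immediately. The $m$-nearest-neighbor sets are nested, $N_m(i)\subseteq N_{m+1}(i)$ for every $i$ (enlarging $m$ only appends the next-closest predecessor, and both sets coincide with $\{1,\dots,i-1\}$ once $i-1\le m$), and conditioning a jointly Gaussian vector on a larger set never increases a conditional variance, so $\sigma^2_{i\mid N_{m+1}(i)} \le \sigma^2_{i\mid N_m(i)}$. As the denominators do not depend on $m$, each summand is non-increasing and therefore $\text{KLD}(\Ib,\Eb(m+1))\le\text{KLD}(\Ib,\Eb(m))$. I expect the main obstacle to be the trace identity $\text{tr}(\Eb(m))=n$: it is the one nontrivial step and rests on recognizing that NNGP is built from the true conditional moments of the generating process, so that each kriging residual is correctly standardized; once this self-consistency is established, the determinant bookkeeping and the monotonicity of Gaussian conditional variances are routine.
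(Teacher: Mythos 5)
Your proof is correct and takes essentially the same route as the paper's: the central trace identity $\text{tr}\big(\Eb(m)\big)=n$ is established exactly as in the paper (writing the trace as $\EE[\bm{\epsilon}^\top\Qb(m)\bm{\epsilon}]$ and noting each standardized kriging residual has unit expectation, since NNGP uses the true conditional moments), after which both arguments reduce the KLD to the log-determinant term $\sum_i \log \Fb_{ii}(m)$ and invoke the monotone decrease of Gaussian conditional variances under the nested neighbor sets $N_m(i)\subseteq N_{m+1}(i)$. Your additional sequential factorization of $\log\det\bSigma$ merely redistributes an $m$-free constant across the summands (making each term interpretable as a nonnegative per-location KL contribution), but does not change the substance of the argument.
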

\begin{proof}[Proof of Proposition \ref{prop-E}]
    We first study the trace of $\Eb:=\bE(m)$:
    \[
    \text{tr}(\Eb) = \text{tr}(\Qb\mathbf{\Sigma}) = \mathbb{E}(\epsilon'\Qb\epsilon),
    \]
    where $\epsilon \sim \mathcal{N}(\mathbf{0}, \mathbf{\Sigma})$.
    Since the NNGP precision matrix $\Qb$ can be decomposed as $\Qb = \big(\Ib - \Bb\big)^T\mathbf{F}^{-1}\big(\Ib - \Bb\big)$, by definition of $\Bb$ and $\mathbf{F}$ in (\ref{eq:nngp_bf}), 
    \[
    \begin{split}
        \mathbb{E}(\epsilon'\Qb\epsilon) &= \sum_{i=1}^n\frac{\mathbb{E}\Big[(\epsilon_i - \mathbf{\Sigma}_{i, N(i)}\mathbf{\Sigma}^{-1}_{N(i), N(i)}\epsilon_{N(i)})^2\Big]}{\mathbf{\Sigma}_{ii} - \mathbf{\Sigma}\big(i, N(i)\big)\mathbf{\Sigma}\big(N(i), N(i)\big)^{-1}\mathbf{\Sigma}\big(N(i), i\big)} \\
        &= \sum_{i=1}^n\frac{\mathbf{\Sigma}_{ii} - \mathbf{\Sigma}\big(i, N(i)\big)\mathbf{\Sigma}\big(N(i), N(i)\big)^{-1}\mathbf{\Sigma}\big(N(i), i\big)}{\mathbf{\Sigma}_{ii} - \mathbf{\Sigma}\big(i, N(i)\big)\mathbf{\Sigma}\big(N(i), N(i)\big)^{-1}\mathbf{\Sigma}\big(N(i), i\big)}\\
        &= n.
    \end{split}
    \]

    Thus the trace of $\Eb(m)$ is a constant free of neighbor size $m$. By definition of KL divergence, 
    \[
    \begin{split}
        \text{KLD}\big(\Ib, \Eb(m)\big) &= \text{tr}(\Ib\Eb(m)) - \log|\Eb(m)| = n + \log\left(\frac{|\mathbf{\Sigma}^{-1}|}{|\Qb(m)|}\right)\\
        &= n + \log|\mathbf{\Sigma}^{-1}| + \log \prod_{i = 1}^n\Big[\mathbf{\Sigma}_{ii} - \mathbf{\Sigma}\big(i, N(i)\big)\mathbf{\Sigma}\big(N(i), N(i)\big)^{-1}\mathbf{\Sigma}\big(N(i), i\big)\Big]
    \end{split}
    \]
    Since the Gaussian conditional variances $\Big[\mathbf{\Sigma}_{ii} - \mathbf{\Sigma}\big(i, N(i)\big)\mathbf{\Sigma}\big(N(i), N(i)\big)^{-1}\mathbf{\Sigma}\big(N(i), i\big)\Big]$ decrease with increasing conditioning sets $N(i)$, the second term decreases monotonically with $m$, and so does the whole KL divergence.
\end{proof}

This proposition describes the closeness between $\Ib$ and $\Eb$, which intuitively also represents the closeness between $\mathbf{\Sigma}^{-1}$ and $\Qb$. In a similar sense, Assumption \ref{Asmp-3}'s goal is to bound the distance between $\mathbf{\Sigma}^{-1}$ and $\Qb$ so that the NN-GLS's estimation obtains nice properties including consistency. However, instead of using KL-divergence, Assumption \ref{Asmp-3} restricts the spectrum of $\Eb$. A spectral interval close to $[1,1]$ represents a high similarity between $\Ib$ and $\Eb$, and thus between $\mathbf{\Sigma}^{-1}$ and $\Qb$. 

\begin{figure}[!t]
\centering
\includegraphics[width=0.9\linewidth]{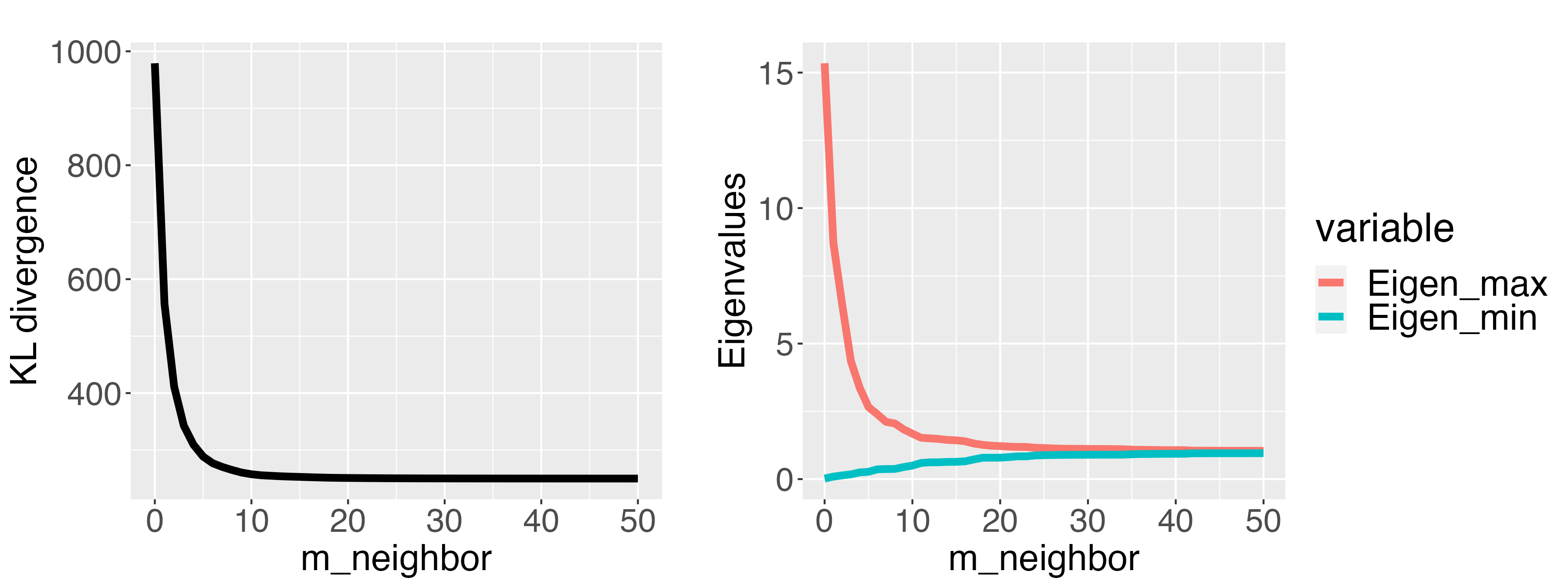}
\caption{Different metrics measuring the distance between the identity matrix and the discrepancy matrix $\Eb$ when using NNGP working precision matrix with $m$ nearest neighbors in the GLS loss. $m=0$ corresponds to the identity covariance matrix, i.e., using OLS loss.}
\label{fig-E}
\end{figure}

Figure \ref{fig-E} empirically illustrates the equivalence between KL-divergence similarity and spectrum similarity under the NNGP-context, where the covariance matrix is generated from an exponential GP with parameters $(\sigma^2, \phi, \tau) = (1, 3/\sqrt{2}, 0.01)$ and 1000 spatial location randomly selected from a $10\times 10$ square. The KL divergence between $\Eb$ and $\Ib$ decreases monotonically with $m$. At the same time, the spectrum of $\Eb$ tightens, converging to $[1, 1]$. NNGP with neighbor sets size of $20$ or larger generally yields very low KLD and spectral interval very close to $[1,1]$. Using the OLS loss is equivalent to NNGP with $m=0$ neighbors which leads to very large KLD and very wide spectral interval which in turn leads to large error rates according to Theorem \ref{thm-rate}. 
}

\subsection{Guidance on choice of number of nearest neighbors}\label{sec:choosem}

\cyan{
The number of nearest neighbors $m$ in the NNGP working covariance matrix used in NN-GLS is a tuning parameter that carries an accuracy-computation tradeoff. We see from Theorem \ref{thm-rate} that the error rates of NNGLS depend on the distance between the the NNGP precision matrix $\Qb$ and the true data precision matrix $\bSigma^{-1}$. Proposition \ref{prop-E} and Figure \ref{fig-E} shows that this distance decreases with increasing $m$. However, increasing $m$ also leads to increase in computation, and beyond a certain point, will lead to diminishing returns (negligible increase in accuracy at the expense of large computation times). 

In all of our experiments for this manuscript, we chose $m$ to be $20$. It empirically works well as seen in all the prediction and estimation performance evaluations, while providing reasonable run times (less than 1 hour for $n=200,000$). We also show in Figure \ref{fig-E} in Section \ref{Append-E} that for a sample size of $n=1000$, when $m = 20$, the KL divergence between $\Eb$ and $\Ib$ almost converges to $0$, and the max-min eigen ratio, which determines the theoretical error bounds, is very close to the optimal value of one. Although the results of Figure \ref{fig-E} may not generalize to other sample sizes, spatial design, or strength and nature of spatial dependence,  extensive empirical results in \citep{datta2016hierarchical} 
and in the many subsequent applications of NNGP in the literature attest to a choice of $m=15$ or $20$ offering excellent approximation to the full GP. 
This ability of NNGP to approximate inference from full GP using very small $m$ is one of the reasons for its success and popularity in the last decade. This also guided our choice of $m=20$ as the default for NN-GLS.  

In scenarios, where there is a possibility of very long-range spatial correlation, a larger $m$ might be warranted. If computing resources permit, one can run NN-GLS (possibly, in parallel) for a set of increasing values of $m$, stopping when there is no noticeable improvement in accuracy (of predictions on a test set). Alternatively, such long-range spatial structures can also be modeled through the mean. A common practice in spatial linear models in such cases is to add lat-lon or lower-order polynomials of them as covariates while letting the GP to model the finer scale spatial dependence. A similar strategy can be easily used with NN-GLS.}

\pagebreak
\section{Additional Simulation Results}\label{Append-sim}
\subsection{Implementation details of data simulation and competing methods}\label{sec:sim_methods}

\subsubsection{Data generation process}\label{sec:dgp}
\blue{For generating data for the simulation experiments,} the error process is generated as the sum of a GP with the exponential covariance function, and a nugget process. 
  We consider \blue{several} choices for each spatial parameter --- variance $\sigma^2$, spatial decay $\phi$, and error variance (nuggets) ratio $\tau^2/\sigma^2$, i.e. \blue{$\sigma^2  = 1, 5$}, $\phi \in \{1/\sqrt{2}, 3/\sqrt{2}, 6/\sqrt{2}\}$ and $\tau^2/\sigma^2 \in \{0.01, 0.1, 0.25\}$, thereby providing \blue{$18$} combinations in total (all settings could be found in Section \ref{Append-sim}). For \blue{these} settings, we keep the training and testing sample size $1000$ and perform $100$ independent experiments, including data generation, model fitting, and evaluation. \blue{Larger sample sizes are considered in Section \ref{sec-sim-large}.} The covariates $\bX$ and coordinates $s$ are independently sampled from $\text{Unif}\big([0,1]^d\big)$ and $\text{Unif}\big([0,10]^2\big)$. \blue{Other scenarios consider various forms of misspecification, and are presented in Sections \ref{sec-sim-mis1} and \ref{sec-sim-mis2}.}

\blue{For the linear spatial model, we use the {\em BRISC} R package developed by \cite{saha2018brisc}. For a spatial linear GP model, given the $(\bX, \bY, \bs)$ as the input, BRISC returns the MLE of the linear regression coefficients as well as the spatial parameters. Thus, we also use BRISC to do the residual kriging for the estimation of spatial covariance parameters in NN-GLS using (12). For the random forests family approaches, given the target sample size, we tune the number of ensemble trees by comparing the estimation error on an independently generated sample and making $60$ the final choice. The maximum size of leaf nodes is set to be $20$, aligned with the setting in the RF-GLS paper \cite{saha2023random}. For methods in the GAM family implemented through the {\em pyGAM} python package, each covariate will be expanded to $20$ B-splines and apply a penalized regression. GAM-latlon simply includes spatial coordinates as the additional covariates, and GAM-GLS \citep{nandy2017additive} is implemented by decorrelating both the covariates matrix $\bX$ and response $\bY$ using the Cholesky factor of the estimated precision matrix. 

For the methods in the NN family, we build the framework using {\em PyTorch} \citep{paszke2019pytorch}. NN-latlon simply added normalized spatial coordinates to the covariates, and NN-splines \citep{chen2020deepkriging} expanded the coordinates into $56$ B-spline coefficients. For the NN-GLS algorithm, we choose the neighbor size to be $m=20$. A similar neighbor set size was recommended in \cite{datta2016hierarchical} who demonstrated diminishing returns for increasing $m$ further. Also, this choice works well empirically by achieving a close enough approximation between $\Qb$ and $\mathbf{\Sigma}^{-1}$ (see Section \ref{Append-E}). However, if desired one can also choose $m$ by cross-validation on hold-out data. 
For all the methods here, we use the same NN architecture of one single layer, 50 hidden units, and a sigmoid activation function. Empirically, this architecture works well for both NN and NN-GLS in most of the settings. In terms of training details, for a sample size of $2000$, the training-validation-testing split is $40\% - 10\% - 50\%$. For the estimation and prediction experiment, we use a batch size of $50$ for mini-batching and a learning rate of $0.1$ for the {\em ADAM} \citep{kingma2014adam} optimization algorithm. The training is terminated after $20$ epochs of non-decreasing validation loss. The experiments are conducted with Intel Xeon CPU, and 8 GB RAM (The running time experiment takes up to 100 GB). We notice the optimal \blue{number of batches} for NN-GLS is relatively invariant against the sample size, while NN (together with NN-latlon and NN-spline) requires a larger number of batches to achieve the best performance. Even though both methods have linear running times with a fixed batch number, the previous observation makes the NN's running time increase at a higher rate.

The following table summarizes the features of each of the methods considered. }

\begin{table}[h]
  \centering
  \blue{
  \begin{tabular}{|c|c|c|c|}
    \hline
    Methods & Estimation of mean & Spatial predictions & Large sample scalability \\
    \hline
    NN-GLS &  Yes & Yes & Yes \\
    \hline
    NN-non-spatial\tablefootnote{\blue{In the experients, we keep the non-spatial predictions from NN-non-spatial in the comparisons as a baseline for performance of NN when completely ignoring the spatial information.}}  & Yes & No & Yes \\
    \hline
    NN-latlon &  No & Yes & Yes \\
    \hline
    NN-splines &  No & Yes & Yes \\
    \hline
    GAM  & Yes (no interaction) & No & Yes \\
    \hline
    GAM-GLS\tablefootnote{\blue{The GAM-GLS, while similar in spirit to RF-GLS and NN-GLS, is presented in \cite{nandy2017additive} as a standalone method for mean function estimation and do not offer spatial predictions. Conceptually, it can be extended to offer predictions by embedding the GAM in a GP model, where predictions will leverage kriging. Similarly, using NNGP covariance instead of full GP will improve scalability of GAM-GLS but it is not currently implemented in the method presented by \cite{nandy2017additive}.}} &  Yes (no interaction) & No & No \\
    \hline
    GAM-latlon &  No & Yes & Yes \\
    \hline
    RF & Yes & No & Yes \\
    \hline
    RF-GLS &  Yes & Yes & No \\
    \hline
    Linear-spatial &  Yes (linear) & Yes & Yes \\
    \hline
  \end{tabular}}
  \caption{\blue{Different methods considered for the numerical experiments.}}
  \label{tab:methods}
\end{table}

\subsection{Performance comparison for Friedman and 1-dimensional functions}\label{sec-sim-5}
This subsection follows up the demonstration in Section \ref{sec-sim}, presenting estimation and prediction comparisons for all parameter scenarios. 

\begin{figure}[!h]
\centering
\begin{subfigure}{1\textwidth}
  \centering
  \includegraphics[width=0.9\linewidth]{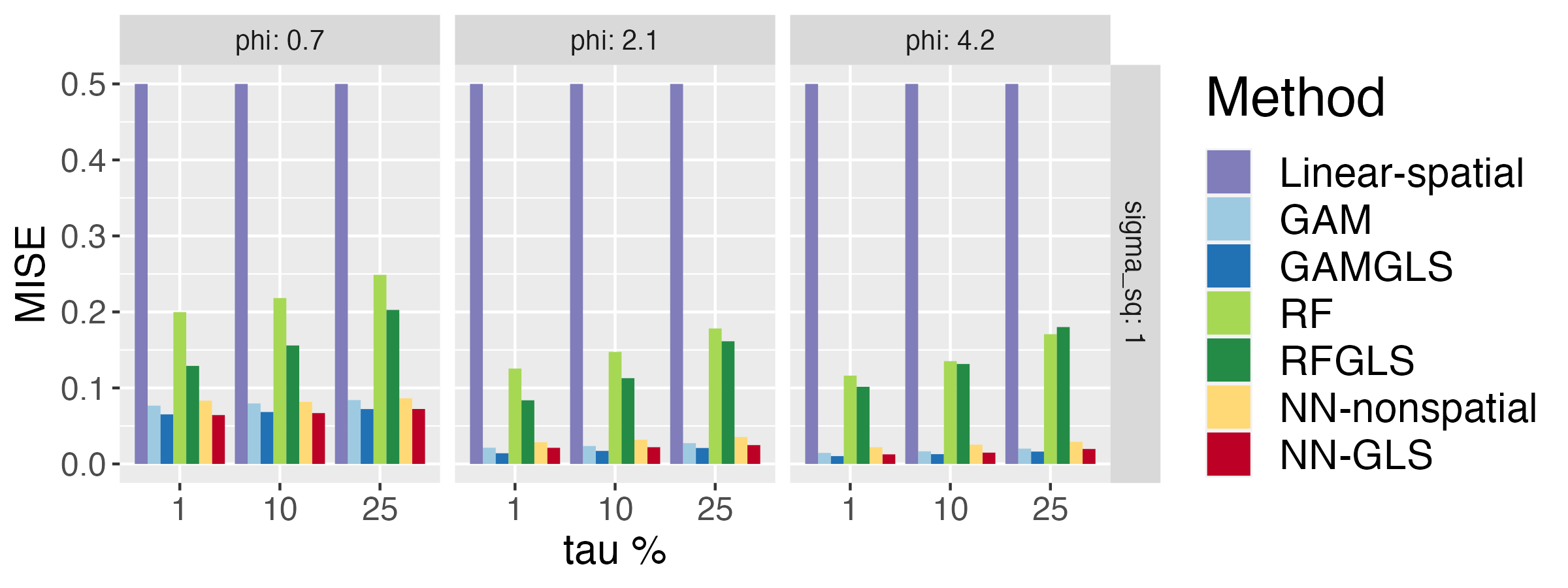}
  \caption{Estimation performance for $f_1$*}
\end{subfigure}%
\\
\centering
\begin{subfigure}{1\textwidth}
  \centering
  \includegraphics[width=0.9\linewidth]{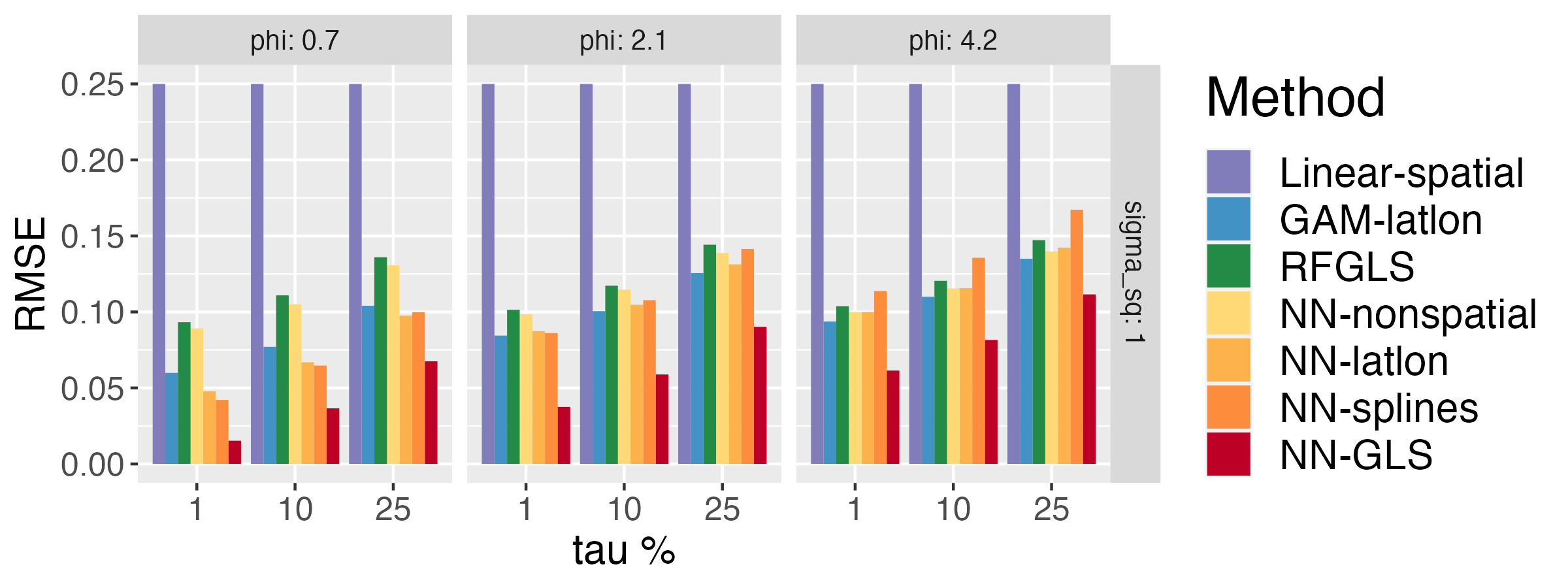}
  \caption{Prediction performance for $f_1$*}
\end{subfigure}
\caption{\blue{(Section \ref{sec-sim-5}) Comparison between competing methods on (a) estimation and (b) spatial prediction
when the mean function is $f_0 = f_1$.} \blue{We add * in the caption as the MISE and RMSE for the linear-spatial model (which were very large) had to be truncated for better illustration of the performance of the other methods.}}
\label{fig-sim-2}
\end{figure}
In Figure \ref{fig-sim-2}, we compare the estimation and prediction errors with different spatial parameter combinations for the true mean function $f_1(x)$ which is simply a one-dimensional sinusoidal function. \blue{NN-GLS outperforms the other competing methods for both estimation and prediction in almost all the scenarios. The most notable aspect of the results is the very large errors for the linear model. This is expected as the true mean function is highly non-linear (sinusoidal).}

\begin{figure}[htbp]
\centering
\begin{subfigure}{1\textwidth}
  \centering
  \includegraphics[width=0.9\linewidth]{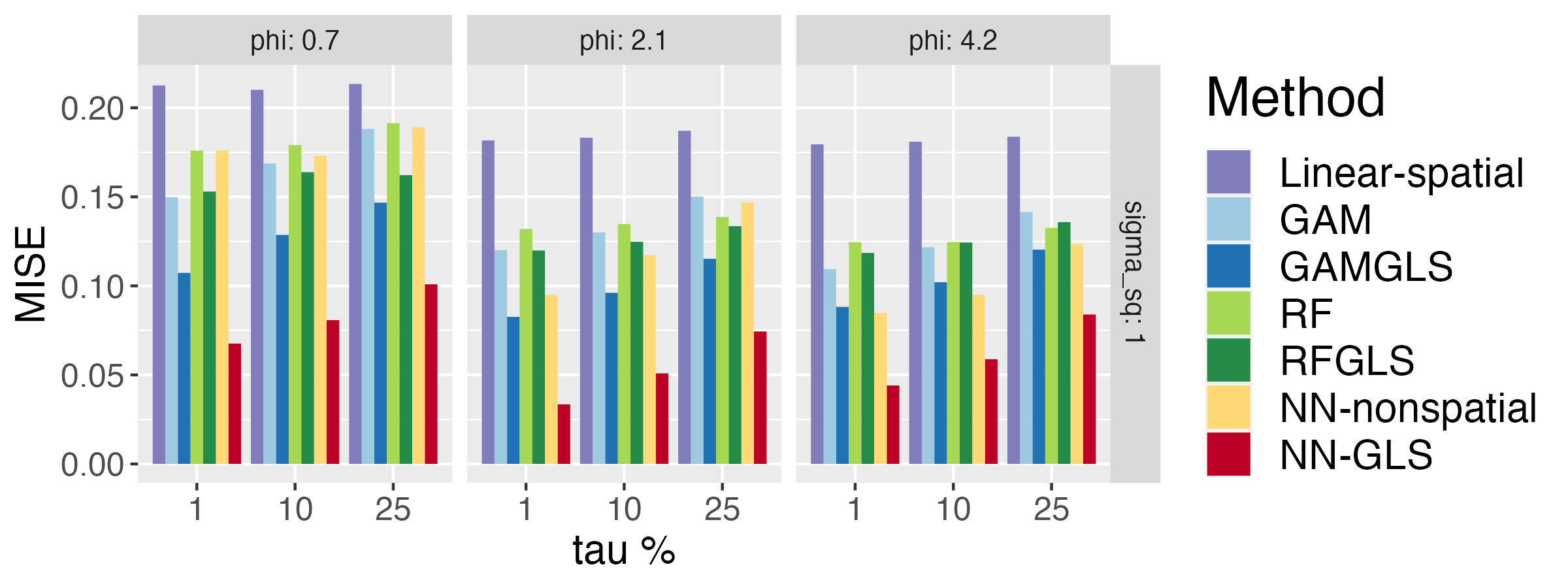}
  \caption{Estimation performance}
\end{subfigure}%
\\
\centering
\begin{subfigure}{1\textwidth}
  \centering
  \includegraphics[width=0.9\linewidth]{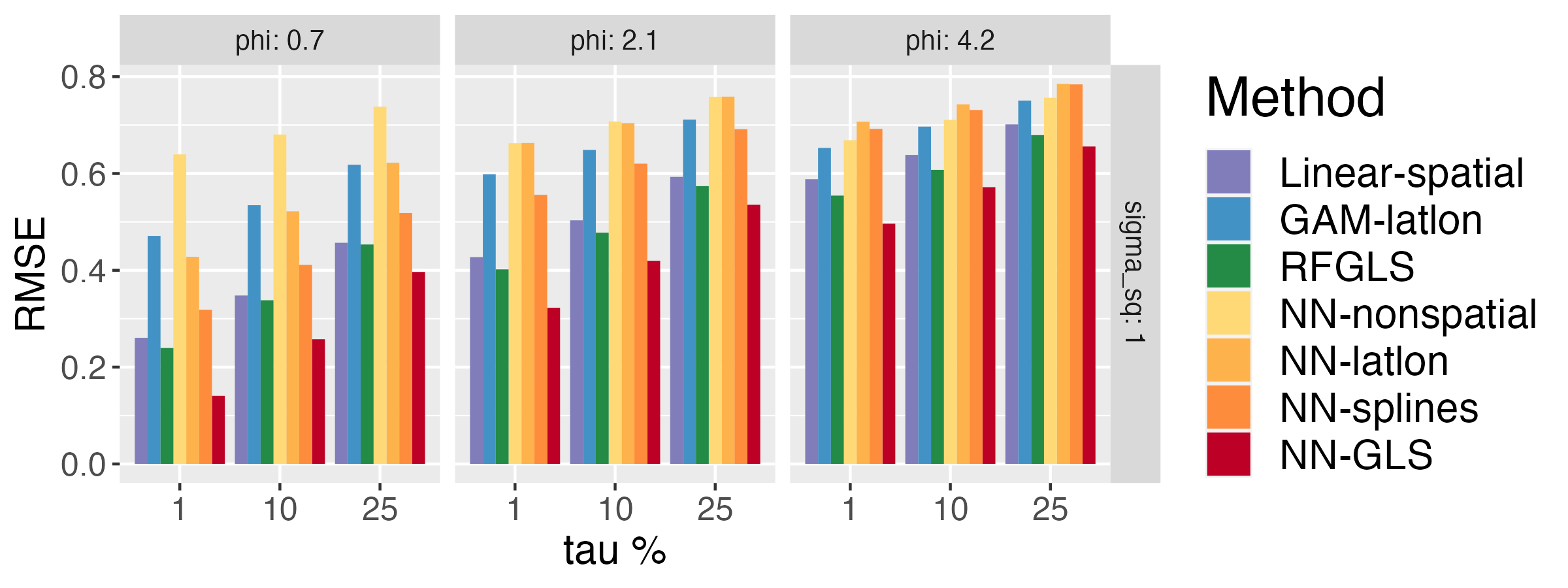}
  \caption{Prediction performance}
\end{subfigure}
\caption{\blue{(Section \ref{sec-sim-5}) Comparison between competing methods on (a) estimation and (b) spatial prediction
when the mean function is $f_0 = f_2$.}}
\label{fig-sim-1}
\end{figure}
The results for $f_0=f_2$ are presented in Figure \ref{fig-sim-1}. \blue{NN-GLS once again uniformly outperforms all other methods.} For the estimation, as we expected, MISE for the methods increases with an increasing $\tau^2$, which represent larger overall variations. \blue{GAM and GAM-GLS perform worse in this case than when $f_0=f_1$. This is expected due to the existence of the interaction term in $f_2$ that cannot be modeled using GAMs (see Section \ref{sec-sim-friedman} for more on this).} 

\begin{figure}[htbp]
\centering
\includegraphics[width=0.7\linewidth]{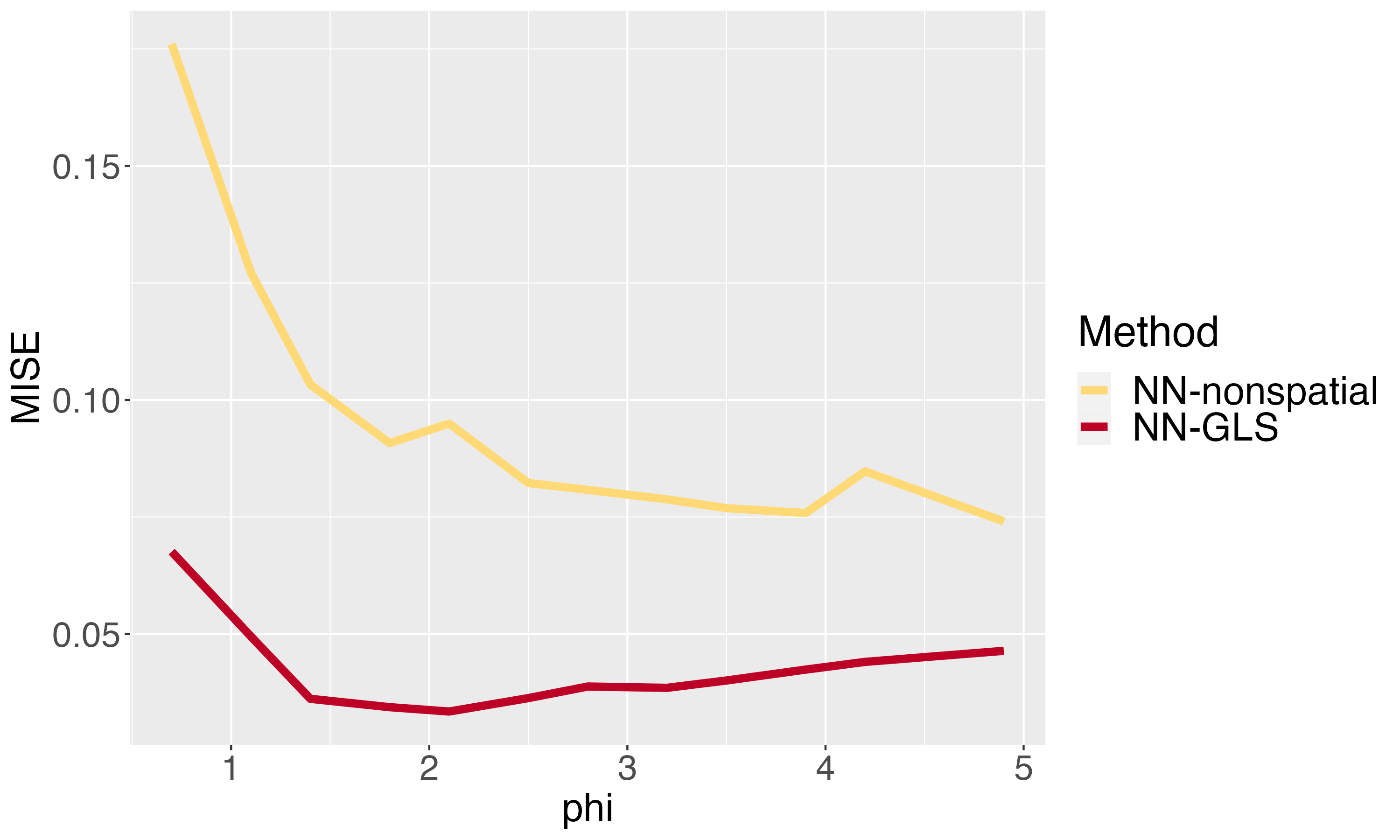}
\caption{\blue{Estimation performance against $\phi$
when the mean function is $f_0 = f_2$.}}
\label{fig-sim-5-phi}
\end{figure}

\blue{In Figure \ref{fig-sim-5-phi}, we study how the improvement of NN-GLS over the non-spatial NN (in terms of MISE) relates to the strength of spatial correlation in the data. We fix $(\sigma^2, \tau^2) = (1, 0.01)$ and vary the spatial decay parameter $\phi$ from $1/\sqrt{2}$ to $7/\sqrt{2}$. We see that small values of $\phi$ (stronger spatial correlation) lead to more significant advantage of NN-GLS over NN. This is expected as NN ignoring the strong spatial correlation suffers in estimation performance. The difference between the two methods is less when $\phi$ is large (weaker spatial correlation), although NN-GLS still does considerably better.}

For the predictions, we see from Figures \ref{fig-sim-2} (b) and \ref{fig-sim-1}  (b) that NN-GLS is more favorable than other methods with smaller $\phi$ and $\tau^2$, i.e., relatively stronger spatial correlation and low noise. If both of these parameters are large, all the other methods become comparable to NN-GLS. However, in contrast to NN-GLS's stable performance, NN-latlon and NN-splines have \blue{much more severely} degrading performance when $\phi$ is large, \blue{producing worse prediction performance than even the non-spatial NN. Also, when $f_0=f_2$, the linear-spatial model outperforms all methods except NN-GLS in terms of prediction performance (Figure \ref{fig-sim-1} (b)). The reason is the partial-linearity of Friedman function and the power of the GP in the linear model, NN-GLS offers the lowest RMSE. In fact, all the GP-based approaches (spatial linear model, RF-GLS, NN-GLS) outperform the added-spatial-features approaches (GAM-latlon,NN-latlon,NN-splines). These results show the shortcomings of the {\em added-spatial-features} approach.} 
If the spatial correlation is weak, these \blue{large number of} added spatial features become redundant and adversely affect the performance of these methods.  



\subsection{Performance comparison under large noise}\label{sec-sim-noise}
\blue{In the previous section, the scale of the spatial effect used was relatively weak ($\sigma^2=1$) compared with the true signal. Under a large variance of the spatial effect, together with a strong spatial correlation, the vector of errors can become approximately like a non-zero intercept term. This intercept is not separable from the true underlying function. Overfitting happens easily under this setting and the mean function can then be only identified up to an intercept shift. 
To illustrate the robustness of NN-GLS in this scenario, we show both the estimation and prediction performance for $f_1$ and $f_2$ with $\sigma^2 = 5$ in Figures \ref{fig-sim-large-noise-dim1} and \ref{fig-sim-large-noise-dim5}.
\begin{figure}[htbp]
\centering
\begin{subfigure}{1\textwidth}
  \centering
  \includegraphics[width=0.9\linewidth]{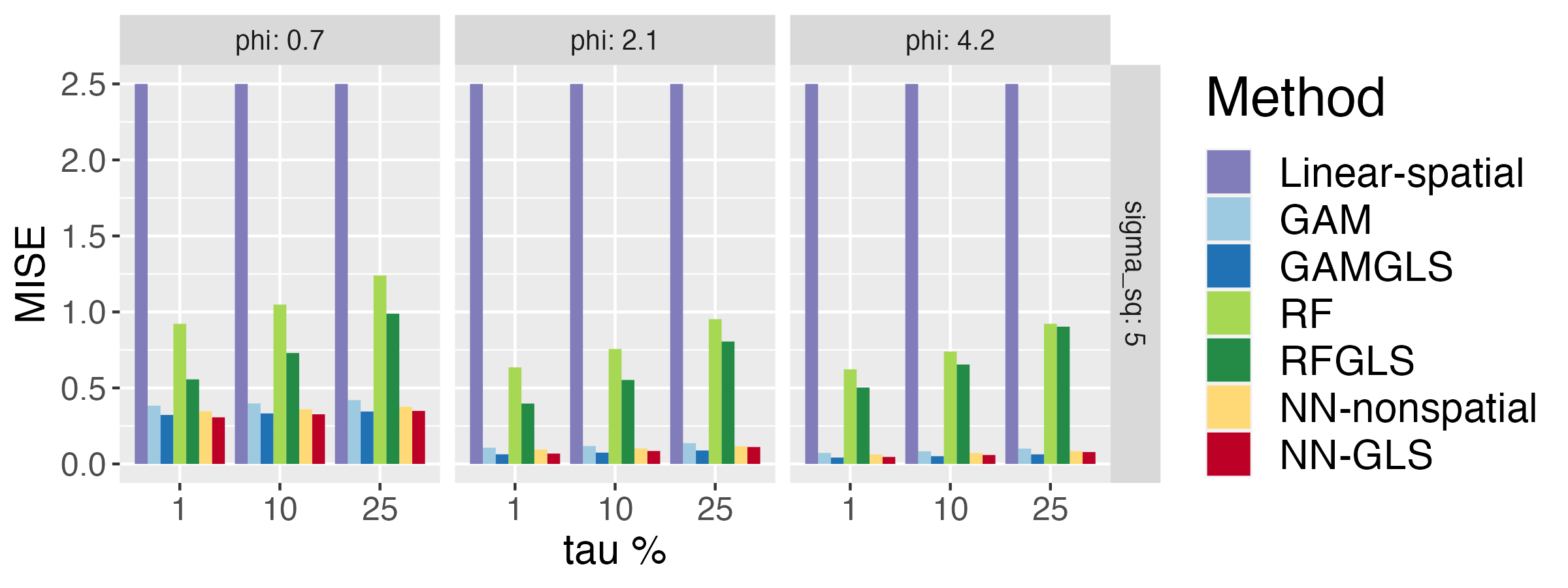}
  \caption{Estimation performance for $f_1$*}
\end{subfigure}%
\\
\centering
\begin{subfigure}{1\textwidth}
  \centering
  \includegraphics[width=0.9\linewidth]{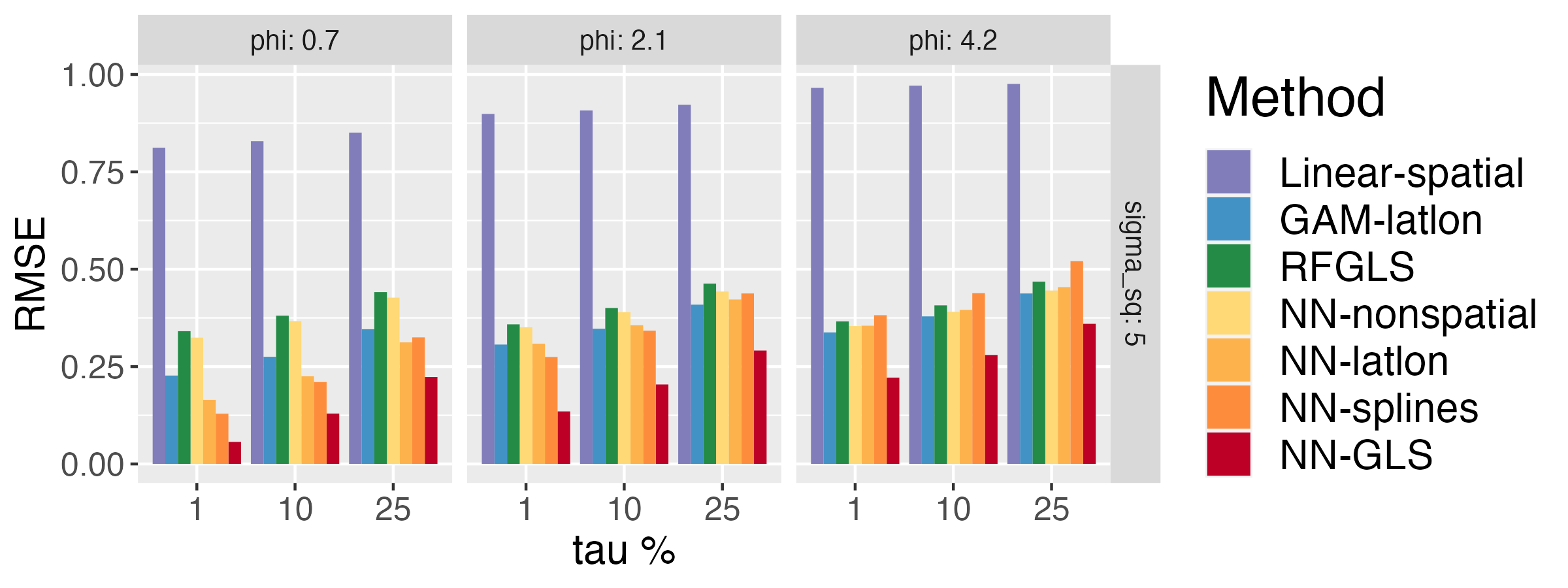}
  \caption{Prediction performance for $f_1$}
\end{subfigure}
\caption{\blue{(Section \ref{sec-sim-noise}) Comparison between competing methods on (a) estimation and (b) spatial prediction
when the mean function is $f_0 = f_1$ under large noise. \blue{We add a * in figure (a) as the MISE for the linear-spatial model (which was very large) had to be truncated for better illustration of the performance of the other methods.}}}
\label{fig-sim-large-noise-dim1}
\end{figure}
\begin{figure}[!t]
\centering
\begin{subfigure}{1\textwidth}
  \centering
  \includegraphics[width=0.9\linewidth]{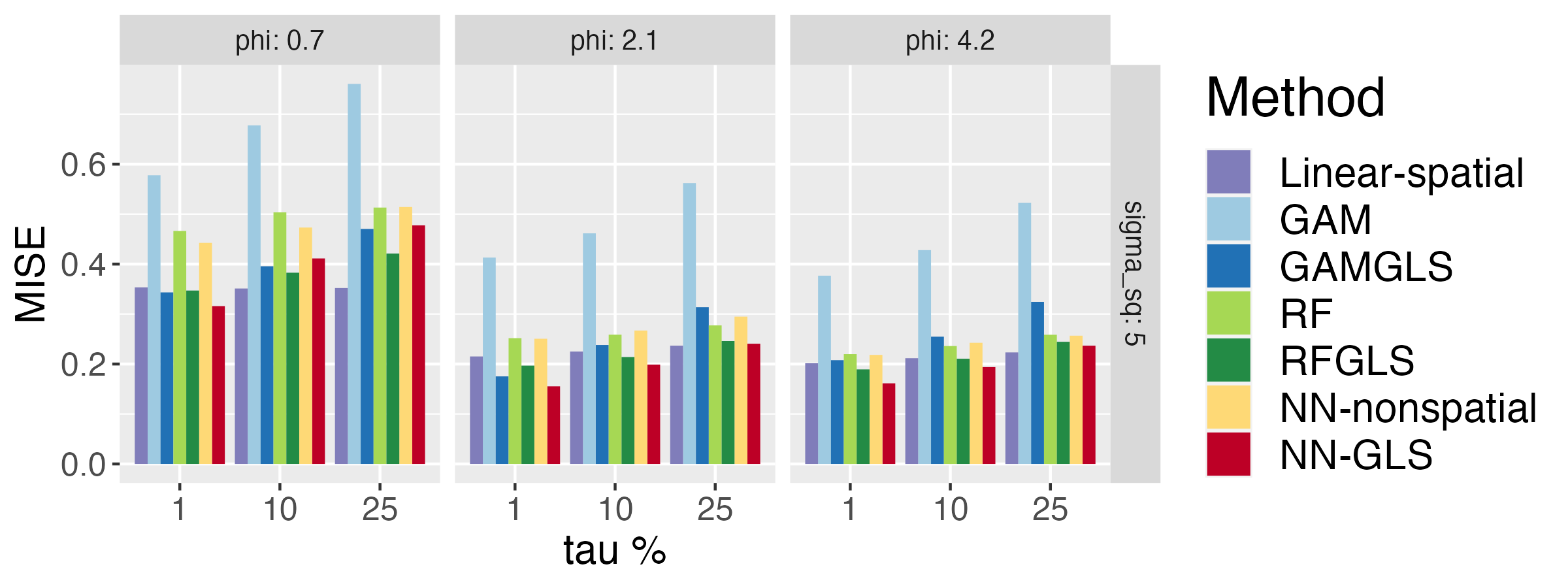}
  \caption{Estimation performance for $f_2$}
\end{subfigure}%
\\
\centering
\begin{subfigure}{1\textwidth}
  \centering
  \includegraphics[width=0.9\linewidth]{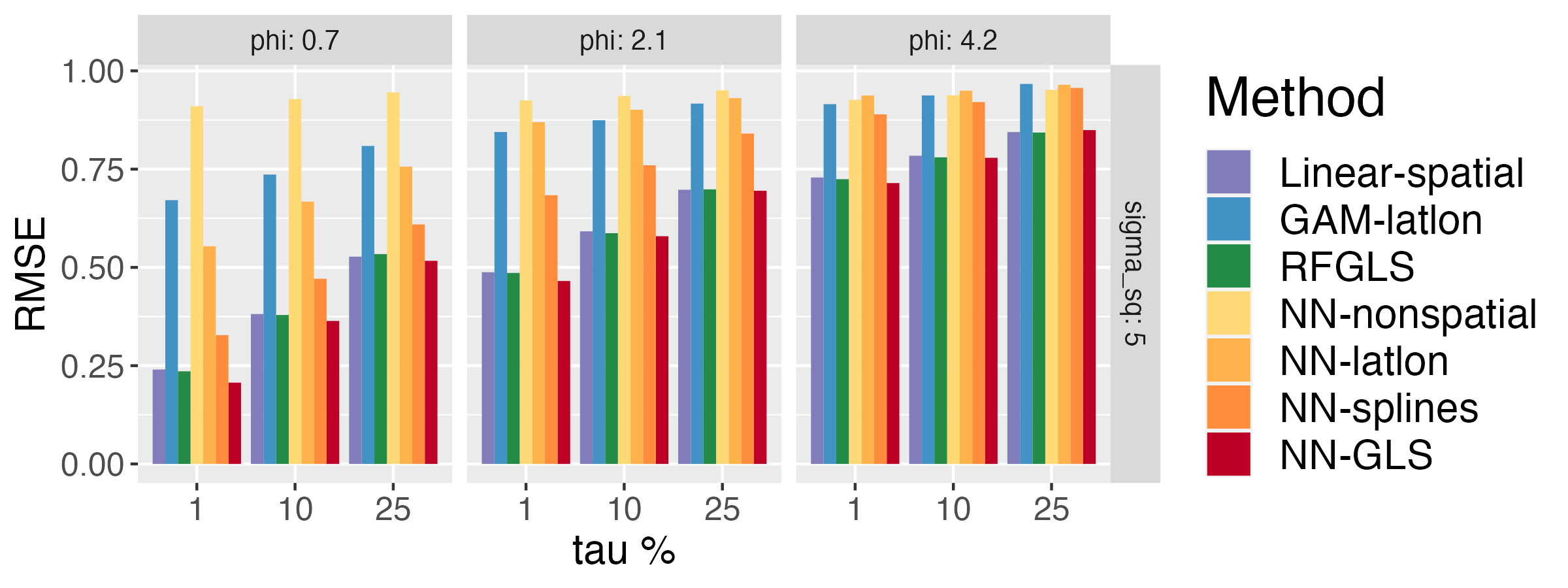}
  \caption{Prediction performance for $f_2$}
\end{subfigure}
\caption{\blue{(Section \ref{sec-sim-noise}) Comparison between competing methods on (a) estimation and (b) spatial prediction
when the mean function is $f_0 = f_2$ under large noise.}}
\label{fig-sim-large-noise-dim5}
\end{figure}

With a large noise, NN-GLS still performs well consistently in both estimation and prediction tasks. For the prediction part, the advantage of NN-GLS is consistent across all the settings. The advantage becomes more significant when the spatial correlation gets stronger (small $\phi$). For estimation, when $f_0 = f_1$, NN-GLS, together with GAM-GLS, have the best performance. Methods in the RF family struggle here due to overfitting, as the sample size of 1000 is not sufficient here due to the large noise. In other words, for these learning approaches, the large noise is equivalent to the insufficient sample size issue, which means that the error here is not merely a function family's approximation error but contains a substantial amount of estimation error. 

When $f_0 = f_2$, estimation from the linear-spatial model performs comparably well since, relative to the large error variance, the non-linear variations of the mean (Friedman function) in the $[0, 1]^5$ hyperunit square are small. When the error variance was smaller, we saw in Figure \ref{fig-sim-1}(a) that the linear model estimation suffers even for the Friedman function. 

In the following sections, both the normal case ($\sigma^2 = 1$) and the large noise case ($\sigma^2 = 5$) will be presented together for ease of comparisons.
}

\subsection{Performance comparison between GAM and NN}\label{sec-sim-friedman}
\blue{In the above sections, an observation is that GAM and GAM-GLS tend to have similar performance respectively with NN-nonspatial and NN-GLS, when the truth $f_0 = f_1$, the 1-dimensional sine function, while the GAM methods perform worse for the Friedman function $f_0=f_2$. In this subsection, we investigate this in more detail. We utilize the interaction term in the Friedman function to illustrate the robustness of NN-based methods against GAM-based methods. We design a family of `modified' Friedman functions parametrized by $\rho$ as:
\[
f_2(\cdot, \rho) = \rho\cdot \frac{10}{3}\sin(\pi x_1x_2)+ (1-\rho)\cdot\frac{1}{3}(20(x_3 - 0.5)^2+10x_4+5x_5),
\]
where $\rho \in [0,1]$ measures the weight of the interaction term. When $\rho = \frac{1}{2}$, $f_2(\cdot, \rho)$ is exactly the classical Friedman function $f_2$. 
\begin{figure}[!h]
\centering
\includegraphics[scale=0.4]{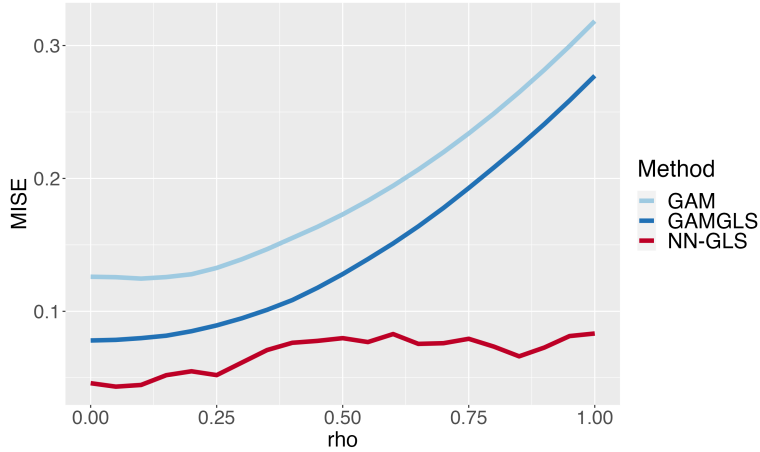}
\caption{(Section \ref{sec-sim-friedman}) MISE of competing methods vs the weight of the interaction term in the true function. This figure is the same as Figure \ref{fig-sim-main}(b)}
\centering
\label{fig-sim-friedman}
\end{figure}
Result in Figure \ref{fig-sim-friedman} is obtained from experiment with $\sigma^2, \phi, \tau^2 = (1, 3/\sqrt{2}, 0.01)$ and $n = 1000$. We see that the errors of both GAM and GAM-GLS  increase significantly with the increasing weight $\rho$ of the interaction term, while the trend for NN-GLS is much weaker. We use this example to show that NN models are more robust and generalizable compared with GAM models. Even though GAM models are simpler to formulate, NN-GLS will be preferred if there is no additional knowledge about the true mean function.}

\subsection{\blue{Confidence Interval}}\label{sec-sim-CI}
\blue{We now evaluate the confidence intervals (CI) produced by the procedure in Section \ref{sec-mtd-CI} under different settings. Among the other methods considered for the simulation studies that provide an estimate of the mean function (see Table \ref{tab:methods}), we can obtain confidence intervals for NN-non-spatial using na\"ive bootstrap. The other two spatial methods that offer a non-linear estimate of $\hat f$, i.e., RF-GLS \citep{saha2023random} and GAM-GLS \citep{nandy2017additive} do not offer any method for inference on the mean function. Hence, we only compare NN-GLS and NN-non-spatial for pointwise inference on the mean function.

For each simulation, we generate $100$ bootstrap samples and evaluate the confidence interval from $100$ independent simulations. The spatial parameters used are shown in the figure, i.e. $\sigma^2 \in \{1, 5\}$, $\phi \in \{1/\sqrt{2}, 3/\sqrt{2}, 6/\sqrt{2}\}$ and $\tau^2/\sigma^2 \in \{0.01, 0.25\}$. In order to avoid the identifiability issue, when simulating the sample, we center the spatial effects. In all subfigures, NN-GLS consistently offers CI's with around $95\%$ coverage while non-spatial NN's CI fails to achieve it. Along with the coverage, we also use interval score as an additional metric \citep{gneiting2007strictly}. For the true function $f_0(\cdot)$ and its upper(lower) level-$\alpha$ interval bound $f_u(\cdot)$($f_l(\cdot)$), the interval score at $x$ is defined as:
\begin{equation}\label{def-interval_score}
s(x) = f_u(x) - f_l(x) + \frac{2}{\alpha}\Big((f_l(x) - f_0(x))\mathbb{I}(f_l(x) > f_0(x)) + (f_0(x) - f_u(x))\mathbb{I}(f_u(x) < f_0(x))\Big).
\end{equation}
The interval score considers the interval width and penalizes any miss of the interval, making it a more comprehensive metric than simply coverage. A better interval shall have a lower interval score. 
\begin{figure}[!h]
\centering
\begin{subfigure}{.5\textwidth}
  \centering
  \includegraphics[width=0.9\linewidth]{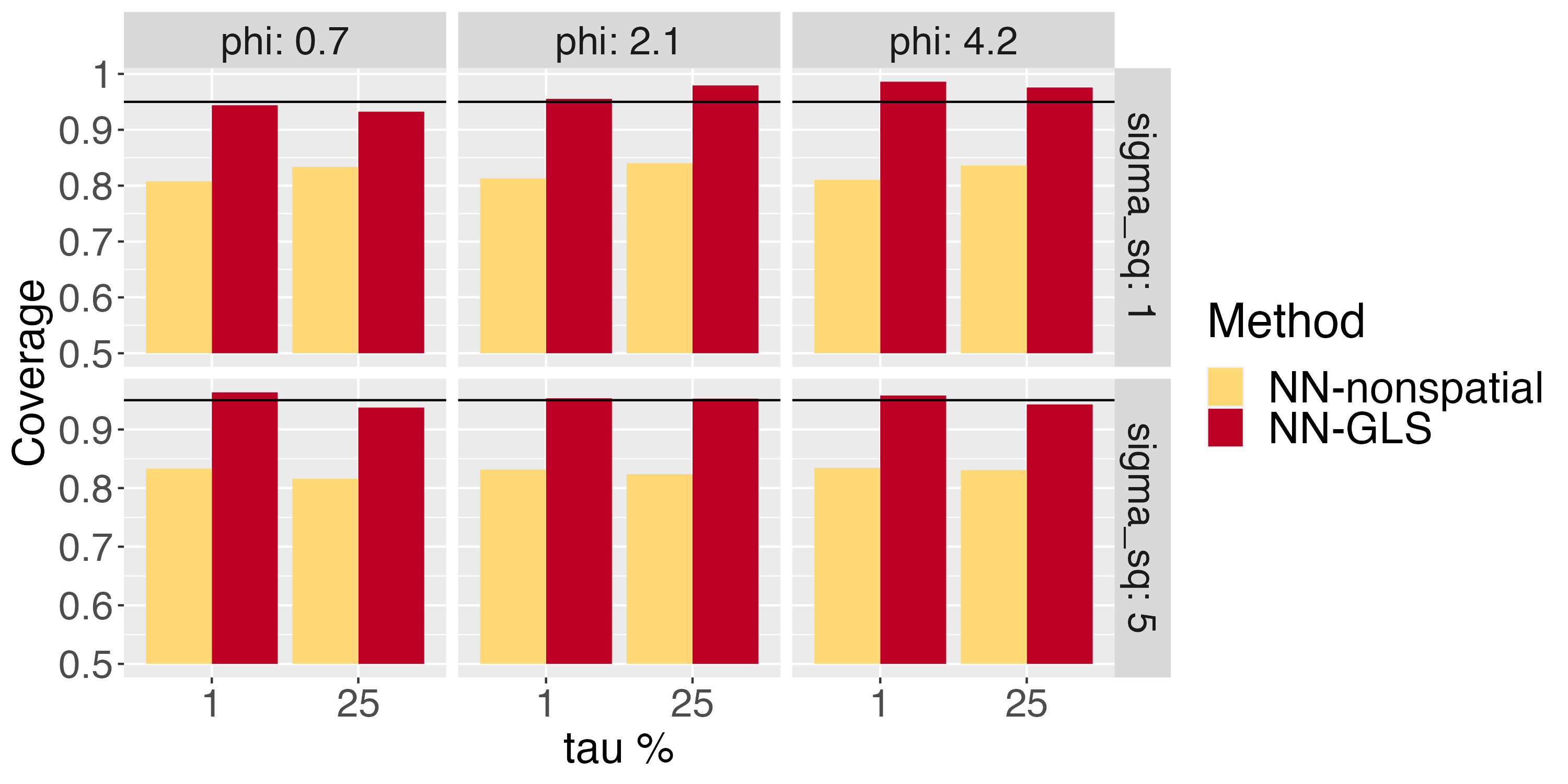}
  \caption{Confidence coverage for $f_0 = f_1$}
\end{subfigure}%
\begin{subfigure}{.5\textwidth}
  \centering
  \includegraphics[width=0.9\linewidth]{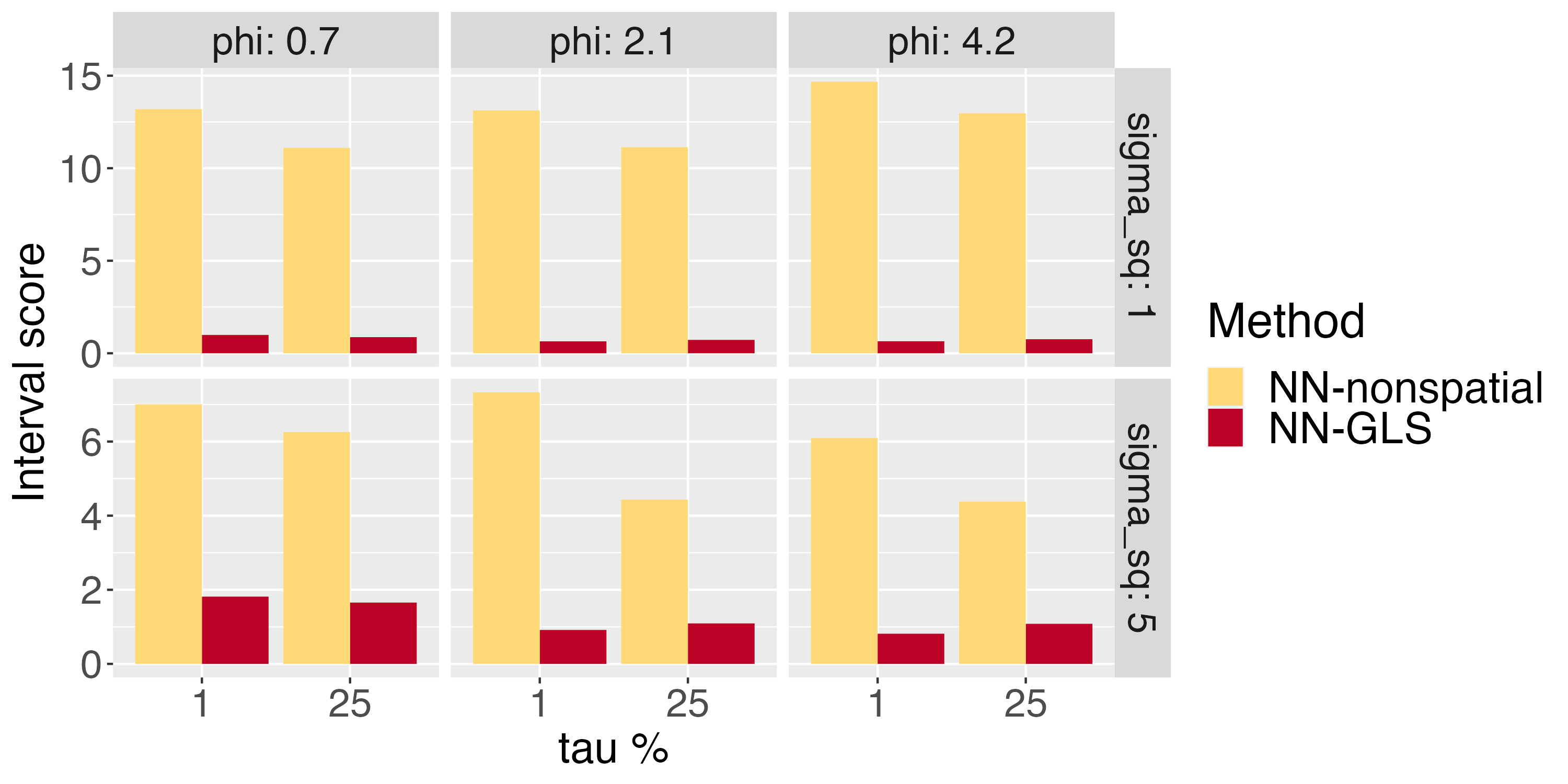}
  \caption{Interval score for $f_0 = f_1$}
\end{subfigure}
\\
\centering
\begin{subfigure}{.5\textwidth}
  \centering
  \includegraphics[width=0.9\linewidth]{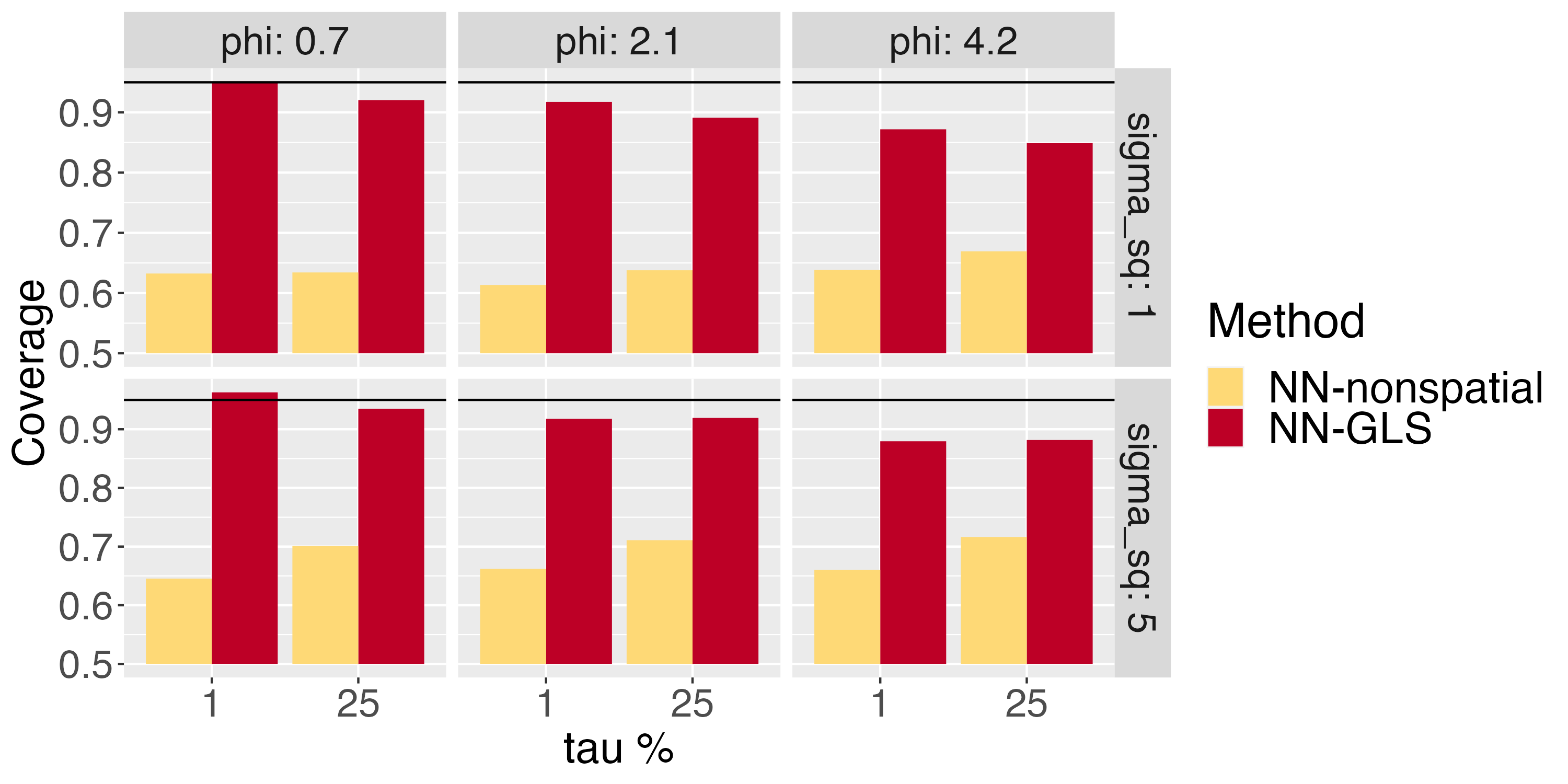}
  \caption{Confidence coverage for $f_0 = f_2$}
\end{subfigure}%
\begin{subfigure}{.5\textwidth}
  \centering
  \includegraphics[width=0.9\linewidth]{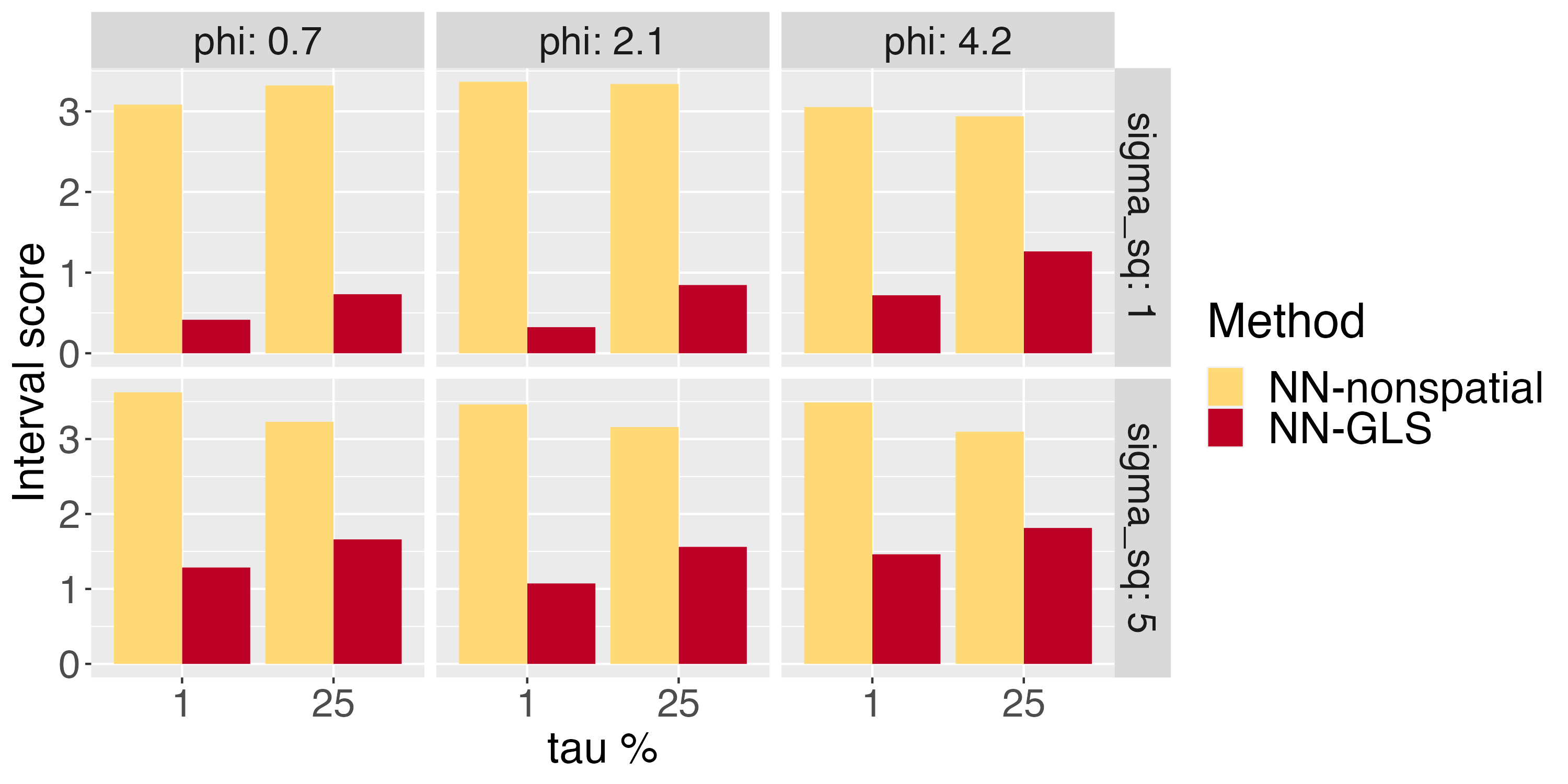}
  \caption{Interval score for $f_0 = f_2$}
\end{subfigure}
\caption{(Section \ref{sec-sim-CI}) Comparing confidence intervals produced by NN and NN-GLS 
with the mean function being $f_1$ or $f_2$. 
}
\label{fig-sim-CI}
\end{figure}
Figure \ref{fig-sim-CI} compares the results between these NN-GLS and the non-spatial-NN. For $f_0=f_1$ the coverages of the two methods are generally close to nominal coverage, 
but overall NN-GLS provides much tighter confidence intervals  reflected int he considerably better interval score. In the Friedman function's case ($f_0=f_2$), across all settings, NN-GLS's CIs have near closer to nominal coverage and much lower interval scores. The NN-non-spatial suffers from significant under-coverage leading to very poor interval scores. 

In general, NN-GLS is more accurate and its bootstrap estimations concentrate more closely on the truth by accounting for the spatial correlation, thus giving a narrower CI. }

\subsection{Prediction Interval}\label{sec-sim-PI}
\blue{NN-GLS algorithm not only provides point predictions at new locations but also computes prediction intervals using kriging variance (see Section \ref{sec-mtd-krig}). Ideally, if the estimation $\hat{f}(\cdot)$ and parameter estimation $\hat{\bm{\theta}}$ are both accurate enough and the spatial effect and noise distributions are correctly modeled, the coverage of the Gaussian prediction interval should be around $95\%$. To illustrate the performance of NN-GLS's prediction intervals, experiments are conducted under different settings (same as the ones for sections \ref{sec-sim-5} and \ref{sec-sim-noise}). For comparison of prediction intervals, we consider the spatial linear method and RF-GLS \citep{saha2023random}, since both of them involve similar steps of spatial parameter estimation and kriging prediction. The only difference among them is the $\hat{f}(\cdot)$, being in linear form, random forest class, or NN class respectively. 

\begin{figure}[!h]
\centering
\begin{subfigure}{.5\textwidth}
  \centering
  \includegraphics[width=0.9\linewidth]{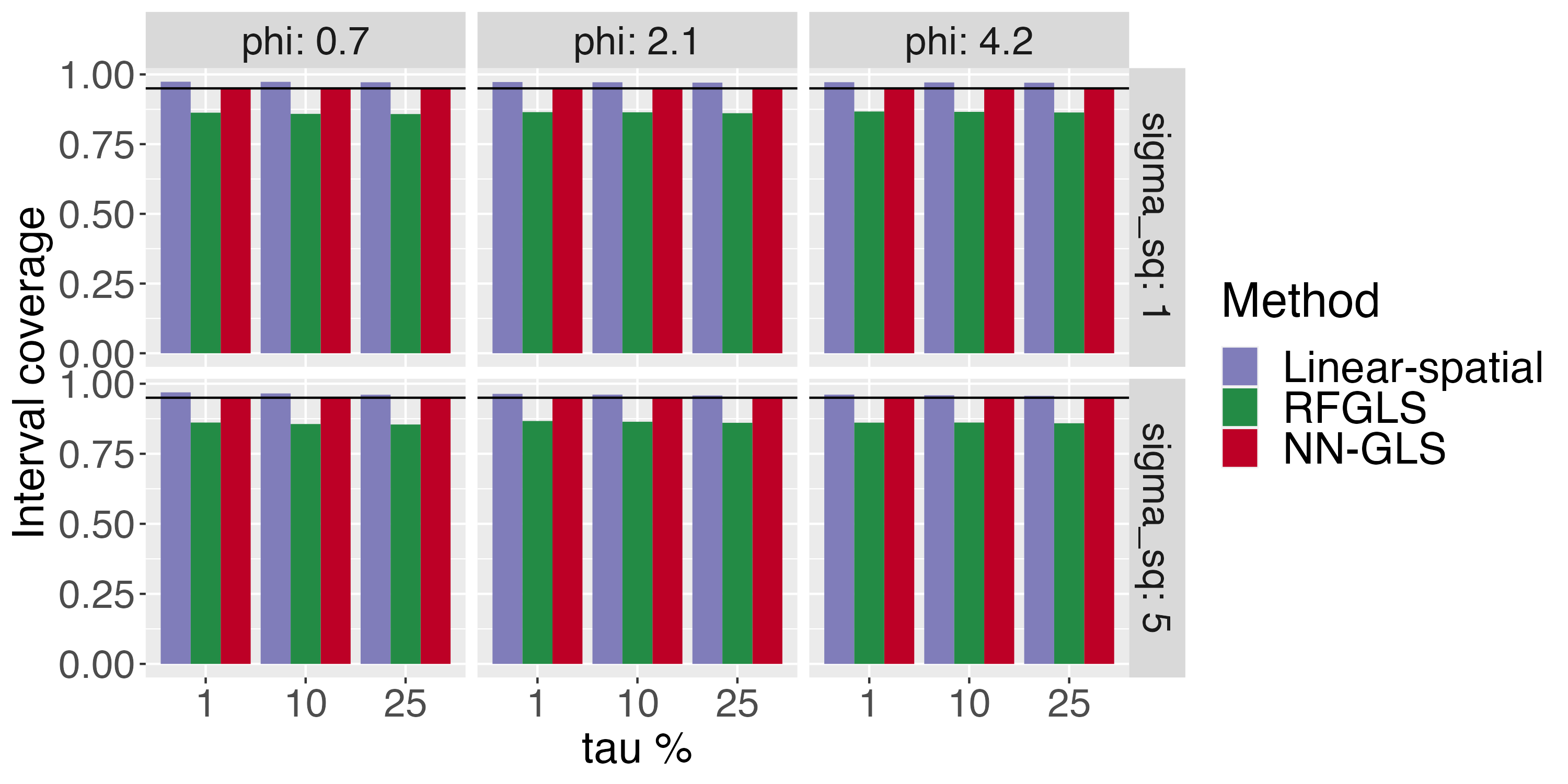}
  \caption{Prediction coverage for $f_0 = f_1$}
\end{subfigure}%
\begin{subfigure}{.5\textwidth}
  \centering
  \includegraphics[width=0.9\linewidth]{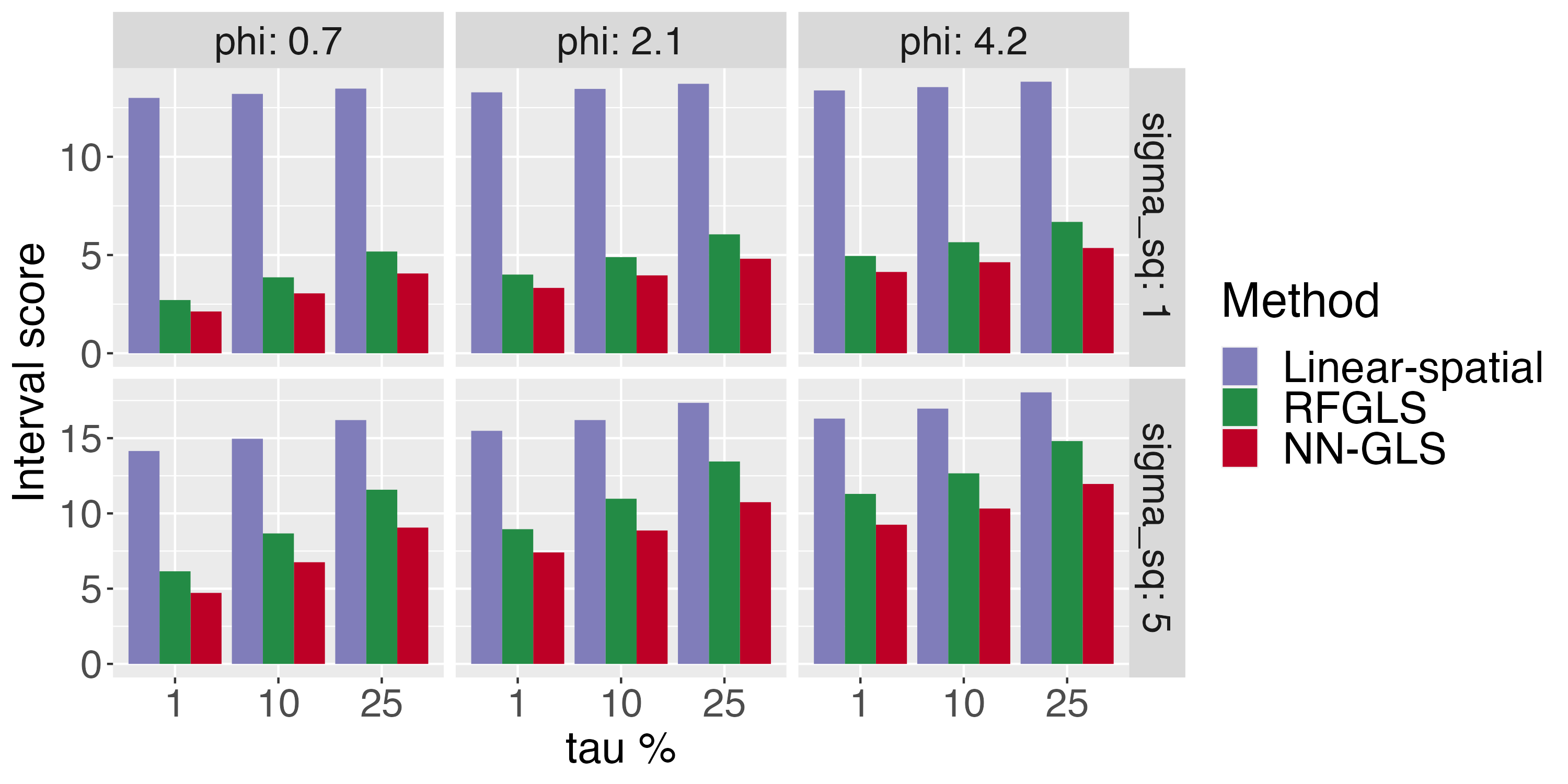}
  \caption{Interval score for $f_0 = f_1$}
\end{subfigure}
\\
\centering
\begin{subfigure}{.5\textwidth}
  \centering
  \includegraphics[width=0.9\linewidth]{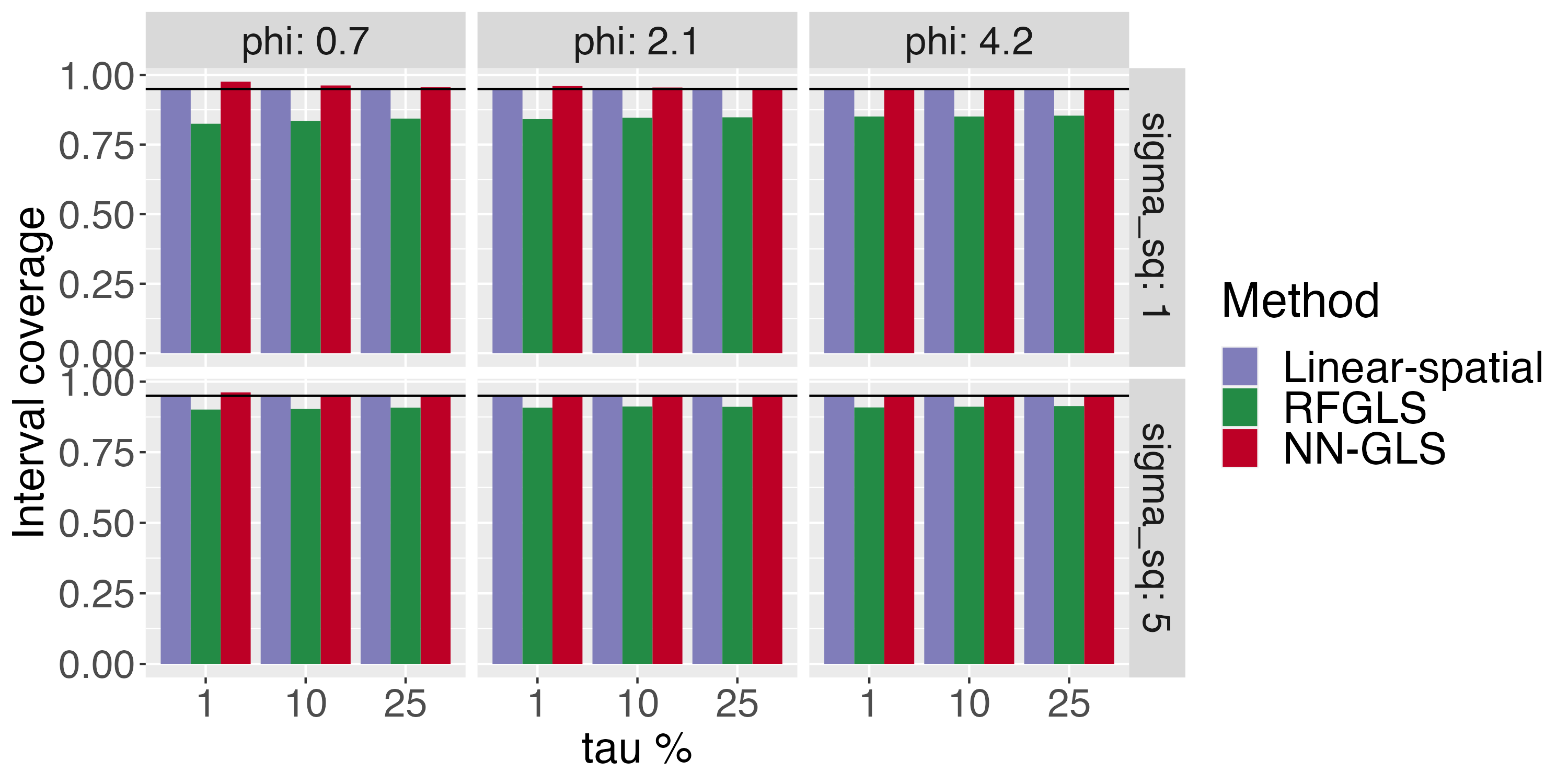}
  \caption{Prediction coverage for $f_0 = f_2$}
\end{subfigure}%
\begin{subfigure}{.5\textwidth}
  \centering
  \includegraphics[width=0.9\linewidth]{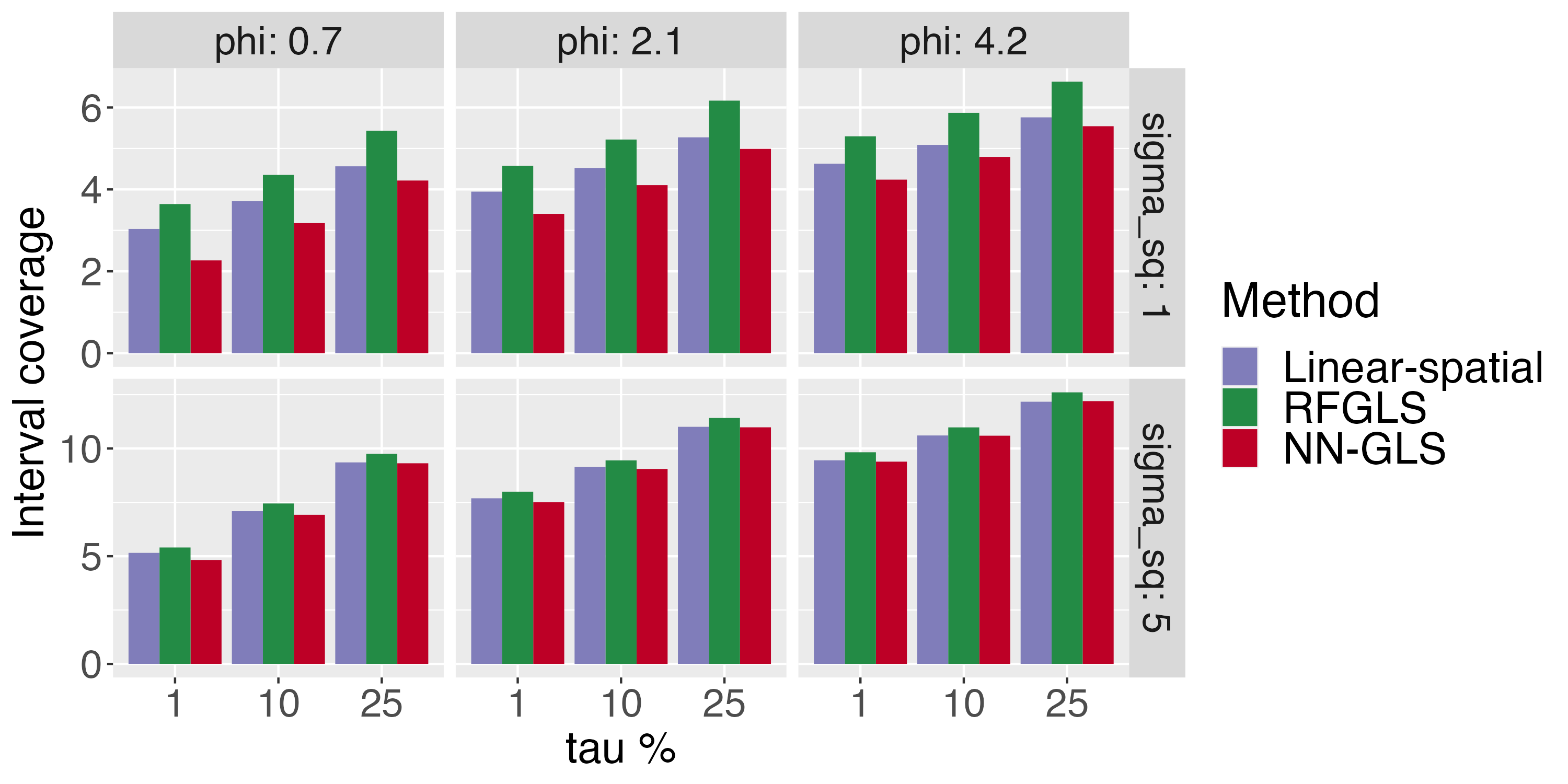}
  \caption{Interval score for $f_0 = f_2$}
\end{subfigure}
\caption{(Section \ref{sec-sim-PI}) Comparing prediction intervals produced by spatial linear prediction, and NN-GLS 
with the mean function being $f_1$ or $f_2$.}
\label{fig-sim-PI}
\end{figure}

In Figure \ref{fig-sim-PI}, among the three methods, RF-GLS consistently suffers from slight under-coverage. This is due to its estimation being worse compared to NN-GLS, as shown in Sections \ref{sec-sim-5} and \ref{sec-sim-noise}). The linear-spatial model and NN-GLS both approach a $95\%$ prediction coverage. However, when $f_0 = f_1$, which is extremely non-linear, the linear-spatial model sacrifices the accuracy, estimating a very large error variance, leading to unacceptably wide intervals and very poor interval scores (\ref{def-interval_score}) (recall that the interval score involves a width term). NN-GLS consistently has near nominal coverage and the best interval scores.} 

\subsection{Large sample performance}\label{sec-sim-large}
\blue{Theoretically, NN-GLS has linear running time by using NNGP covariance and the GNN framework (see Section \ref{sec:gnn}). In this section, we verify this empirically as well as compare different methods in terms of run times and performance as a function of sample sizes going up to $500,000$. 

We first present in Figure \ref{fig-time} the running times of different components of the NN-GLS algorithm against sample sizes in a log-log scale. The training sample size used in the experiments ranges from $100$ to $500000$ with spatial parameters being $(\sigma^2, \phi, \tau^2) = (1, 3/\sqrt{2}, 0.01)$ and the underlying function being the Friedman function. All experiments in this subsection are under an increasing-domain design.

\begin{figure}[!t]
\centering
\begin{subfigure}{.5\textwidth}
  \centering
  \includegraphics[width=0.85\linewidth]{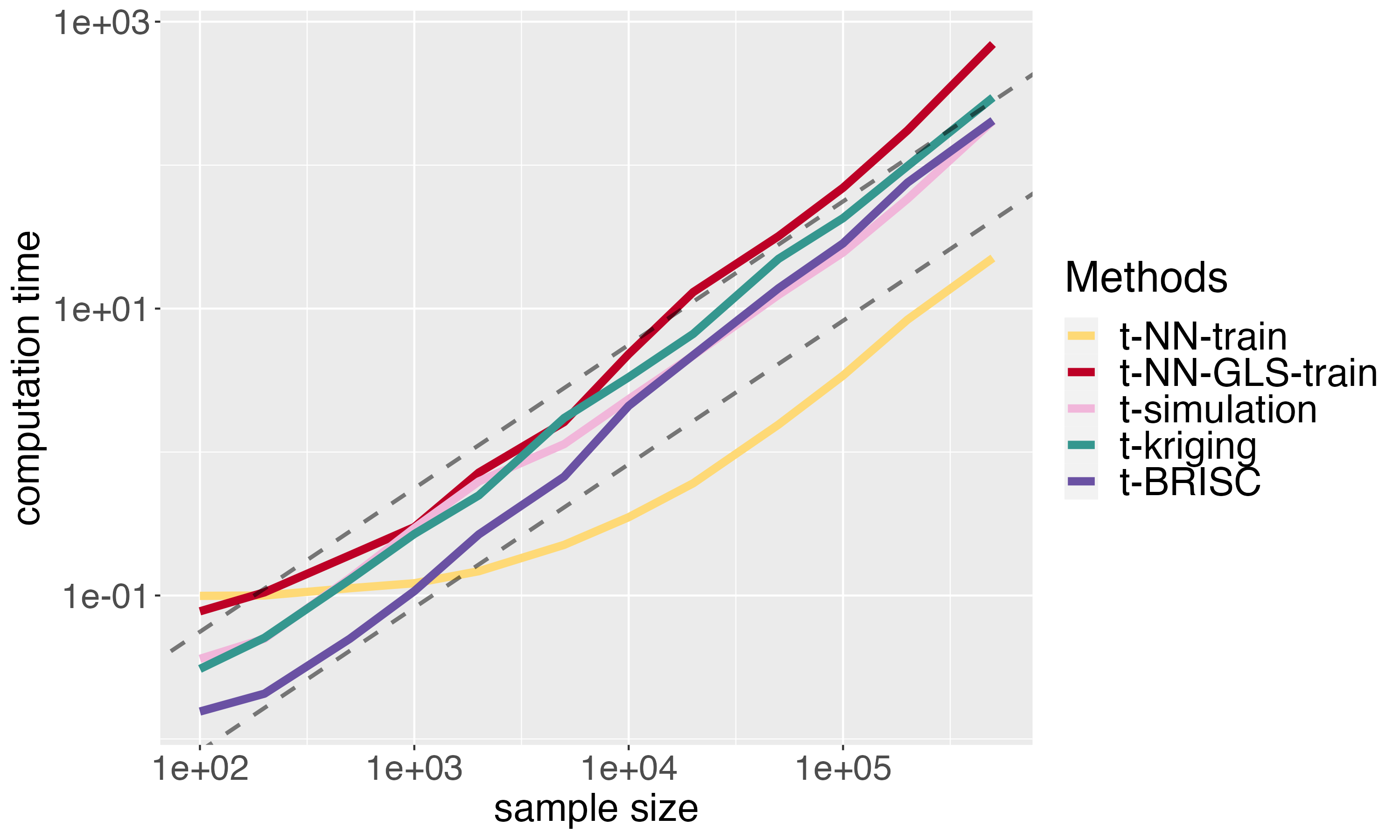}
  \caption{Running time for NN-GLS steps}
\end{subfigure}%
\begin{subfigure}{.5\textwidth}
  \centering
  \includegraphics[width=0.85\linewidth]{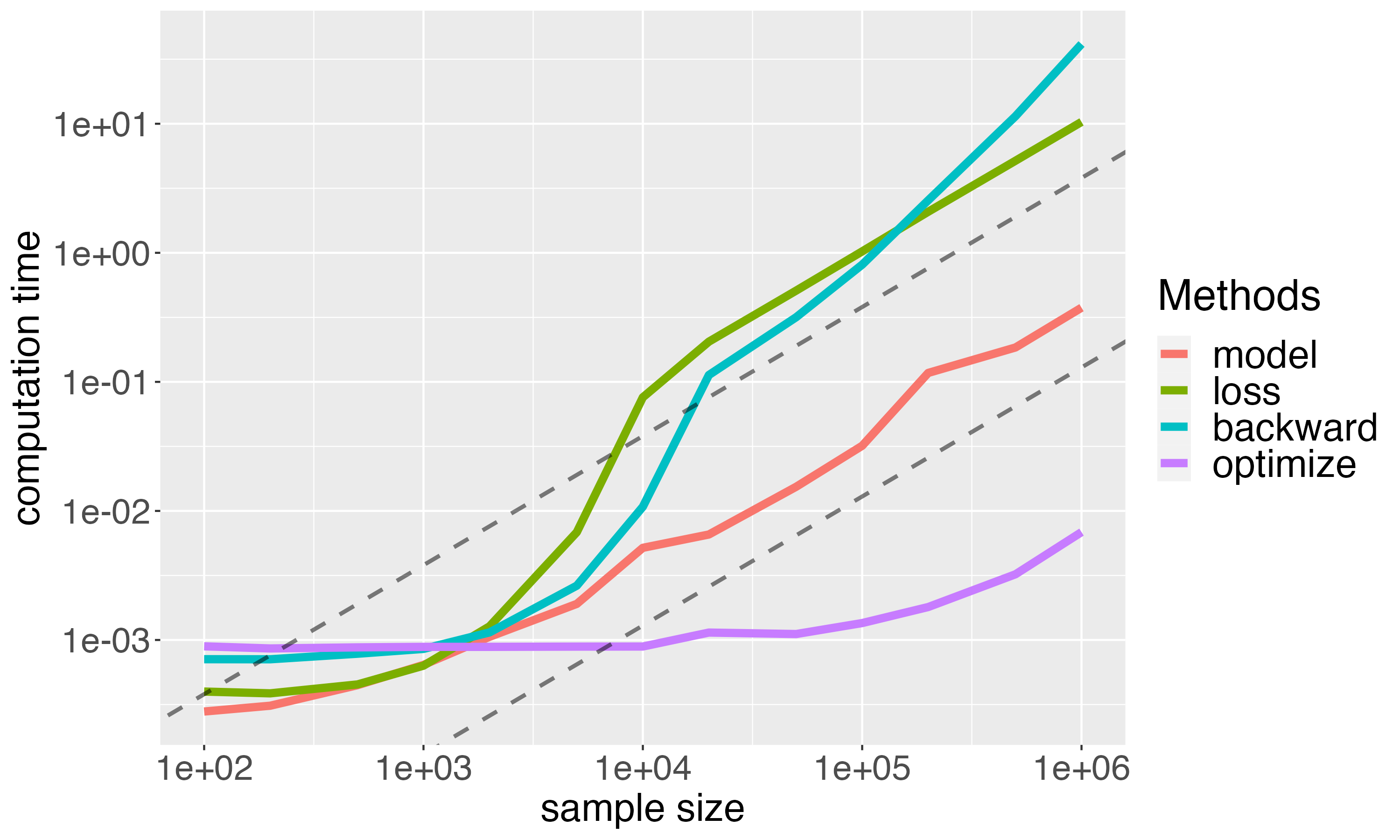}
  \caption{Running time for NN-GLS model training}
\end{subfigure}
\caption{(Section \ref{sec-sim-large}) Log running times for different components of the NN-GLS algorithm as functions of the log 
 sample size. The dashed auxiliary lines represent linear running time.}
\label{fig-time}
\end{figure}

The different components of the NN-GLS run times studied in Figure \ref{fig-time}(a) are: 'Simulation' represents the sample generation; 'NN' represents the step of initial non-spatial training of the NN parameters; 'BRISC' represents the spatial parameter estimation from the initial residual using the BRISC package; 'NN-GLS-train' represents the estimation of parameters in NN-GLS; 'Kriging' stands for the kriging predictions from NN-GLS. We see that run times for most components grow approximately linearly in the sample size $n$, except for the training component for extremely large sample sizes.
Figure \ref{fig-time}(b) breaks this down further, presenting the decomposition of one epoch in NN-GLS training, where 'loss' represents the computation of the GLS loss function; 'backward' stands for the back-propagation step; 'model' represents an evaluation of model's value $\hat{f}(X_{\text{train}})$; and 'optimize' represents a gradient-descent step. We see that the non-linearity comes primarily from the back-propagation step, which is inherently implemented by the Pytorch module and beyond our control as we only specify the GLS loss function and computation of that part is indeed linear. 

Overall, we saw that NN-GLS achieves a linear computational cost up to a sample size of around $100000$, and while there is slight super-linear complexity for larger sizes due to the Python back-propagation module slowing down at these sizes. However, even with its current implementation which is not fully optimized, \blue{NN-GLS can analyze a hundreds-of-thousands-sized sample within an hour without any parallelization}. These results verify the theoretical linear computational complexity and demonstrate the potential of NN-GLS to be scaled for large datasets in the future.}

\begin{figure}[!h]
\centering
\begin{subfigure}{.5\textwidth}
  \centering
  \includegraphics[width=0.85\linewidth]{Simulation/Running_time/runningtime_compare.png}
  \caption{Running time for different methods}
\end{subfigure}%
\begin{subfigure}{.5\textwidth}
  \centering
  \includegraphics[width=0.85\linewidth]{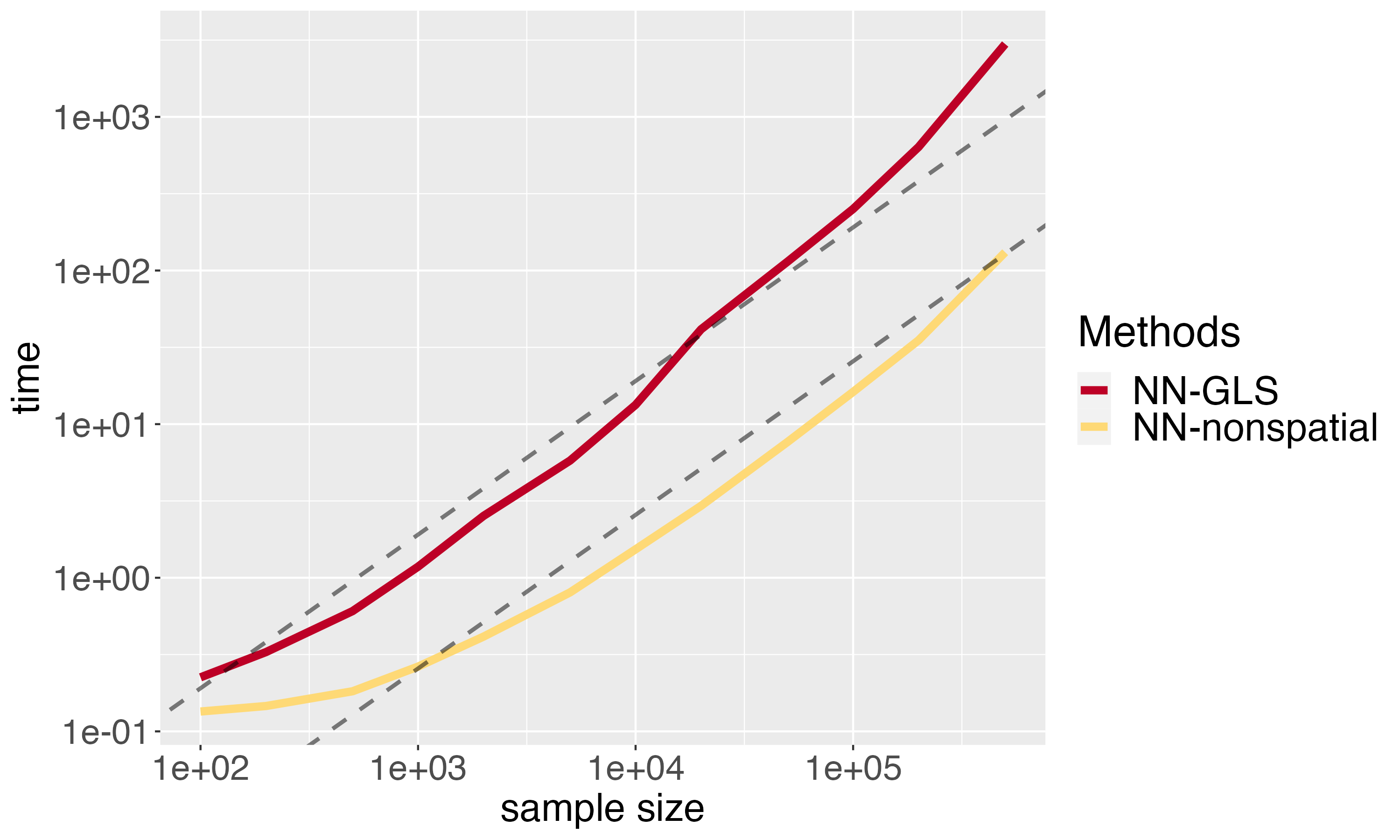}
  \caption{Running time for non-spatial NN and NN-GLS with fixed number (10) of epochs}
\end{subfigure}
\\
\centering
\begin{subfigure}{.5\textwidth}
  \centering
  \includegraphics[width=0.85\linewidth]{Simulation/Running_time/MISE_compare.png}
  \caption{MISE for different methods}
\end{subfigure}
\caption{(Section \ref{sec-sim-large}) Log running times and estimation loss (MISE) as functions of the log sample size. The dashed auxiliary lines represent linear running time. \blue{Both RF-GLS and GAM-GLS's curves are truncated due to the time and memory limit. Subfigure (a) and (c) are the same as Figure \ref{fig-sim-main} (e) and (f).}}
\label{fig-time2}
\end{figure}

\blue{Additionally in Figure \ref{fig-time2}, we compare the running times and MISE as a function of the sample size for NN-GLS and all the other methods mentioned in Table \ref{tab:methods} that offer an estimate of the mean function -- GAM, GAM-GLS, RF, RF-GLS, and linear spatial model. Unsurprisingly, among the non-linear methods, the run times for the OLS based approaches (RF, GAM, NN-non-spatial) are the smallest as they use the OLS loss. Among the GLS based methods, NN-GLS is the fastest while RF-GLS and GAM-GLS do not scale linearly and could not be used for the larger sample sizes. This is because GAM-GLS currently do not use NNGP which can potentially speed it up in the future, and RF-GLS, despite using NNGP, requires brute force evaluation of the GLS loss function for each value of the covariate, leading to super-linear run-times. 

We note that NN-GLS's running time seems to have a non-linear trend at around $n = 10000$ because the number of epochs also increases at that interval. 
Figure \ref{fig-time2} (b) shows the running time of both non-spatial NN and NN-GLS with a fixed number of epochs, where a clear linear trend is observed. NN-GLS is of course slower than the non-spatial-NN which uses the OLS loss. However, from our experience, compared with NN-GLS, non-spatial NN usually takes more epochs before converging.

In terms of estimation, the MISE plot in Figure \ref{fig-time2} (c) shows that NN has the lowest approximation error and NN-GLS uniformly dominates all the other methods even at very large sample sizes. The GAM methods converge to non-zero estimation error on account of not being able to model the interaction term in the Fridman function. The linear model, unsurprisingly, converges to the highest MISE.} 

\subsection{Performance comparison with denser observations}\label{sec-sim-5000}
\blue{
We also assess the impact of sample size under denser sampling within a fixed domain. This study supplements the theoretical results and previous numerical experiments which all correspond to increasing domain asymptotics where the sampling density does not increase. In Figures  \ref{fig-sim-5000-dim1} and \ref{fig-sim-5000-dim5}, the performance comparison among the competing methods is demonstrated under a sample size of $5000$ in the same spatial domain as considered earlier for lower sample sizes. 
The result of RF-GLS could not be shown in the figures due to the scalability issue (see Section \ref{sec-sim-large}). 
\begin{figure}[htbp]
\centering
\begin{subfigure}{1\textwidth}
  \centering
  \includegraphics[width=0.9\linewidth]{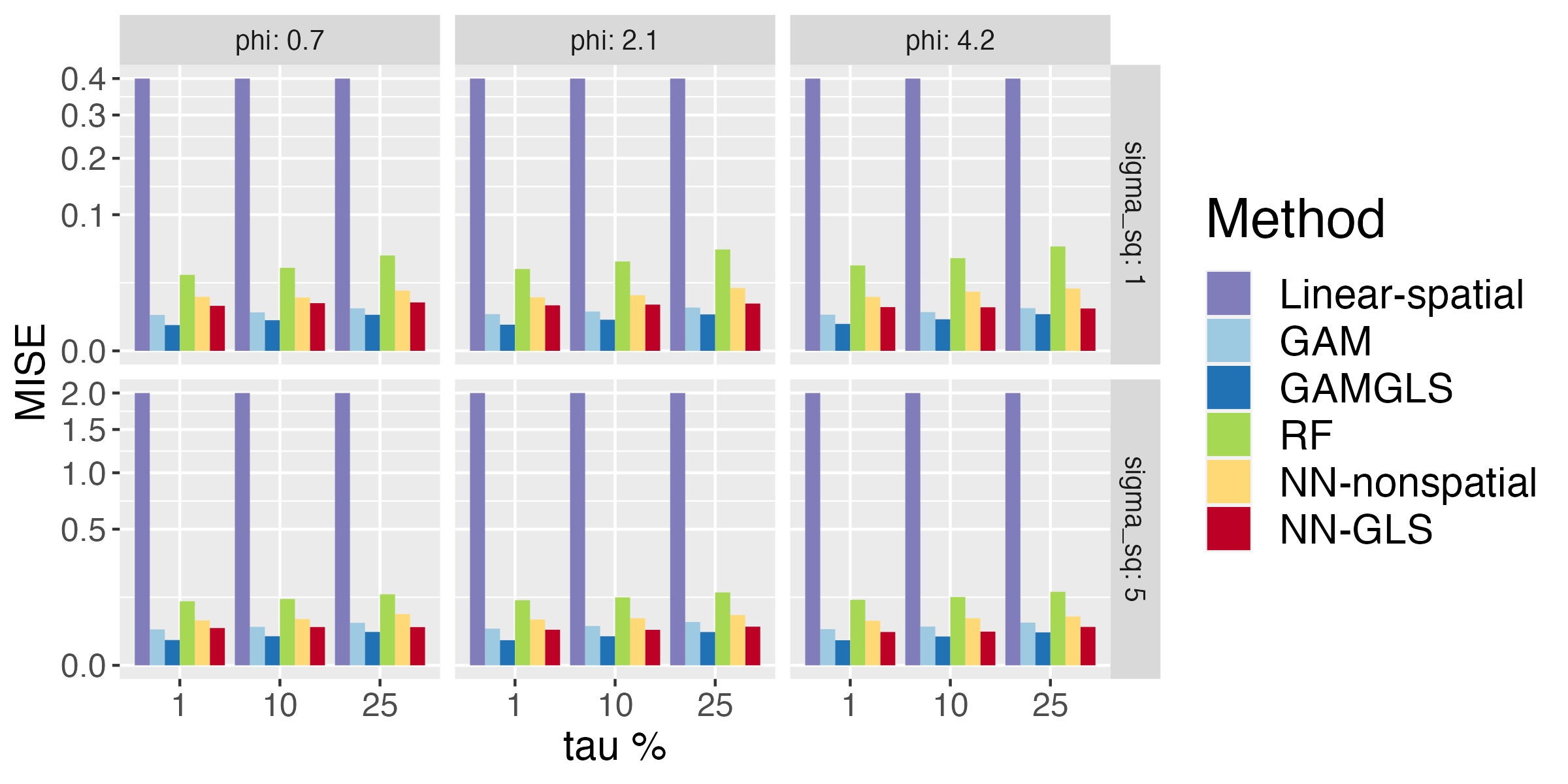}
  \caption{Estimation performance for $f_1$*}
\end{subfigure}%
\\
\centering
\begin{subfigure}{1\textwidth}
  \centering
  \includegraphics[width=0.9\linewidth]{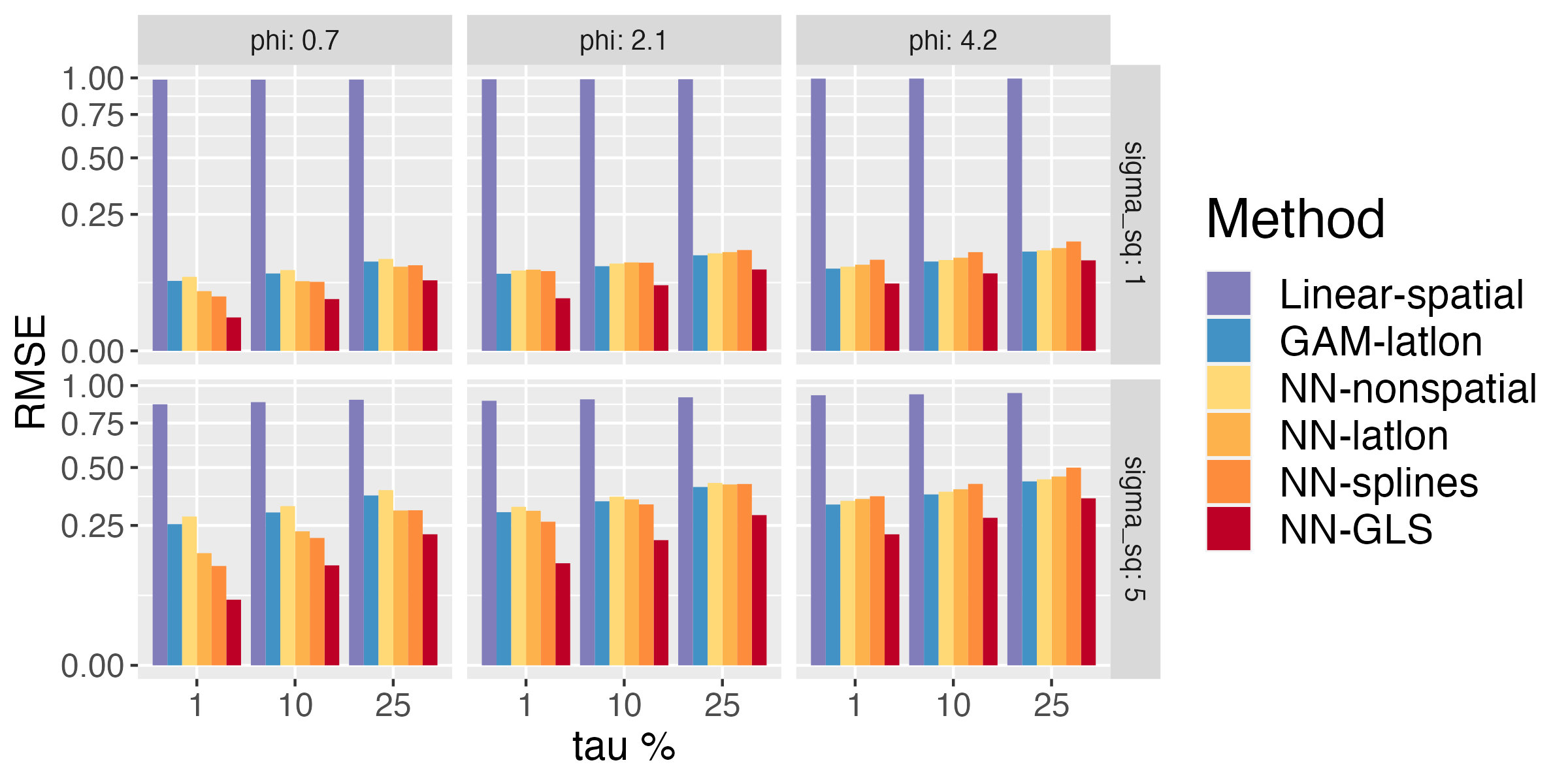}
  \caption{Prediction performance for $f_1$}
\end{subfigure}
\caption{\blue{(Section \ref{sec-sim-5000}) Comparison between competing methods on (a) estimation and (b) spatial prediction
when the mean function is $f_0 = f_1$ under denser sampling.} \blue{We add a * in figure (a) as the MISE for the linear-spatial model (which was very large) had to be truncated for better illustration of the performance of the other methods.}}
\label{fig-sim-5000-dim1}
\end{figure}

\begin{figure}[!t]
\centering
\begin{subfigure}{1\textwidth}
  \centering
  \includegraphics[width=0.9\linewidth]{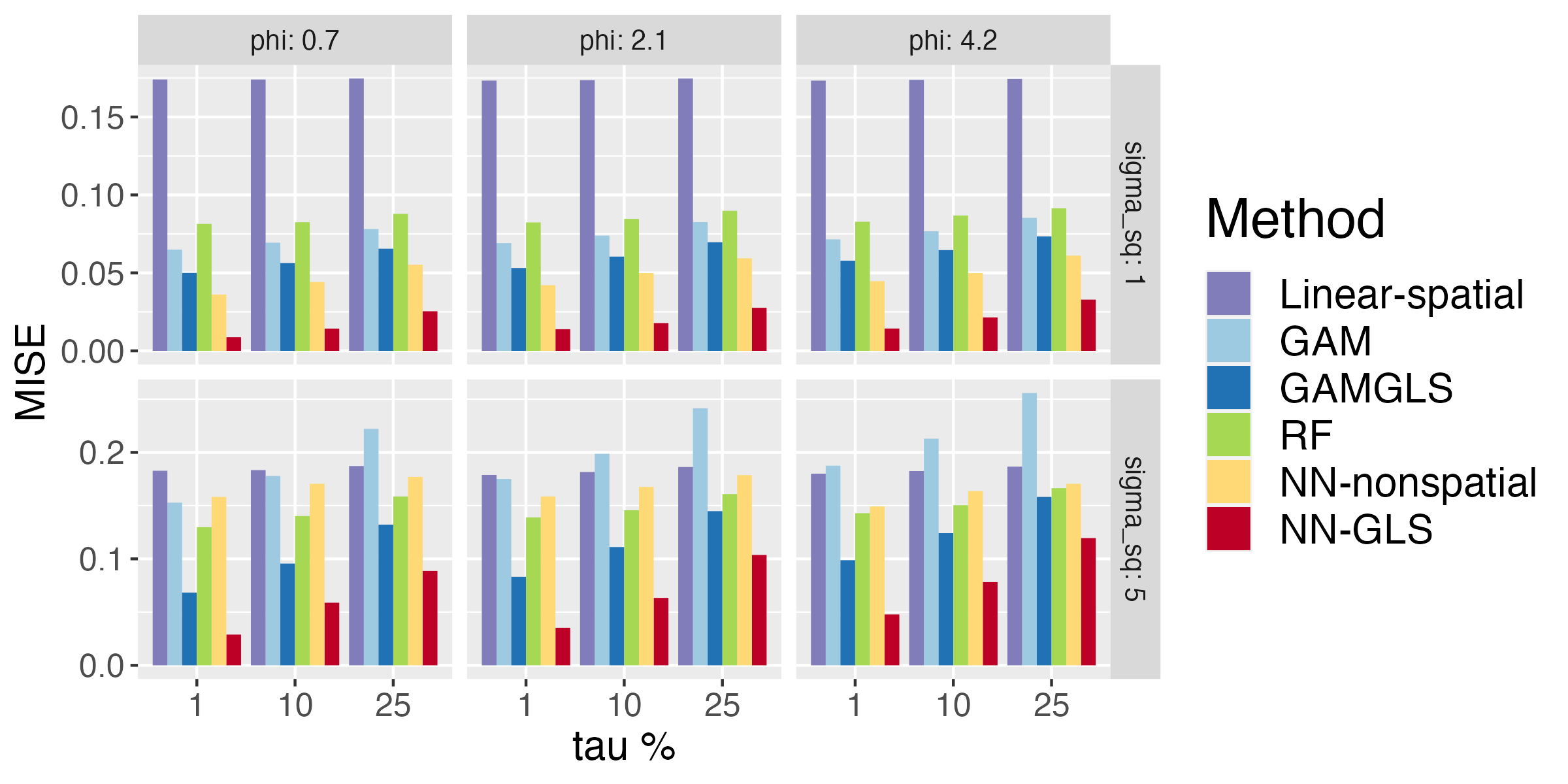}
  \caption{Estimation performance for $f_2$}
\end{subfigure}%
\\
\centering
\begin{subfigure}{1\textwidth}
  \centering
  \includegraphics[width=0.9\linewidth]{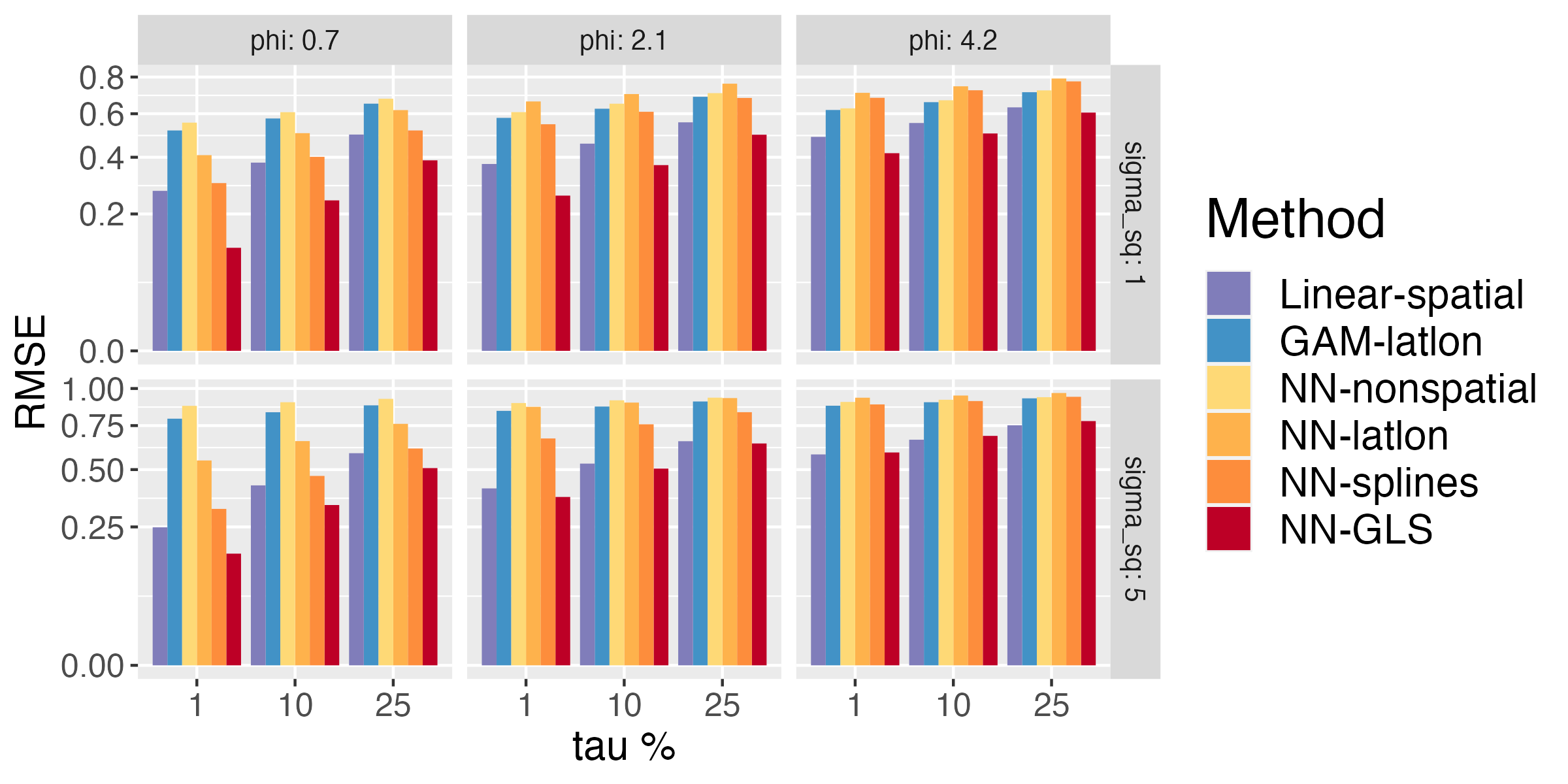}
  \caption{Prediction performance for $f_2$}
\end{subfigure}
\caption{\blue{(Section \ref{sec-sim-5000}) Comparison between competing methods on (a) estimation and (b) spatial prediction
when the mean function is $f_0 = f_2$ under denser sampling.}}
\label{fig-sim-5000-dim5}
\end{figure}

According to the result, we conclude that denser observation will reduce the estimation error of all the methods. In comparing Figure \ref{fig-sim-5000-dim5} (a) with Figure \ref{fig-sim-1}(a) and Figure \ref{fig-sim-large-noise-dim5}(a), we see that the MISE converges to the approximation error when the sample density grows. For example, it is known that GAM is misspecified in this case for the Friedman function due to the interaction term, which implies its larger approximation error compared with the NN family. However, the relationship is not fully reflected in Figure \ref{fig-sim-large-noise-dim5} (a). As was discussed previously, the approximation error is masked by the estimation error when the sample size is small. The increased sample density here uncovers the former by reducing the latter and demonstrates that GAM-GLS performs worse than the NN-GLS for estimation of the mean. The same thing happens for $f_1$, for the simple sine function, all non-linear methods achieve an almost ignorable MISE while the linear method produces extremely large errors.  

Thus, even though our theory is based on the increasing-domain settings, NN-GLS provides the best or comparable to the best performance under fixed domain sampling.}

\newpage
\subsection{Performance comparison between NN-splines and NN-GLS}\label{sec-sim-NN}
\blue{
This section shows the consistent advantages of NN-GLS over NN-splines under different settings. We consider $f_0 = f_2$ and increase sample sizes in a fixed domain. For the latent spatial effect, we consider two scenarios: first, when the spatial effect is generated from an exponential Gaussian process, and when spatial effects are sampled from one fixed smooth surface (we use a single realization of a Gaussian process surface with parameters $(\sigma^2, \phi, \tau^2) = (1, 3/\sqrt{2}, 0.01)$). In the first case, NN-GLS is correctly specified whereas NN-slines is not, in the second case it is the other way around. 


\begin{figure}[!h]
\centering
\begin{subfigure}{.5\textwidth}
  \centering
  \includegraphics[width=0.9\linewidth]{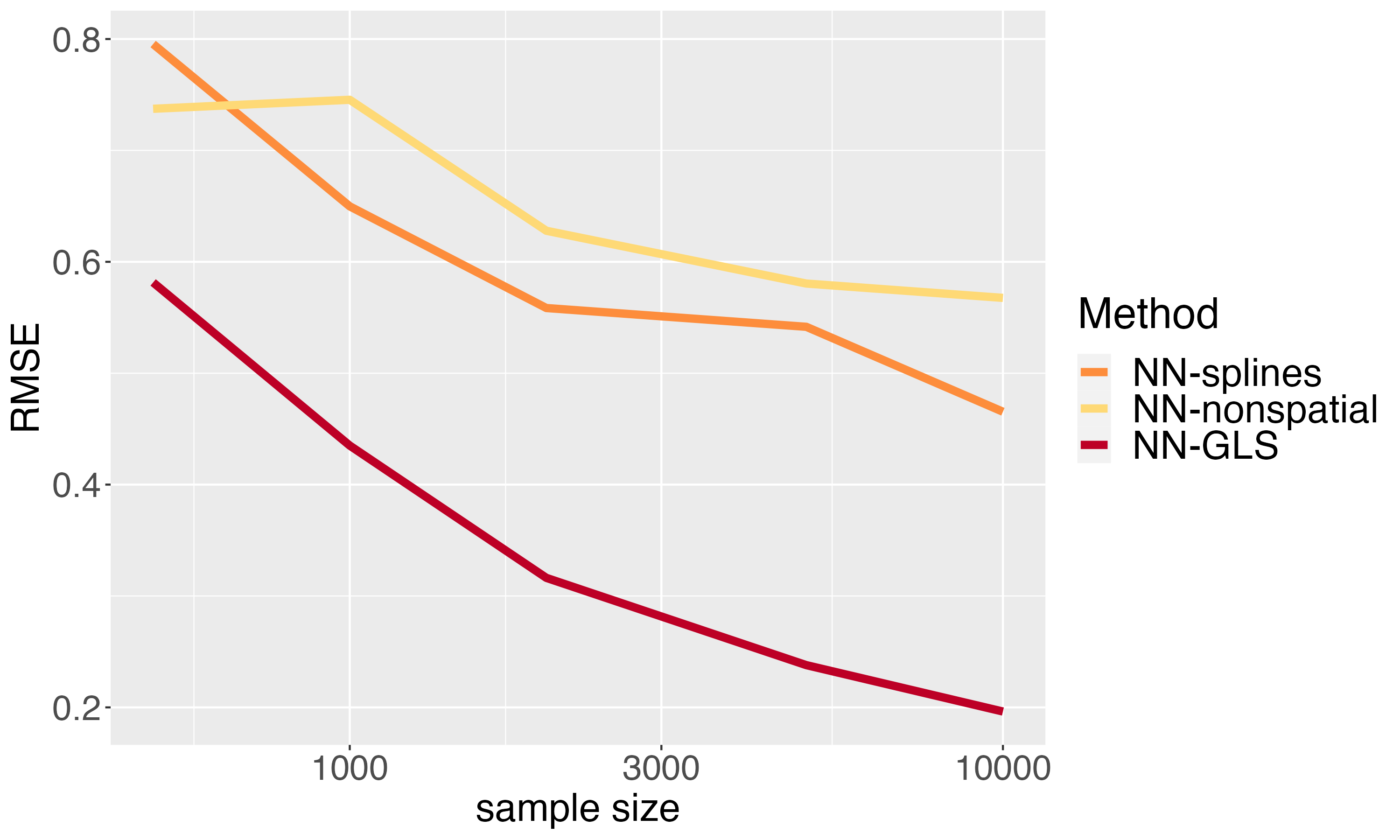}
  \caption{Prediction under normal simulation}
\end{subfigure}%
\begin{subfigure}{.5\textwidth}
  \centering
  \includegraphics[width=0.9\linewidth]{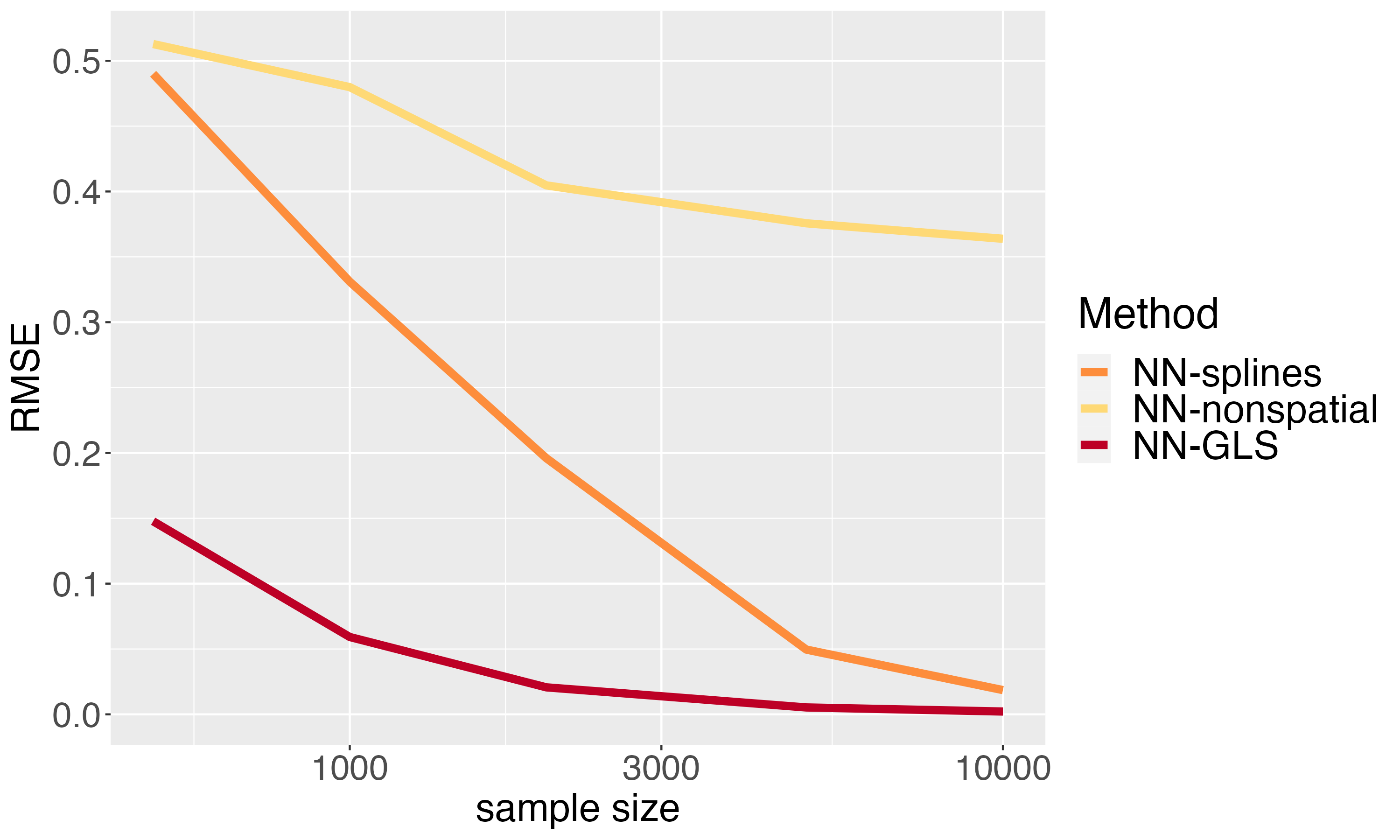}
  \caption{Prediction under fixed-surface simulation}
\end{subfigure}
\caption{(Section \ref{sec-sim-NN}) Comparing prediction performance for NN-splines and NN-GLS. Subfigure (b) is the same as Figure \ref{fig-sim-main} (b)}
\label{fig-sim-NN}
\end{figure}

According to Figure \ref{fig-sim-NN}, when the sample size increases within a fixed domain, the prediction performance of both NN-GLS and NN-splines improves since more spatial information can be aggregated from closer neighboring observations. NN-GLS significantly benefits from its parsimony when the sample size is small having considerably smaller RMSE than NN-splines for both scenarios. 
With an increasing sample size, in the first scenario (Figure \ref{fig-sim-NN} (a)) where the spatial effect is sampled randomly as a Gaussian process,  NN-splines fails to capture all the spatial patterns. Thus while NN-splines are very non-parametric in theory and should be able to capture any spatial trend given a large number of basis functions and enough data, in practice we see its performance falls short of NN-GLS even for large sample sizes. In the second case (Figure \ref{fig-sim-NN} (a)), where NN-splines is correctly specified, it catches up to NN-GLS when the sample size increases. 
However, even then the RMSE of NN-GLS is uniformly smaller than NN-splines, despite NN-GLS being misspecified. We postulate that for NN-splines to improve, 
both the complexity of splines or basis functions used and the network architecture design should increase with the sample size. In both cases, NN-GLS benefits from the parsimony of using GP, specified using a fixed and small number of spatial covariance parameters and without requiring any additional design change (like number of basis functions) to accommodate the increasing sample size.
}

\newpage
\subsection{Performance comparison between NN-GLS and NN-GLS (Oracle)}\label{sec-sim-orc}
NN-GLS estimates the spatial parameters as part of its algorithm, since in practice we don't know the true spatial parameters. 
In order to see how the estimation of the spatial parameters affects the function estimation of NN-GLS, we fix spatial parameters to the true values before the training, skip the parameter updating, and calculate the MISE (we call this approach {\em NN-GLS oracle}). Figure \ref{fig-sim-oracle} shows the MISE comparison between standard NN-GLS and its oracle version.  
A trivial expectation is that NN-GLS `oracle' will generally perform better since the model got more precise model information. This is indeed the case. However, the difference is minimal across the parameter combinations, with no systematic trend. In conclusion, the additional task of spatial parameter estimation does not contribute significantly to the total estimation error.

\begin{figure}[!h]
\centering
\begin{subfigure}{.5\textwidth}
  \centering
  \includegraphics[width=0.9\linewidth]{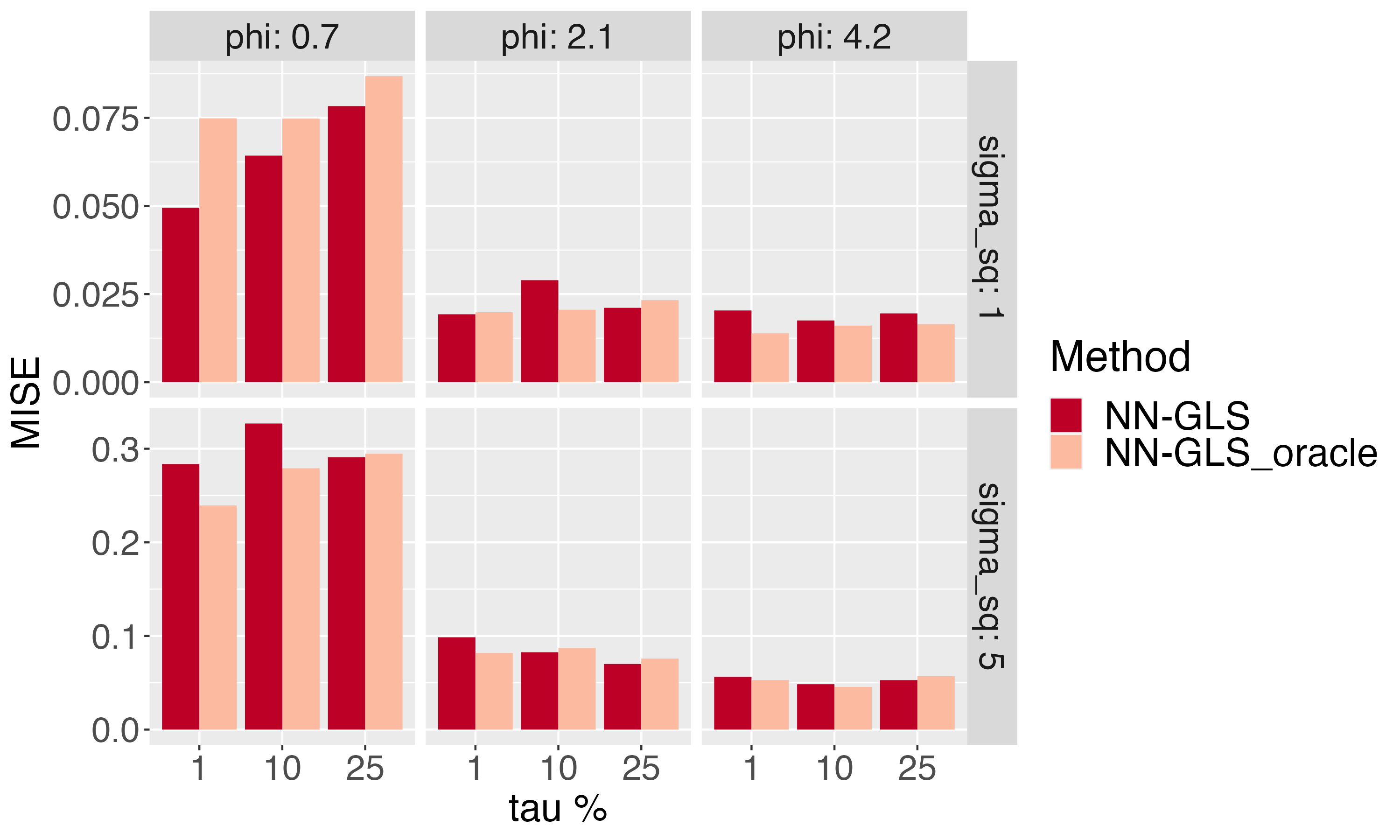}
  \caption{$f_0 = f_1$}
\end{subfigure}%
\begin{subfigure}{.5\textwidth}
  \centering
  \includegraphics[width=0.9\linewidth]{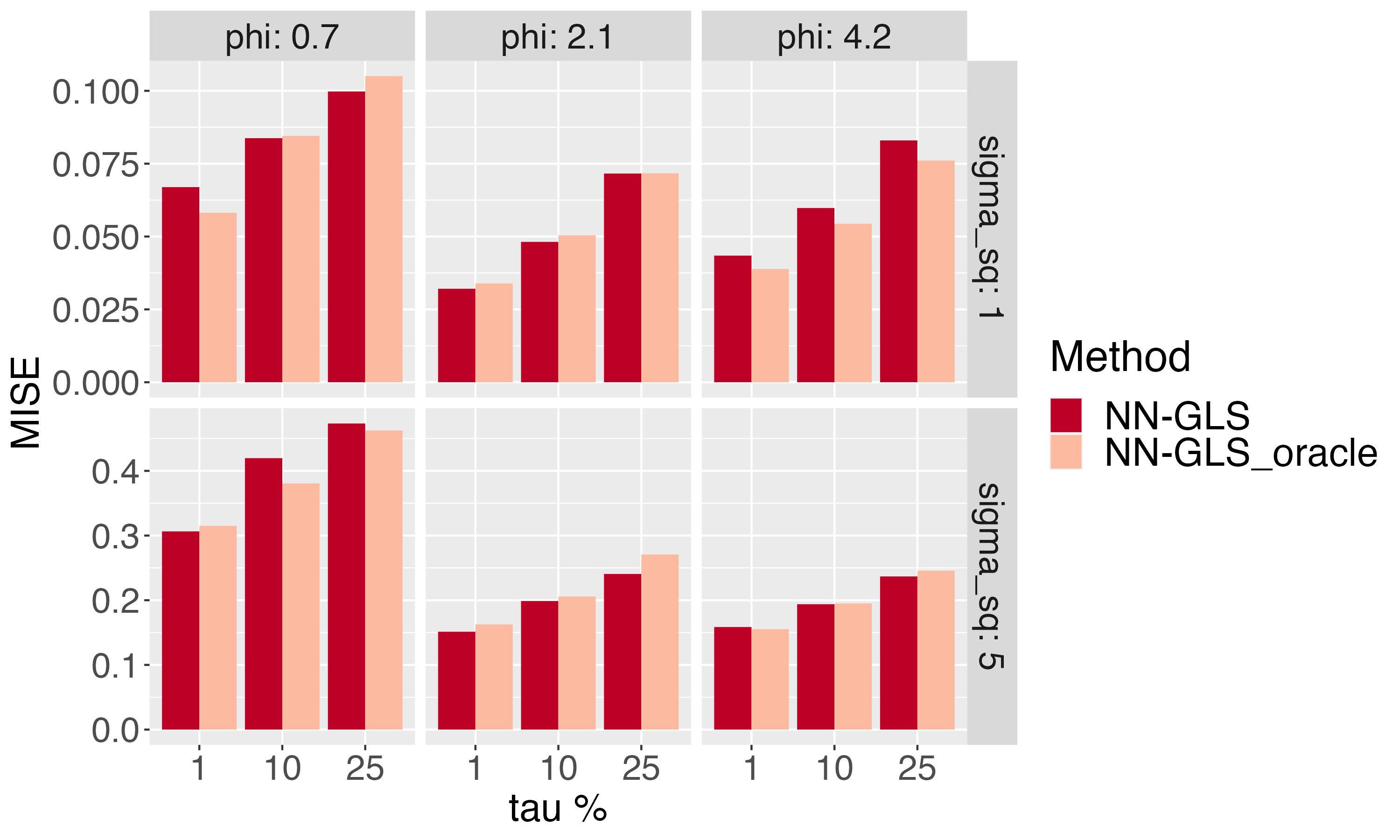}
  \caption{$f_0 = f_2$}
\end{subfigure}
\caption{(Section \ref{sec-sim-orc}) Comparison between NN-GLS and NN-GLS (Oracle) on estimation when the mean function
is (a) $f_0 = f_1$ and (b) $f_0 = f_2$.}
\label{fig-sim-oracle}
\end{figure}

\subsection{Higher dimensional function}\label{sec-sim-hd}
We also replicate the simulation steps for $f_0 = f_1$ and $f_0 = f_2$ on a $15$-dimensional function:
\[
\begin{split}
f_3(x) &= \bigg(10\sin(\pi x_1x_2)+20(x_3 - 0.5)^2+10x_4+5x_5 + \frac{3}{(x_6+1)(x_7+1)} + 4\exp(x_8^2)\\ 
&+ 30x_9^2+x_{10} + 5\big(\exp(x_{11})\cdot \sin(\pi x_{12}) + \exp(x_{12})\cdot \sin(\pi x_{11})\big)\\ 
&+ 10x_{13}^2\cdot\cos(\pi x_{14}) + 20 x_{15}^4\bigg)/6.
\end{split}
\]
As is shown in Figure \ref{fig-sim-dim15}, NN-GLS still has an advantage over other methods in terms of both estimation and prediction performances. However, for estimation, the decrease in MISE for this $15$-dimensional function is not as significant for higher spatial correlations (low $\phi$, top left panel of Figure \ref{fig-sim-dim15} (a)) as for the $5$-dimensional Friedman function (top left panel of Figure \ref{fig-sim-1} (a)). This possibly suggests that a higher dimensional function is more challenging to estimate in moderate-sized datasets, and the estimation error can often dominate the improvement of NN-GLS over the non-spatial NN.

\begin{figure}[htbp]
\centering
\begin{subfigure}{1\textwidth}
  \centering
  \includegraphics[width=0.9\linewidth]{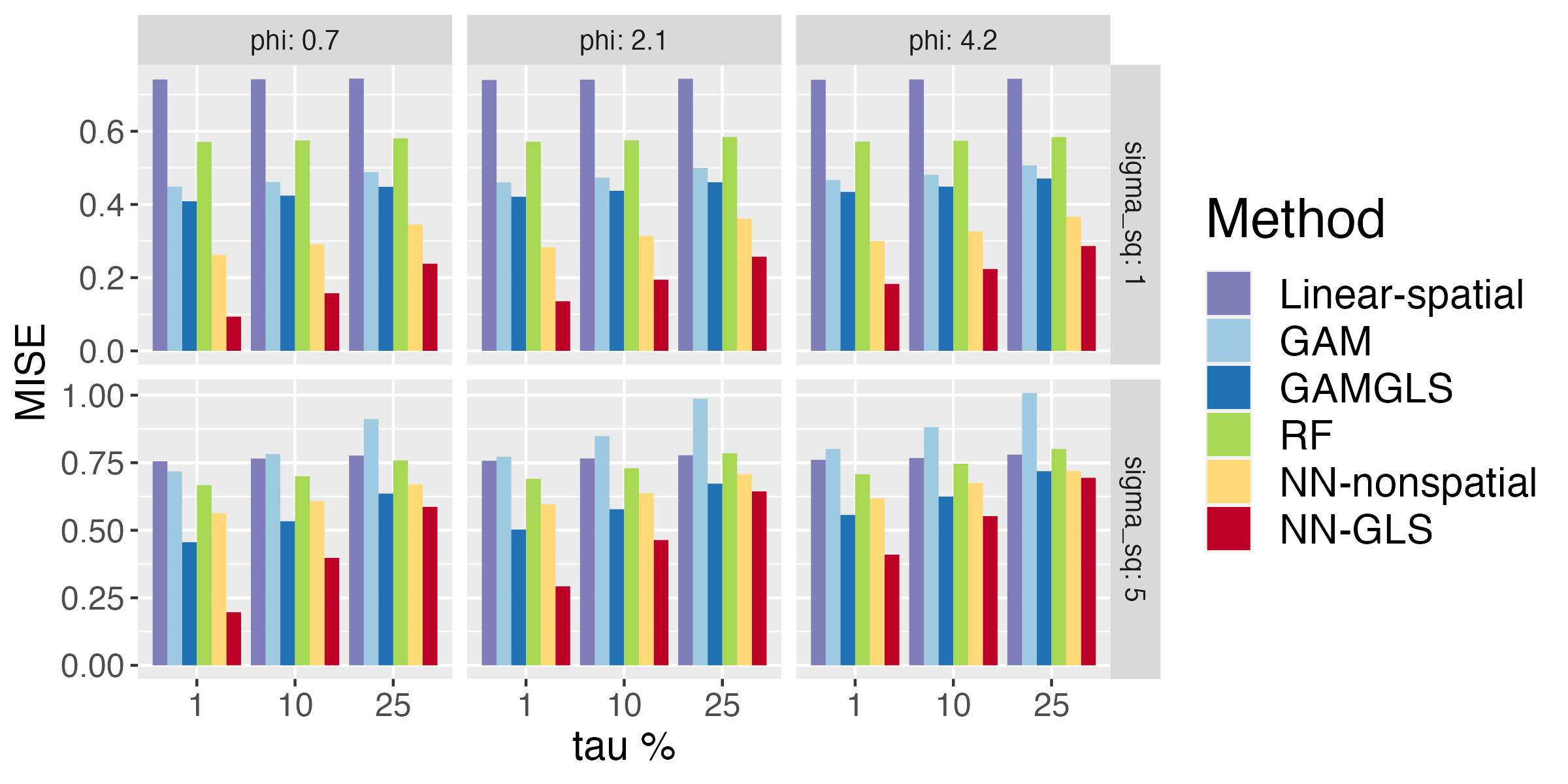}
  \caption{Estimation performance}
\end{subfigure}%
\\
\centering
\begin{subfigure}{1\textwidth}
  \centering
  \includegraphics[width=0.9\linewidth]{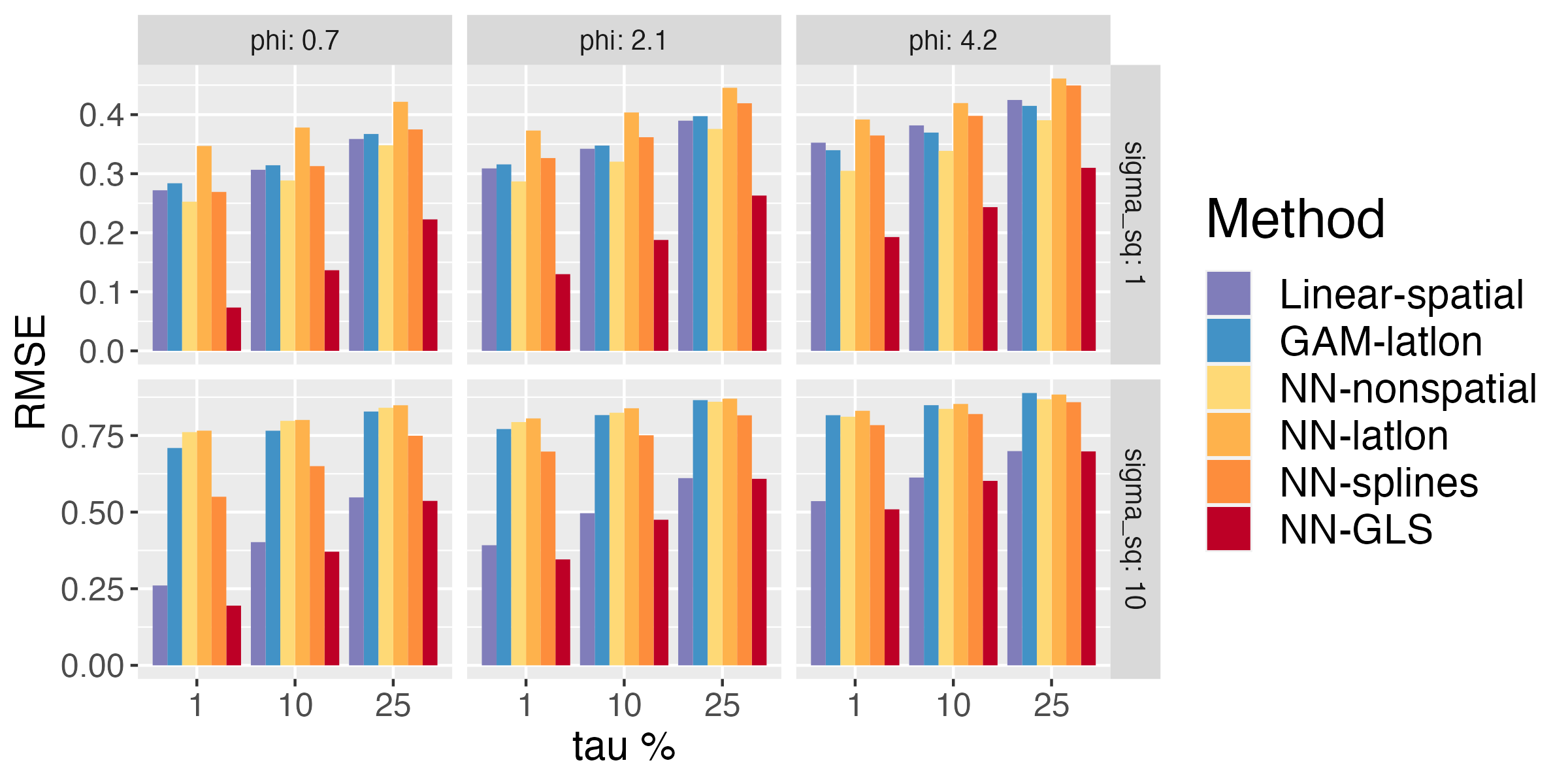}
  \caption{Prediction performance}
\end{subfigure}
\caption{(Section \ref{sec-sim-hd}) Comparison between competing methods on (a) estimation and (b) spatial prediction
when the mean function is $f_0 = f_3$ (higher dimensional setting).}
\label{fig-sim-dim15}
\end{figure}

\newpage
\subsection{Model misspecification: misspecified Mat\'{e}rn covariance}\label{sec-sim-mis1}
In the previous simulation setups, we assumed that data were generated from the model where the parametric spatial covariance family was correctly specified. Such an assumption plays a key role in the estimation as we use the likelihood from using this covariance family for updating the parameters, which involves the covariance matrix in an explicit form. In this subsection, we run NN-GLS on the data under misspecification of the GP covariance family.

We simulate the data from a Gaussian process with a Mat\'{e}rn covariance with $\nu = 1.5$ and fit NN-GLS by assuming an exponential covariance structure. Note that the exponential covariance is a special case of Mat\'{e}rn family by taking $\nu = 0.5$. The values of $\nu$ imply that we are expecting less smoothness in the model compared with the truth.
In the implementation, we replicate the setups in Section \ref{sec-sim}, with all the $27$ combinations of spatial parameters, and NN-GLS is modeled with an exponential covariance structure. 

The results are presented in Figure \ref{fig-matern-1} (for $f_0=f_1$) and Figure \ref{fig-matern-2} (for $f_0=f_2$). In terms of estimation, for the true function $f_1$, NN-nonspatial gives a comparable performance to NN-GLS because of $f_1$'s simple structure. The improvement earned by considering spatial covariance is small here because the parameterization is also incorrect here. However, for $f_2$, the advantage of NN-GLS is evident, especially for low spatial variance (low $\sigma^2$). For prediction, NN-GLS outperforms other methods significantly for small noise-to-signal ratio $\tau \%$ and shows convincing advantages with all the parameter combinations.  
\begin{figure}[htbp]
\centering
\begin{subfigure}{1\textwidth}
  \centering
  \includegraphics[width=0.9\linewidth]{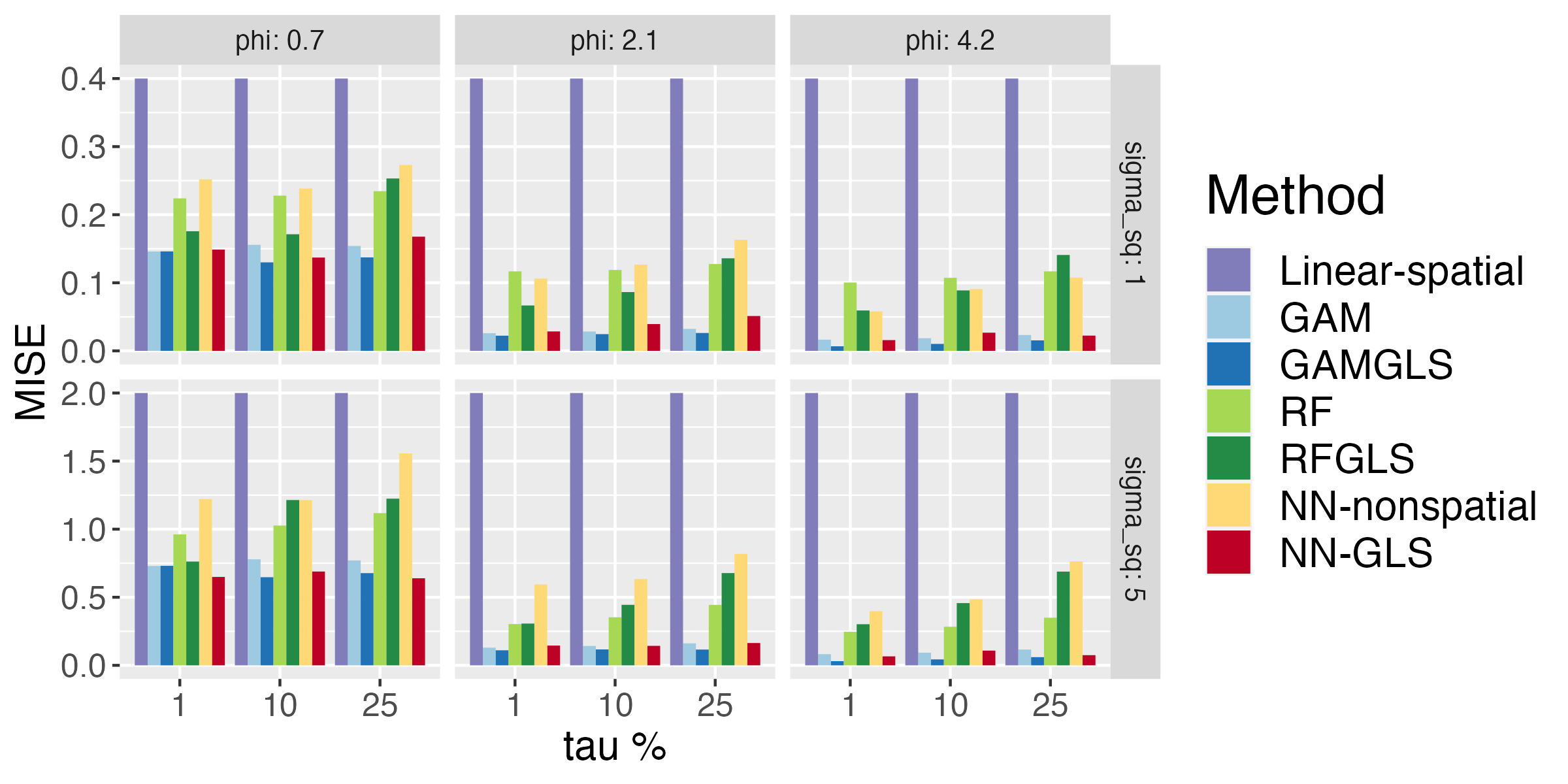}
  \caption{Estimation performance*}
\end{subfigure}%
\\
\centering
\begin{subfigure}{1\textwidth}
  \centering
  \includegraphics[width=0.9\linewidth]{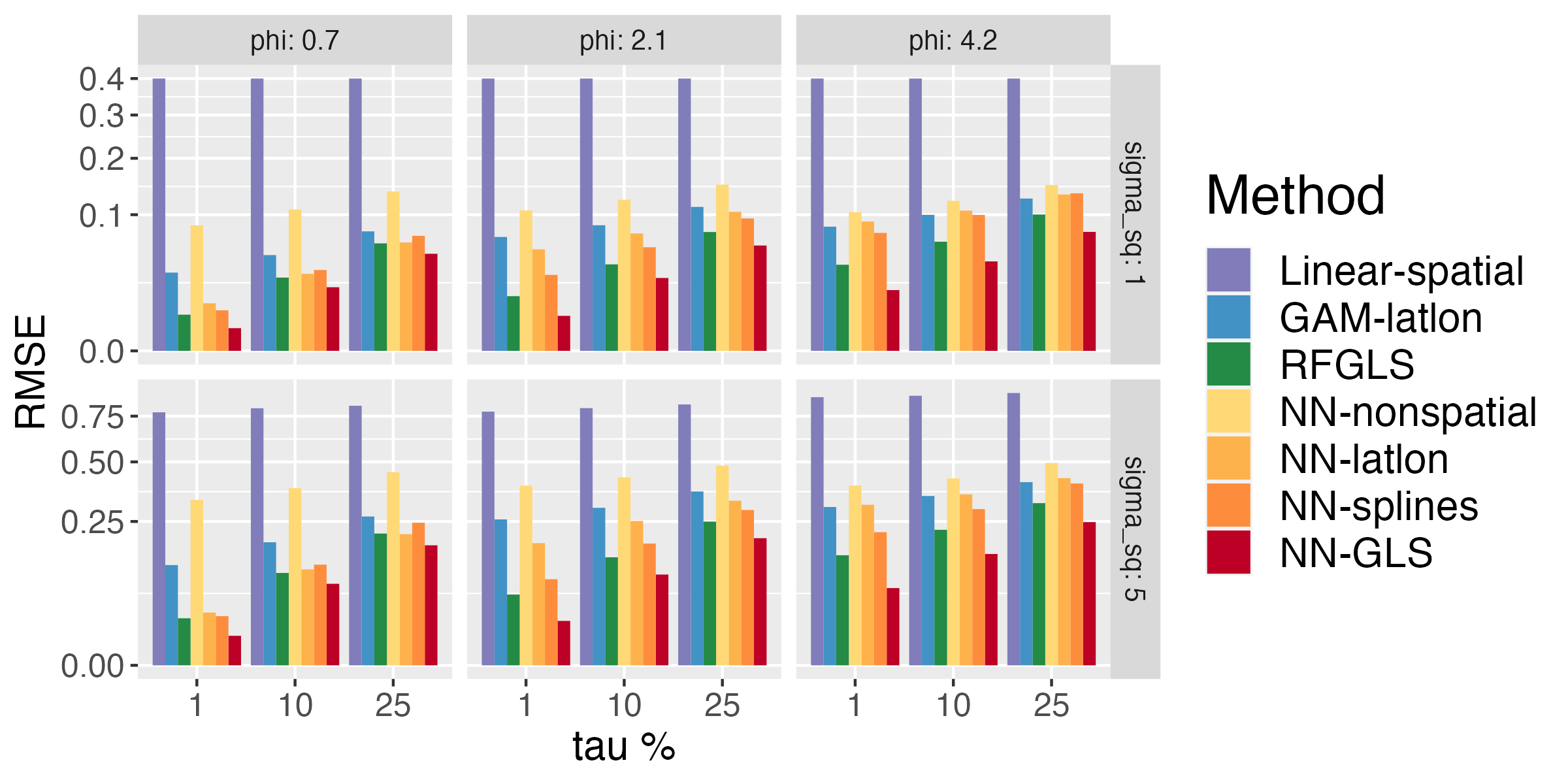}
  \caption{Prediction performance*}
\end{subfigure}
\caption{(Section \ref{sec-sim-mis1}) Comparison between competing methods on (a) estimation and (b) spatial prediction
when the spatial surface $\omega(s)$ is generated from Mat\'{e}rn covariance with smoothness $\nu=1.5$ and mean function is $f_0 = f_1$. \blue{We add a * in figure (a) as the MISE and RMSE for the linear-spatial model (which was very large) had to be truncated for better illustration of the performance of the other methods.}}
\label{fig-matern-1}
\end{figure}
\begin{figure}[htbp]
\centering
\begin{subfigure}{1\textwidth}
  \centering
  \includegraphics[width=0.9\linewidth]{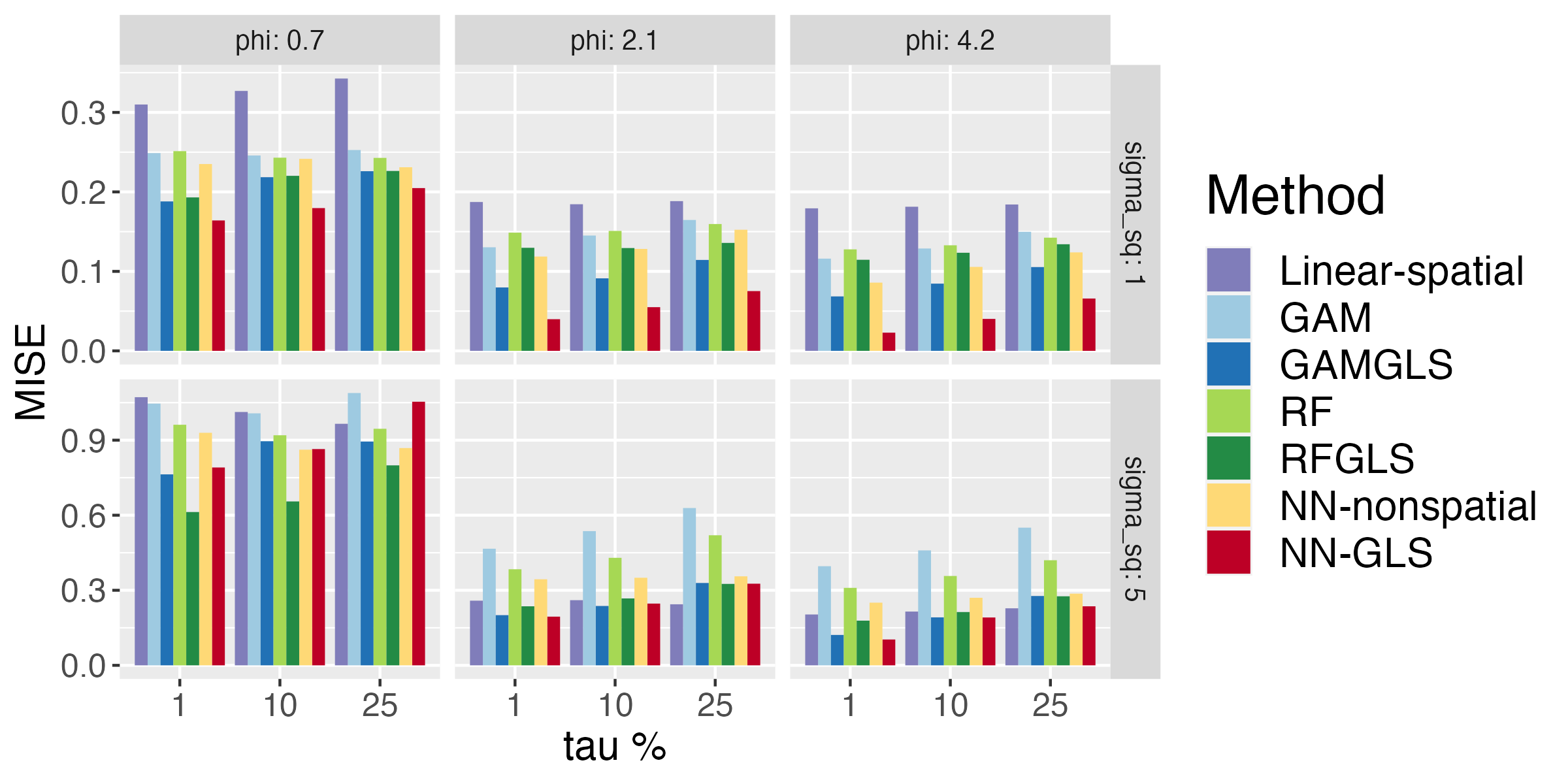}
  \caption{Estimation performance}
\end{subfigure}%
\\
\centering
\begin{subfigure}{1\textwidth}
  \centering
  \includegraphics[width=0.9\linewidth]{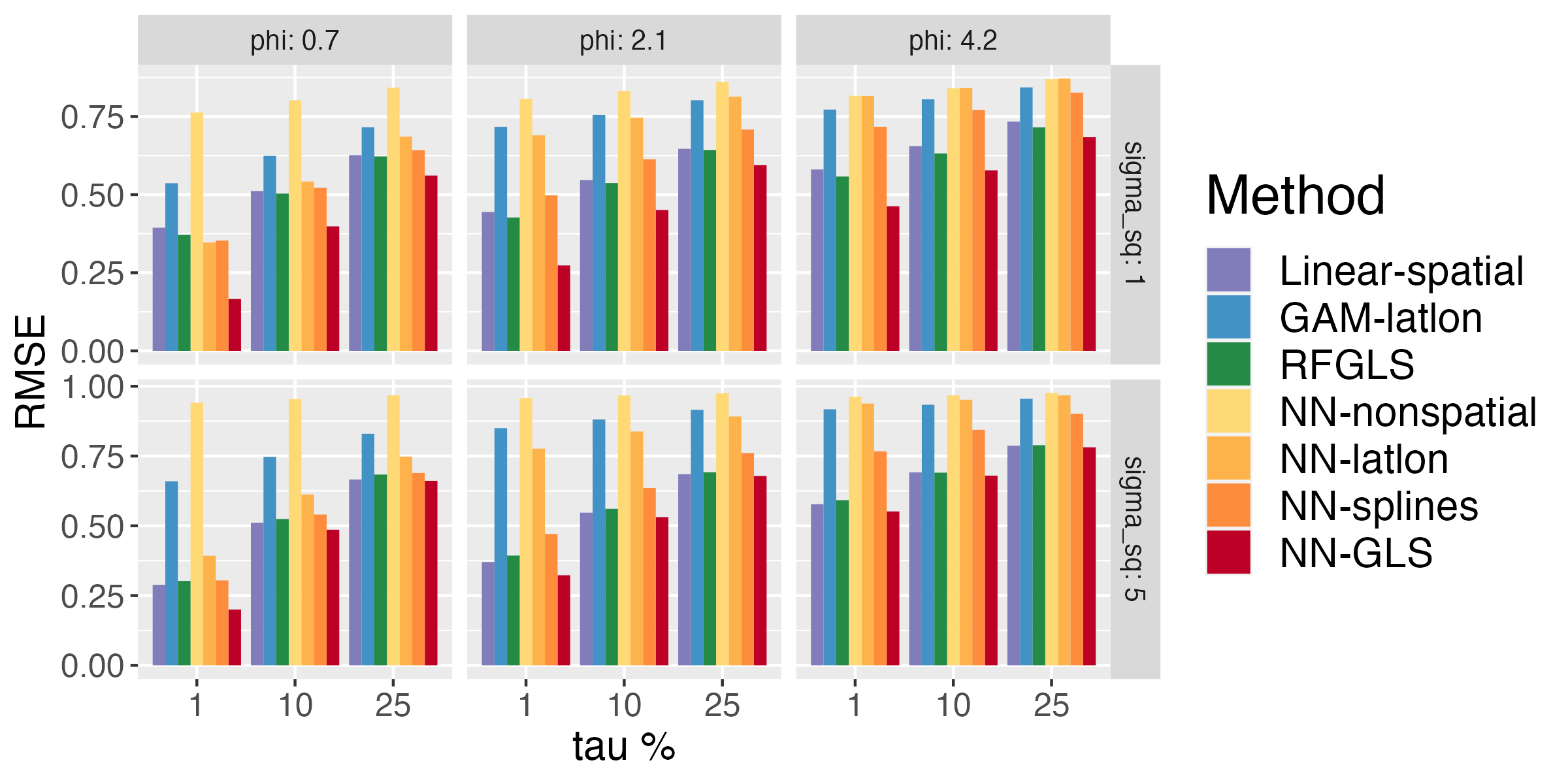}
  \caption{Prediction performance}
\end{subfigure}
\caption{(Section \ref{sec-sim-mis1}) Comparison between competing methods on (a) estimation and (b) spatial prediction
when the spatial surface $\omega(s)$ is generated from Mat\'{e}rn covariance and mean function is $f_0 = f_2$.}
\label{fig-matern-2}
\end{figure}

\newpage
\subsection{Model misspecification: misspecified spatial effect}\label{sec-sim-mis2}
We can further consider the scenario where the spatial effect is generated from a fixed spatial surface instead of a Gaussian process. 
In particular, the surface is generated as follows, we first define a Gaussian process $w(\cdot)$ following the exponential model called parent process and realize it at $100$ randomly selected locations $s = (s_1, \cdots, s_{100})$. Then the conditional expectation (kriged prediction) given $w(s)$ and the spatial covariance function gives a spatial surface. Equivalently, this surface is a realization of a Gaussian predictive process \citep{banerjee2008gaussian} and is shown in Figure \ref{fig-fix-shape} for one set of parameter values.
 \begin{figure}[!h]
 \centering
\includegraphics[scale=0.6]{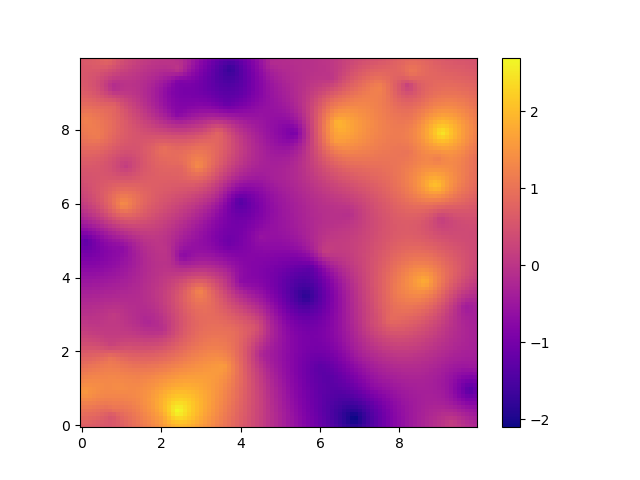}
\caption{(Section \ref{sec-sim-mis2}) Spatial surface generated when $\sigma^2, \phi, \tau^2 = (1, 1/\sqrt{2}, 0.01)$}
\centering
\label{fig-fix-shape}
\end{figure}

We experimented on both $f_1$ and $f_2$, both with the parameters \blue{$\sigma^2 \in \{1, 5\}$, $\phi \in \{1/\sqrt{2}, 3/\sqrt{2}, 6/\sqrt{2}\}$ and $\tau^2/\sigma^2 \in \{0.01, 0.1, 0.25\}$, i.e. 18 combinations in total}, which here are parametrizing the 'parent processes' in the Gaussian predictive process context. Exponential covariance structure is assumed in NN-GLS's implementation. The results are shown in Figure \ref{fig-mis-fix-dim1} to \ref{fig-mis-fix-dim5}. As is clearly shown in the plots, NN-GLS earns considerable advantages over NN. The fixed surface is equivalent to having a mean function with two additional covariates (the spatial coordinates). This information is missed by NN-nonspatial and leads to serious bias in estimation. The advantage is more significant for $f_0=f_2$ and for a smaller noise variance  $\tau^2$. In prediction, NN-nonspatial performs extremely poorly. 
{\em Added-spatial-features} approaches like NN-latlon and NN-splines mitigate the situation by involving spatial predictors in the neural network, but still, struggle in scenarios with strong spatial signals. NN-GLS  outperforms the others consistently, and the gain is more substantial than the scenarios in the previous sections. 

\begin{figure}[htbp]
\centering
\begin{subfigure}{1\textwidth}
  \centering
  \includegraphics[width=0.9\linewidth]{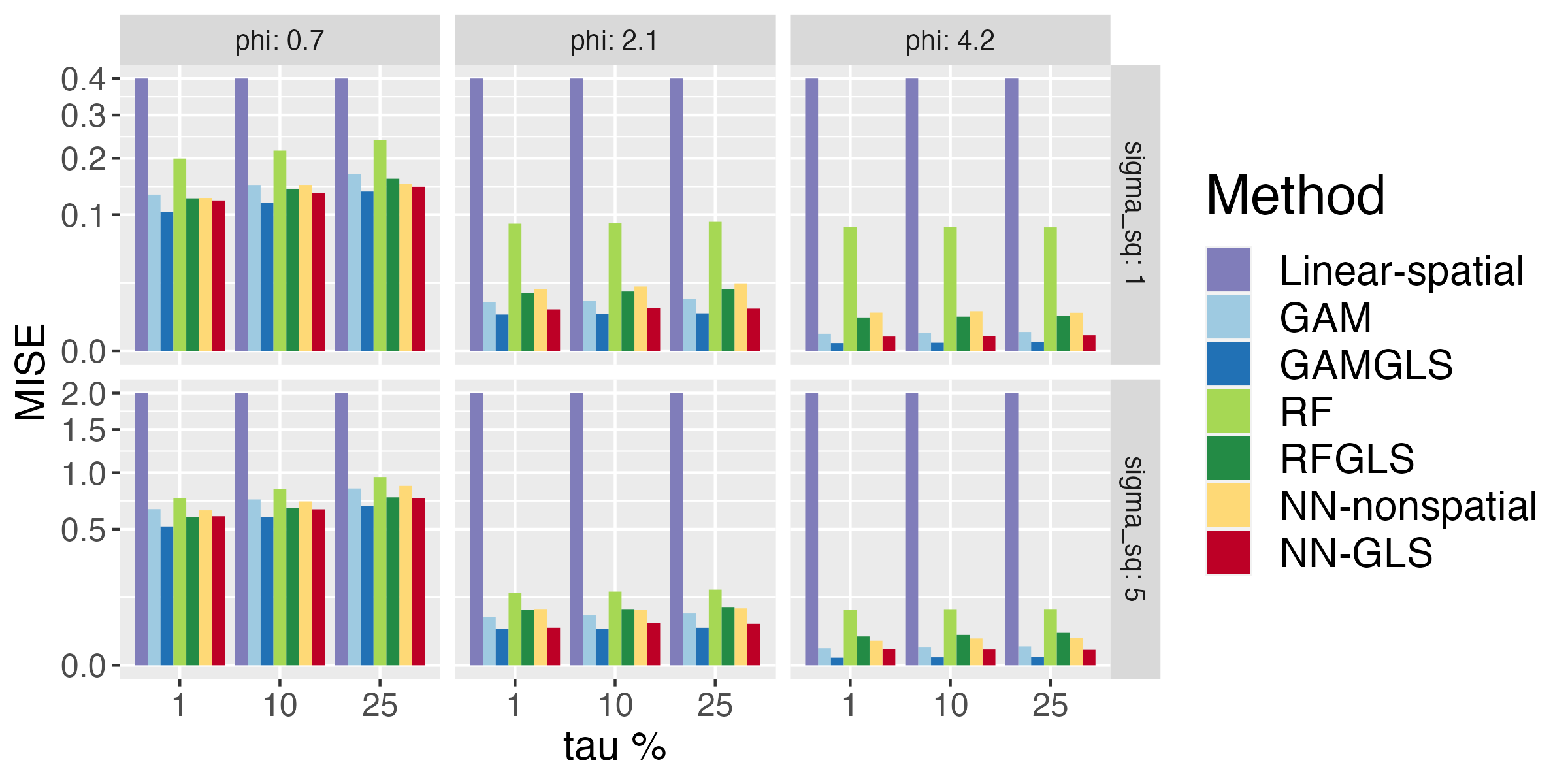}
  \caption{Estimation performance for $f_1$*}
\end{subfigure}%
\\
\centering
\begin{subfigure}{1\textwidth}
  \centering
  \includegraphics[width=0.9\linewidth]{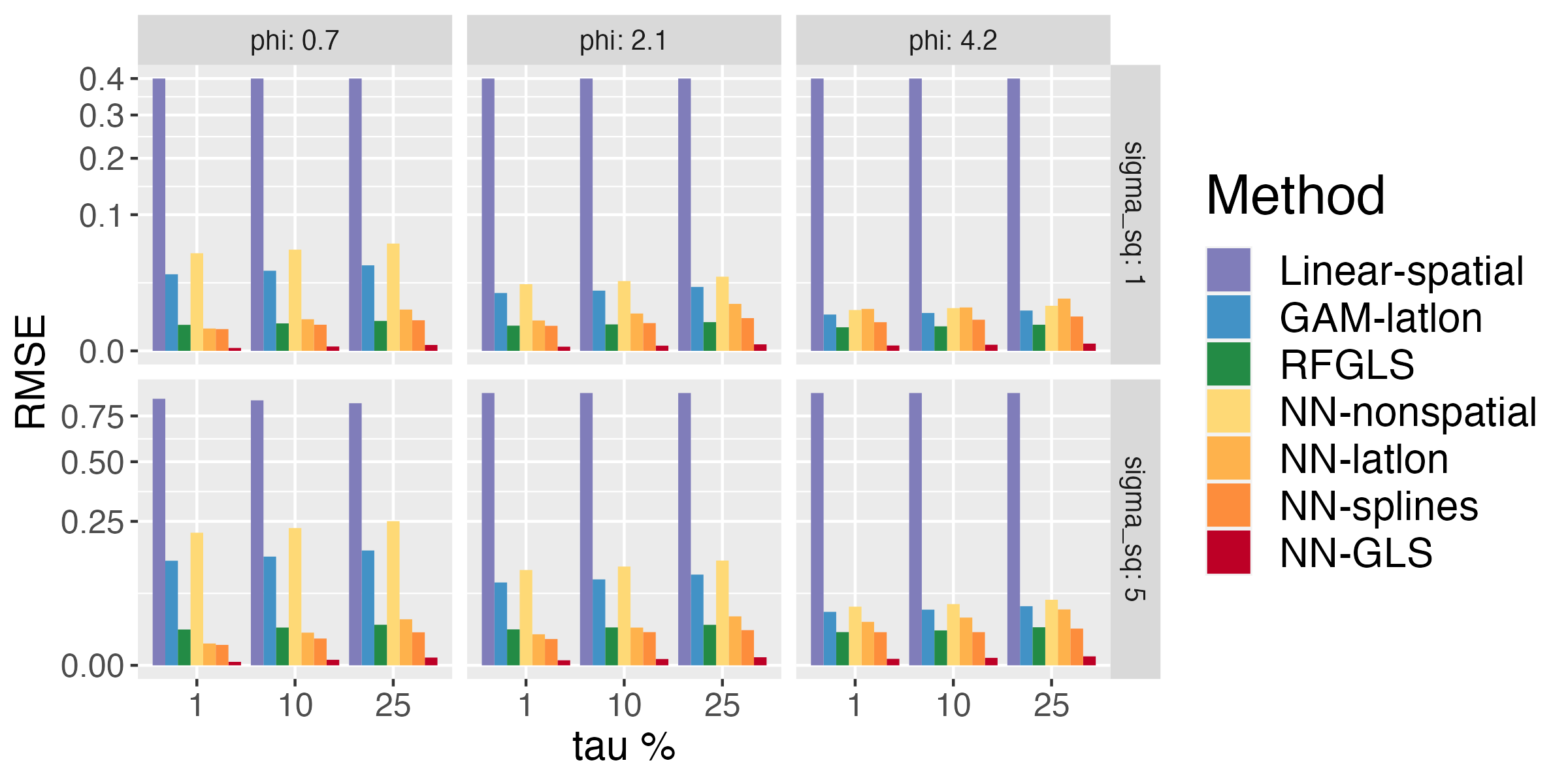}
  \caption{Prediction performance for $f_1$*}
\end{subfigure}
\caption{(Section \ref{sec-sim-mis2}) Comparison between competing methods on (a) estimation and (b) spatial prediction
when the spatial surface $\omega(s)$ is simulated from the smooth function given by  by Figure \ref{fig-fix-shape} and the mean functions are $f_0 = f_1$. \blue{We add a * in figure (a) as the MISE and RMSE for the linear-spatial model (which was very large) had to be truncated for better illustration of the performance of the other methods.}}
\label{fig-mis-fix-dim1}
\end{figure}
\begin{figure}[htbp]
\centering
\begin{subfigure}{1\textwidth}
  \centering
  \includegraphics[width=0.9\linewidth]{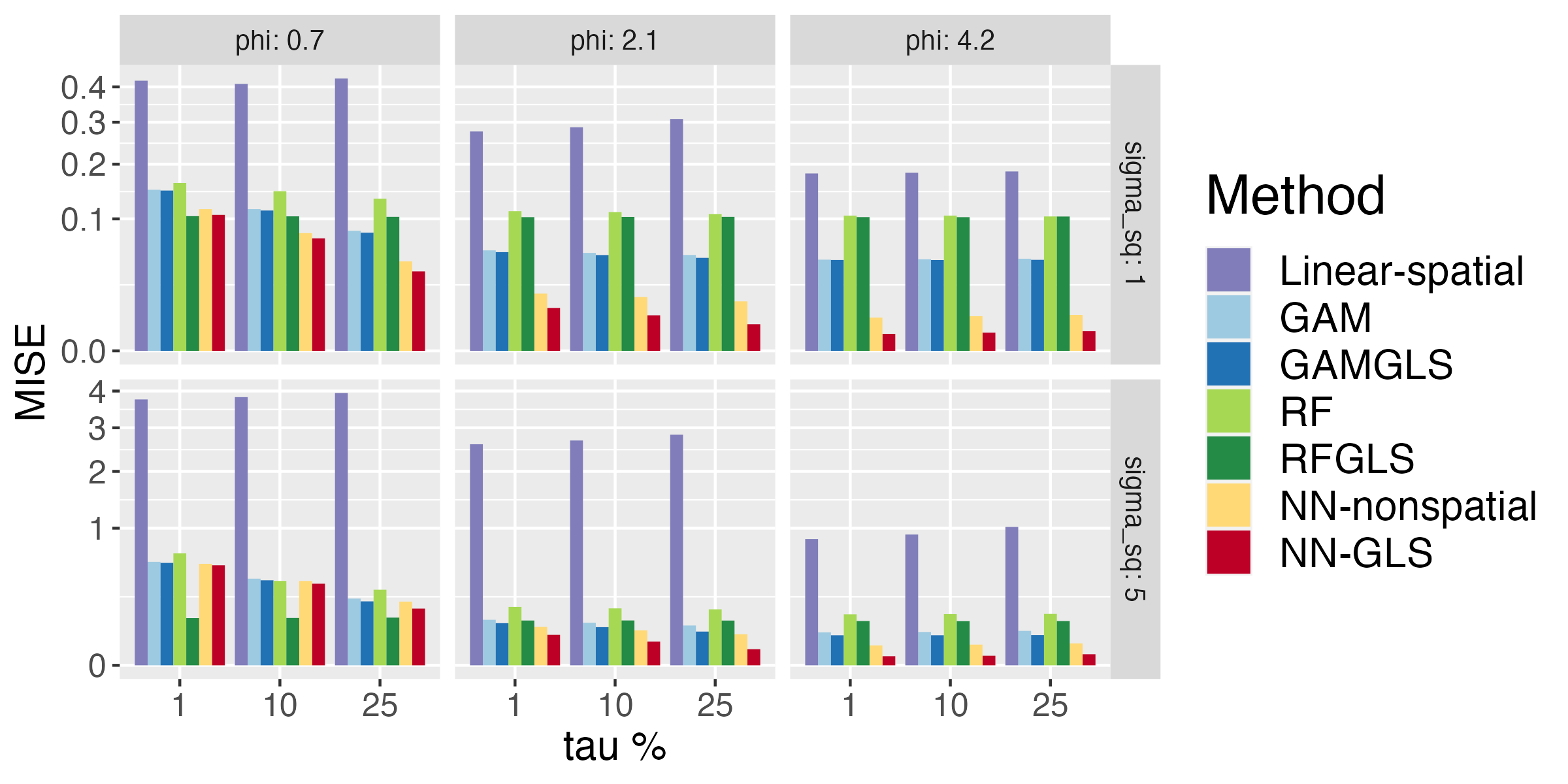}
  \caption{Estimation performance for $f_2$}
\end{subfigure}%
\\
\centering
\begin{subfigure}{1\textwidth}
  \centering
  \includegraphics[width=0.9\linewidth]{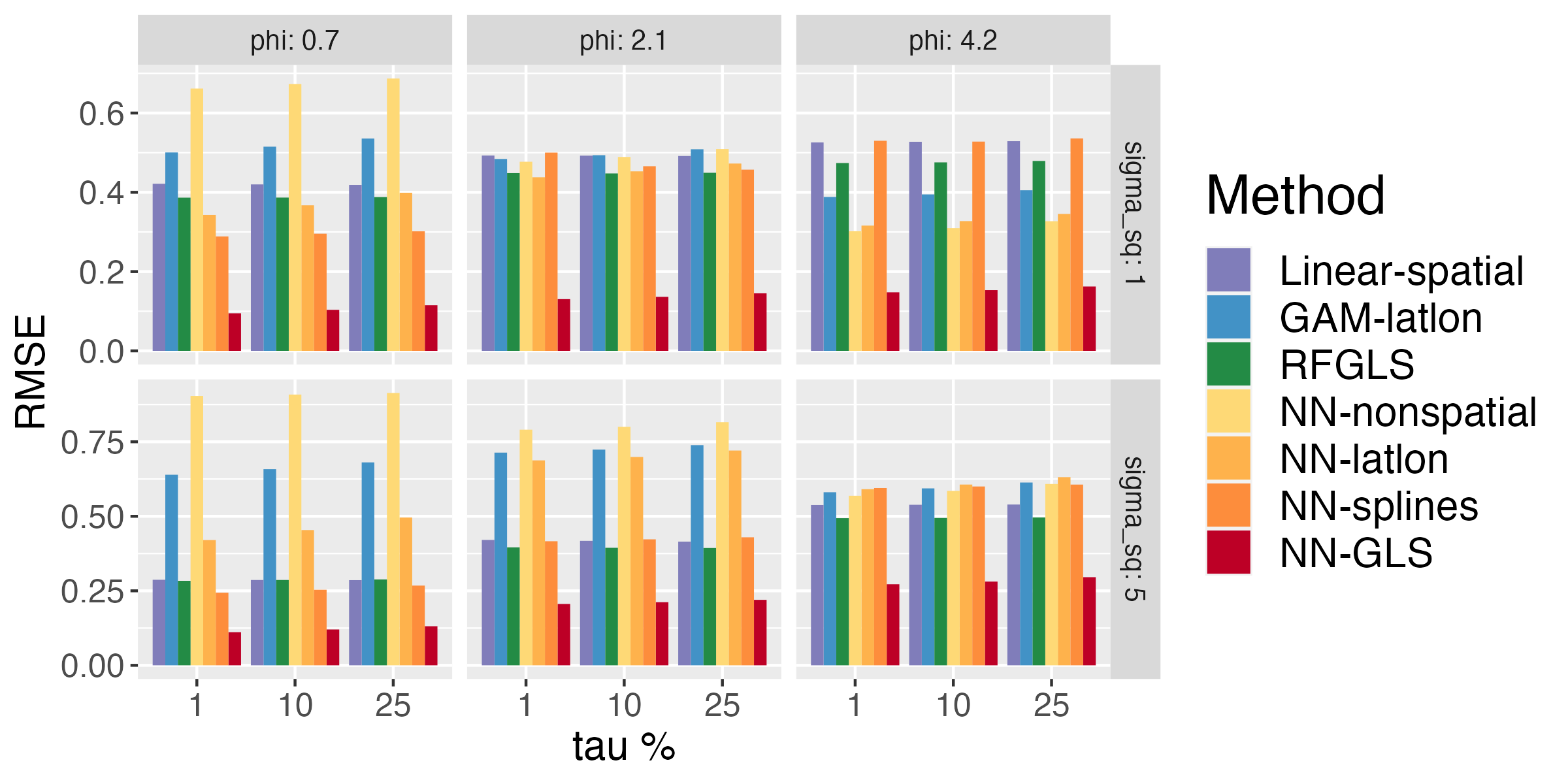}
  \caption{Prediction performance for $f_2$}
\end{subfigure}
\caption{(Section \ref{sec-sim-mis2}) Comparison between competing methods on (a) estimation and (b) spatial prediction
when the spatial surface $\omega(s)$ is simulated from the smooth function given by Figure \ref{fig-fix-shape} and the mean functions are $f_0 = f_2$.}
\label{fig-mis-fix-dim5}
\end{figure}

\pagebreak

\newpage
\section{Additional real data example}\label{Append-real}
\subsection{Data details}\label{sec:data}
The regulatory PM$_{2.5}$ data comes from the US Environmental Protection Agency's (EPA) air quality system (AQS). The AQS collects daily air quality data through widely distributed monitors and validates it through a quality assurance procedure. Since PM$_{2.5}$ is known to have prominent spatial patterns, we use NN-GLS, modeling the PM$_{2.5}$ concentrations across the U.S. as the correlated response, and using other meteorological variables as covariates. The meteorological data are obtained from the National Centers for Environmental Prediction's (NCEP) North American Regional Reanalysis (NARR) product, which generates reanalyzed data for temperature, wind, moisture, soil, and dozens of other parameters with a spatial resolution of about $32 \times 32$ km. Following a similar analysis in \cite{chen2020deepkriging}, we consider the daily PM$_{2.5}$ data on \blue{June 18th, 2022} for this analysis (other dates are considered in the Supplementary materials). We have PM$_{2.5}$ concentration ($\mu g/m^3$) from \blue{$1078$} stations across the states and six meteorological variables provided at \blue{$13612$} grid cells from NARR. The six variables are precipitation accumulation, air temperature, pressure, relative humidity, west wind (U-wind) speed, and north wind (V-wind) speed. Since the coordinates of the two data sets are different, the spatial resolution of NARR data is retained and the PM$_{2.5}$ data are matched onto the grid by averaging in the grid cell. Grid cells without any EPA monitor are removed and there are \blue{$719$} data points left for the downstream analysis.

\subsection{Block-random data splitting}\label{sec-block}
We consider the prediction performance of methods under  block-random splits of different block sizes. Here the domain is split into $k\times k$ small blocks, and we randomly select $k$ among these $k^2$ blocks to be the testing set and ensure that there is only one test block in each row and column. Such kind of split is more realistic since in practice, often there is only the data from some of the regions, and we want to make predictions (interpolation) in a region with no data, based on the data from the neighboring ones. 

\begin{figure}[!h]
\centering
\begin{subfigure}{.5\textwidth}
  \centering
  \includegraphics[width=0.9\linewidth]{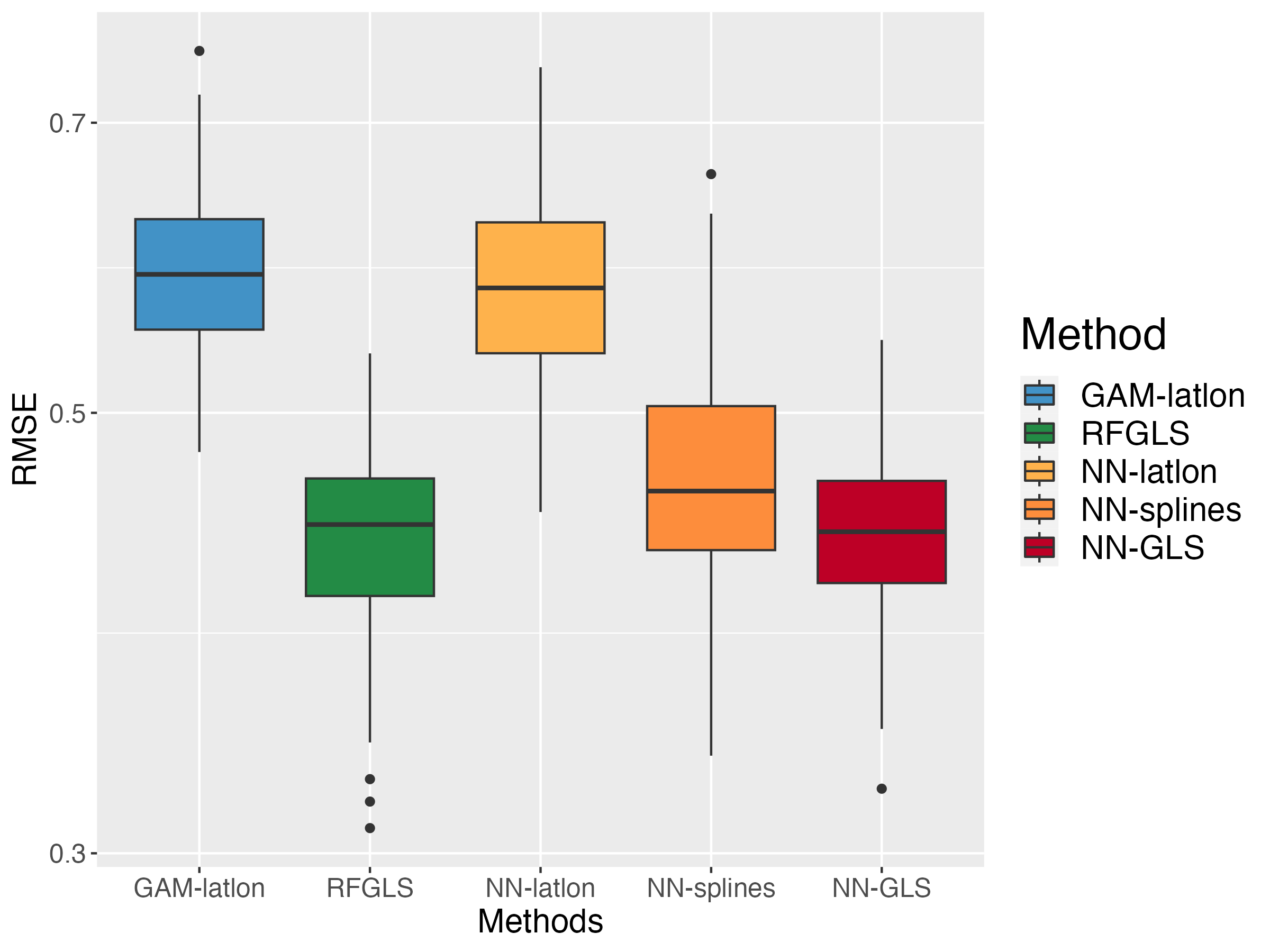}
  \caption{Purely random test set}
\end{subfigure}%
\begin{subfigure}{.5\textwidth}
  \centering
  \includegraphics[width=0.9\linewidth]{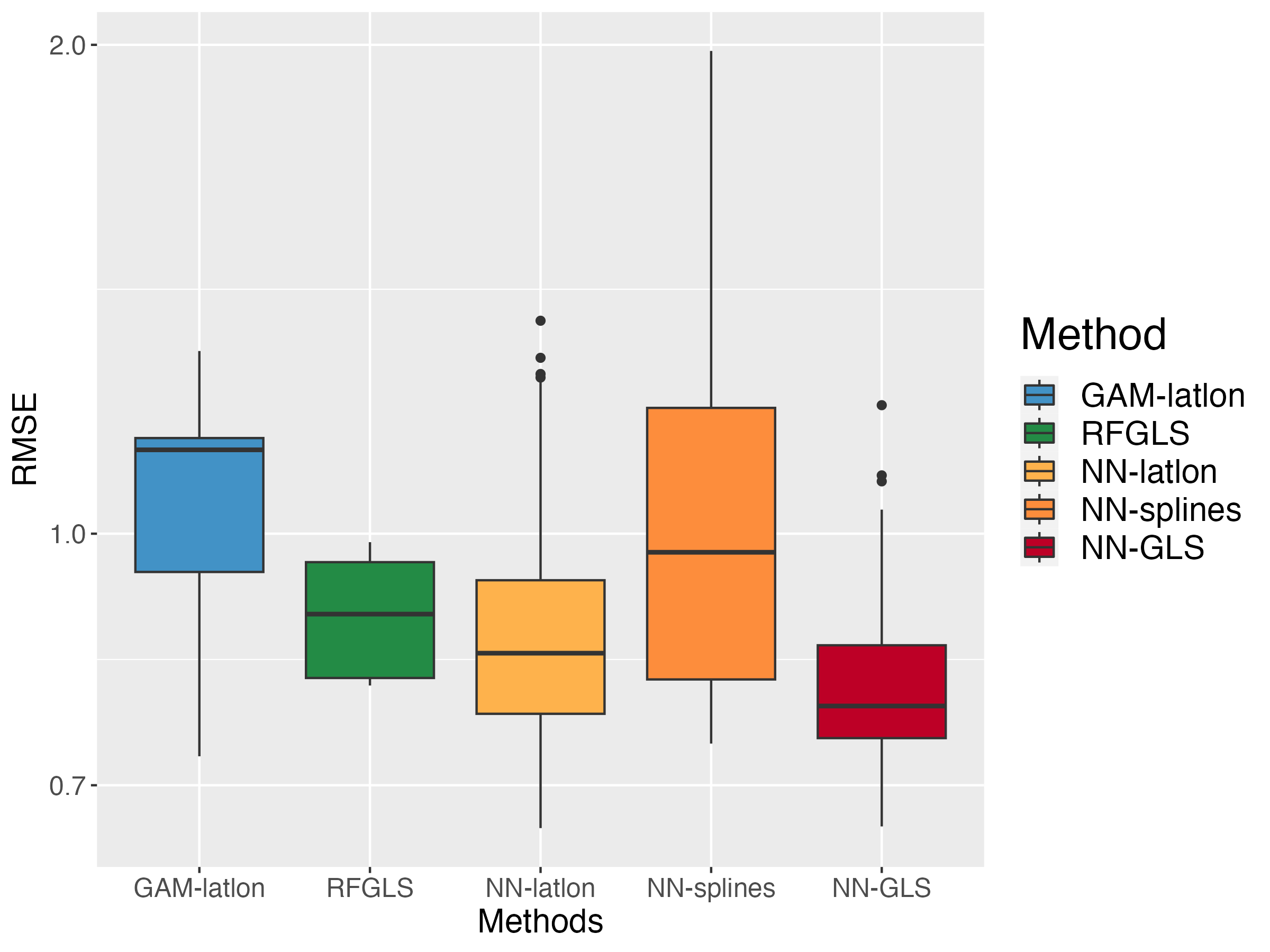}
  \caption{$3 \times 3$ block-random test set}
\end{subfigure}%
\caption{Results with different random test sets}
\label{fig-real-block}
\end{figure}

\blue{Figure \ref{fig-real-block} extend the $6 \times 6$ random block setting in Section \ref{sec-real} to two extremes, a purely random test set (i.e., each location is a considered a block) and large $3 \times 3$ blocks. In the purely random case, NN-GLS is doing comparably well as NN-splines and RF-GLS. The reason for NN-splines' catch-up is that when the test set is randomly selected, there's enough local information for the spatial effect at any location. While in the $3 \times 3$ block-random case, spatial effect at other blocks won't help the local prediction. The performance relies on capturing the global trend $f_0(x)$ and estimating the latent spatial correlation structure. As a result, NN-GLS outperforms the others and NN-splines produce larger errors.}

\subsection{Performance on other days}\label{sec:real_days}
The PM$_{2.5}$ distribution varies on different days (see Figure \ref{fig-real-4}). To assess the robustness of the prediction performance of the methods to choice of the day, we analyze PM$_{2.5}$ data on two other days with considerably different spatial structure in PM$_{2.5}$. \blue{We analyze data from July 4th, 2022, and June 5th, 2019, both with random training-testing splitting (we select June 5th, 2019 since it was used in \cite{wang2019nearest}). On these two days, 
there is data with PM$_{2.5}$ concentration (ug/m3) from $1257(842)$ stations across the states. The six meteorological variables are provided at $13612(7706)$ grid cells from NARR, which leads to $729(605)$ data points after preprocessing and aligning the two data sources (see Section \ref{sec-real}).} 

\begin{figure}[!h]
\centering
\begin{subfigure}{.5\textwidth}
  \centering
  \includegraphics[width=0.9\linewidth]{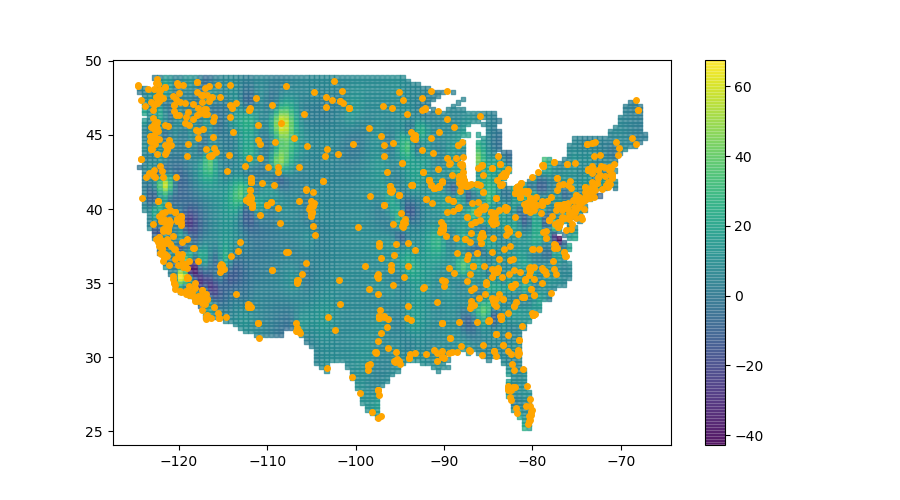}
  \caption{July 4th, 2022}
\end{subfigure}%
\begin{subfigure}{.5\textwidth}
  \centering
  \includegraphics[width=0.9\linewidth]{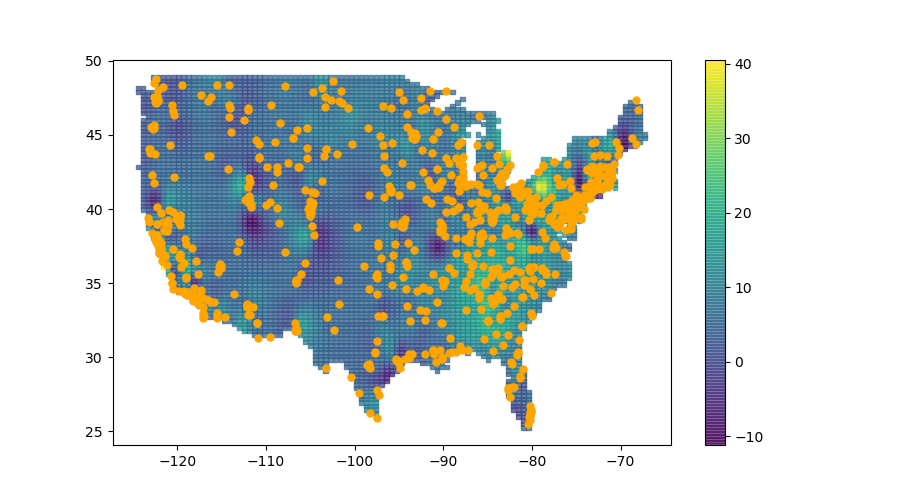}
  \caption{June 5th, 2019}
\end{subfigure}
\caption{Interpolated PM$_{2.5}$ level in the U.S. on the analyzed days.}
\label{fig-real-4}
\end{figure}

 \blue{The prediction results for these additional days are given in Figure \ref{fig-real-3}, among which June 5th, 2019 is the date used in \cite{chen2020deepkriging}. 
 On both dates, GAM-latlon and NN-latlon are significantly worse than our method, RF-GLS, and NN-splines. NN-GLS has a slight advantage over the latter two methods.}
\begin{figure}[!h]
\centering
\begin{subfigure}{.5\textwidth}
  \centering
  \includegraphics[width=0.9\linewidth]{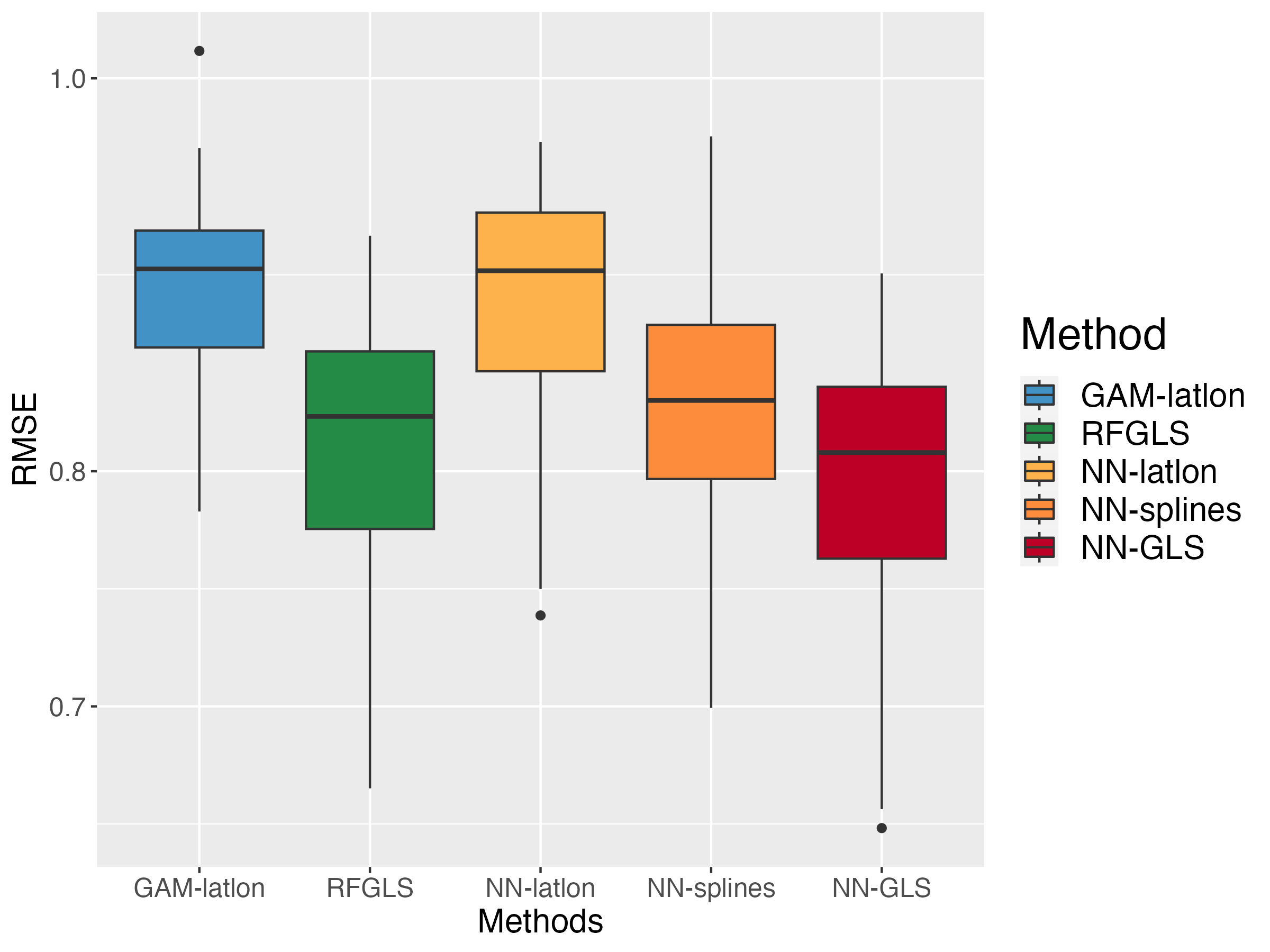}
  \caption{Prediction performance on July 4th 2022}
\end{subfigure}%
\begin{subfigure}{.5\textwidth}
  \centering
  \includegraphics[width=0.9\linewidth]{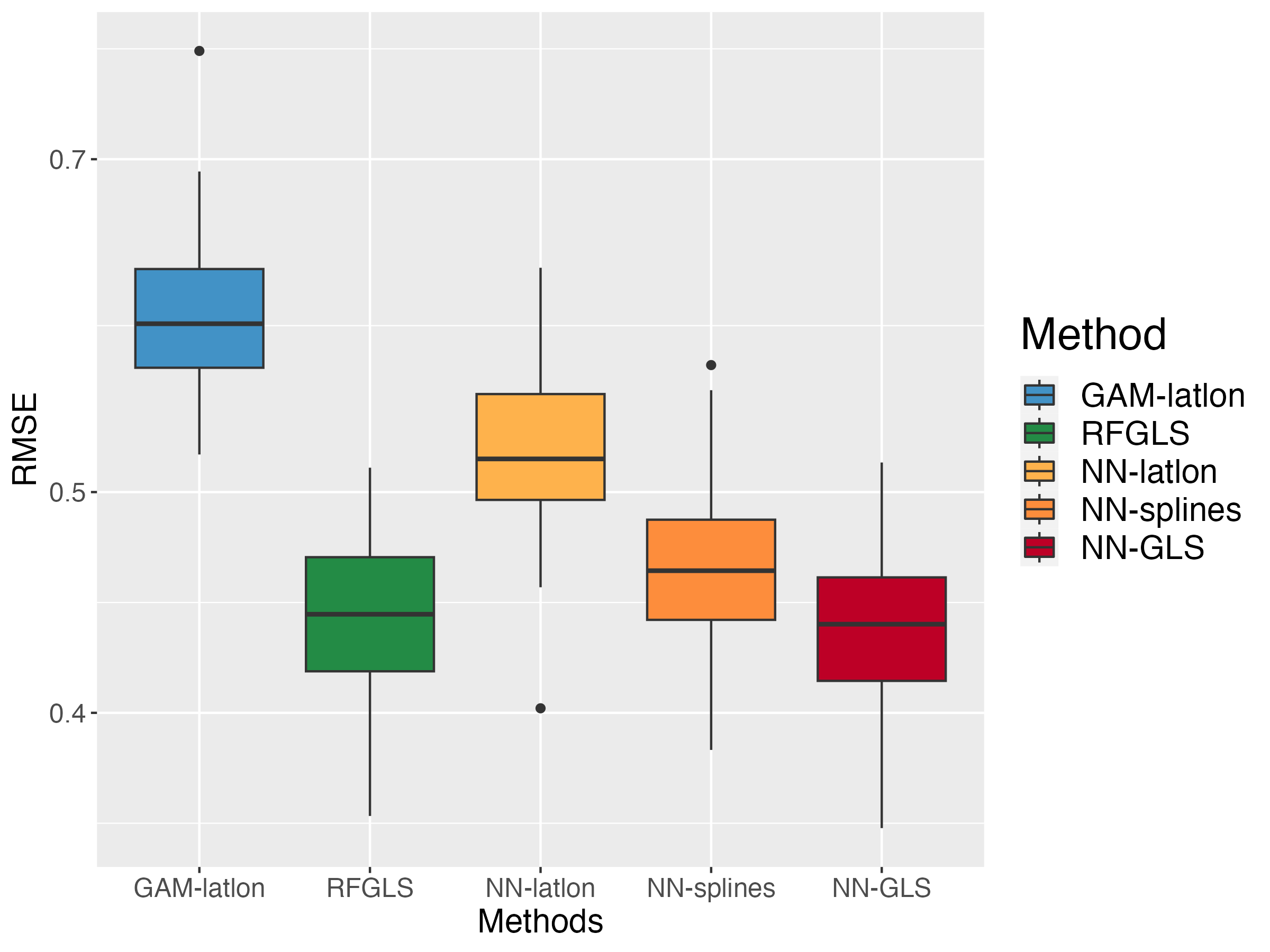}
  \caption{Prediction performance on June 5th 2019}
\end{subfigure}
\caption{Additional prediction performance comparison.}
\label{fig-real-3}
\end{figure}

\subsection{Prediction interval}\label{sec-real-PI}

\blue{In Figure \ref{fig-real-PI}, we present the prediction interval produced by NN-GLS and RF-GLS, as these are the two best methods.

\begin{figure}[!h]
\centering
\begin{subfigure}{.5\textwidth}
  \centering
  \includegraphics[width=0.9\linewidth]{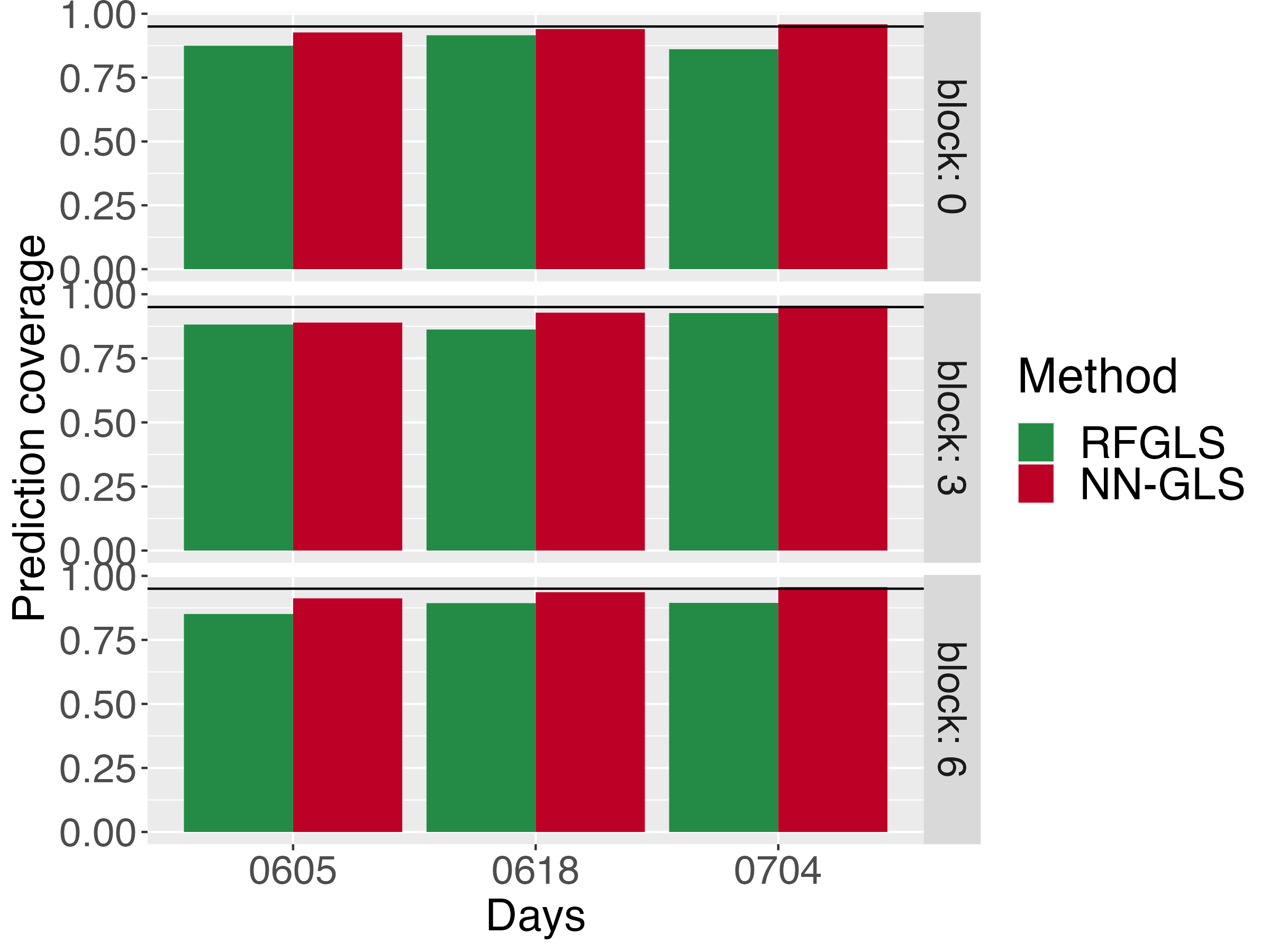}
  \caption{Prediction coverage}
\end{subfigure}%
\begin{subfigure}{.5\textwidth}
  \centering
  \includegraphics[width=0.9\linewidth]{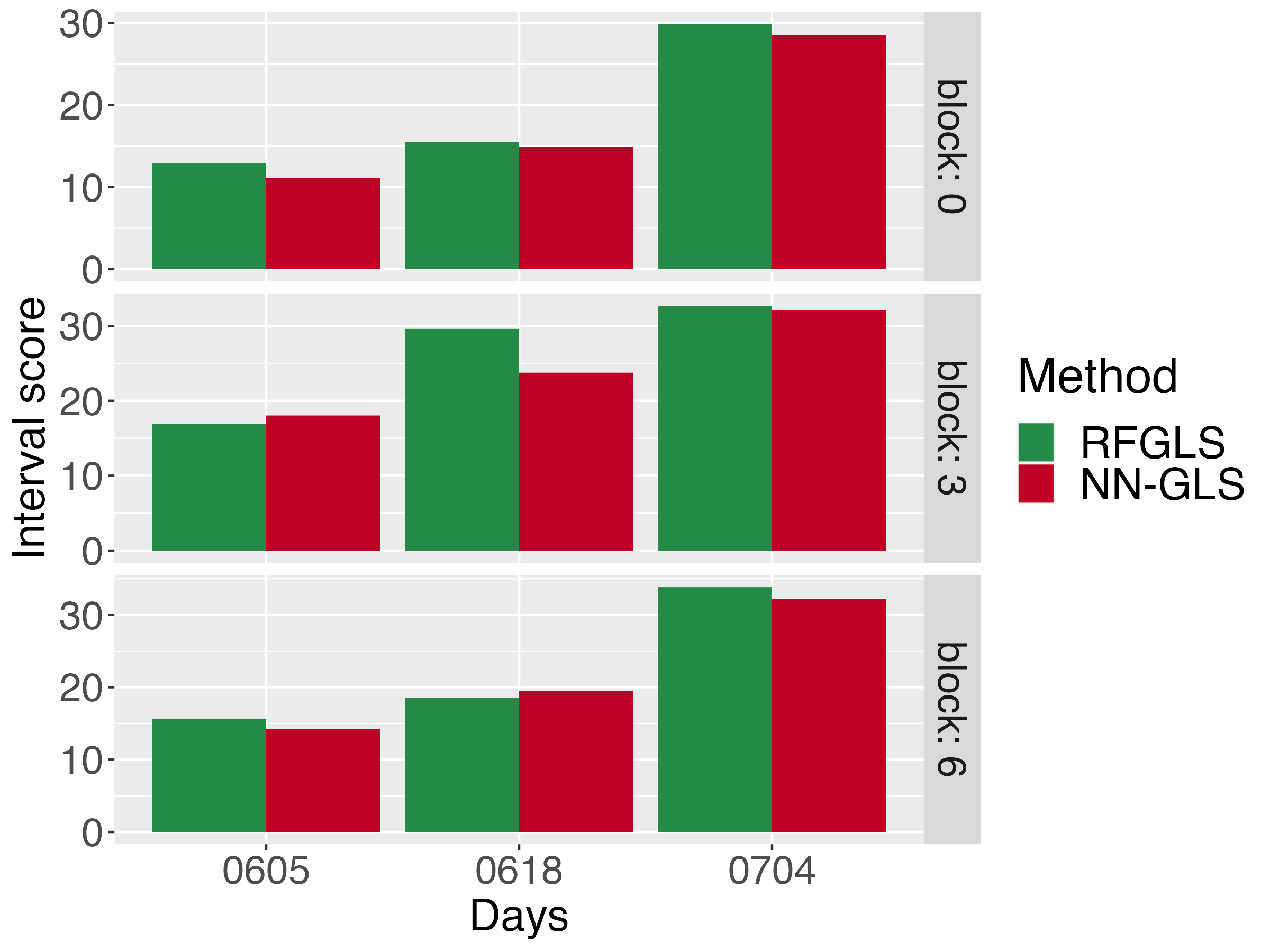}
  \caption{Prediction interval score}
\end{subfigure}
\caption{\blue{Prediction interval's performance on different days, with different splitting strategies. }}
\label{fig-real-PI}
\end{figure}
Similar to the findings for simulated data in Figures \ref{fig-sim-PI} (c-d), we see that RF-GLS is slightly undercovering while NN-GLS produces near nominal coverage. The interval scores are similar for the two methods.

We note that it is unclear how to produce spatially informed prediction intervals for NN-splines type added-spatial-fearures approaches as simply using na\"ive bootstrapping or estimate of the standard error to construct intervals would ignore the spatial correlation. 
} 

\subsection{Assumptions check}\label{sec:asmp-check}
\blue{We perform diagnostic checks to see if the data conforms to normality and spatial dependency structures used in the analysis models. For the normality assumption, we plot a histogram of the prediction residuals from NN-GLS, corresponding to the random noise process. For the spatial dependency, we check the semi-variogram plot of the estimation residual, corresponding to the spatial process plus the random noise process. Figure \ref{fig-real-asmp} shows that the prediction residuals are close to normal, being slightly more spiked (under-dispersed) for June 18th and July 4th in 2022. Thus the normality assumption is reasonable (and conservative). In Figure \ref{fig-real-asmp}, the semivariance generally shows the spatial structure in the data with an increasing pattern as a function of distance, except at the larger distances where there is little data. Among the three days, June 5th, 2019's data is the closest to both the modeling assumptions of normality and spatial dependence captured by an exponential covariance, while for other days the modeling assumptions are slightly misspecified with the error process being underdispersed than Gaussian, and the variogram showing some variability at larger distances. Despite this misspecification, NN-GLS performs consistently well both for point and interval predictions. Also in simulation studies in Sections \ref{sec-sim-mis1} and \ref{sec-sim-mis2}, we have concluded that NN-GLS is robust under various forms of model misspecifications, providing confidence in the results for NN-GLS for the PM$_{2.5}$ data.}
\begin{figure}[!h]
\centering
\begin{subfigure}{.5\textwidth}
  \centering
  \includegraphics[width=0.9\linewidth]{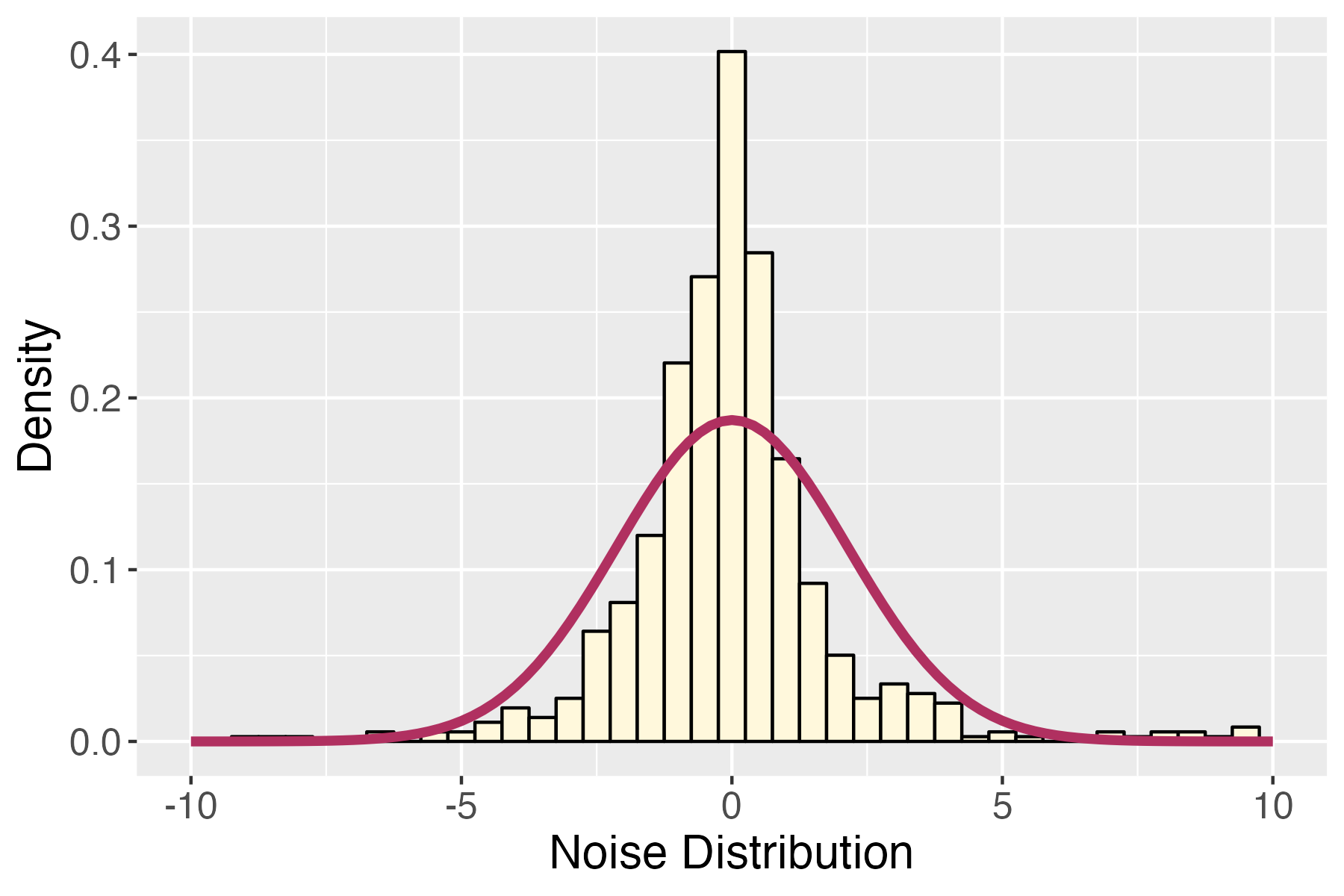}
  \caption{Noise distribution on June 18th 2022}
\end{subfigure}%
\begin{subfigure}{.5\textwidth}
  \centering
  \includegraphics[width=0.9\linewidth]{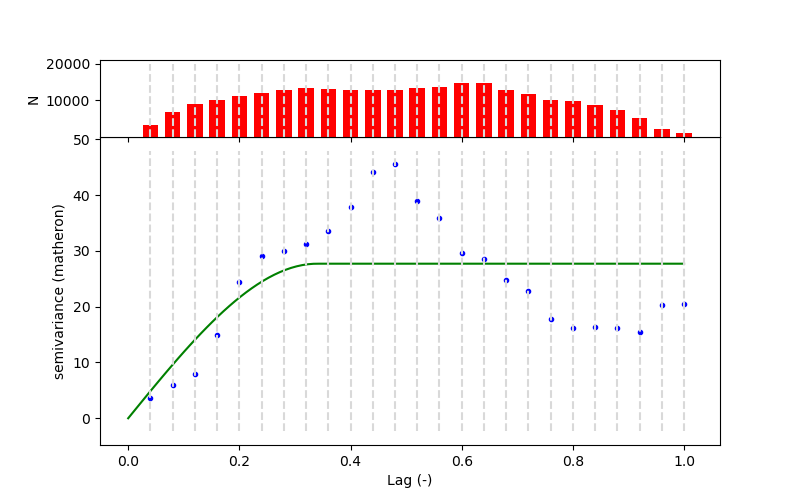}
  \caption{Semi-variogram on June 18th 2022}
\end{subfigure}%
\\
\centering
\begin{subfigure}{.5\textwidth}
  \centering
  \includegraphics[width=0.9\linewidth]{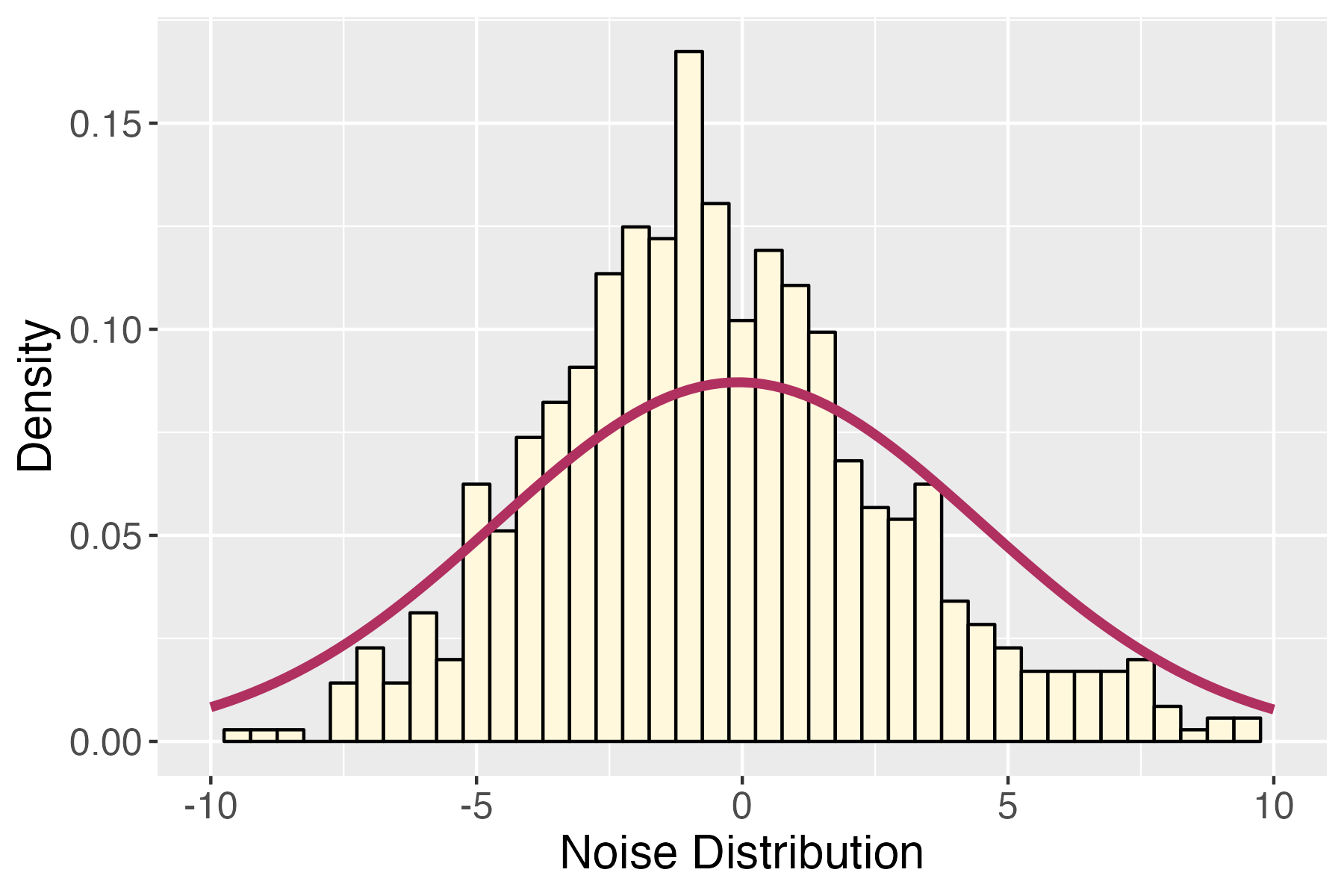}
  \caption{Noise distribution on July 4th 2022}
\end{subfigure}%
\begin{subfigure}{.5\textwidth}
  \centering
  \includegraphics[width=0.9\linewidth]{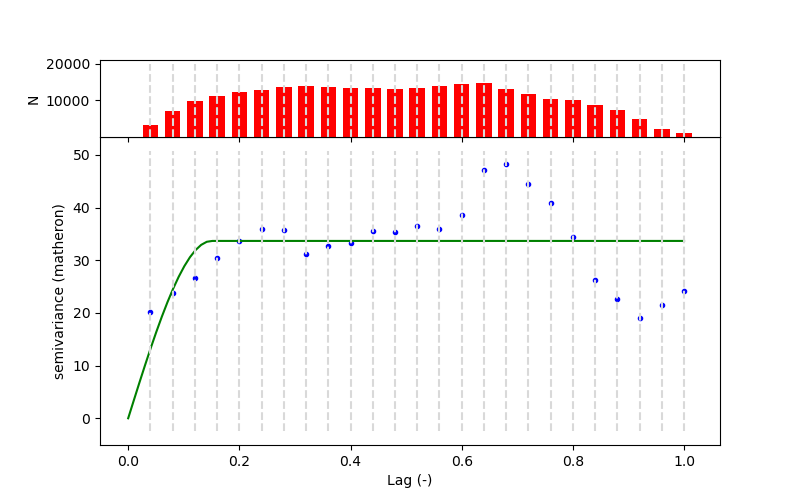}
  \caption{Semi-variogram on July 4th 2022}
\end{subfigure}%
\\
\centering
\begin{subfigure}{.5\textwidth}
  \centering
  \includegraphics[width=0.9\linewidth]{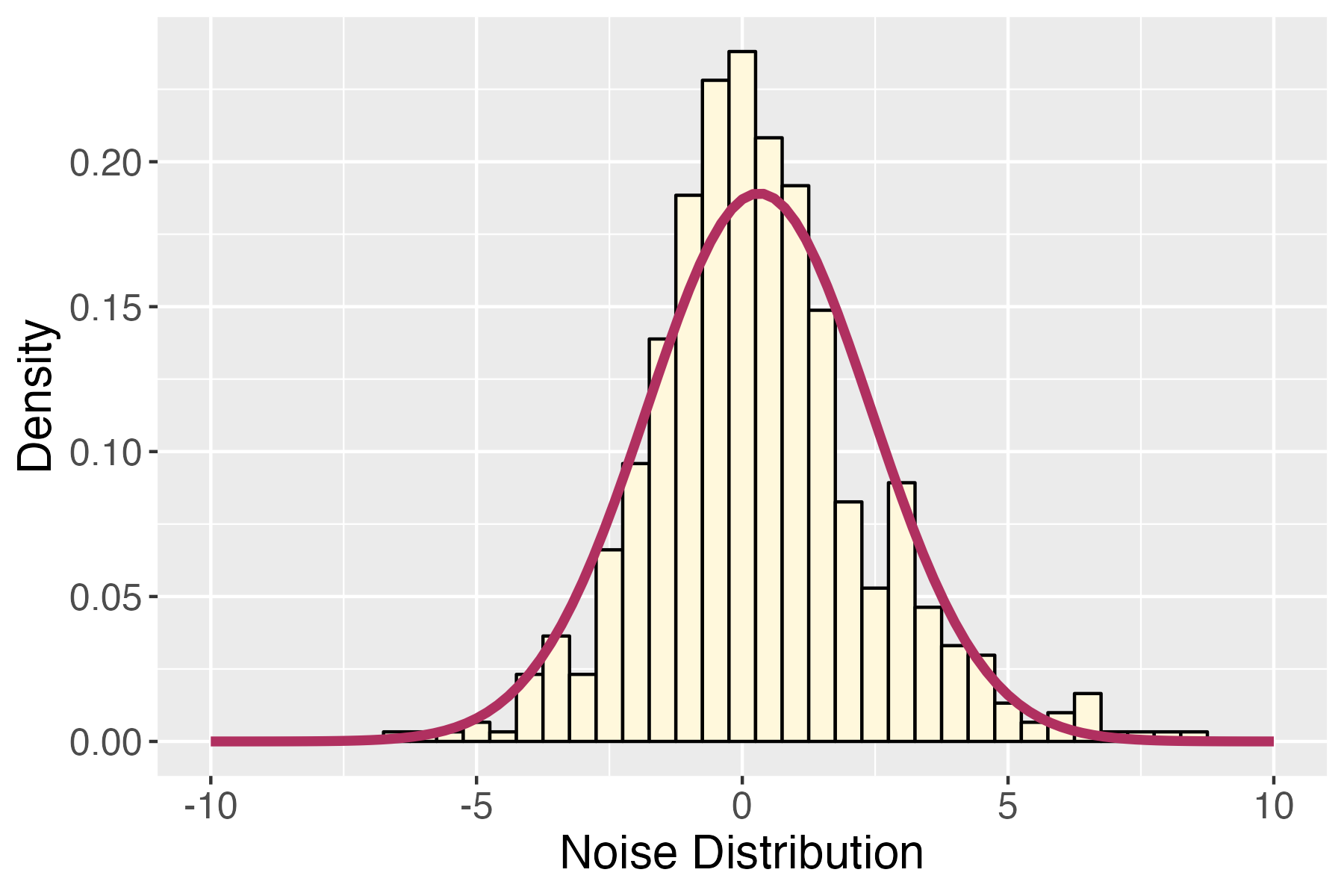}
  \caption{Noise distribution on June 5th 2019}
\end{subfigure}%
\begin{subfigure}{.5\textwidth}
  \centering
  \includegraphics[width=0.9\linewidth]{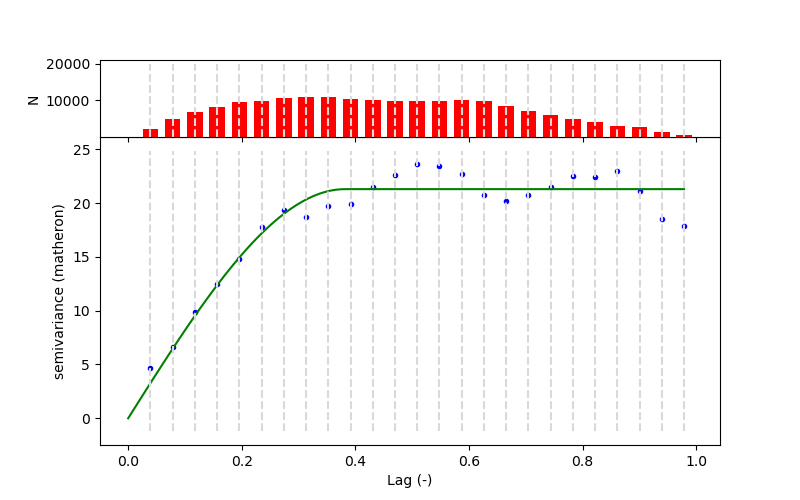}
  \caption{Semi-variogram on June 5th 2019}
\end{subfigure}
\caption{\blue{Assumption check for data on different days}}
\label{fig-real-asmp}
\end{figure}

\subsection{Partial dependence plot (PDP) illustration}\label{sec:PDP}
\blue{We briefly introduce the partial dependence plot (PDP) as an auxiliary tool for our algorithm. PDP shows the marginal effect one or two features have on the predicted response of a machine learning model \citep{friedman2001greedy}, and the idea behind it is simply integrating the nuisance features. If $\hat{f}(\cdot)$ is the estimated function from the model, and $(\bX_{1}, \bX_{2})$ is the feature space, $\bX_1$ are the features that we want to see their effect on the prediction, while $\bX_{2}$ are the features whose effects are not interested.
\begin{definition}\label{def-PDP}
The partial dependence function (PDF) is defined as:
\begin{equation}
\hat{f}_1(\bX_1) = \EE_{\bX_2}\big[\hat{f}(\bX_1, \bX_2)\big] \approx \frac{1}{n}\sum_{i=1}^n\hat{f}(\bX_1, \bX_2^{(i)}).
\end{equation}
\end{definition}

Figure \ref{fig-PDP} contains the PDPs for all 6 covariates as a complement to the ones in the main content. As is shown in the plots, NNGLS captures considerable non-linear effects from at least two of the variables including relative humidity, and U-wind. These non-linear effects can not be captured by the traditional linear geostatistical models (\ref{eq:stoc}) and demonstrate the need to expand the geostatistical modeling machinery to hybrid approaches like NN-GLS for spatial data with non-linear relationships.} 
\begin{figure}[!h]
\centering
\begin{subfigure}{.3\textwidth}
  \centering
  \includegraphics[width=0.9\linewidth]{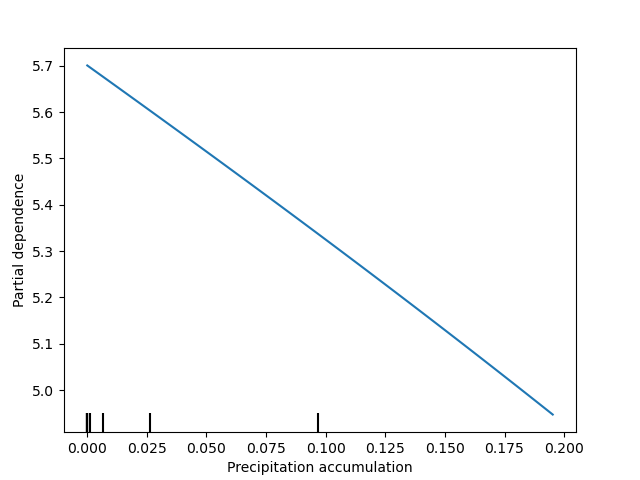}
  \caption{Precipitation accumulation}
\end{subfigure}
\begin{subfigure}{.3\textwidth}
  \centering
  \includegraphics[width=0.9\linewidth]{Simulation/Realdata/Air_temperature_NNGLS.png}
  \caption{Air temperature}
\end{subfigure}%
\begin{subfigure}{.3\textwidth}
  \centering
  \includegraphics[width=0.9\linewidth]{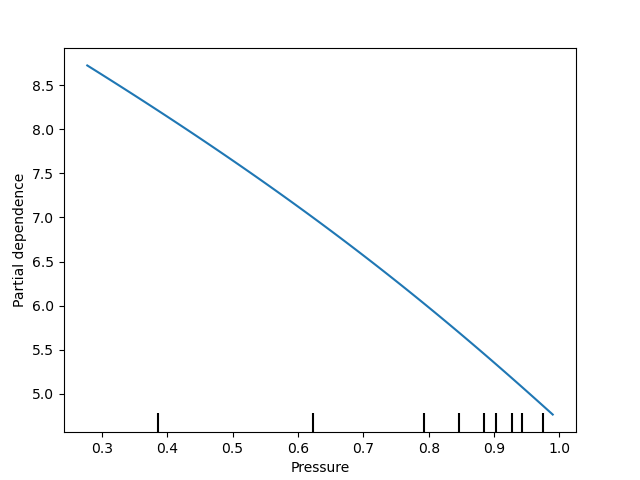}
  \caption{Pressure}
\end{subfigure}%
\\
\begin{subfigure}{.3\textwidth}
  \centering
  \includegraphics[width=0.9\linewidth]{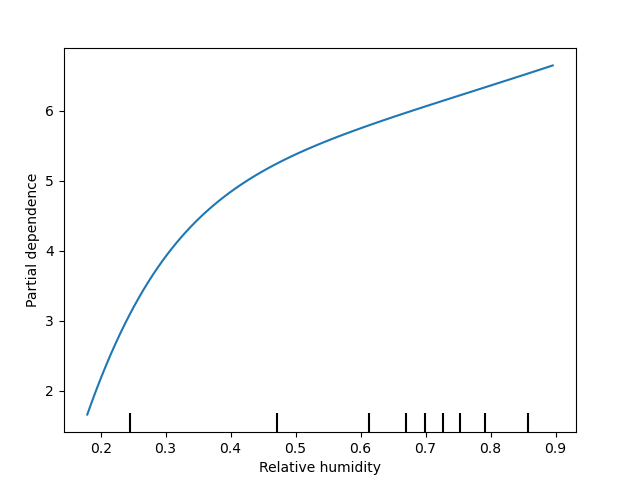}
  \caption{Relative humidity}
\end{subfigure}%
\begin{subfigure}{.3\textwidth}
  \centering
  \includegraphics[width=0.9\linewidth]{Simulation/Realdata/U-wind_NNGLS.png}
  \caption{U-wind}
\end{subfigure}%
\begin{subfigure}{.3\textwidth}
  \centering
  \includegraphics[width=0.9\linewidth]{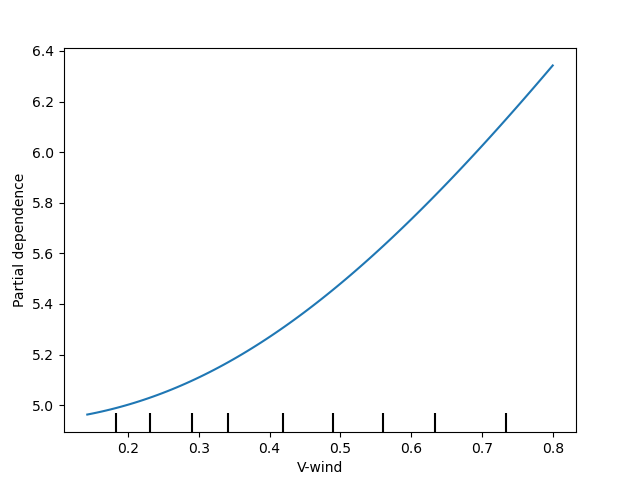}
  \caption{V-wind}
\end{subfigure}%
\caption{PDPs showing the marginal effects.}
\label{fig-PDP}
\end{figure}

\end{document}